\newtheorem{theorem}{Theorem}
\newtheorem{lemma}[theorem]{Lemma}
\newtheorem{proposition}[theorem]{Proposition}
\newtheorem{definition}{Definition}
\newtheorem{assumption}{Assumption}
\theoremstyle{remark}
\newtheorem{remark}{Remark}
\theoremstyle{remark}
\newtheorem*{remark*}{Remark}
\theoremstyle{remark}
\newtheorem{condition}{Condition}
\title{On the Generalization Power of the Overfitted Three-Layer Neural Tangent Kernel Model}
\newcommand*\samethanks[1][\value{footnote}]{\footnotemark[#1]}
\author{Peizhong Ju\thanks{School of Electrical and Computer Engineering, 
 Purdue University. Email: \texttt{\{jup,linx\}@purdue.edu}} \and Xiaojun Lin\samethanks \and Ness B. Shroff\thanks{Department of ECE and CSE, The Ohio State University. Email: \texttt{shroff.11@osu.edu}}}
\date{}
\newcommand{\defeq}{\mathrel{\mathop:}=}
\newcommand{\XX}{\mathbf{X}}
\newcommand{\Xii}{\XX_i}
\newcommand{\xsmall}{\bm{x}}
\newcommand{\xdensity}{\mu}
\newcommand{\vdensity}{\lambda}
\newcommand{\hsmall}{\bm{h}}
\newcommand{\hx}{\hsmall_{\Vzero,\xsmall}}
\newcommand{\Ht}{\mathbf{H}}
\newcommand{\PH}{\mathbf{P}}
\newcommand{\esmall}{\bm{\epsilon}}
\newcommand{\hatf}{\hat{\f}}
\newcommand{\f}{f}
\newcommand{\F}{\mathbf{F}}
\newcommand{\fg}{\f_g}
\newcommand{\fl}{\hatf^{\ell_2}}
\newcommand{\learnableSet}{\mathcal{F}^{\ell_2}_{(3)}}
\newcommand{\learnableSetTwo}{\mathcal{F}^{\ell_2}_{(2)}}
\newcommand{\learnableSetTwoNB}{\mathcal{F}^{\ell_2}_{(2),\text{NLB}}}
\newcommand{\learnableSetTwoBB}{\mathcal{F}^{\ell_2}_{(2),\text{BB}}}
\newcommand{\learnableSetTwoBias}{\mathcal{F}^{\ell_2}_{(2),b}}
\newcommand{\learnableSetTwoBiasA}{\mathcal{F}^{\ell_2}_{(2),b_1}}
\newcommand{\learnableSetTwoBiasB}{\mathcal{F}^{\ell_2}_{(2),b_2}}
\newcommand{\hf}{h^{(3)}}
\newcommand{\hft}{h^{(2)}}
\newcommand{\hftb}{h^{(2)}_{b}}
\newcommand{\bOne}{b_1}
\newcommand{\bTwo}{b_2}
\newcommand{\hftbA}{h^{(2)}_{b_1}}
\newcommand{\hftbB}{h^{(2)}_{b_2}}
\newcommand{\Vzero}{\mathbf{V}}
\newcommand{\Vzeroj}{\Vzero[j]}
\newcommand{\capr}{\mathcal{B}^r}
\newcommand{\capsr}{\mathcal{B}_{\bm{v}_*}^r}
\newcommand{\capsrh}{\mathcal{B}^{\hat{r}}}
\newcommand{\sd}{\mathcal{S}^{d-1}}
\newcommand{\identity}{\mathbf{I}}
\DeclareMathOperator*{\expectation}{\mathsf{E}}
\DeclareMathOperator*{\vari}{\mathsf{Var}}
\DeclareMathOperator*{\argmin}{arg\,min}
\DeclareMathOperator*{\prob}{\mathsf{Pr}}
\newcommand{\eig}{\mathsf{eig}}
\newcommand*\bigcdot{\mathpalette\bigcdot@{.5}}
\newcommand*\bigcdot@[2]{\mathbin{\vcenter{\hbox{\scalebox{#2}{$\m@th#1\bullet$}}}}}
\newcommand{\hxRF}{\hx^{\text{RF}}}
\newcommand{\w}{w}
\newcommand{\indicator}[1]{\mathsf{1}_{\{#1\}}}
\newcommand{\Wzero}{\mathbf{W}_{0}}
\newcommand{\Wone}{\mathbf{W}_{1}}
\newcommand{\hh}[1]{{\bm{h}^{\text{Three}}_{\Vzero,\Wzero,#1}}}
\newcommand{\hhx}{\hh{\xsmall}}
\newcommand{\hhXi}{\hh{\XX_i}}
\newcommand{\hhXj}{\hh{\XX_j}}
\newcommand{\hhz}{\hh{\bm{z}}}
\newcommand{\hzRF}{\bm{h}^{\text{RF}}_{\Vzero,\bm{z}}}
\newcommand{\hnxRF}{\bm{h}^{\text{RF}}_{\Vzero,-\xsmall}}
\newcommand{\hXiRF}{\bm{h}^{\text{RF}}_{\Vzero,\XX_i}}
\newcommand{\hXjRF}{\bm{h}^{\text{RF}}_{\Vzero,\XX_j}}
\newcommand{\hnXiRF}{\bm{h}^{\text{RF}}_{\Vzero,-\XX_i}}
\newcommand{\Chzhx}{\mathcal{C}_{\hxRF,\hzRF}^{\Wzero}}
\newcommand{\DW}{\Delta \mathbf{W}}
\newcommand{\DWl}{\DW^{\ell_2}}
\newcommand{\DWs}{\DW^*}
\newcommand{\HHt}{\Ht}
\newcommand{\pseudoGT}{f^g_{\Vzero,\Wzero}}
\newcommand{\minhRFnorm}{\beta}
\newcommand{\Drhat}{D_{\hat{r}}}
\newcommand{\RFangle}{\theta^{\text{RF}}}
\newcommand{\RFangleij}{\theta^{\text{RF}}_{i,j}}
\newcommand{\RFanglemin}{\RFangle_{\min}}
\newcommand{\RFnorm}{\beta}
\newcommand{\RFnormij}{\beta_{i,j}}
\newcommand{\Hinf}{\mathbf{H}^{\infty}}
\newcommand{\Hinfnormalized}{\tilde{\mathbf{H}}^{\infty}}
\newcommand{\spp}{\mathcal{S}^{p_1-1}}
\newcommand{\wdensity}{\gamma}
\newcommand{\FWVg}{\mathbf{F}_{\Vzero,\Wzero}^g}
\newcommand{\KThree}{K^{\text{Three}}}
\newcommand{\KTwo}{K^{\text{Two}}}
\newcommand{\KRF}{K^{\text{RF}}}
\newcommand{\Jnppdq}{J(n,p_1,p_2,d,q)}
\newcommand{\Qpd}{Q(p_1,d)}
\newcommand{\Cdq}{C(n,d,q)}
\newcommand{\xGeneralBias}{\xsmall_{b}}
\newcommand{\taylorCoeff}{u}
\newcommand{\harmonicCoeff}{c}
\newcommand{\qnTwo}{qn^2\sqrt{2p_1d}+q^2n^3d+\frac{qn^2p_1}{\sqrt{p_2}}}
\newcommand{\qnThree}{qn^2\sqrt{\frac{2d}{p_1}}+\frac{q^2n^3d}{p_1}+\frac{qn^2}{\sqrt{p_2}}}
\newcommand{\myCeil}[1]{\left\lceil #1 \right\rceil}
\begin{document}

\maketitle

\begin{abstract}

In this paper, we study the generalization performance of overparameterized 3-layer NTK models. We show that, for a specific set of ground-truth functions (which we refer to as the ``learnable set''), the test error of the overfitted 3-layer NTK is upper bounded by an expression that decreases with the number of  neurons of the two hidden layers. 
Different from 2-layer NTK where there exists only one hidden-layer, the 3-layer NTK involves interactions between two hidden-layers. 
Our upper bound reveals that, between the two hidden-layers, the test error descends faster with respect to the number of neurons in the second hidden-layer (the one closer to the output) than with respect to that in the first hidden-layer (the one closer to the input). 
We also show that the learnable set of 3-layer NTK without bias is no smaller than that of 2-layer NTK models with various choices of bias in the neurons. 
However, in terms of the actual generalization performance, our results suggest that 3-layer NTK is much less sensitive to the choices of bias than 2-layer NTK, especially when the input dimension is large.

\end{abstract}
\section{Introduction}

Neural tangent kernel (NTK) models \citep{jacot2018neural} have been recently studied as an important intermediate steps to understand the exceptional generalization power of overparameterized deep neural networks (DNNs). Deep neural networks (DNNs) usually have so many parameters that they can perfectly fit all train data, yet they still have good generalization performance \citep{zhang2016understanding,advani2017high}. This seems contradicting to the classical wisdom of ``bias-variance-tradeoff'' in the statistical machine learning methods \citep{bishop2006pattern, hastie2009elements, stein1956inadmissibility, james1992estimation,lecun1991second,tikhonov1943stability}. To understand this distinct behavior of DNNs, a resent line of work studies the so-called ``double-descent'' phenomenon, beginning with overfitted linear models. These results on linear models suggests that the test error indeed decreases again in the overparameterized region, as the model complexity increases beyond the number of samples \citep{belkin2018understand, belkin2019two, bartlett2019benign, hastie2019surprises,muthukumar2019harmless,ju2020overfitting,mei2019generalization}. However, these studies use linear models with simple features such as Gaussian or Fourier features, and hence they fail to capture the non-linearity in neural networks. In contrast, NTK models adopt features generated by non-linear activation functions (i.e., neurons of DNNs), and thus they can be viewed as an  intermediate step between simple linear models and DNNs. Along this line, the work in \cite{ju2021generalization} studies 2-layer NTK models, and shows that the 2-layer NTK model indeed exhibits better and different descent behavior in the overparameterized region, which might be closer to that of an actual neural network.

Motivated by \cite{ju2021generalization}, it is of great interest to understand whether similar insights extend to deeper NTK models. In particular, in this paper we study NTK models with 3 layers.
Although both 2-layer and 3-layer NTK models share similar assumptions (e.g., trained weights do not change much from initialization, and features are linearized around the the initial state), their difference in structure leads to completely different feature formation. Compared with 2-layer NTK models that only contain one hidden-layer of neurons, 3-layer NTK models have two hidden-layers, which interact in more complex ways not observed in 2-layer NTK models.
Specifically, let $p_1$ and $p_2$ denote the number of neurons in the two hidden layers. 
Then, the ultimate features of the 3-layer NTK models depend on both $p_1$ and $p_2$. This dependency leads to the following questions. 
First, the width of which layer is more important in governing the descent behavior, $p_1$ or $p_2$? Further, to get better descent behaviors, should $p_1$ and $p_2$ grow at the same speed, or should one of them grow faster than the other?
Second, do 3-layer NTK models have any performance advantage over 2-layer NTK models?

To answer these questions, in this paper we study the generalization performance of overfitted min-$\ell_2$-norm solutions for 3-layer NTK models where the middle layer is trained. For a set of learnable functions (which we refer to as the ``learnable set''), we provide an upper bound on the test error for finite values of $p_1$ and $p_2$. To the best of our knowledge, this upper bound is the first result that can reveal the dependency of the descent behavior on $p_1$ and $p_2$ separately. We then compare 3-layer NTK with 2-layer NTK with respect to the corresponding learnable set and the actual generalization performance. 
Our comparison reveals several important differences between 3-layer NTK and 2-layer NTK, in terms of the descent behavior, the size of learnable set, and the sensitivity of the generalization performance to the choice of bias of the neurons.


\textbf{Analyzing the Generalization Error:} First, we show that the generalization error (denoted by the absolute value of the difference between the model output and the ground-truth for a test input) is upper bounded by the sum of several terms {on the order of} $O(1/\sqrt{n})$ ($n$ denotes the number of training data), $O(1/p_2)$ ($p_2$ denotes the number of neurons in the second hidden-layer),  $O(\sqrt[4]{\log p_1/p_1})$ ($p_1$ denotes the the number of neurons in the first hidden-layer), plus another term related to the magnitude of noise. 
Similar to 2-layer NTK \citep{arora2019fine,ju2021generalization,satpathi2021dynamics}, our upper bound suggests that when there are infinitely many neurons, the generalization error decreases with the number of samples $n$ at the speed of $\sqrt{n}$ and will approach zero when $n\to \infty$ in the noiseless situation. 
Further, the noise term will not explode when the number of neurons goes to infinity, which is also similar to that for 2-layer NTK.
However, our upper bound also reveals new insights that are different from the results for 2-layer NTK. 
Specifically, our upper bound decreases with the number of neuron in the first hidden-layer $p_1$ at the speed of $\sqrt[4]{(\log p_1)/p_1}$, and decreases with the number of neurons in the second\footnote{In this paper, the first hidden-layer denotes the one closer to the input layer, while the second hidden-layer denotes the one closer to the output layer.} hidden-layer $p_2$ at the speed of $1/\sqrt{p_2}$. Such difference in the decreasing speed for our upper bound implies that the width of the second hidden-layer is more important for reducing the generalization error than the first hidden-layer. Further, our upper bounds hold regardless of how fast $p_1$ and $p_2$ increase relative to each other (e.g., they could increase at the same speed, or one could increase faster than the other).

\textbf{Characterizing the Learnable Set:} We then show that, even if we only train the middle-layer weights, the learnable set (i.e., the set of ground-truth functions for which the above upper bound holds) of the 3-layer NTK without bias contains all finite degree polynomials, which is strictly larger than that of the 2-layer NTK without bias and is at least as large as the 2-layer NTK with bias. Recently, \citet{geifman2020similarity,chen2020deep} show that when all layers are trained, 3-layer NTK leads to exactly the same reproducing kernel Hilbert space (RKHS) as 2-layer NTK with biased ReLU (although they assumed an infinite number of neurons, and did not characterize the descent behavior of the generalization error). 
Combining with their results, we can draw the conclusion that training only the middle-layer weights is at least as effective as training all layers in 3-layer NTK, in terms of  the size of the learnable set.

\textbf{Sensitivity to the Choices of Bias:} Even though a similar learnable set can be attained by 3-layer NTK (with or without bias) and 2-layer NTK (with bias), our results suggest that the actual generalization performance can still differ significantly in terms of the sensitivity to the choice of bias, especially when the input dimension $d$ is large. 
One type of bias setting commonly used in literature \citep{ghorbani2019linearized, satpathi2021dynamics} is that the bias has a similar magnitude as each element of the input vector, which we refer to as ``normal bias''. However, we show that such normal bias setting has a negative impact on the generalization error for overfitted 2-layer NTK when $d$ is large. To avoid this negative impact, it is important to use another type of bias setting where the bias has a similar magnitude as the norm of the whole input vector, which we refer to as ``balanced bias''. 
In contrast, for 3-layer NTK, different bias settings do not have obvious effect on the generalization performance.
In summary, compared with 2-layer NTK, the use of an extra non-linear layer in 3-layer NTK appears to significantly reduce the impact due to the choice of bias, and therefore makes the learning more robust.

Our work is related to the growing literature on the generalization performance of such shallow and fully-connected neural network.
However, most of these studies focus on 2-layer neural networks. Among them, they differ in which layer to train. For example, \citet{mei2019generalization,d2020double,mei2022generalization} consider the ``random feature'' (RF) model that only trains the top-layer weights and fixes the bottom-layer weights, while 2-layer NTK trains the bottom-layer weights.
In contrast, our work on 3-layer NTK neither trains the bottom-layer or top-layer weights. Instead, we train the middle-layer weights, since the middle-layer of a 3-layer model involves the interaction between two hidden-layers, which does not exist in 2-layer models.
The above studies of 2-layer network also differ in how the number of neurons/features $p$, the number of training samples $n$, and the input dimension $d$ grow.
\citet{mei2019generalization,mei2022generalization} study the generalization performance of the RF model where the number of neurons $p$, the number of training data $n$, and the input dimension grow proportionally to infinity. While \citet{ghorbani2021linearized} focuses on the approximation error (i.e., expressiveness) of both RF and NTK models, their analysis on generalization error is only on the limit $n$ or $p\to\infty$. All of these studies are quite different from ours with fixed $n$ and finite $p$.
Other works such as \cite{arora2019fine, satpathi2021dynamics, fiat2019decoupling} study the situation where the number of training samples $n$ is given and the number of neurons $p$ is larger than a threshold, which is closer to our setup.
However, these studies usually do not quantify how the generalization performance depends on the the number of neurons $p$. Specifically, they usually provide an upper bound on the generalization error when the number of neurons $p$ is greater than a threshold, while the upper bound itself does not depend on $p$. Thus, such an upper bound cannot explain the descent behavior of NTK models. 
The work in \cite{ju2021generalization} does study the decent behavior with respect to $p$, and is therefore the closet to our work. However, as we have explained earlier, there are crucial differences between 2 and 3 layers in both the descent behavior and the learnable set of ground-truth functions.
In addition to the above references, our work is also related to \citet{allen2019learning} (which studies NTK without overfitting) and \citet{ji2019polylogarithmic} (which studies classification by NTK). Their settings are however different from ours in that we consider overfitted solutions for regression.
\section{System Model}\label{sec.model}
\begin{figure}[t!]
    \centering
    \includegraphics[width=3.3in]{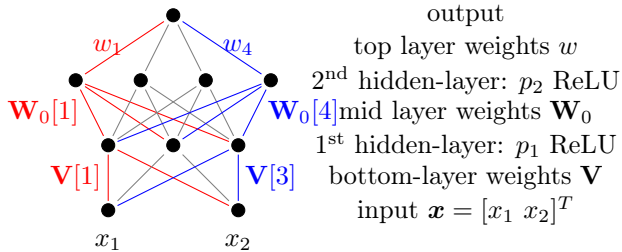}
    \caption{A fully-connected three-layer neural network where input dimension $d=2$, the number of neurons of the first hidden-layer $p_1=3$, and the number of neurons of the second hidden-layer $p_2=4$.}
    \label{fig.struct}
\end{figure}

Let $\f: \mathds{R}^d\mapsto\mathds{R}$ denote the ground-truth function. Let $(\XX_i,\ \f(\XX_i)+\esmall_i)$, $i=1,2,\cdots,n$ denote $n$ pieces of training data, where $\XX\in\mathds{R}^{n\times d}$ is the matrix, each column of which is the input of one training sample, $\esmall\in\mathds{R}^{n\times 1}$ denotes the noise in the output of training data. We define the training output vector generated by the ground-truth function as $\F(\XX)\defeq [\f(\XX_1)\ \f(\XX_2)\ \cdots\ \f(\XX_n)]^T\in \mathds{R}^n$. 

We use $\bm{a}[j]$ to denote the $j$-th part (sub-vector) of the vector $\bm{a}$. The part size depends on $\bm{a}$. Specifically, If $\bm{a}$ has $p_1$ elements, then each part has $1$ elements. If $\bm{a}$ has $dp_1$ elements, then each part has $d$ elements. If $\bm{a}$ has $p_1p_2$ elements, then each part has $p_1$ elements.

We consider a fully-connected 3-layer neural network as illustrated in Fig.~\ref{fig.struct}, which consists of normalized $d$-dimensional input $\xsmall\in \sd$ (a unit hyper-sphere), $p_1$ ReLUs (rectifier linear units $\max(\cdot, 0)$) at the first hidden-layer, $p_2$ ReLUs at the second hidden-layer, bottom-layer weights (between input and 1$^\text{st}$ hidden-layer) $\Vzero\in\mathds{R}^{(p_1 d)\times 1}$, middle-layer weights (between 1$^\text{st}$ hidden-layer and 2$^\text{nd}$ hidden-layer) $\Wzero\in\mathds{R}^{(p_1p_2)\times 1}$, and top-layer weights $\w\in \mathds{R}^{p_2\times 1}$ (between 2$^\text{nd}$ hidden-layer and output).

Let $\hxRF\in\mathds{R}^{p_1\times 1}$ denote the output of the first hidden-layer. We then have
\begin{align}
    \hxRF[j]\defeq (\xsmall^T\Vzero[j]) \indicator{\xsmall^T\Vzero[j]>0},\ j=1,2,\cdots,p_1\label{eq.def_hxRF}.
\end{align}
(We use the superscript ``RF'' because $\hxRF$ is indeed the feature vector of a random feature model \citep{mei2019generalization}.)
After training the middle-layer weights, $\Wzero$ changes to $\Wone\defeq \Wzero+\overline{\DW}$.
Then, the change of the output is
\begin{align*}
    &\sum_{k=1}^{p_2}\w_k \indicator{\Wone[k]^T\hxRF > 0}\Wone[k]^T\hxRF -\sum_{k=1}^{p_2} \w_k \indicator{\Wzero[k]^T \hxRF>0}\Wzero[k]^T \hxRF.
\end{align*}
The NTK model \citep{jacot2018neural} assumes that $\overline{\DW}$ is very small and thus the activation pattern does not change much. In other words, we can approximate $\indicator{\Wone[k]^T\hxRF > 0}$ by $\indicator{\Wzero[k]^T \hxRF>0}$.
Define $\DW\in \mathds{R}^{(p_1 p_2)\times 1}$ as $\DW[k]\defeq \w_k\cdot  \overline{\DW}[k]$, $k=1,2,\cdots,p_2$.
Define $\hhx\in\mathds{R}^{1\times(p_1 p_2)}$ such that
\begin{align}
\hhx[k] \defeq (\hxRF)^T\cdot \indicator{(\hxRF)^T\Wzero[k]>0},\label{eq.def_h3}
\end{align}
where $k=1,2,\cdots,p_2$.
Therefore, the change of the output can be approximated by
\begin{align*}
    \sum_{k=1}^{p_2} \w_k \indicator{\Wzero[k]^T \hxRF>0} \overline{\DW}[k]^T\hxRF =\hhx \DW.
\end{align*}

We thus obtain a linear model in $\DW$. We provide an illustration of the formation and structure of these vectors in Fig.~\ref{fig.struct_vector}, Appendix~\ref{app.formation_of_features} in Supplementary Material.
Define the design matrix $\HHt\in \mathds{R}^{n\times (p_1p_2)}$ such that its $i$-th row is $\HHt_i=\hhXi$. Notice that overfitted gradient descent on a linear model converges to the min $\ell_2$-norm solution, which is denoted by
\begin{align*}
    \DWl\defeq\argmin_{\bm{w}\in\mathds{R}^{(p_1p_2)\times 1}}\|\bm{w}\|_2\text{ subject to }\HHt \bm{w}=\F(\XX)+\esmall.
\end{align*}
When $\HHt$ is full row-rank (which holds with high probability under certain conditions), the trained model is then
\begin{align}\label{eq.def_minl2_solution}
    \fl(\xsmall)=\hhx\DWl=\hhx\HHt^T(\HHt\HHt^T)^{-1}(\F(\XX)+\esmall).
\end{align}
Notice that the trained model is determined by multiple random variables. In order to analyze the generalization performance of the trained model, we have to make assumptions on the distribution of those random variables.
Let $\xdensity(\cdot)$, $\vdensity(\cdot)$, and $\wdensity(\cdot)$ denote the probability density function of $\xsmall$, $\Vzero[j]$, and $\Wzero[k]$, respectively. For simplicity, we make the following assumption that all random variables follow uniform distribution.
\begin{assumption}\label{as.normalize}\label{as.uniform}
The input $\xsmall$ and the bottom-layer initial weights $\Vzeroj$'s ($j=1,2,\cdots,p_1$) are \emph{i.i.d.} and uniformly distributed in $\sd$. In other words, $\xdensity(\cdot)$ and $\vdensity(\cdot)$ are both $\mathsf{unif}(\sd)$. The middle-layer initial weights $\Wzero[k]$'s ($k=1,2,\cdots,p_2$) are \emph{i.i.d.} and uniformly distributed in $\spp$. In other words, $\wdensity(\cdot)$ is $\mathsf{unif}(\spp)$. The top-layer weights $\w$ are all non-zero\footnote{We do not need to specify the distribution of $\w$, since $\w$ is absorbed into the regressor $\DW$ by definition $\DW[k]\defeq \w_k\cdot  \overline{\DW}[k]$, $k=1,2,\cdots,p_2$.}.
\end{assumption}

\begin{remark}\label{remark.why_only_middle}
Readers may be curious why we only train the middle-layer weights. Part of the reason is technicality:
if the bottom layer is also trained, the aggregate output of the first hidden-layer may have changed so much that the second hidden-layer's inputs and ReLU activation patterns change significantly from initialization, which may violate the NTK assumption.
The work in \citet{geifman2020similarity,chen2020deep} is not concerned about this difficulty, since they are mostly interested in the expressive power of the RKHS, assuming an infinite number of neurons. In contrast, we wish to capture the effect of finite width, and thus train only the middle layer to avoid this difficulty. More importantly, the middle-layer weights interact with both the first hidden layer and the second hidden layer, and are the major structural distinction compared with 2-layer NTK. 
This setting thus helps us to answer the following interesting question: will training the middle-layer alone already achieve the same (potential) benefit as training all layers (especially given that the latter encounters more technical difficulty)?
\end{remark}

\section{Generalization Performance}

In this section, we will show our main results about the generalization performance of the aforementioned 3-layer NTK model for a specific set of functions. We first introduce a set of ground-truth functions that may be learnable and then provide a high-probability upper bound on the test error. We then discuss some useful implications of our upper bound.

\subsection{A set of ground-truth functions that may be learnable}\label{subsec.expressiveness}
We define kernel functions $\KRF$, $\KTwo$, and $\KThree:\ [-1,\ 1]\mapsto\mathds{R}$ as follows (whose meanings will be explained soon):
\begin{align}
    &\KRF(a)\defeq \frac{\sqrt{1-a^2}+a\cdot \left(\pi-\arccos(a)\right)}{2d\pi},\label{eq.def_KRF}\\
    &\KTwo(a)\defeq a\cdot \frac{\pi-\arccos (a)}{2\pi},\label{eq.def_KTwo}\\
    &\KThree(a)\defeq\frac{\KTwo\left(2d\cdot \KRF(a)\right)}{2d}.\label{eq.def_KThree}
\end{align}
(Notice that $2d\cdot \KRF(a)\in[0,\ 1]$ for all $a\in[-1, 1]$ by Lemma~\ref{le.RF_kernel_bound} in Supplementary Material, Appendix~\ref{app.main}, and hence $\KThree(\cdot)$ is well defined.)
We define a set $\learnableSet$ of ground-truth functions based on those kernels:
\begin{definition}[learnable set of 3-layer NTK]
\begin{align}\label{eq.def_learnableSet}
    \learnableSet &\defeq \left\{\fg:\sd\mapsto\mathds{R}\ \Big|\ \fg(\xsmall)=\int_{\sd}\KThree(\xsmall^T\bm{z})g(\bm{z})d\xdensity(\bm{z}),\ \|g\|_\infty<\infty\right\},
\end{align}
where $\|g\|_{\infty}\defeq \sup_{\bm{z}\in\sd}|g(\bm{z})|$.
\end{definition}

To see why functions in $\learnableSet$ may be learnable, we can check what the learned result $\fl$ in Eq.~\eqref{eq.def_minl2_solution} should look like. 
When there are infinite number of neurons and there is no noise (i.e., $\esmall=\bm{0}$), what remains on the right-hand-side of Eq.~\eqref{eq.def_minl2_solution} can be viewed as the product of two terms, $\hhx\Ht^T$ and $(\Ht\Ht^T)^{-1}\F(\XX)$. For the first term $\hhx\Ht^T$, note that each row of $\Ht$ is given by $\hhXi$ for $i=1,2,\cdots,n$. Thus, when $p_1,p_2\to \infty$, the $i$-th element of $\hhx\Ht^T$, which is the inner product between $\hhx$ and $\hhXi$, converges in probability to $\KThree(\hhx(\hhXi)^T)$, which is exactly the kernel function of 3-layer NTK. By representing the second term $(\Ht\Ht^T)^{-1}\F(\XX)$ with a certain $g(\cdot)$, $\fl$ must then approach the form in Eq.~\eqref{eq.def_learnableSet}. (See Supplementary Material, Appendix~\ref{app.learnableSet_derivation} for details.)
Intuitively, $\KThree$ can be thought of as the composition of the kernels of each of the two layers, which are $\KTwo$ and $\KRF$ given in Eq.~\eqref{eq.def_KTwo} and Eq.~\eqref{eq.def_KRF}. Specifically, suppose that
we fix the output of the first hidden layer (i.e., $\hxRF$) and regard it as the input of a 2-layer NTK formed by the top two layers of the 3-layer neural network. By letting $p_2\to\infty$, we can show that the inner product between $\hhx$ and $\hhXi$ approaches $\KTwo((\hxRF)^T\hXiRF)$ (with necessary normalization of $\hxRF$ and $\hXiRF$), where $\KTwo$ is exactly the kernel of 2-layer NTK in \cite{ju2021generalization}. Second, when $p_1\to \infty$, we can show that $(\hxRF)^T\hXiRF$ approaches $\KRF(\xsmall^T \Xii)$, where $\KRF$ is exactly the kernel of the random-feature model \citep{mei2019generalization}. 
In summary, we expect that functions in $\learnableSet$ can be approximated by $\fl(\cdot)$. However, we note that the above deviation is only about the expressiveness of 3-layer NTK and it does not precisely reveal its generalization performance.

\subsection{An upper bound on the generalization error}
We now present the first main result of this paper, which is an upper bound that quantifies the relationship between the generalization performance and system parameters.
\begin{theorem}\label{th.main_simplified}
    For any ground-truth function $\f(x)=\fg(x)\in \learnableSet$, when $d$ is fixed and $p_1,p_2$ are much larger than $n$, (with high probability) we have
    \begin{align}\label{eq.simple_bound}
        &|\fl(\xsmall)-\f(\xsmall)|=\underbrace{O\left(\frac{\|g\|_\infty}{\sqrt{n}}\right)}_{\text{Term A}}+\bigg(\underbrace{O\left(\frac{\|g\|_1}{\sqrt{p_2}}\right)}_{\text{Term B}}+\nonumber\\
        &\underbrace{O\left(\|g\|_1\sqrt[4]{\frac{\log p_1}{p_1}}\right)}_{\text{Term C}}+\underbrace{\frac{\|\esmall\|_2}{\sqrt{n}}}_{\text{Term D}}\bigg)\cdot \underbrace{O\left(n^{\frac{2}{d-1}+\frac{1}{2}}\cdot \sqrt{\log n}\right)}_{\text{Term E}}.
    \end{align}
\end{theorem}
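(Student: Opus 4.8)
The plan is to reduce the three-layer problem, conditionally on the bottom-layer weights $\Vzero$, to a two-layer NTK problem on the random-feature-transformed inputs, and then to absorb the error coming from the finiteness of $p_1$ as a separate perturbation. Conditioned on $\Vzero$, formula \eqref{eq.def_h3} shows that $\hhx$ and $\hhXi$ are exactly the feature maps of a two-layer NTK whose ``inputs'' are the random-feature vectors $\hxRF,\hXiRF$, of common squared norm $\approx p_1\KRF(1)=p_1/(2d)$ since $\KRF(1)=1/(2d)$. The three kernel identities \eqref{eq.def_KRF}--\eqref{eq.def_KThree} then arise by composition in the limit: $p_2\to\infty$ turns the normalized inner product $\hhx(\hhXi)^T$ into $\KTwo$ evaluated at the normalized random-feature inner product, and $p_1\to\infty$ turns that inner product into $2d\cdot\KRF(\xsmall^T\Xii)$, with the extra $1/(2d)$ in \eqref{eq.def_KThree} coming from the feature-norm normalization. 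So the argument structure of \citet{ju2021generalization} can be followed, but applied to the transformed inputs and augmented with an approximation step for $p_1$.

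Concretely, I would first construct a \emph{pseudo ground-truth}: an explicit (not min-norm) coefficient vector $\DWs\in\mathds{R}^{p_1 p_2}$ such that (i) $\hhx\DWs$ approximates $\fg(\xsmall)$ at the test point and at all $n$ training points, with residual controlled by Terms~B and~C, and (ii) $\|\DWs\|_2=O(\|g\|_1)$ up to $d$-dependent constants. The construction starts from the representation $\fg(\xsmall)=\int_{\sd}\KThree(\xsmall^T\bm z)\,g(\bm z)\,d\xdensity(\bm z)$ together with the identity $\KThree(a)=\tfrac1{2d}\KTwo(2d\KRF(a))$: one writes $\KTwo(\cdot)$ as an expectation over a middle-layer weight $\bm w\sim\mathsf{unif}(\spp)$ of a product of ReLU features, and $\KRF(a)$ as an expectation over a bottom-layer weight $\bm v\sim\mathsf{unif}(\sd)$ of a product of ReLU features, then replaces these expectations by empirical averages over the $p_2$ realized middle-layer neurons and the $p_1$ realized bottom-layer neurons. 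Bernstein/Hoeffding concentration over $k=1,\dots,p_2$, made uniform over $\bm z$ by an $\epsilon$-net on $\sd$ (harmless since $d$ is fixed), yields the $O(\|g\|_1/\sqrt{p_2})$ residual of Term~B, and the analogous replacement at the $p_1$ level yields Term~C.

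Second, I would pass from the pseudo ground-truth to the min-$\ell_2$-norm interpolant. Since $\HHt\DWl=\F(\XX)+\esmall$ while $\HHt\DWs=\F(\XX)+\bm r$ with $\bm r$ the small training residual from the previous step, the min-norm property forces $\DWl-\DWs$ into the row space of $\HHt$, so the error splits as $\fl(\xsmall)-\fg(\xsmall)=[\hhx\DWs-\fg(\xsmall)]+[\fl(\xsmall)-\hhx\DWs]$, where the last bracket equals $\hhx\HHt^T(\HHt\HHt^T)^{-1}(\esmall-\bm r)$ plus the component of $\hhx\DWs$ orthogonal to the rows of $\HHt$. Bounding $\hhx\HHt^T(\HHt\HHt^T)^{-1}(\esmall-\bm r)$ needs a lower bound on $\lambda_{\min}(\HHt\HHt^T)$: I would show $\HHt\HHt^T$ is close, entrywise (hence in operator norm up to a factor $n$), to $p_1p_2$ times the kernel matrix $K$ with $K_{ij}=\KThree(\Xii^T\Xjj)$, and lower-bound $\lambda_{\min}(K)$ via the spherical-harmonic (Mercer) decomposition of $\KThree$ — all its harmonic coefficients are positive (inherited from $\KTwo$ and $\KRF$) and decay polynomially, which for $n$ generic points on $\sd$ gives $\lambda_{\min}(K)\gtrsim n^{-(4/(d-1)+1)}$ up to logarithmic factors; combined with $\|\hhx\|=O(\sqrt{p_1p_2})$ this produces the amplification factor $\|\hhx\HHt^T(\HHt\HHt^T)^{-1}\|_2=O(n^{2/(d-1)+1/2}\sqrt{\log n})$, i.e.\ Term~E, which multiplies the noise contribution $\|\esmall\|_2/\sqrt n$ (Term~D) and the propagated training residuals (Terms~B and~C after passing from $\infty$-norm to $2$-norm). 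The standalone Term~A, $O(\|g\|_\infty/\sqrt n)$, comes from the orthogonal-component piece, which in the limit is the bias of the idealized $\KThree$-kernel ridgeless regression — the gap between $\fg$ and the $\KThree$-interpolant of $\fg$ at a fresh point — and is controlled via the RKHS structure ($\fg$ lies in the RKHS of $\KThree$ with norm $O(\|g\|_\infty)$, using $\|T_{\KThree}\|_{\mathrm{op}}\le\KThree(1)<1$) and a Schur-complement bound for $K^{-1}$, so that $\lambda_{\min}$ does not enter here.

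The main obstacle is the $p_1$-level approximation and, in particular, the appearance of the $\sqrt[4]{(\log p_1)/p_1}$ rate in Term~C rather than $\sqrt{(\log p_1)/p_1}$. The inner kernel $\KRF$ and the outer kernel $\KTwo$ each involve $\arccos$ and square roots that are only Hölder-$\tfrac12$ at the endpoints $\pm1$, and — because the representing integral defining $\learnableSet$ ranges over \emph{all} of $\sd$, including directions $\bm z$ nearly aligned with $\xsmall$ or with a training point — the composition $\tfrac1{2d}\KTwo(2d\KRF(\cdot))$ must be controlled exactly where this non-smoothness is worst, so that an $O(\sqrt{(\log p_1)/p_1})$ error in the empirical random-feature inner product is magnified to $O(((\log p_1)/p_1)^{1/4})$ after being pushed through the $\arccos$. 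Moreover, the middle-layer activation indicators $\indicator{(\hXiRF)^T\Wzero[k]>0}$ depend on $\Vzero$ through $\hXiRF$, so the $p_1$ and $p_2$ randomness are entangled and cannot be handled by a single product-concentration argument; I would decouple them by conditioning on $\Vzero$ first, expressing everything in terms of the realized random-feature inner products, and only then invoking their concentration around $2d\KRF$, carefully tracking how the resulting perturbation propagates through the $\KTwo$ layer, the matrix $\HHt\HHt^T$, and its inverse. Checking that these perturbations do not destroy the lower bound on $\lambda_{\min}(\HHt\HHt^T)$ — which only requires $p_1,p_2$ polynomially large in $n$, with no constraint on their relative growth — is the technical heart of the argument.
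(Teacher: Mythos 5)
Your decomposition is the same as the paper's: introduce the pseudo ground-truth $\pseudoGT(\xsmall)=\hhx\DWs$ defined from the \emph{realized} $\Vzero,\Wzero$, and split the error into (i) the gap between $\pseudoGT$ and $\fg$ (giving the $p_1$ and $p_2$ terms), (ii) the noise / training-residual term pushed through $\hhx\HHt^T(\HHt\HHt^T)^{-1}$, and (iii) the component of $\DWs$ orthogonal to the row space of $\HHt$. Your treatment of the $p_1$-level and $p_2$-level concentration is also essentially the paper's: a uniform (over $\xsmall,\bm z$) law of large numbers for $\tfrac1{p_1}(\hxRF)^T\hzRF$, composed with the H\"older-$\tfrac12$ modulus $|\arccos a_1-\arccos a_2|\le \tfrac{\sqrt2\pi}{2}\sqrt{|a_1-a_2|}$, is exactly what produces the $\sqrt[4]{(\log p_1)/p_1}$ rate in the paper (Lemmas~\ref{le.max_RF_converge}, \ref{le.temp_083001}, \ref{le.arccos_delta}, \ref{le.arccos_to_arccos}). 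Conditioning on $\Vzero$ first to decouple the $p_1$ and $p_2$ randomness is also what the paper does.

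The genuine gap is your argument for Term~A, the orthogonal piece $\hhx(\mathbf I-\PH)\DWs$. You propose to bound it via the RKHS of $\KThree$ and a Schur-complement (power-function) estimate, claiming a dimension-independent $O(\|g\|_\infty/\sqrt n)$ rate ``so that $\lambda_{\min}$ does not enter here.'' That route does not give the claimed rate. The interpolation error of a $\KThree$-RKHS function is $\|f\|_{\mathcal H}\cdot P(\xsmall)$ with $P(\xsmall)^2=\KThree(1)-k(\xsmall)^T K_n^{-1}k(\xsmall)$, and for $n$ i.i.d.\ points on $\sd$ the decay of $\expectation[P^2]$ is governed by the tail of the Mercer spectrum of $\KThree$, which on a $(d-1)$-sphere with a kernel that is only H\"older near $\pm1$ decays at a $d$-dependent polynomial rate — it is not $O(1/n)$ uniformly in $d$. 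The paper instead treats $\DWs$ itself as an expectation: $\DWs[k]=\expectation_{\bm z\sim\mu}\bigl[\indicator{(\hzRF)^T\Wzero[k]>0}\hzRF\,g(\bm z)/(p_1p_2)\bigr]$, and chooses the explicit vector $\bm a$ with $\bm a_i=g(\Xii)/(n p_1 p_2)$, so that $\HHt^T\bm a=\tfrac1n\sum_i\mathbf K_i$ is precisely the Monte~Carlo estimate of $\DWs$ over the $n$ i.i.d.\ samples $\Xii\sim\mu$. Chebyshev for bounded i.i.d.\ vectors (Lemma~\ref{le.chebyshev}) then gives $\sqrt{p_1p_2}\,\|\DWs-\HHt^T\bm a\|_2=O(\|g\|_\infty/\sqrt n)$, and combined with $|\hhx(\mathbf I-\PH)\DWs|\le\|\hhx\|_2\min_{\bm a}\|\DWs-\HHt^T\bm a\|_2$ (Lemma~\ref{le.hPIDW}) this gives Term~A with no reference to the RKHS geometry or eigenvalue decay of $\KThree$. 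That Monte~Carlo step is the key idea you are missing; it is why Term~A has a clean $1/\sqrt n$ rate with $\|g\|_\infty$ (not an RKHS norm) on the right.

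A secondary, non-fatal divergence: for the lower bound on $\lambda_{\min}(\HHt\HHt^T)$ you suggest a spherical-harmonic Mercer argument for $\KThree$. The paper instead works at the level of the \emph{realized random-feature Gram matrix}: it writes $\cos\theta\,(\pi-\theta)/(2\pi)$ as a power series with nonnegative coefficients (Lemma~\ref{le.taylor_of_kernel}), interprets each power as the Gram matrix of a Kronecker power of the normalized $\hXiRF$, applies Gershgorin to the high-power terms, and reduces everything to the minimum pairwise angle $\RFanglemin$ between the $\hXiRF$ (Lemma~\ref{le.Srikant}), which it lower-bounds via a cap-area argument on $\sd$ together with concentration of $\tfrac1{p_1}(\hXiRF)^T\hXjRF$ (Lemma~\ref{le.min_angle}). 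Your Mercer-operator route would require a separate argument that the empirical kernel matrix on $n$ random points inherits a spectral gap from the operator's low modes; it may be doable, but it is not a drop-in replacement and you would need to verify it yields the same $n^{-2/(d-1)}$-type dependence that produces Term~E. Conditioning on $\Vzero$, approximating $\tfrac1{p_2}\HHt\HHt^T$ entrywise by $\Hinf$ and then by $\Hinfnormalized$, and transferring via $\|\mathbf A-\mathbf B\|_2\le n\max_{i,j}|\mathbf A_{ij}-\mathbf B_{ij}|$ (Lemma~\ref{le.small_diff_eig}), as you sketch, is exactly what the paper does.
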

A more precise version of the upper bound and the condition of Theorem~\ref{th.main_simplified} as well as its derivation can be found in Supplementary Material, Appendix~\ref{app.full_theorem}.

As we can see, Eq.~\eqref{eq.simple_bound} captures how the test error depends on finite values of parameters $n$, $p_1$, $p_2$, $\|\esmall\|_2$, and $g$. Later in this section we will examine more closely how $n$, $p_1$, and $p_2$ affect the value of the upper bound. Regarding the dependency on $g$, Eq.~\eqref{eq.simple_bound} works as long as $\|g\|_1$ and $\|g\|_\infty$ are finite\footnote{Indeed, as long as $\|g\|_\infty<\infty$, then $\|g\|_1<\infty$. That is why we only include the condition $\|g\|_\infty<\infty$ in Eq.~\eqref{eq.def_learnableSet}. Notice that the assumption $\|g\|_\infty<\infty$ can be relaxed to $\|g\|_1<\infty$ by similar methods showing in \cite{ju2021generalization}. However, as shown in \cite{ju2021generalization}, such relaxation leads to a different upper bound with slower descent speed with respect to $n$.\label{footnote.finite_g_1}}. Intuitively, the norm of $g$ represents the complexity of the ground-truth function in $\learnableSet$. When the norm of $g$ is larger, then the right-hand-side of Eq.~\eqref{eq.simple_bound} becomes larger, which indicates that such ground-truth function is harder to learn. A simple example is that if we enlarge a ground truth function $\fg\in \learnableSet$ by 2 times (which means $g$ is 2 times larger), then since the model is linear, the test error $|\fl(\xsmall)-\f(\xsmall)|$ will become 2 times larger.
We will discuss more about which types of functions satisfy the condition of finite norm of $g$ in Section~\ref{sec.functions}.

Next, we will discuss some implications of this upper bound of 3-layer NTK. While some of them are similar to 2-layer NTK, others are significantly different, revealing the complexity due to having more layers.

\subsection{Interpretations similar to 2-layer NTK}
Based on the upper bound in Theorem~\ref{th.main}, we have the following insights for 3-layer NTK, which are similar to those for 2-layer NTK shown in \cite{ju2021generalization}. These similarities may reveal some intrinsic properties of the NTK models regardless of the number of layers.

\textbf{Zero test error with $n\to \infty$ in the ideal situation:} In the ideal situation where there are infinitely many neurons and no noise, the only remaining term in Eq.~\eqref{eq.simple_bound} is Term~A. Notice that Term~A decreases to zero as $n\to\infty$, which indicates that the generalization error decreases to zero when more training data are provided in the ideal situation. Term~A suggests that such decreasing speed is at least $1/\sqrt{n}$. Such result is consistent with that of 2-layer NTK, e.g., in \cite{arora2019fine}.

\subsection{Insights that are distinct compared with 2-layer NTK}

Compared with 2-layer NTK, an important difference for 3-layer NTK is that there are more than one hidden-layers. Therefore, the speed of the descent of 3-layer NTK involves the interaction between two hidden-layers.

\textbf{Descent with respect to number of neurons:} In Eq.~\eqref{eq.simple_bound}, Term B and Term C contain $p_1$ and $p_2$, respectively. For any given $n$ and noise level $\|\esmall\|_2$, Terms~A and D do not change, and Term~E decreases with $p_1$ and $p_2$. (More discussion about Term~E can be found in Supplementary Material, Appendix~\ref{app.noise_effect}, where we discuss the noise effect.) Therefore, by increasing $p_1$ and $p_2$, Term B and Term C keep decreasing. In summary, right-hand-side of Eq.~\eqref{eq.simple_bound} decreases as the number of neurons $p_1$ and $p_2$ increases, which validates the descent in the overparameterized region of 3-layer NTK.

\begin{figure}[t!]
\centering
\includegraphics[width=0.9\textwidth]{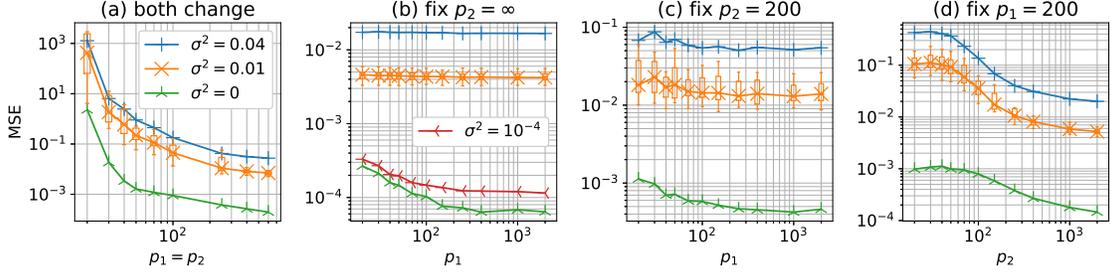}
\caption{Curves of MSE for 3-layer NTK (no-bias) with respect to $p_1$ or $p_2$ when there exists Gaussian noise whose mean is zero and the variance is $\sigma^2$. The ground-truth function is $\f(\xsmall)=\left(\bm{x}^T\bm{e}_1\right)^2+\left(\bm{x}^T\bm{e}_1\right)^3$ where $d=3$. Sample size is $n=200$. Every curve is the median of 20 random simulations.}\label{fig.noise_effect_p}
\end{figure}

\textbf{The second hidden-layer is more important:} As shown in Eq.~\eqref{eq.simple_bound}, $p_1$ and $p_2$ play different role in the descent of the generalization error. Comparing Term~B and Term~C of Eq.~\eqref{eq.simple_bound}, we can see that the upper bound of the test error $|\fl(\xsmall)-\f(\xsmall)|$ decreases at the speed of $\sqrt{p_2}$ and $\sqrt[4]{p_1/\log p_1}$, respectively.
We emphasize that this difference is not due to the number of weights/parameters contributed by the number of neurons in each hidden-layer of $p_1$ and $p_2$ \footnote{Specifically, the number of weights that get trained equals to $p_1p_2$ and the total number of weights for bottom, middle, and top layers equals to $d p_1+ p_1 p_2+ p_2$. In other words, the number of weights (either for trained ones or total) does not increase faster by increasing $p_2$ instead of $p_1$.}.
Instead, we conjecture that such difference in the speed of descent may be due to the different positions in this 3-layer neural network structure, where the second hidden-layer takes the trained middle-layer weights as its input (and thus utilizes the trained weights better than the first hidden-layer).


We use numerical results to illustrate the different role of $p_1$ and $p_2$ in reducing the generalization error. We fix $p_2=200$ and plot the MSE with respect to $p_1$ in Fig.~\ref{fig.noise_effect_p}(b). Although the test error decreases when $p_1$ increases, the decreasing speed is slow, especially for the noisy situation. Such a slow decreasing speed with $p_1$ remains even when $p_2$ is fixed to a much higher value. For example, in Fig.~\ref{fig.noise_effect_p}(b), we fix $p_2=\infty$, we still observe the similarly slow decreasing speed with $p_1$ as shown by Fig.~\ref{fig.noise_effect_p}(c). In contrast, the descent with respect to $p_2$ should be easier to observe and can reach a lower test MSE. In Fig.~\ref{fig.noise_effect_p}(d), we fix $p_1=200$ and increase $p_2$ (i.e., we exchange the values of $p_1$ and $p_2$ in Fig.~\ref{fig.noise_effect_p}(c)(d)). As we can see, all three curves in Fig.~\ref{fig.noise_effect_p}(d) have a more obvious descent and decrease to lower MSE compared with those in Fig.~\ref{fig.noise_effect_p}(c), which validates our conjecture that the second hidden-layer is more important.

Notice that our upper bound Eq.~\eqref{eq.simple_bound} also suggests a descent when both $p_1$ and $p_2$ increase simultaneously. We use simulation result by Fig.~\ref{fig.noise_effect_p}(a) to support this point. We fixed $n=200$ and let $p_1=p_2$ increase simultaneously. The ground-truth model in this figure is $\f(\xsmall)=(\xsmall^T\bm{e})^2+(\xsmall^T\bm{e})^3$ where $d=3$. The green, orange, and blue curves denote the situations of $\sigma^2=0$ (no noise), $\sigma^2=0.01$, and $\sigma^2=0.04$, respectively. Every point in this figure is the median of 20 simulation runs. We also provide the box plot\footnote{From bottom to top, the five horizontal lines of each marker of a box plot represent the minimum (excluding outliers), first quartile (25\%), median (50\%), third quartile (75\%), and maximum (excluding outliers), respectively. See \citep{mcgill1978variations} for more details.}  of the situation of $\sigma^2=0.01$ (correspond to the orange curve). It is obvious that all three curves descend, which verifies that the generalization error of the overfitted 3-layer NTK model decreases when $p_1$ and $p_2$ increases simultaneously at the same speed. By observing the box plot for the situation $\sigma^2=0.01$ (the orange curve), we also notice that when $p_1=p_2$ becomes large, the variance becomes small. This is because all initial weights are \emph{i.i.d.} random and a large number of weights may reduce the variance of the model due to the law of large numbers. Our upper bound in Theorem~\ref{th.main} also suggests such reduced variance as the probability in Theorem~\ref{th.main} increases as $p_1$ increases.
\section{Types of Ground-Truth Functions}\label{sec.functions}

Are 3-layer (i.e., deeper) networks better than 2-layer networks in any way for generalization performance? In the last section, we have seen that both 3-layer NTK and 2-layer NTK can achieve zero test error when $n\to\infty$ in the ideal noiseless situation,  when the ground-truth functions are in their respective learnable set\footnote{We illustrate the generalization performance of ground-truth functions outside the learnable set in Supplementary Material, Appendix~\ref{sec.outside_learnable_set}.}. 
A natural question is then to compare the learnable sets between these two models, and to compare the generalization performance when the ground-truth function belongs to both learnable sets.
In this section, we provide some answers by studying various types of ground-truth functions and their effects on the generalization performance.


\subsection{Size of the learnable set}
For a 2-layer NTK, as shown in \cite{ju2021generalization}, when no bias is used in ReLU, the corresponding learnable set $\learnableSetTwo$ contains all even polynomials and linear functions, but does not contain other odd polynomials. In order to learn both even and odd polynomials, it is critical that bias is added to ReLU \citep{satpathi2021dynamics,ju2021generalization}. 
In contrast, we prove the following result:

\begin{proposition}\label{prop.concise_learnable_set}
 $\learnableSet$ (with unbiased ReLU, middle layer being trained) already contains all polynomials with finite degree (i.e., including both even and odd polynomials).
Further, the learnable set $\learnableSet$ of 3-layer NTK is strictly larger than that of the 2-layer NTK with unbiased ReLU, and is at least as large as that of the 2-layer NTK with biased ReLU.
\end{proposition}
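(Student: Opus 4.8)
The plan is to reduce both the polynomial claim and the two comparisons to statements about the Gegenbauer (zonal) spectrum of the dot-product kernel $\KThree$. Recall that the integral operator $T_{\KThree}g(\xsmall)=\int_{\sd}\KThree(\xsmall^T\bm z)g(\bm z)\,d\xdensity(\bm z)$ is diagonalized by spherical harmonics: if $Y$ is a spherical harmonic of degree $k$ on $\sd$, then $T_{\KThree}Y=\hat\lambda_k^{(3)}Y$, where $\hat\lambda_k^{(3)}$ is, up to a fixed positive normalizer, the degree-$k$ coefficient in the expansion of $\KThree(a)$ in the orthogonal polynomials $\{G_j\}_{j\ge0}$ for the marginal law of $\xsmall^T\bm z$ on $[-1,1]$ (Funk--Hecke). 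Hence, if $\hat\lambda_k^{(3)}\neq 0$ for all $k$, then for each harmonic $Y$ of degree $k$ we may take $g=Y/\hat\lambda_k^{(3)}$, which is bounded on $\sd$ because $Y$ is a polynomial, giving $Y=\int_{\sd}\KThree(\xsmall^T\bm z)g(\bm z)\,d\xdensity(\bm z)\in\learnableSet$. Since $g\mapsto T_{\KThree}g$ is linear and every finite-degree polynomial restricted to $\sd$ is a finite linear combination of spherical harmonics, this yields that $\learnableSet$ contains all finite-degree polynomials. So the crux is to show $\hat\lambda_k^{(3)}>0$ for every $k\ge 0$.

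I would establish this by computing power series. Using $\arccos a=\tfrac{\pi}{2}-\arcsin a$ one gets $\KTwo(a)=\tfrac14 a+\tfrac{1}{2\pi}\,a\arcsin a$; since $\arcsin a$ has a power series with nonnegative coefficients, $a\arcsin a=\sum_{m\ge1}\alpha_m a^{2m}$ with $\alpha_m>0$, so $\KTwo(a)=\tfrac14 a+\sum_{m\ge1}\tilde\alpha_m a^{2m}$ with $\tilde\alpha_m>0$ (only even powers besides the linear term). Next, $2d\,\KRF(a)=\tfrac12 a+\tfrac1\pi\bigl(\sqrt{1-a^2}+a\arcsin a\bigr)$, and since $\tfrac{d}{da}\bigl(\sqrt{1-a^2}+a\arcsin a\bigr)=\arcsin a$ has nonnegative coefficients and the expression equals $1$ at $a=0$, we get $\sqrt{1-a^2}+a\arcsin a=1+\sum_{m\ge1}\beta_m a^{2m}$ with $\beta_m>0$. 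Therefore $b(a):=2d\,\KRF(a)=\tfrac1\pi+\tfrac12 a+\sum_{m\ge1}\tfrac{\beta_m}{\pi}a^{2m}$ has a power series with strictly positive constant term, strictly positive linear term, and strictly positive even coefficients (and zero odd coefficients of degree $\ge 3$). Substituting $b(a)$ into $\KThree(a)=\tfrac1{2d}\KTwo(b(a))$ — legitimate since $b(a)\in[0,1]$ for $a\in[-1,1]$, as noted after Eq.~\eqref{eq.def_KThree}, so the composition converges absolutely — the term $\tfrac1{2d}\tilde\alpha_1\,b(a)^2$ already contributes $\tfrac{\tilde\alpha_1}{2d}\cdot 2\cdot\tfrac12\cdot(\text{constant or even-degree coefficient of }b)\cdot a^{\text{odd}}$, i.e.\ products of the degree-$1$ term of $b$ with its even-degree terms, which is strictly positive in \emph{every} odd degree, while $\tfrac1{8d}b(a)$ and all $b(a)^{2m}$ contribute strictly positive coefficients in every even degree. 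Collecting terms, $\KThree(a)=\sum_{i\ge0}c_i a^i$ with $c_i>0$ for all $i$.

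Now the classical fact that the connection coefficients from monomials to the orthogonal polynomials $G_j$ are nonnegative — $a^i=\sum_{j\le i,\ j\equiv i\,(2)}\gamma_{i,j}G_j(a)$ with $\gamma_{i,j}>0$ — forces $\hat\lambda_k^{(3)}\propto\sum_{i\ge k,\ i\equiv k\,(2)}c_i\gamma_{i,k}>0$ for all $k$, completing the proof that $\learnableSet$ contains all finite-degree polynomials. For the comparison with 2-layer unbiased NTK, the same monomial expansion applied to $\KTwo(a)=\tfrac14 a+\sum_{m\ge1}\tilde\alpha_m a^{2m}$ shows its degree-$(2j+1)$ coefficient vanishes for every $j\ge1$ (the only odd monomial present is $a^1$ and $\gamma_{1,2j+1}=0$). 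Hence the range of $T_{\KTwo}$ on $L^\infty$, which is $\learnableSetTwo$, has zero component in every odd eigenspace of degree $\ge3$, so an odd polynomial such as $\xsmall\mapsto(\xsmall^T\bm e_1)^3$ (with nonzero degree-$3$ harmonic component) is not in $\learnableSetTwo$, whereas it lies in $\learnableSet$; together with the inclusion $\learnableSetTwo\subseteq\learnableSet$ — obtained by sending $g$ to the function whose degree-$k$ harmonic component is $(\hat\lambda_k^{(2)}/\hat\lambda_k^{(3)})$ times that of $g$, which stays bounded because the ratio sequence acts as a bounded $L^\infty$-multiplier once one knows $\hat\lambda_k^{(3)}$ decays no faster than $\hat\lambda_k^{(2)}$ — this gives the strict inclusion. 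The claim that $\learnableSet$ is at least as large as $\learnableSetTwoBias$ is handled in the same way: show the relevant zonal coefficients of the biased 2-layer kernel are all positive and decay no slower than $\hat\lambda_k^{(3)}$, so membership in $\learnableSetTwoBias$ transfers into $\learnableSet$ by the same multiplier argument (consistently with the RKHS identification of \citet{geifman2020similarity,chen2020deep}).

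I expect the main obstacle to be twofold. First, verifying that \emph{all} power-series coefficients of the composed kernel $\KThree$ are strictly positive: the odd-degree ones appear only through cross terms of $b(a)^2$ inside $\KTwo$, so the argument genuinely needs both the positive linear coefficient of $2d\,\KRF$ and its positive even coefficients, and one must also be careful about absolute convergence of the composition on the full interval $[-1,1]$. Second, the $L^\infty$ bookkeeping in the set comparisons: because $\learnableSet$ is defined by the condition $\|g\|_\infty<\infty$ rather than an $L^2$ condition, transferring membership between kernels requires controlling eigenvalue ratios as $L^\infty\!\to\!L^\infty$ multipliers, i.e.\ quantitative two-sided polynomial bounds on $\hat\lambda_k^{(3)}$, $\hat\lambda_k^{(2)}$, and the biased 2-layer eigenvalues — this is the technically heaviest step.
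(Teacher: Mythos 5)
Your approach to the first claim (that $\learnableSet$ contains all finite-degree polynomials) matches the paper's proof in essence: expand $\KThree$ as a power series, show every coefficient is strictly positive (your composition argument via $\KTwo(b(a))$ is the same observation as the paper's Lemma~\ref{le.ck_positive}, which proceeds by multinomial expansion), and then use nonnegativity of the connection coefficients from monomials to the zonal orthogonal polynomials of the appropriate parity (the paper's Lemma~\ref{le.Q_positive}) to conclude every zonal spectral coefficient of $\KThree$ is strictly positive. Whether one phrases the diagonalization via Funk--Hecke or via the convolution-on-$\mathsf{SO}(d)$ multiplication formula Eq.~\eqref{eq.freq_product} is cosmetic. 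Since a finite-degree polynomial is a finite linear combination of harmonics and the corresponding $g$ is then a finite linear combination of harmonics (hence $\|g\|_\infty<\infty$), that part of your argument is sound and equivalent to Propositions~\ref{prop.finite_sum_in_the_set} and~\ref{prop.coefficients}.

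There is, however, a genuine gap in the second half, and it is precisely in the step you flag as ``technically heaviest.'' You propose to prove $\learnableSetTwo\cup\learnableSetTwoBias\subseteq\learnableSet$ by arguing that the eigenvalue-ratio sequence $\hat\lambda_k^{(2)}/\hat\lambda_k^{(3)}$ (resp.\ the biased analogue) ``acts as a bounded $L^\infty$-multiplier once one knows $\hat\lambda_k^{(3)}$ decays no faster than $\hat\lambda_k^{(2)}$.'' That implication is false in general: boundedness of the spectral ratios gives $L^2\to L^2$ boundedness of the multiplier operator by Parseval, but $L^\infty\to L^\infty$ boundedness is a strictly stronger property (on the sphere, as on the torus, one needs something like absolute summability of the multiplier's kernel coefficients, not just boundedness of the multipliers themselves). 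The paper avoids this issue entirely: Proposition~\ref{prop.compare_efficient_and_set} proves the inclusion \emph{only under the relaxed constraint} $\|g\|_2<\infty$, and the discussion in Appendix~\ref{app.set_part_2} explicitly notes that the $\|g\|_1<\infty$ (a fortiori $\|g\|_\infty<\infty$) version is left as a conjecture. The paper's Lemma~\ref{le.coefficient_compare} establishes exactly the two-sided $\Theta/\Omega$ bounds on the zonal spectral coefficients you anticipate, but it only uses them in the $L^2$ Parseval identity, not in an $L^\infty$ multiplier argument. So if you intend to carry out your plan under $\|g\|_\infty$, you need an additional idea that the paper does not supply and that bounded eigenvalue ratios alone cannot deliver; if instead you would retreat to $\|g\|_2$ as the paper does, you should say so explicitly, since that is a material weakening of the statement being proved. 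Apart from that, your stated route to the spectral comparison is the right one and would reduce to the paper's Lemma~\ref{le.coefficient_compare} once the correct norm is chosen.
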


This independence to bias shown by Proposition~\ref{prop.concise_learnable_set} can be seen as one performance advantage of 3-layer NTK compared to 2-layer NTK. Details (including more precise statement) about this result is in Supplementary Material, Appendix~\ref{app.details_learnable_set}. Notice that \citet{geifman2020similarity,chen2020deep} show that when training all layers, 3-layer NTK leads to the same RKHS as 2-layer NTK with biased ReLU. However, it is unclear whether training one layer is already sufficient for achieving the same RKHS as training all layers. Our result in Proposition~\ref{prop.concise_learnable_set} answers  this question positively, i.e., only training the middle layer has already achieved all benefits of training all layers in terms of the size of the learnable set. (In other words, training all three layers will not expand the learnable set over training only the middle layer.)


\subsection{Different bias setting and high input dimension}\label{sec.high_d}

Even when a ground-truth function belongs to both $\learnableSetTwoBias$ and $\learnableSet$, their generalization performance may still exhibit some differences.
In this subsection, we will show that when the input dimension $d$ is high, some specific choice of bias of the 2-layer NTK has better generalization performance than others. In contrast, the 3-layer NTK is less sensitive to different bias settings.

Notice that adding bias to each ReLU in 2-layer NTK is equivalent to appending a constant to $\xsmall$ while still using ReLU without bias. Specifically, the input vector for biased 2-layer NTK is
\begin{align}\label{eq.def_bias_input}
    \xGeneralBias\defeq \left[\begin{smallmatrix}
    \sqrt{1-b^2}\cdot \xsmall\\
    b
    \end{smallmatrix}\right]\in \mathds{R}^{d+1},
\end{align}
where $b\in (0,1)$ denotes the initial bias.
We also normalize the first $d$ elements of $\xGeneralBias$ by $\sqrt{1-b^2}$ in Eq.~\eqref{eq.def_bias_input} to make sure that $\|\xGeneralBias\|_2=1$. Under this biased setting, the 2-layer NTK model has the learnable set
$\learnableSetTwoBias\defeq \{\int_{\sd}\KTwo\left((1-b^2)\xsmall^T\bm{z}+b^2\right)g(\bm{z})d\xdensity(\bm{z}),\ \|g\|_\infty<\infty\}$.

\begin{table*}
\begin{center}
\footnotesize
\begin{tabular}{ |c|c|c| } 
 \hline
 \textbf{Model} & \textbf{Learnable functions set} & \textbf{Category} \\ \hline
 3-layer NTK, no-bias & $\learnableSet=\left\{\int_{\sd}\KThree(\xsmall^T\bm{z})g(\bm{z})d\xdensity(\bm{z})\right\}$ & (i) \\ \hline
 2-layer NTK, no-bias & $\learnableSetTwo=\left\{\int_{\sd}\KTwo(\xsmall^T\bm{z})g(\bm{z})d\xdensity(\bm{z})\right\}$ & (ii) \\ 
 2-layer NTK, normal-bias & $\learnableSetTwoNB=\left\{\int_{\sd}\KTwo(\frac{d}{d+1}\xsmall^T\bm{z}+\frac{1}{d+1})g(\bm{z})d\xdensity(\bm{z})\right\}$ & (i) \\ 
 2-layer NTK, balanced-bias & $\learnableSetTwoBB=\left\{\int_{\sd}\KTwo(\frac{1}{2}\xsmall^T\bm{z}+\frac{1}{2})g(\bm{z})d\xdensity(\bm{z})\right\}$ & (i) \\ 
 \hline
\end{tabular}
\caption{Learnable functions for different NTK models. Category: (i) can learn both even- and odd-power polynomials; (ii) cannot learn other odd-power polynomials except linear functions. (We omit the condition $\|g\|_\infty<\infty$ in the expression of learnable sets to save space.)}\label{tb.learnable}
\end{center}
\end{table*}

A common setup for the initial magnitude of the bias of each ReLU is to use a value that is close or equal to the average magnitude of each element of input $\xsmall$, e.g., \cite{satpathi2021dynamics, ghorbani2019linearized}. Specifically, we let $b=\frac{1}{\sqrt{d+1}}$ in Eq.~\eqref{eq.def_bias_input}, and denote the corresponding learnable set by $\learnableSetTwoNB$. We refer to this setting as the ``normal-bias'' setting. Alternatively, the initial magnitude of the bias can be chosen to be close or equal to $\|\xsmall\|_2$. Specifically, we let $b=\frac{1}{\sqrt{2}}$ in Eq.~\eqref{eq.def_bias_input} and denote the corresponding learnable set by $\learnableSetTwoBB$. We refer to this second setting as ``balanced-bias''. The specific expression of $\learnableSetTwo$, $\learnableSetTwoNB$, and $\learnableSetTwoBB$ can be derived by using similar methods shown in Section~\ref{subsec.expressiveness} (results are listed in Table~\ref{tb.learnable}).

We now discuss how the two different bias settings could affect the generalization performance when $d$ is large. For 2-layer NTK under the normal-bias setting, the kernel is $\KTwo(\frac{d}{d+1}\xsmall^T\bm{z}+\frac{1}{d+1})$. Although it contains both even and odd power polynomials, we notice that when $d$ increases, $\KTwo$ approaches its no-bias counterpart $\KTwo(\xsmall^T\bm{z})$, which only contains even power polynomials and linear term. 
Thus, we conjecture that, by increasing $d$, the generalization performance of 2-layer NTK with normal-bias will deteriorate for those ground-truth functions inside $\learnableSetTwoNB$ but far away from $\learnableSetTwo$ (e.g., odd-degree non-linear polynomials). In contrast, for 2-layer NTK under the balanced-bias setting, the kernel is $\KTwo(\frac{1}{2}\xsmall^T\bm{z}+\frac{1}{2})$, which does not change with $d$. Therefore, we expect that such deterioration should not happen.
Note that in 3-layer NTK, although normal-bias setting still approaches no-bias setting when $d$
increases, there does not exist such performance deterioration, because $\learnableSet$ (the learnable set of 3-layer NTK without bias) already contains both even and odd power polynomials. These insights will be verified by numerical results below.

\begin{figure}[t!]
    \centering
    \includegraphics[width=0.95\textwidth]{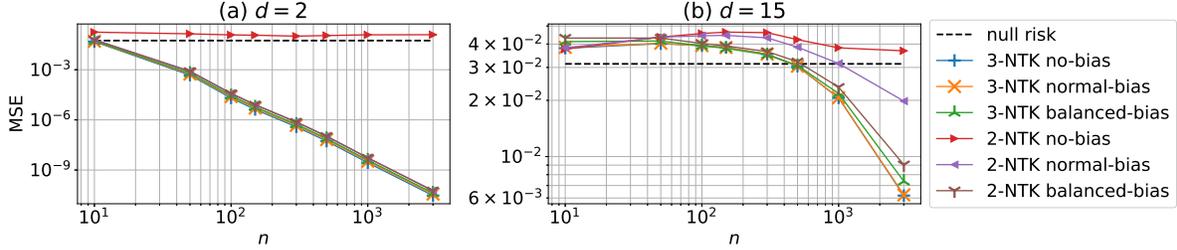}
    \caption{Comparison of test MSE with respect to $n$ between different NTK models when the number of neurons is infinite and without noise. The ground-truth function is $\f(\xsmall)=\frac{d+2}{3}\left(\bm{x}^T\bm{e}_1\right)^3-\bm{x}^T\bm{e}_1$. Every curve is the average of 10 random simulations.}
    \label{fig.type_B}
\end{figure}

We now use simulation results in Fig.~\ref{fig.type_B} to validate the conjecture that 3-layer NTK models are less sensitive to different bias settings than 2-layer NTK models. We let the ground-truth function be $\f(\xsmall)=\frac{d+2}{3}\left(\bm{x}^T\bm{e}_1\right)^3-\bm{x}^T\bm{e}_1$, which is orthogonal to $\learnableSetTwo$. In Fig.~\ref{fig.type_B}(a) when $d=2$, all settings have similar performance except 2-layer NTK without bias, whose test error is always above the null risk. In Fig.~\ref{fig.type_B}(b) when $d=15$, the purple curve of 2-layer NTK with normal bias gets closer to the red curve of 2-layer NTK without bias (and thus the generalization performance becomes worse), while other curves are still close to each other.
This validates our conjecture that 3-layer NTK models are less sensitive to different bias settings than 2-layer NTK models. Further simulations can be found in Appendix~\ref{app.more_simu_for_high_d}.

\section{Conclusion}
In this paper, we studied the generalization performance of overfitted 3-layer NTK models. Compared with 2-layer NTK models, 3-layer NTK is less sensitive to different bias settings. Further, training only the middle layer can get most of the performance advantage of 3-layer NTK, in terms of the learnable set.
Possible future directions include: (i) studying whether training other layers will get the same the benefit as training the middle layer; (ii) approximating the actual neural network where the learned result is far away from the initial state; (iii) investigating deeper network as well as other structures such as convolutional neural network (CNN) and recursive neural network (RNN).

\bibliographystyle{plainnat}
\bibliography{ref}

\begin{thebibliography}{39}
\providecommand{\natexlab}[1]{#1}
\providecommand{\url}[1]{\texttt{#1}}
\expandafter\ifx\csname urlstyle\endcsname\relax
  \providecommand{\doi}[1]{doi: #1}\else
  \providecommand{\doi}{doi: \begingroup \urlstyle{rm}\Url}\fi

\bibitem[Advani et~al.(2020)Advani, Saxe, and Sompolinsky]{advani2017high}
Madhu~S Advani, Andrew~M Saxe, and Haim Sompolinsky.
\newblock High-dimensional dynamics of generalization error in neural networks.
\newblock \emph{Neural Networks}, 132:\penalty0 428--446, 2020.

\bibitem[Allen-Zhu et~al.(2019)Allen-Zhu, Li, and Liang]{allen2019learning}
Zeyuan Allen-Zhu, Yuanzhi Li, and Yingyu Liang.
\newblock Learning and generalization in overparameterized neural networks,
  going beyond two layers.
\newblock In \emph{Advances in neural information processing systems}, pages
  6158--6169, 2019.

\bibitem[Arora et~al.(2019)Arora, Du, Hu, Li, and Wang]{arora2019fine}
Sanjeev Arora, Simon Du, Wei Hu, Zhiyuan Li, and Ruosong Wang.
\newblock Fine-grained analysis of optimization and generalization for
  overparameterized two-layer neural networks.
\newblock In \emph{International Conference on Machine Learning}, pages
  322--332, 2019.

\bibitem[Bartlett et~al.(2020)Bartlett, Long, Lugosi, and
  Tsigler]{bartlett2019benign}
Peter~L Bartlett, Philip~M Long, G{\'a}bor Lugosi, and Alexander Tsigler.
\newblock Benign overfitting in linear regression.
\newblock \emph{Proceedings of the National Academy of Sciences}, 2020.

\bibitem[Belkin et~al.(2018)Belkin, Ma, and Mandal]{belkin2018understand}
Mikhail Belkin, Siyuan Ma, and Soumik Mandal.
\newblock To understand deep learning we need to understand kernel learning.
\newblock In \emph{International Conference on Machine Learning}, pages
  541--549, 2018.

\bibitem[Belkin et~al.(2019)Belkin, Hsu, and Xu]{belkin2019two}
Mikhail Belkin, Daniel Hsu, and Ji~Xu.
\newblock Two models of double descent for weak features.
\newblock \emph{arXiv preprint arXiv:1903.07571}, 2019.

\bibitem[Bell(1965)]{bell1965gershgorin}
Howard~E Bell.
\newblock Gershgorin's theorem and the zeros of polynomials.
\newblock \emph{The American Mathematical Monthly}, 72\penalty0 (3):\penalty0
  292--295, 1965.

\bibitem[Bishop(2006)]{bishop2006pattern}
Christopher~M Bishop.
\newblock \emph{Pattern recognition and machine learning}.
\newblock Springer, 2006.

\bibitem[Chaudhry et~al.(1997)Chaudhry, Qadir, Rafique, and
  Zubair]{chaudhry1997extension}
M~Aslam Chaudhry, Asghar Qadir, M~Rafique, and SM~Zubair.
\newblock Extension of euler's beta function.
\newblock \emph{Journal of computational and applied mathematics}, 78\penalty0
  (1):\penalty0 19--32, 1997.

\bibitem[Chen and Qi(2005)]{chen2005best}
Chao-Ping Chen and Feng Qi.
\newblock The best bounds in wallis' inequality.
\newblock \emph{Proceedings of the American Mathematical Society}, pages
  397--401, 2005.

\bibitem[Chen and Xu(2020)]{chen2020deep}
Lin Chen and Sheng Xu.
\newblock Deep neural tangent kernel and laplace kernel have the same rkhs.
\newblock \emph{arXiv preprint arXiv:2009.10683}, 2020.

\bibitem[Dokmanic and Petrinovic(2009)]{dokmanic2009convolution}
Ivan Dokmanic and Davor Petrinovic.
\newblock Convolution on the $ n $-sphere with application to pdf modeling.
\newblock \emph{IEEE transactions on signal processing}, 58\penalty0
  (3):\penalty0 1157--1170, 2009.

\bibitem[Dutka(1981)]{dutka1981incomplete}
Jacques Dutka.
\newblock The incomplete beta function—a historical profile.
\newblock \emph{Archive for history of exact sciences}, pages 11--29, 1981.

\bibitem[d’Ascoli et~al.(2020)d’Ascoli, Refinetti, Biroli, and
  Krzakala]{d2020double}
St{\'e}phane d’Ascoli, Maria Refinetti, Giulio Biroli, and Florent Krzakala.
\newblock Double trouble in double descent: Bias and variance (s) in the lazy
  regime.
\newblock In \emph{International Conference on Machine Learning}, pages
  2280--2290. PMLR, 2020.

\bibitem[Fiat et~al.(2019)Fiat, Malach, and Shalev-Shwartz]{fiat2019decoupling}
Jonathan Fiat, Eran Malach, and Shai Shalev-Shwartz.
\newblock Decoupling gating from linearity.
\newblock \emph{arXiv preprint arXiv:1906.05032}, 2019.

\bibitem[Geifman et~al.(2020)Geifman, Yadav, Kasten, Galun, Jacobs, and
  Ronen]{geifman2020similarity}
Amnon Geifman, Abhay Yadav, Yoni Kasten, Meirav Galun, David Jacobs, and Basri
  Ronen.
\newblock On the similarity between the laplace and neural tangent kernels.
\newblock \emph{Advances in Neural Information Processing Systems},
  33:\penalty0 1451--1461, 2020.

\bibitem[Ghorbani et~al.(2019)Ghorbani, Mei, Misiakiewicz, and
  Montanari]{ghorbani2019linearized}
Behrooz Ghorbani, Song Mei, Theodor Misiakiewicz, and Andrea Montanari.
\newblock Linearized two-layers neural networks in high dimension.
\newblock \emph{arXiv preprint arXiv:1904.12191}, 2019.

\bibitem[Ghorbani et~al.(2021)Ghorbani, Mei, Misiakiewicz, and
  Montanari]{ghorbani2021linearized}
Behrooz Ghorbani, Song Mei, Theodor Misiakiewicz, and Andrea Montanari.
\newblock Linearized two-layers neural networks in high dimension.
\newblock \emph{The Annals of Statistics}, 49\penalty0 (2):\penalty0
  1029--1054, 2021.

\bibitem[Hastie et~al.(2009)Hastie, Tibshirani, and
  Friedman]{hastie2009elements}
Trevor Hastie, Robert Tibshirani, and Jerome Friedman.
\newblock \emph{The elements of statistical learning: data mining, inference,
  and prediction}.
\newblock Springer Science \& Business Media, 2009.

\bibitem[Hastie et~al.(2019)Hastie, Montanari, Rosset, and
  Tibshirani]{hastie2019surprises}
Trevor Hastie, Andrea Montanari, Saharon Rosset, and Ryan~J Tibshirani.
\newblock Surprises in high-dimensional ridgeless least squares interpolation.
\newblock \emph{arXiv preprint arXiv:1903.08560}, 2019.

\bibitem[Jacot et~al.(2018)Jacot, Gabriel, and Hongler]{jacot2018neural}
Arthur Jacot, Franck Gabriel, and Cl{\'e}ment Hongler.
\newblock Neural tangent kernel: Convergence and generalization in neural
  networks.
\newblock In \emph{Advances in neural information processing systems}, pages
  8571--8580, 2018.

\bibitem[James and Stein(1992)]{james1992estimation}
William James and Charles Stein.
\newblock Estimation with quadratic loss.
\newblock In \emph{Breakthroughs in Statistics}, pages 443--460. Springer,
  1992.

\bibitem[Ji and Telgarsky(2019)]{ji2019polylogarithmic}
Ziwei Ji and Matus Telgarsky.
\newblock Polylogarithmic width suffices for gradient descent to achieve
  arbitrarily small test error with shallow relu networks.
\newblock \emph{arXiv preprint arXiv:1909.12292}, 2019.

\bibitem[Ju et~al.(2020)Ju, Lin, and Liu]{ju2020overfitting}
Peizhong Ju, Xiaojun Lin, and Jia Liu.
\newblock Overfitting can be harmless for basis pursuit, but only to a degree.
\newblock \emph{Advances in Neural Information Processing Systems}, 33, 2020.

\bibitem[Ju et~al.(2021)Ju, Lin, and Shroff]{ju2021generalization}
Peizhong Ju, Xiaojun Lin, and Ness~B Shroff.
\newblock On the generalization power of overfitted two-layer neural tangent
  kernel models.
\newblock \emph{arXiv preprint arXiv:2103.05243}, 2021.

\bibitem[Laha and Rohatgi(1979)]{laha1979probability}
R.G. Laha and V.K. Rohatgi.
\newblock \emph{Probability Theory}.
\newblock Wiley Series in Probability and Statistics. Wiley, 1979.
\newblock ISBN 9780471032625.
\newblock URL \url{https://books.google.com/books?id=HBHvAAAAMAAJ}.

\bibitem[LeCun et~al.(1991)LeCun, Kanter, and Solla]{lecun1991second}
Yann LeCun, Ido Kanter, and Sara~A Solla.
\newblock Second order properties of error surfaces: Learning time and
  generalization.
\newblock In \emph{Advances in Neural Information Processing Systems}, pages
  918--924, 1991.

\bibitem[Li(2011)]{li2011concise}
Shengqiao Li.
\newblock Concise formulas for the area and volume of a hyperspherical cap.
\newblock \emph{Asian Journal of Mathematics and Statistics}, 4\penalty0
  (1):\penalty0 66--70, 2011.

\bibitem[Li and Wong(2013)]{li2013integral}
YT~Li and R~Wong.
\newblock Integral and series representations of the dirac delta function.
\newblock \emph{arXiv preprint arXiv:1303.1943}, 2013.

\bibitem[McGill et~al.(1978)McGill, Tukey, and Larsen]{mcgill1978variations}
Robert McGill, John~W Tukey, and Wayne~A Larsen.
\newblock Variations of box plots.
\newblock \emph{The american statistician}, 32\penalty0 (1):\penalty0 12--16,
  1978.

\bibitem[Mei and Montanari(2019)]{mei2019generalization}
Song Mei and Andrea Montanari.
\newblock The generalization error of random features regression: Precise
  asymptotics and double descent curve.
\newblock \emph{arXiv preprint arXiv:1908.05355}, 2019.

\bibitem[Mei et~al.(2022)Mei, Misiakiewicz, and
  Montanari]{mei2022generalization}
Song Mei, Theodor Misiakiewicz, and Andrea Montanari.
\newblock Generalization error of random feature and kernel methods:
  Hypercontractivity and kernel matrix concentration.
\newblock \emph{Applied and Computational Harmonic Analysis}, 59:\penalty0
  3--84, 2022.

\bibitem[Muthukumar et~al.(2019)Muthukumar, Vodrahalli, and
  Sahai]{muthukumar2019harmless}
Vidya Muthukumar, Kailas Vodrahalli, and Anant Sahai.
\newblock Harmless interpolation of noisy data in regression.
\newblock In \emph{2019 IEEE International Symposium on Information Theory
  (ISIT)}, pages 2299--2303. IEEE, 2019.

\bibitem[Satpathi and Srikant(2021)]{satpathi2021dynamics}
Siddhartha Satpathi and R~Srikant.
\newblock The dynamics of gradient descent for overparametrized neural
  networks.
\newblock In \emph{Learning for Dynamics and Control}, pages 373--384. PMLR,
  2021.

\bibitem[Stein(1956)]{stein1956inadmissibility}
Charles Stein.
\newblock Inadmissibility of the usual estimator for the mean of a multivariate
  normal distribution.
\newblock Technical report, Stanford University Stanford United States, 1956.

\bibitem[Tikhonov(1943)]{tikhonov1943stability}
Andrey~Nikolayevich Tikhonov.
\newblock On the stability of inverse problems.
\newblock In \emph{Dokl. Akad. Nauk SSSR}, volume~39, pages 195--198, 1943.

\bibitem[Vilenkin(1968)]{vilenkin1968special}
N~Ya Vilenkin.
\newblock Special functions and the theory of group representations.
  providence: American mathematical society.
\newblock \emph{sftp}, 1968.

\bibitem[Wainwright(2015)]{uniformConvergence}
M.~Wainwright.
\newblock Uniform laws of large numbers, 2015.
\newblock
  \url{https://www.stat.berkeley.edu/~mjwain/stat210b/Chap4_Uniform_Feb4_2015.pdf},
  Accessed: Feb. 7, 2021.

\bibitem[Zhang et~al.(2017)Zhang, Bengio, Hardt, Recht, and
  Vinyals]{zhang2016understanding}
Chiyuan Zhang, Samy Bengio, Moritz Hardt, Benjamin Recht, and Oriol Vinyals.
\newblock Understanding deep learning requires rethinking generalization.
\newblock In \emph{5th International Conference on Learning Representations,
  ICLR 2017}, 2017.

\end{thebibliography}

\clearpage
\newpage

\appendix
\section{Additional Figures}

\subsection{Formation of features}\label{app.formation_of_features}
\begin{figure}[t!]
    \centering
    \includegraphics[width=\textwidth]{figs/h_vector_structure.tikz}
    \caption{Formation and structure of $\hhx$ and $\hxRF$.}
    \label{fig.struct_vector}
\end{figure}
In Fig.~\ref{fig.struct_vector}, we illustrate the formation and the structure of the features shown in Section~\ref{sec.model}.

\subsection{About the conjecture in Section~\ref{sec.high_d}}\label{app.more_simu_for_high_d}
We provide some additional simulation results (in addition to Fig.~\ref{fig.type_B}) to validate our conjecture in Section~\ref{sec.high_d} that 2-layer NTK is more sensitive to different bias settings, especially when $d$ is large. Note that in Fig.~\ref{fig.type_B}, we only consider one type of ground-truth functions that contains only odd-power polynomials. Here, we also examine other types of ground-truth functions.

\begin{figure}[ht!]
\centering
\includegraphics[width=0.8\textwidth]{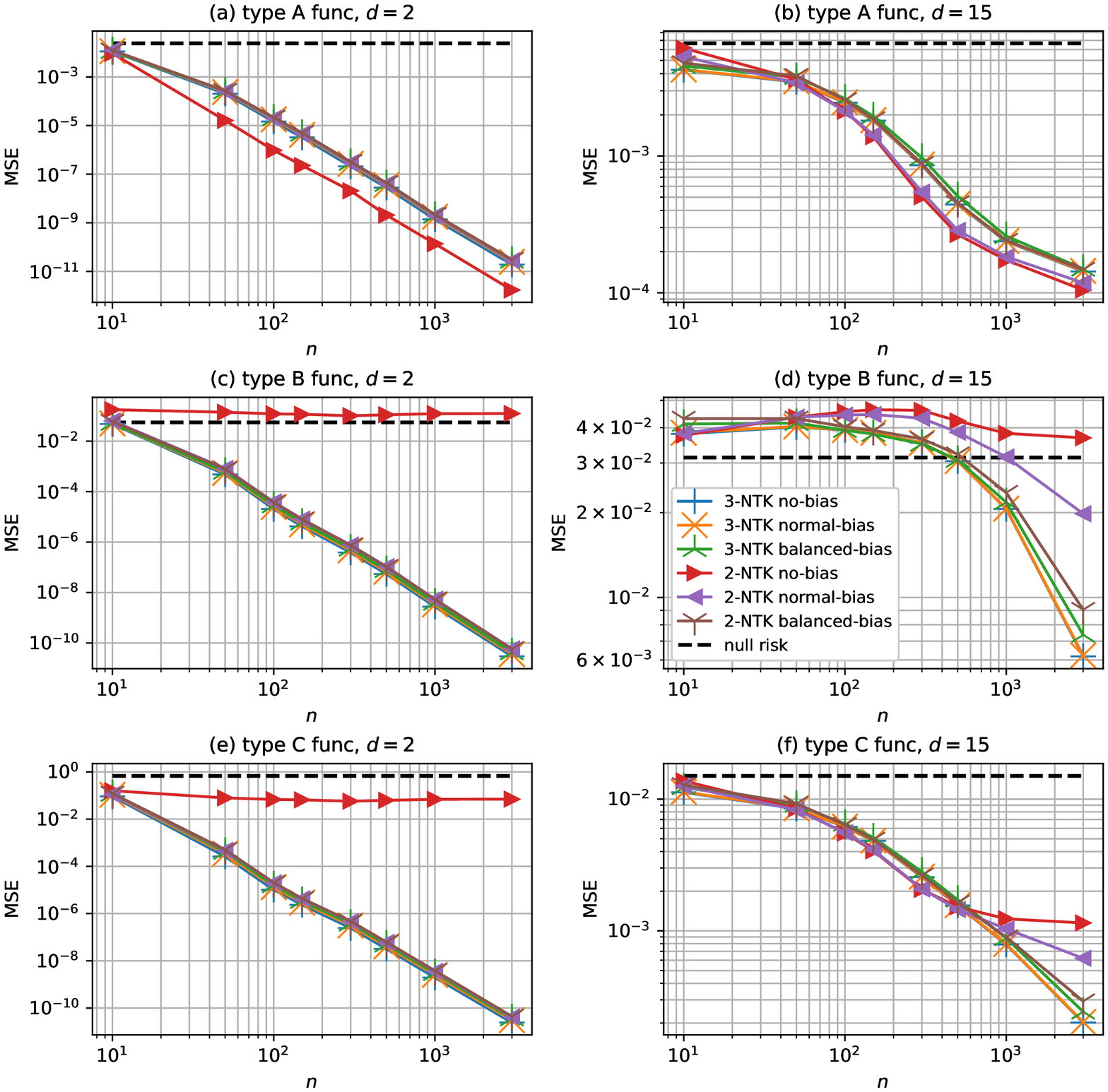}
\caption{Curves of MSE with respect to $n$ for 2-layer and 3-layer NTK models when $p,p_1,p_2\to \infty$ and $\esmall=\bm{0}$. Let $\bm{e}_1=[1\ 0\ 0\ \cdots\ 0]^T\in\mathds{R}^d$. Type A function is $\f(\xsmall)=\left(\bm{x}^T\bm{e}_1\right)^4-\left(\bm{x}^T\bm{e}_1\right)^2$. Type B function is $\f(\xsmall)=\frac{d+2}{3}\left(\bm{x}^T\bm{e}_1\right)^3-\bm{x}^T\bm{e}_1$. Type C function is $\f(\xsmall)=\left(\bm{x}^T\bm{e}_1\right)^2+\left(\bm{x}^T\bm{e}_1\right)^3$. Every curve is the average of 10  random simulations.}\label{fig.changing_n_all}
\end{figure}

Similar to Fig.~\ref{fig.type_B}, in Fig.~\ref{fig.changing_n_all}, we consider the ideal case where there are infinite number of neurons. We plot curves of MSE with respect to $n$ when $p,p_1,p_2\to\infty$. The simulation setup is similar to Fig.~\ref{fig.type_B}, but here we consider more types of ground-truth functions (whose exact forms are given in the caption of Fig.~\ref{fig.changing_n_all}). In sub-figures~(a)(b), Type~A function corresponds to even-power polynomials. We can see that all curves are close to each other in both low-dimensional case ($d=2$) and high-dimensional case ($d=15$). This is because the 2-layer NTK without bias can learn even-power polynomials. In other words, in high-dimensional cases, although the performance of the normal-bias setting approaches that of the no-bias setting, it does not hurt the generalization performance because the no-bias setting can already learn the Type~A function. Sub-figures~(c)(d) are exactly the same as Fig.~\ref{fig.type_B}, which uses the Type~B ground-truth function corresponds to odd-power polynomials. Sub-figures~(e)(f) adopt the Type~C ground-truth function that contains both odd-power and even-power polynomials. The generalization performance shown by sub-figures (e)(f) is between that in sub-figures (a)(b) and that in sub-figures (c)(d). This is expected because Type~C functions can be viewed as a mix of Type~A and Type~B functions.

We also consider the situation of finite number of neurons. In Fig.~\ref{fig.change_p1_all}, we fix the number of training data and let the x-axis be $p$ (for 2-layer NTK) or $p_1$ (for 3-layer NTK with fixed $p_2=100$). The setup of Fig.~\ref{fig.change_p2_all} is similar to the setup of Fig.~\ref{fig.change_p1_all} except that for 3-layer NTK we fix $p_1=100$ and change $p_2$. Both in Fig.~\ref{fig.change_p1_all} and Fig.~\ref{fig.change_p2_all}, when $d$ is large and the ground-truth function is Type~B (i.e., sub-figure (d)), we can see that the curve of 2-layer NTK with normal-bias (the purple curve marked by $\blacktriangleleft$) is closer to the curve of 2-layer NTK without bias (the red curve marked by $\blacktriangleright$). This validates our conjecture in Section~\ref{sec.high_d} that 2-layer NTK is more sensitive to different bias settings, especially when $d$ is large.

\begin{figure}[ht!]
\centering
\includegraphics[width=0.8\textwidth]{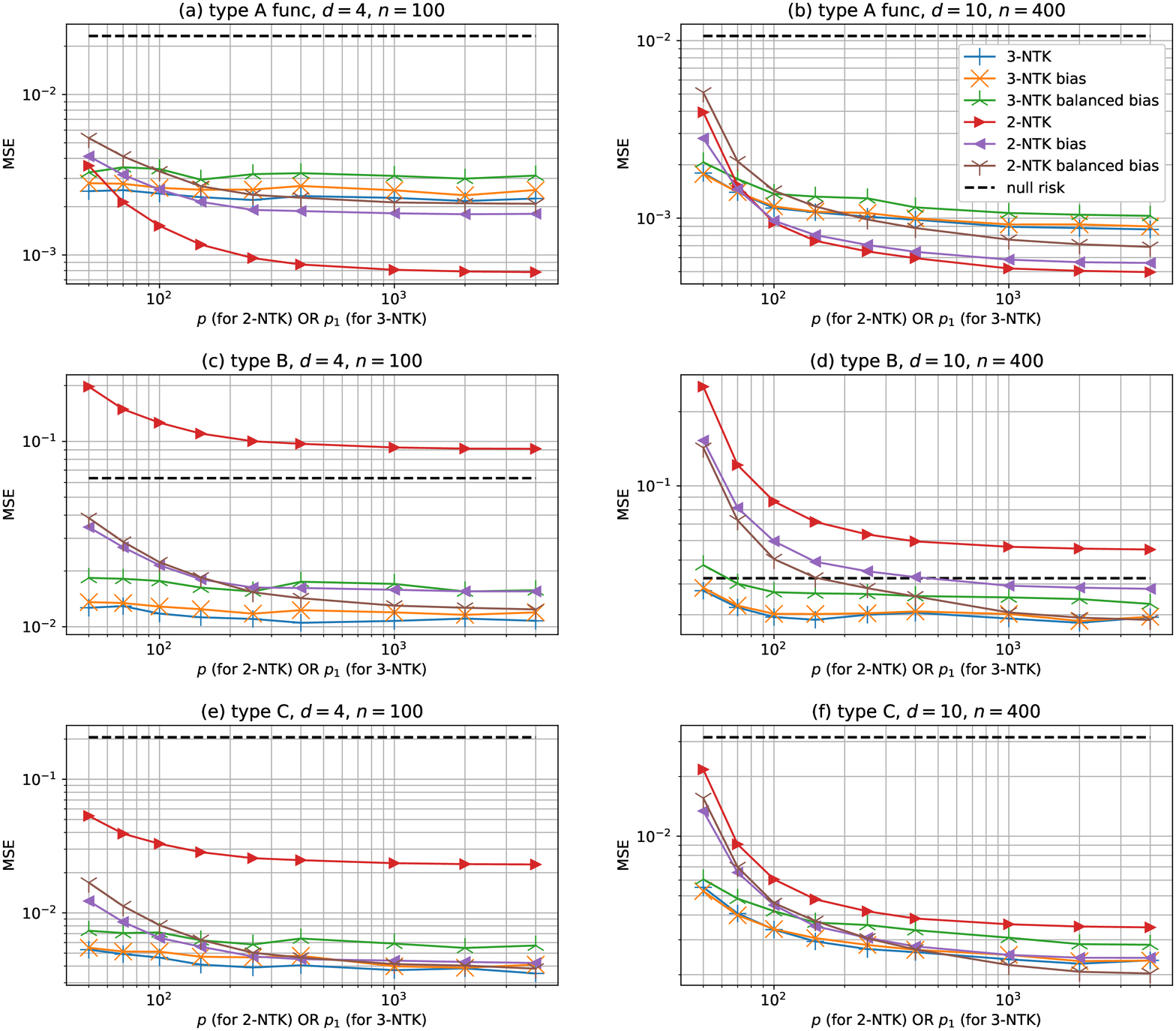}
\caption{Curves of MSE with respect to $p$ (for 2-layer NTK) or $p_1$ (for 3-layer NTK where $p_2=100$). Other settings such as types of ground-truth functions and $\esmall=0$ are the same as those in Fig.~\ref{fig.changing_n_all}. Every curve is the average of 10 random simulations.}\label{fig.change_p1_all}
\end{figure}

\begin{figure}[ht!]
\centering
\includegraphics[width=0.8\textwidth]{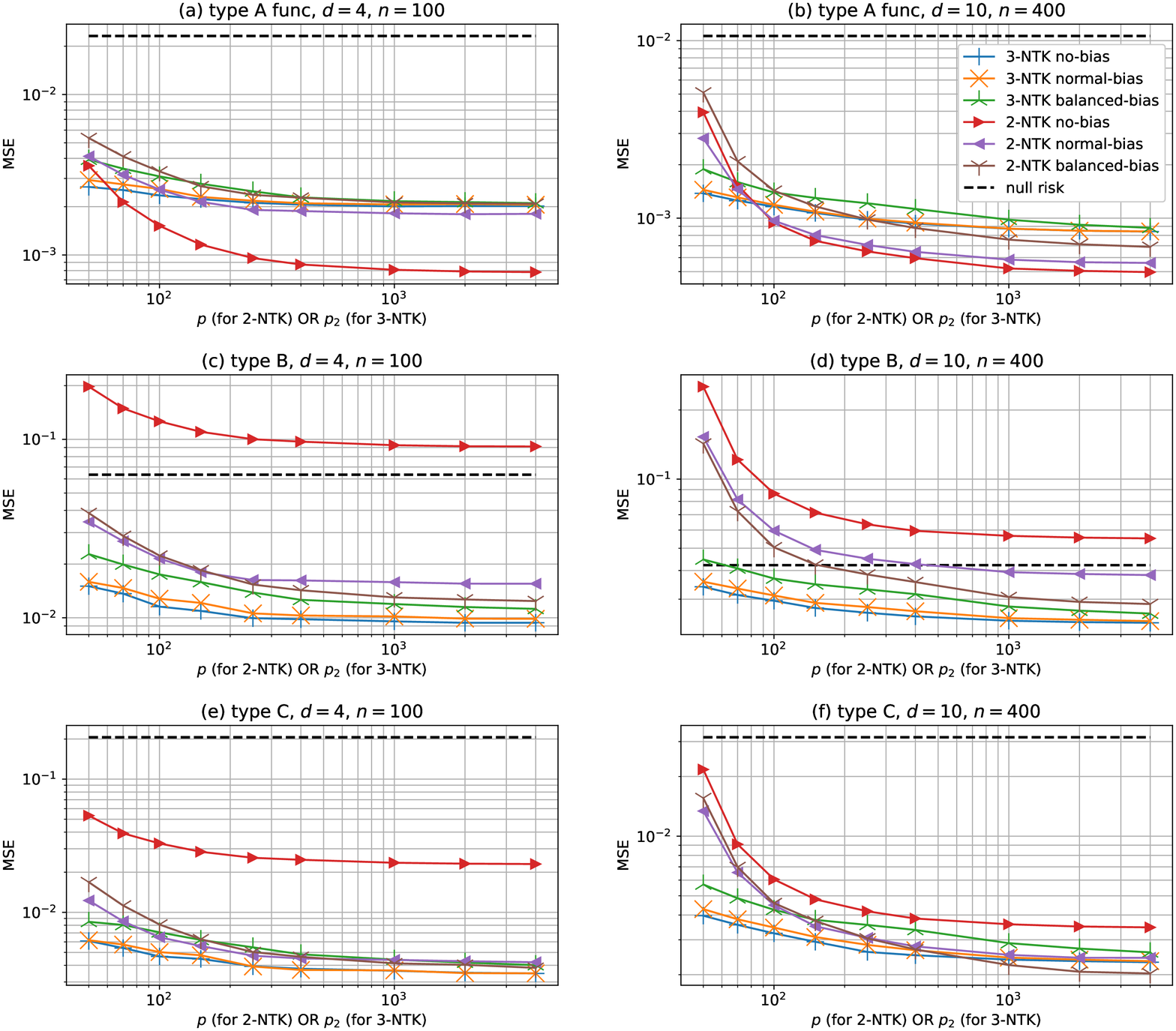}
\caption{Curves of MSE with respect to $p$ (for 2-layer NTK) or $p_2$ (for 3-layer NTK where $p_1=100$). Other settings such as types of ground-truth functions and $\esmall=0$ are the same as those in Fig.~\ref{fig.changing_n_all}. Every curve is the average of 10 random simulations.}\label{fig.change_p2_all}
\end{figure}

\section{Derivation of the learnable set \texorpdfstring{$\learnableSet$}{learnable set}}\label{app.learnableSet_derivation}
For the derivation of the learnable set, we assume that the noise $\esmall$ is zero in Eq.~\eqref{eq.def_minl2_solution}.
We first rewrite Eq.~\eqref{eq.def_minl2_solution} as the sum of terms contributed by each sample. Recall that $\HHt^T=[\HHt_1^T\ \cdots\ \HHt_n^T]\in \mathds{R}^{(p_1p_2)\times n}$ where $\HHt_i\in \mathds{R}^{1\times(p_1p_2)}$, $i=1,2,\cdots,n$. Thus, we have $\hhx\HHt^T=\sum_{i=1}^n \left(\hhx\HHt_i^T\right)\bm{e}_i^T$ where $\bm{e}_i\in \mathds{R}^n$ denotes the $i$-th standard basis (i.e., the $i$-th element is $1$ while all other elements are $0$). Thus, we have
\begin{align}\label{eq.temp_101505}
    \fl(\xsmall)=\hhx\HHt^T(\HHt\HHt^T)^{-1}\F(\XX) =\sum_{i=1}^n \left(\frac{1}{p_2}\hhx \HHt_i^T\right)p_2 \bm{e}_i^T (\HHt\HHt^T)^{-1}\F(\XX).
\end{align}
For any $\bm{a},\bm{b}\in\mathds{R}^{p_1}$, we define a set
\begin{align}\label{eq.def_Cw}
    \mathcal{C}^{\Wzero}_{\bm{a},\bm{b}}\defeq\left\{k\in\{1,2,\cdots,p_2\}\ \big|\ \bm{a}^T\Wzero[k]>0,\ \bm{b}^T\Wzero[k]>0\right\},
\end{align}
whose cardinality is given by
\begin{align*}
    \left|\mathcal{C}^{\Wzero}_{\bm{a},\bm{b}}\right|=\sum_{k=1}^{p_2}\indicator{\bm{a}^T\Wzero[k]>0,\ \bm{b}^T\Wzero[k]>0}.
\end{align*}
Intuitively, $\mathcal{C}^{\Wzero}_{\bm{a},\bm{b}}$ denotes the indices of the ReLU in the second hidden-layer that are activated both when the output of the first layer is $\bm{a}$ and when the output of the first layer is $\bm{b}$.
Then, by Eq.~\eqref{eq.def_h3}, we have
\begin{align}\label{eq.temp_101501}
    \frac{1}{p_2} \hhx \HHt_i^T= (\hxRF)^T \hXiRF \frac{\left|\mathcal{C}^{\Wzero}_{\bm{\hxRF},\bm{\hXiRF}}\right|}{p_2}.
\end{align}
By Assumption~\ref{as.uniform}, which gives the distribution of $\Wzero$, we can calculate the limiting value of Eq.~\eqref{eq.temp_101501} when there are an infinite number of neurons in the second hidden-layer. Specifically, since
\begin{align}\label{eq.temp_101502}
    \frac{\left|\mathcal{C}^{\Wzero}_{\bm{\hxRF},\bm{\hXiRF}}\right|}{p_2}\stackrel{\text{P}}{\rightarrow}\frac{\pi-\arccos\left(\frac{(\hxRF)^T\hXiRF}{\left\|\hxRF\right\|_2\cdot\left\|\hXiRF\right\|_2}\right)}{2\pi},\text{ as }p_2\to \infty,
\end{align}
where $\stackrel{\text{P}}{\rightarrow}$ denotes convergence in probability, we have
\begin{align*}
    \text{Eq.~\eqref{eq.temp_101501}}\stackrel{\text{P}}{\rightarrow} \left\|\hxRF\right\|_2\cdot\left\|\hXiRF\right\|_2\cdot \KTwo\left(\frac{(\hxRF)^T\hXiRF}{\left\|\hxRF\right\|_2\cdot\left\|\hXiRF\right\|_2}\right),\text{ as }p_2\to \infty.
\end{align*}
Note that $\KTwo$ is known to be the kernel of 2-layer NTK \citep{ju2021generalization}. It is natural that $\KTwo$ appears here, since we can regard the output of the first hidden-layer as the input of a 2-layer network consisting of the top- and middle-layer of the 3-layer network.

To further simplify the above expression, it remains to calculate $(\hxRF)^T\hXiRF$. Similar to the derivation above, when the first hidden-layer has an infinite number of neurons, we have
\begin{align}\label{eq.temp_101503}
    (\hxRF)^T\hXiRF \stackrel{\text{P}}{\rightarrow}\KRF(\xsmall^T\XX_i), \text{ as }p_1\to \infty.
\end{align}
(Eq.~\eqref{eq.temp_101502} and Eq.~\eqref{eq.temp_101503} can be derived from integration over a hyper-sphere, which is shown in Lemma~\ref{le.NTK2_kernel} and Lemma~\ref{le.RF_kernel} in Appendix~\ref{app.integral}, respectively.)
Note that $\KRF$ is also the kernel of the random-feature model \citep{mei2019generalization}. It is natural that $\KRF$ appears here since Eq.~\eqref{eq.temp_101503} represents the situation that the bottom-layer has infinite width, which also appears in a random feature model.
Notice that $\KRF(\XX_i^T\XX_i)=\KRF(\xsmall^T\xsmall)=\KRF(1)=\frac{1}{2d}$. Thus, we have
\begin{align}\label{eq.temp_101504}
    \left\|\hxRF\right\|_2\cdot\left\|\hXiRF\right\|_2\stackrel{\text{P}}{\rightarrow} \frac{1}{2d}, \text{ as }p_1\to \infty.
\end{align}
Plugging Eq.~\eqref{eq.temp_101502}\eqref{eq.temp_101503}\eqref{eq.temp_101504} into Eq.~\eqref{eq.temp_101501} and recalling Eq.~\eqref{eq.def_KThree}, we thus have
\begin{align*}
    \frac{1}{p_2}\hhx \HHt_i^T \stackrel{\text{P}}{\rightarrow}\KThree(\xsmall^T\XX_i), \text{ as }p_1,p_2\to \infty.
\end{align*}
If we let
\begin{align*}
    g(\bm{z})=\sum_{i=1}^n p_2 \bm{e}_i^T (\HHt\HHt^T)^{-1}\F(\XX) \delta_{\bm{\XX_i}}(\bm{z}),
\end{align*}
(where $\delta_{\bm{z}_0}(\bm{z})$ denotes a $\delta$-function, i.e., it has zero value for all $\bm{z}\in\sd\setminus \{\bm{z}_0\}$, but its $L_1$-norm is $\|\delta_{\bm{z}_0}\|_1\defeq \int_{\sd}\delta_{\bm{z}_0}(\bm{z})d \xdensity(\bm{z})=1$), then as $p_1$ and $p_2\to \infty$, Eq.~\eqref{eq.temp_101505} approaches $\int_{\sd} \KThree(\xsmall^T\bm{z})g(\bm{z})d\xdensity(\bm{z})$, which is in the same form\footnote{We acknowledge that the form here is still not exactly the same  as $\learnableSet$ because the $\delta$-function does not satisfy the constrain of finite $\|g\|_\infty$. Nonetheless, $\learnableSet$ can be relaxed to allow finite $\|g\|_1$, which then includes the $\delta$-function. See footnote~\ref{footnote.finite_g_1} on Page~\pageref{footnote.finite_g_1}.} as functions in $\learnableSet$.

\section{A Precise Form of the Upper Bound in Theorem~\ref{th.main_simplified}}\label{app.full_theorem}

We first introduce some extra notations and a condition about large $p_1$ that will be used later in our upper bound of the generalization error. 
Define
\begin{align}
    &\Cdq\defeq \frac{\pi-1}{4\pi}\min\left\{\frac{1}{2},\ \left(\frac{(d-1)^2}{8d}\right)^{\frac{1}{d-1}}\left(qn\right)^{-\frac{4}{d-1}}\right\},\label{eq.def_Cndq}\\
    &\Jnppdq\defeq \frac{1}{16\pi d}\sqrt{\frac{\Cdq}{\log(4n)}}-\left(\qnThree\right),\label{eq.def_J}\\
    &\Qpd\defeq 8d\sqrt{\frac{2(d+1)\log(p_1+1)}{p_1}}.\label{eq.def_Q}
\end{align}

\begin{condition}\label{cond.large_p1}
(Given $n$, $d$, and $q>0$) $p_1$ and $p_2$ are sufficiently large such that $9d\cdot \Qpd \leq 1$, $p_1\geq \left(\frac{10dnq\sqrt{2d}}{\Cdq}\right)^2$, and $\Jnppdq>0$.
\end{condition}

\begin{theorem}\label{th.main}
    Given a ground-truth function $\f(x)=\fg(x)\in \learnableSet$, for any $q>0$, under Condition~\ref{cond.large_p1}, we must have
    \begin{align*}
        \prob_{\Vzero,\Wzero,\XX}\bigg\{|\fl(\xsmall)-\f(\xsmall)|\leq \frac{q\|g\|_{\infty}}{\sqrt{n}}+\frac{q\|g\|_1}{\sqrt{p_2}}+ \sqrt{\frac{\Qpd}{d}} \|g\|_1\\
        +\frac{\sqrt{n}\|g\|_1\left(\frac{q}{\sqrt{p_2}}+\sqrt{\frac{\Qpd}{d}}\right)+\|\esmall\|_2}{\sqrt{\Jnppdq}}\bigg\}\geq 1-\frac{10}{q^2}-\frac{2d^2}{(p_1+1)e^{d+1}}.
    \end{align*}
    \end{theorem}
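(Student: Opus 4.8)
The plan is to decompose the generalization error $|\fl(\xsmall) - \f(\xsmall)|$ into three sources: (i) the approximation error between the finite-width feature inner products $\tfrac{1}{p_2}\hhx\HHt_i^T$ and the idealized kernel $\KThree(\xsmall^T\XX_i)$, accumulated over the $n$ training points; (ii) the error between the finite-width Gram matrix $\tfrac{1}{p_2}\HHt\HHt^T$ and the ideal kernel Gram matrix $[\KThree(\XX_i^T\XX_j)]$; and (iii) the intrinsic error of the kernel regression itself relative to the ground-truth $\fg \in \learnableSet$, which is the $O(\|g\|_\infty/\sqrt n)$ term that matches the corresponding 2-layer bound in \cite{ju2021generalization}. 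The noise term $\|\esmall\|_2$ enters through the inverted Gram matrix acting on the label vector. So I would write $\fl(\xsmall) = \tfrac{1}{p_2}\hhx\HHt^T \cdot p_2(\HHt\HHt^T)^{-1}(\F(\XX)+\esmall)$ as in Eq.~\eqref{eq.temp_101505}, and introduce the ``limiting'' predictor obtained by replacing each $\tfrac1{p_2}\hhx\HHt_i^T$ and each $\tfrac1{p_2}\HHt_i\HHt_j^T$ by its kernel value; the difference between $\fl$ and this limiting predictor is controlled by perturbation bounds on the two concentration events, while the limiting predictor is handled by the analysis that gives Term A.

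The key steps, in order, would be: \textbf{Step 1} (two-stage concentration of features). Condition on $\Vzero$ and show, using Hoeffding/Bernstein over the $p_2$ i.i.d.\ weights $\Wzero[k]$ and Eq.~\eqref{eq.temp_101502}, that $|\tfrac1{p_2}\hhx\HHt_i^T - \|\hxRF\|_2\|\hXiRF\|_2 \KTwo(\cos\angle(\hxRF,\hXiRF))|$ is $O(1/\sqrt{p_2})$ with high probability; this contributes the $q/\sqrt{p_2}$ pieces. \textbf{Step 2} (concentration of the first-layer random features). Show that $(\hxRF)^T\hXiRF$ and the norms $\|\hxRF\|_2$ concentrate around $\KRF(\xsmall^T\XX_i)$ and $\sqrt{1/(2d)}$ respectively at rate $\Qpd = O(\sqrt{\log p_1 / p_1})$, using Eq.~\eqref{eq.temp_101503}\eqref{eq.temp_101504}, a union bound over the $n$ points, and boundedness of the kernel from Lemma~\ref{le.RF_kernel_bound}; this is where $\sqrt{\Qpd/d}$ and the failure probability $\tfrac{2d^2}{(p_1+1)e^{d+1}}$ come from. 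Composing Steps 1 and 2 (and using Lipschitz continuity of $\KTwo$, guaranteed because $2d\KRF(a)$ stays bounded away from the non-smooth point) yields $|\tfrac1{p_2}\hhx\HHt_i^T - \KThree(\xsmall^T\XX_i)| \le q/\sqrt{p_2} + \sqrt{\Qpd/d}$ up to constants, and similarly for Gram-matrix entries. \textbf{Step 3} (invertibility and conditioning of the Gram matrix). Show the ideal kernel Gram matrix has smallest eigenvalue bounded below, and transfer this to $\tfrac1{p_2}\HHt\HHt^T$ via the entrywise bounds from Steps 1--2 and a matrix perturbation argument; this is where $\Jnppdq$ and Condition~\ref{cond.large_p1} (which ensures the perturbation is smaller than the eigenvalue gap) enter, and it is why the failure probability has a $10/q^2$ Chebyshev-type term. \textbf{Step 4} (assemble). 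Bound $|\fl(\xsmall)-\f(\xsmall)|$ by (limiting kernel-regression error) $+$ (feature perturbation on the $\xsmall$ side) $+$ (Gram perturbation times $\|(\text{ideal Gram})^{-1}\|\cdot\|\F(\XX)+\esmall\|$); track how $\|\F(\XX)\|$ relates to $\|g\|_1$ via the kernel representation, and how $\|\esmall\|_2$ passes through $1/\sqrt{\Jnppdq}$; the $\sqrt n$ factors appear from summing $n$ entrywise errors and from $\|\bm{e}_i\|$-type bounds.

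\textbf{The main obstacle} I expect is Step 3 together with the chained concentration in Steps 1--2: ReLU features are bounded but the composition means the second-layer error depends on a \emph{random} inner product $(\hxRF)^T\hXiRF$, so the two concentration events are not independent and must be combined carefully (e.g.\ by conditioning on a good event for $\Vzero$ first, then taking the $\Wzero$-randomness on top), and one must verify that on the good $\Vzero$-event the argument of $\KTwo$ avoids the region where it fails to be Lipschitz. Equally delicate is propagating the entrywise Gram perturbation into a bound on $\|(\tfrac1{p_2}\HHt\HHt^T)^{-1}\|$ that does not blow up: this requires the explicit gap condition $\Jnppdq > 0$ and a lower bound on $\lambda_{\min}$ of the ideal $n\times n$ kernel matrix, for which I would invoke the harmonic/polynomial expansion of $\KThree$ (the same machinery used for Proposition~\ref{prop.concise_learnable_set}, since $\learnableSet$ contains all finite-degree polynomials, the kernel is strictly positive definite on $\sd$). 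I would structure the appendix so that the two concentration lemmas, the kernel-invertibility lemma, and the kernel-regression error bound are proved separately and then combined in a short final argument, mirroring the organization in \cite{ju2021generalization} for the 2-layer case.
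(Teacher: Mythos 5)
Your proposal is mathematically coherent but takes a genuinely different decomposition from the paper's, and the paper's route is meaningfully more economical. You propose to replace the finite-width quantities by the infinite-width kernel directly: bound the error of the ideal kernel predictor against $\fg$, then propagate the perturbation $\tfrac1{p_2}\hhx\HHt_i^T \to \KThree(\xsmall^T\XX_i)$ and $\tfrac1{p_2}\HHt\HHt^T \to [\KThree(\XX_i^T\XX_j)]$ through the formula. The paper instead introduces the \emph{pseudo ground-truth} $\pseudoGT(\xsmall)=\int(\hhz)^T\hhx\,g(\bm{z})\,d\xdensity(\bm{z})/(p_1p_2)$ as an intermediary and splits the error into four terms (Eq.~\eqref{eq.term_ABC}): test error for the pseudo GT, training-label mismatch $\F(\XX)-\FWVg(\XX)$, test-point mismatch $\pseudoGT(\xsmall)-\f(\xsmall)$, and noise. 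Because $\pseudoGT$ keeps $\Vzero,\Wzero$ at the \emph{same finite width} as the trained model, $\pseudoGT$ is exactly realizable as $\hhx\DWs$, and term A reduces to a clean projection-deficit bound $\|\hhx\|_2\|(\PH-\identity)\DWs\|_2$ controlled by a single Chebyshev step over the draw of training points (Proposition~\ref{prop.pseudoGT}). This is what produces $q\|g\|_\infty/\sqrt n$ with no dependence on $p_1,p_2$ at all in that term, and with probability over $\XX$ only.

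Your plan has two places where the extra work bites. First, your ``limiting kernel-regression error'' is not the same object as the paper's term A: you would need to show that the interpolant in the infinite-width RKHS approximates $\fg$ at rate $\|g\|_\infty/\sqrt n$, which is a separate argument you would have to develop (the pseudo-GT proof does not transfer, since $\fg$ is not exactly realizable by the finite-width feature map). Second, your Step~3 requires controlling $\|(\tfrac1{p_2}\HHt\HHt^T)^{-1}-\mathbf{K}^{-1}\|$, which needs a quantitative lower bound on $\min\eig$ of \emph{both} the random Gram matrix and the deterministic ideal kernel Gram matrix $[\KThree(\XX_i^T\XX_j)]$. Appealing to strict positive definiteness of $\KThree$ (from the harmonic expansion) gives only that the ideal Gram is invertible, not a rate in $n,d$ that can be inserted into the final bound. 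The paper sidesteps the second eigenvalue problem entirely: it never forms the ideal kernel Gram matrix, only lower-bounds $\min\eig(\HHt\HHt^T)$ through the intermediate $\Hinfnormalized$ (which is the $p_2\to\infty$ limit for \emph{fixed} $\Vzero$, not the full double limit), applying Lemma~\ref{le.Srikant}'s Kronecker-power argument at the level of the normalized first-layer features $\hXiRF/\|\hXiRF\|_2$ rather than at the level of a deterministic kernel. So if you keep your decomposition, be aware that you must prove a quantitative analogue of Lemmas~\ref{le.Srikant}--\ref{le.min_angle} for the kernel $\KThree$ itself, which the Kronecker trick does admit (since $\KThree$ is a power series in $\xsmall^T\bm{z}$ with nonnegative coefficients by Lemma~\ref{le.ck_positive}), but you would then be duplicating effort across the random-feature Gram and the kernel Gram, and paying the difference-of-inverses bound, where the paper pays only one eigenvalue lower bound.

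Your instinct about conditioning (first handle $\Wzero$ conditionally on $\Vzero$, then integrate out $\Vzero$) is exactly how the paper manages the dependence between the two concentration events, and your identification of the Lipschitz/composition issue for $\KTwo\circ(2d\KRF)$ is addressed in the paper by Lemma~\ref{le.arccos_to_arccos}. Those pieces would carry over unchanged.
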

    A proof sketch can be found in Appendix~\ref{sec.sketch_proof}.
    To better illustrate the meaning of this upper bound, we provide a simplification in Theorem~\ref{th.main_simplified} when $p_1$ and $p_2$ are much larger than $n$.
    If we view $d$ as a constant, we have $\Cdq=O(n^{-\frac{4}{d-1}})$. When $p_1$ and $p_2$ are much larger than $n$, we have $\frac{\sqrt{n}}{\sqrt{\Jnppdq}}=O\left(n^{\frac{2}{d-1}+\frac{1}{2}}\cdot\sqrt{\log(n)}\right)$ and $\sqrt{\frac{\Qpd}{d}}=O\left(\sqrt[4]{\frac{\log p_1}{p_1}}\right)$.
    Therefore, when $d$ is fixed and when both $p_1$ and $p_2$ are much larger than $n$, Theorem~\ref{th.main} can be simplified to Eq.~\eqref{eq.simple_bound} (with high probability).
    
    (In the above reduction to Eq.~\eqref{eq.simple_bound}, we ignore the stand-alone term $\frac{q\|g\|_1}{\sqrt{p_2}}$ appeared in Theorem~\ref{th.main}, since it is much smaller than the product of Term~B and Term~E. Similarly, we ignore the stand-alone term $\sqrt{\frac{\Qpd}{d}} \|g\|_1$, since it is much smaller than the product of Term~C and Term~E.)
\section{Noise Effect}\label{app.noise_effect}

\begin{figure}[t!]
\centering
\includegraphics[width=5in]{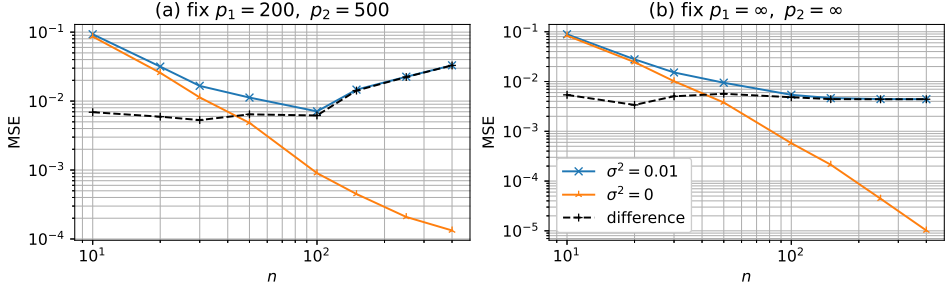}
\caption{Noise effect on the test MSE of 3-layer NTK (no-bias) with respect to $n$. The noise follows \emph{i.i.d.} Gaussian distribution with zero mean and variance $\sigma^2$. The ground-truth function is $\f(\xsmall)=\left(\bm{x}^T\bm{e}_1\right)^2+\left(\bm{x}^T\bm{e}_1\right)^3$ where $d=3$. Every curve is the average of 20 random simulations.}\label{fig.noise_effect_n}
\end{figure}


Before we present the proof of Theorem~\ref{th.main} in Appendix~\ref{sec.sketch_proof}, we elaborate on how Theorem~\ref{th.main} reveals the impact of noise on the generalization error. Note that
in Eq.~\eqref{eq.simple_bound}, Term~D denotes the average noise power in each training sample, and Term~E denotes the extra multiplication factor with which the noise impacts the generalization error. As we see in Theorem~\ref{th.main} in Appendix~\ref{app.full_theorem}, the precise form of Term~E is $\frac{\sqrt{n}}{\sqrt{\Jnppdq}}$. Therefore, we will refer to the multiplication of $\|\esmall\|_2/\sqrt{n}$ with this factor as the ``noise effect''. Note that although the precise form $\frac{\sqrt{n}}{\sqrt{\Jnppdq}}$ of this factor in Theorem~\ref{th.main} decreases with respect to both $p_1$ and $p_2$ by Eq.~\eqref{eq.def_J}, when $p_1$ and $p_2$ are much larger than $n$, it can be simplified to Term~E, which does not depend on $p_1$ and $p_2$.


In the following, we will analyze the relationship between the noise effect and various system parameters.
First, we are interested in know how the numbers of neurons in two hidden-layers $p_1$ and $p_2$ impact the noise effect. 
Since Term~E is an approximation when $p_1$ and $p_2$ are large and it does not contain $p_1$ or $p_2$, we conjecture that even when $p_1$ and $p_2$ are extremely large (e.g., $p_1,p_2\to\infty$), the noise effect will neither grow dramatically nor go to zero. Further, when $p_1$ and $p_2$ are not so large,
by Eq.~\eqref{eq.def_J}, we know that the precise form of Term~E in Theorem~\ref{th.main} decreases when $p_1$ and $p_2$ increase, which suggests that the noise will likely contribute more to the test error when the number of neurons is small. An intuitive explanation of such effect is that when $p_1$ and $p_2$ are small, the randomness of the initial weights brings some extra ``pseudo-noise'' to the model, and thus the generalization performance deteriorates.

Second, we are interested in how the noise effect changes with the number of training data $n$. We notice that Term~E increases with $n$ at a speed faster than $\sqrt{n}$. However, since it is only an upper bound, the actual noise effect may grow much slower than $\sqrt{n}$. Therefore, precisely estimate the relationship between $n$ and the noise effect of NTK model can be a interesting future research direction.

We then use simulation to study the noise effect and compare them with the implications derived from our upper bound. In Fig.~\ref{fig.noise_effect_n}, we plot the curves of the test MSE with respect to $n$. The noise follows \emph{i.i.d.} Gaussian $\mathcal{N}(0, \sigma^2)$. The blue curve denotes the situation where the noise level is $\sigma^2=0.01$. The orange curve denotes the noiseless situation. The noise effect (the value of the gap between the blue and the orange curves) is denoted by the dashed black curve. 
As we can see, when $n$ is large, the value of the black curve in Fig.~\ref{fig.noise_effect_n}(a) (fix $p_1=200,p_2=500$) is higher than that in Fig.~\ref{fig.noise_effect_n}(b) ($p_1,p_2\to\infty$), which validates our conjecture that the noise contributes more to the test error when the number of neurons is small. Further, Fig.~\ref{fig.noise_effect_n}(b) shows that an infinite number of neurons does not make the noise effect diminish or explode for every $n$, which also confirms our previous analysis on the relationship between the number of neurons and the noise effect. We also notice that the black curve in Fig.~\ref{fig.noise_effect_n}(b) (where $p_1,p_2\to\infty$) does not increase significantly with $n$, which suggests that our estimate on how fast Term~E increases with $n$ could be further improved.
\section{Proof of Theorem~\ref{th.main}}\label{sec.sketch_proof}
Recall that Theorem~\ref{th.main} is the precise form of Theorem~\ref{th.main_simplified}, and is stated in Appendix~\ref{app.full_theorem}. To prove Theorem~\ref{th.main}, we follow the line of analysis in \cite{ju2021generalization}.
We first study the class of the ground-truth functions that can be learned when weights $\Vzero$ and $\Wzero$ are fixed and there is no noise. We refer to them as \emph{pseudo ground-truth} in the following definition, to differentiate them with the set $\learnableSet$ of learnable functions for random $\Vzero$ and $\Wzero$. 
\begin{definition}\label{def.fv}
Given $\Vzero$ and $\Wzero$, for any learnable ground-truth function $\fg\in\learnableSet$ with the corresponding function $g(\cdot)$, define the corresponding \textbf{pseudo ground-truth} as
\begin{align}\label{eq.def_pseudoGT}
    \pseudoGT(\xsmall)\defeq&\int_{\sd}\frac{(\hhz)^T\hhx}{p_1p_2} g(\bm{z})d\xdensity(\bm{z})\nonumber\\
    =&\int_{\sd} (\hzRF)^T\hxRF \frac{\left|\Chzhx\right|}{p_1p_2}g(\bm{z})d\xdensity(\bm{z}).
\end{align}
The last equality of Eq.~\eqref{eq.def_pseudoGT} follows from Eq.~\eqref{eq.def_h3} and Eq.~\eqref{eq.def_Cw}. (The form of Eq.~\eqref{eq.def_pseudoGT} can be derived using the similar process shown in Appendix~\ref{app.learnableSet_derivation}.)
\end{definition}

We prove Theorem~\ref{th.main} in several steps as follows.

\textbf{Step 1: use pseudo ground-truth as a ``intermediary''
.}

Recall the definition of pseudo ground-truth $\pseudoGT(\cdot)$ in Eq.~\eqref{eq.def_pseudoGT}. We define
\begin{align}\label{eq.def_F_pseudoGT}
    \FWVg(\XX)\defeq [\pseudoGT(\XX_1)\ \pseudoGT(\XX_2)\ \cdots\ \pseudoGT(\XX_n)]^T\in\mathds{R}^n.
\end{align}
We then have
\begin{align}
    \fl(\xsmall)=&\hhx\HHt^T(\HHt\HHt^T)^{-1}\left(\F(\XX)+\esmall\right)\text{ (by Eq.~\eqref{eq.def_minl2_solution})}\nonumber\\
    =&\hhx\HHt^T(\HHt\HHt^T)^{-1}\FWVg(\XX) + \hhx\HHt^T(\HHt\HHt^T)^{-1}\left(\F(\XX)-\FWVg(\XX)\right)\nonumber\\
    &+\hhx\HHt^T(\HHt\HHt^T)^{-1}\esmall.\label{eq.temp_041501}
\end{align}
Thus, we have
\begin{align}
    &|\fl(\xsmall)-\f(\xsmall)|\nonumber\\
    =& |\fl(\xsmall)-\pseudoGT(\xsmall)+\pseudoGT(\xsmall)-\f(\xsmall)|\nonumber\\
    =&|\hhx\HHt^T(\HHt\HHt^T)^{-1}\FWVg(\XX)-\pseudoGT(\xsmall) \nonumber\\
    &+ \hhx\HHt^T(\HHt\HHt^T)^{-1}\left(\F(\XX)-\FWVg(\XX)\right)\nonumber\\
    &+\pseudoGT(\xsmall)-\f(\xsmall)+\hhx\HHt^T(\HHt\HHt^T)^{-1}\esmall|\text{ (by Eq.~\eqref{eq.temp_041501})}\nonumber\\
    \leq & \underbrace{|\hhx\HHt^T(\HHt\HHt^T)^{-1}\FWVg(\XX)-\pseudoGT(\xsmall)|}_{\text{term A}}\nonumber \\
    & + \underbrace{|\hhx\HHt^T(\HHt\HHt^T)^{-1}\left(\F(\XX)-\FWVg(\XX)\right)|}_{\text{term B}}\nonumber\\
    &+\underbrace{|\pseudoGT(\xsmall)-\f(\xsmall)|}_{\text{term C}}+\underbrace{\left|\hhx\HHt^T(\HHt\HHt^T)^{-1}\esmall\right|}_{\text{term D}}.\label{eq.term_ABC}
\end{align}
In Eq.~\eqref{eq.term_ABC}, term A denotes the test error when using the pseudo ground-truth function, term B denotes the effect of replacing the original ground-truth function by the pseudo ground-truth function in the training samples, term C denotes the difference between  the original ground-truth function and the pseudo ground-truth function on the test input, term D denotes the noise effect. Next, we bound these terms one by one.


\textbf{Step 2: estimate term A.}

The following proposition gives an upper bound of the test error when the data model is based on the pseudo ground-truth and the NTK model uses exactly the same $\Vzero$ and $\Wzero$.

\begin{proposition}\label{prop.pseudoGT}
Assume fixed $\Vzero$ and $\Wzero$, (thus $p_1$, $p_2$ and $d$ are also fixed), and there is no noise. If the ground-truth function is $\f=\pseudoGT$ in Definition~\ref{def.fv} and $\|g\|_\infty<\infty$, then for any $\xsmall\in\sd$ and $q>0$, we must have
\begin{align*}
    \prob_{\XX}\left\{|\pseudoGT(\xsmall)-\fl(\xsmall)|\geq \frac{q\|g\|_{\infty}}{\sqrt{n}}\right\}\leq \frac{1}{q^2}.
\end{align*}
\end{proposition}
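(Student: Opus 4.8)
The plan is to exploit that, under the pseudo ground-truth model with zero noise, the training labels are a \emph{linear} function of the feature vectors, so that the min-$\ell_2$-norm interpolant has a clean closed form. First, by Eq.~\eqref{eq.def_pseudoGT} we may write $\pseudoGT(\xsmall)=\hhx\bm\theta^{*}$, where $\bm\theta^{*}\defeq\frac{1}{p_1p_2}\int_{\sd}(\hhz)^{T}g(\bm z)\,d\xdensity(\bm z)\in\mathds{R}^{(p_1p_2)\times 1}$. Since $\f=\pseudoGT$, the training output is $\F(\XX)=\HHt\bm\theta^{*}$, so Eq.~\eqref{eq.def_minl2_solution} with $\esmall=\bm0$ gives $\fl(\xsmall)=\hhx\HHt^{T}(\HHt\HHt^{T})^{-1}\HHt\bm\theta^{*}=\hhx\PH\bm\theta^{*}$, where $\PH\defeq\HHt^{T}(\HHt\HHt^{T})^{-1}\HHt$ is the orthogonal projection onto $\mathrm{span}\{\HHt_1^{T},\dots,\HHt_n^{T}\}$. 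Subtracting yields the exact identity
\[
\pseudoGT(\xsmall)-\fl(\xsmall)=\hhx(\identity-\PH)\bm\theta^{*}.
\]

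The key step is to notice that $\bm\theta^{*}$ is an expectation over the \emph{same} distribution $\xdensity$ that generates the training inputs, and to introduce the unbiased empirical surrogate $\hat{\bm\theta}\defeq\frac{1}{np_1p_2}\sum_{i=1}^{n}(\hhXi)^{T}g(\XX_i)$. By Assumption~\ref{as.uniform} the $\XX_i$ are i.i.d.\ $\sim\xdensity$, so $\expectation_{\XX}[\hat{\bm\theta}]=\bm\theta^{*}$; moreover each $(\hhXi)^{T}$ is a column of $\HHt^{T}$, so $\hat{\bm\theta}\in\mathrm{range}(\PH)$ and therefore $(\identity-\PH)\hat{\bm\theta}=\bm0$. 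Hence $(\identity-\PH)\bm\theta^{*}=(\identity-\PH)(\bm\theta^{*}-\hat{\bm\theta})$, and since $\identity-\PH$ is a contraction, Cauchy--Schwarz gives
\[
|\pseudoGT(\xsmall)-\fl(\xsmall)|\le\|\hhx\|_2\,\|(\identity-\PH)(\bm\theta^{*}-\hat{\bm\theta})\|_2\le\|\hhx\|_2\,\|\bm\theta^{*}-\hat{\bm\theta}\|_2.
\]

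It then remains to bound the two factors, both by routine computations. For the deterministic factor, $\|\xsmall\|_2=\|\Vzero[j]\|_2=1$ gives $\|\hxRF\|_2^2\le p_1$, and then Eq.~\eqref{eq.def_h3} gives $\|\hhx\|_2^2=\sum_{k=1}^{p_2}\|\hxRF\|_2^2\,\indicator{(\hxRF)^{T}\Wzero[k]>0}\le p_1p_2$; the same bound $\|\hhz\|_2^2\le p_1p_2$ holds for every $\bm z\in\sd$. For the random factor, $\hat{\bm\theta}$ is an average of $n$ i.i.d.\ terms, so
\begin{align*}
\expectation_{\XX}\bigl[\|\bm\theta^{*}-\hat{\bm\theta}\|_2^2\bigr]
&\le\frac{1}{n}\,\expectation_{\bm z\sim\xdensity}\!\Bigl[\bigl\|\tfrac{1}{p_1p_2}(\hhz)^{T}g(\bm z)\bigr\|_2^2\Bigr]\\
&\le\frac{\|g\|_\infty^2}{n\,p_1^2p_2^2}\,\expectation_{\bm z}\bigl[\|\hhz\|_2^2\bigr]\le\frac{\|g\|_\infty^2}{n\,p_1p_2}.
\end{align*}
Combining, $\expectation_{\XX}\bigl[|\pseudoGT(\xsmall)-\fl(\xsmall)|^2\bigr]\le p_1p_2\cdot\frac{\|g\|_\infty^2}{np_1p_2}=\frac{\|g\|_\infty^2}{n}$, and Markov's inequality applied to the nonnegative variable $|\pseudoGT(\xsmall)-\fl(\xsmall)|^2$ yields $\prob_{\XX}\{|\pseudoGT(\xsmall)-\fl(\xsmall)|\ge q\|g\|_\infty/\sqrt{n}\}\le 1/q^2$.

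I expect the main obstacle to be conceptual rather than computational: one must recognize the ``empirical-mean'' structure of $\bm\theta^{*}$ and substitute the projected surrogate $\hat{\bm\theta}\in\mathrm{range}(\PH)$ \emph{before} applying Cauchy--Schwarz. Using instead the naive bound $\|(\identity-\PH)\bm\theta^{*}\|_2\le\|\bm\theta^{*}\|_2$ would give only $\|\hhx\|_2\|\bm\theta^{*}\|_2=O(1/d)$, with no decay in $n$. (A minor point: the statement tacitly assumes $\HHt\HHt^{T}$ invertible; if one prefers not to assume this, the argument is unchanged with the Moore--Penrose pseudoinverse, since $\F(\XX)=\HHt\bm\theta^{*}$ always lies in the column space of $\HHt$ and $\PH$ stays the orthogonal projection onto its row space.)
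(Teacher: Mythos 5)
Your proof is correct and follows essentially the same route as the paper's. You identify $\bm\theta^{*}$ with the paper's $\DWs$, observe that $\pseudoGT$ is linear in the regressor so that $\pseudoGT-\fl=\hhx(\identity-\PH)\bm\theta^{*}$, and then use the empirical surrogate $\hat{\bm\theta}\in\mathrm{range}(\PH)$—which is precisely the paper's vector $\HHt^T\bm{a}$ with $\bm{a}_i=g(\XX_i)/(np_1p_2)$—to remove the bias via $(\identity-\PH)\hat{\bm\theta}=\bm 0$ before applying Cauchy--Schwarz. The remaining steps (the bound $\|\hhx\|_2\le\sqrt{p_1p_2}$, the $O(1/n)$ variance of the i.i.d.\ average, and the Markov/Chebyshev step) are the same computation the paper carries out in Appendix~\ref{app.pseudoGT} via Lemma~\ref{le.hPIDW} and Lemma~\ref{le.chebyshev}; you have merely fused the deterministic contraction bound and the concentration step into a single Markov argument on $|\pseudoGT-\fl|^2$ rather than invoking a separate vector-Chebyshev lemma, which is a presentational rather than mathematical difference.
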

The proof of Proposition~\ref{prop.pseudoGT} is in Appendix~\ref{app.pseudoGT}. 
Proposition~\ref{prop.pseudoGT} captures how the test error decreases with the number of training samples $n$, if the data model is based on a pseudo ground-truth function with the same $\Vzero$ and $\Wzero$ as the NTK.
The result shown in  Proposition~\ref{prop.pseudoGT} contributes to Term~A in Eq.~\eqref{eq.simple_bound}. Here we sketch the proof of Proposition~\ref{prop.pseudoGT}. By Eq.~\eqref{eq.def_pseudoGT}, we can find a vector $\DWs\in \mathds{R}^{(p_1 p_2)\times 1}$ and rewrite $\pseudoGT$ as $\pseudoGT=\hhx \DWs$. The specific form of $\DWs$ can be found in Eq.~\eqref{eq.def_DWs} in Appendix~\ref{app.pseudoGT}. Then, by Eq.~\eqref{eq.def_minl2_solution}, we can see that the learned model is $\fl(\xsmall)=\hhx \mathbf{P}\DWs$ where $\mathbf{P}\defeq \HHt ^T(\HHt \HHt^T)^{-1}\HHt$ (an orthogonal projection to the row-space of $\HHt$). Thus, we have $|\pseudoGT(\xsmall)-\fl(\xsmall)|=|\hhx(\mathbf{P}-\mathbf{I})\DWs|\leq \|\hhx\|_2\cdot \left\|(\mathbf{P}-\mathbf{I})\DWs\right\|_2$. Further, it is easy to show that $\|\hhx\|_2\leq \sqrt{p_1 p_2}$. It then remains to estimate $\left\|(\mathbf{P}-\mathbf{I})\DWs\right\|_2$, which is upper bounded by $\min_{\bm{a}\in \mathds{R}^{n}}\left\|\DWs-\HHt^T \bm{a}\right\|_2$ (because $\mathbf{P}$ is an orthogonal projection). The rest of proof focuses on how to choose a vector $\bm{a}$ to make $\left\|\DWs-\HHt^T \bm{a}\right\|_2$ as small as possible.
Notice that although the similar method of choosing a suitable $\bm{a}$ is also used for 2-layer NTK \citep{ju2021generalization}, the process of estimating $\left\|\DWs-\HHt^T \bm{a}\right\|_2$ is much more complicated than that in \cite{ju2021generalization}, since the feature vector $\hhx$ of 3-layer NTK involves non-linear activation for two hidden-layers (instead of one in 2-layer NTK).

With Proposition~\ref{prop.pseudoGT}, now we are ready to estimate term~A of Eq.~\eqref{eq.term_ABC}. We have
\begin{align*}
    &\prob_{\XX,\Vzero,\Wzero}\left\{\text{term A}\geq \frac{q\|g\|_{\infty}}{\sqrt{n}}\right\}\\
    =&\int_{\mathds{R}^{dp_1}}\int_{\mathds{R}^{p_1p_2}}\prob_{\XX}\left\{\text{term A}\geq \frac{q\|g\|_{\infty}}{\sqrt{n}}\right\}d\Lambda_{w}(\Wzero)d\Lambda_{v}(\Vzero)\\
    &\text{ (where $\Lambda_w(\cdot)$ and $\Lambda_v(\cdot)$ are probability distribution of $\Wzero$ and $\Vzero$, respectively)}\\
    \leq &\frac{1}{q^2} \text{ (by Proposition~\ref{prop.pseudoGT})}.
\end{align*}

\textbf{Step 3: estimate term C.}

Intuitively, when $p_1$ and $p_2$ become larger, the randomness brought by $\Vzero$ and $\Wzero$ in the pseudo ground-truth $\pseudoGT$ will be ``averaged out'', and thus $\pseudoGT(\xsmall)$ will approach $\f(\xsmall)$ (i.e., term~C will approaches zero). The following proposition makes this statement rigorous.
\begin{proposition}\label{prop.term_C}
    For any $\xsmall\in \sd$ and $q>0$, we must have 
    \begin{align*}
    \prob_{\Vzero,\Wzero}\left\{\left|\pseudoGT(\xsmall)-\f(\xsmall)\right|\geq \frac{q\|g\|_1}{\sqrt{p_2}}+ \sqrt{\frac{\Qpd}{d}} \|g\|_1\right\}\leq \frac{d^2}{(p_1+1)e^{d+1}} +\frac{1}{q^2}.
    \end{align*}
\end{proposition}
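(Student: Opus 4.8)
The plan is to bound term~C $= |\pseudoGT(\xsmall) - \f(\xsmall)|$ by splitting it into two parts via a triangle inequality, each corresponding to the randomness in one of the two hidden layers. Recall from Definition~\ref{def.fv} that

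$$\pseudoGT(\xsmall) = \int_{\sd}(\hzRF)^T\hxRF\,\frac{|\Chzhx|}{p_1p_2}\,g(\bm{z})\,d\xdensity(\bm{z}),$$

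while $\f(\xsmall) = \fg(\xsmall) = \int_{\sd}\KThree(\xsmall^T\bm{z})g(\bm{z})\,d\xdensity(\bm{z})$. I would introduce an intermediate quantity, namely the expression obtained by replacing only the second-layer factor $|\Chzhx|/p_2$ by its conditional limit $\KTwo\big((\hxRF)^T\hzRF / (\|\hxRF\|_2\|\hzRF\|_2)\big)$ given $\Vzero$ (cf.\ Eq.~\eqref{eq.temp_101502}), leaving $\hxRF,\hzRF$ (hence $\Vzero$) still random. Call this $\tilde{f}_{\Vzero}(\xsmall)$. Then term~C $\le |\pseudoGT(\xsmall) - \tilde f_{\Vzero}(\xsmall)| + |\tilde f_{\Vzero}(\xsmall) - \f(\xsmall)|$, and I expect the first piece to be controlled by a concentration-over-$\Wzero$ argument giving the $q\|g\|_1/\sqrt{p_2}$ term, and the second by a concentration-over-$\Vzero$ argument giving the $\sqrt{\Qpd/d}\,\|g\|_1$ term. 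The $\|g\|_1$ factor appears naturally because in each step the error is controlled pointwise in $\bm{z}$ and then integrated against $|g(\bm{z})|\,d\xdensity(\bm{z})$.

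For the first piece (randomness in $\Wzero$, conditioning on $\Vzero$): fix $\bm{z}$ and note $|\Chzhx|/p_2 = \frac{1}{p_2}\sum_{k}\indicator{\Wzero[k]^T\hxRF>0,\ \Wzero[k]^T\hzRF>0}$ is an average of $p_2$ i.i.d.\ Bernoulli variables whose mean is exactly $\KTwo(\cdot)/\big(\text{the cosine}\big)$ up to the known normalization. So $\expectation_{\Wzero}[\,|\Chzhx|/p_2\,] $ equals the target, and the deviation has variance $O(1/p_2)$. Multiplying by the bounded factor $(\hzRF)^T\hxRF$ (which is at most $\|\hxRF\|_2\|\hzRF\|_2 \le$ a constant over $2d$, by Eq.~\eqref{eq.temp_101504}-type bounds) and integrating against $g$, a Chebyshev bound on $\prob_{\Wzero}$ yields a $q\|g\|_1/\sqrt{p_2}$ tail with failure probability $1/q^2$. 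Here I would want to be slightly careful that the single $\xsmall$-and-$\bm{z}$ concentration can be lifted to the integral form; one clean route is to bound $\expectation_{\Wzero}$ of the squared $L_2(\xdensity)$-type discrepancy and apply Chebyshev, so the $1/q^2$ is for the integrated quantity directly.

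For the second piece (randomness in $\Vzero$): here $(\hxRF)^T\hzRF = \frac{1}{p_1}\sum_j \hxRF[j]\hzRF[j]$ — wait, more precisely it is $\sum_j (\xsmall^T\Vzero[j])(\bm{z}^T\Vzero[j])\indicator{\cdots>0}\indicator{\cdots>0}$, an average over $p_1$ i.i.d.\ terms with mean $\KRF(\xsmall^T\bm{z})$ (Eq.~\eqref{eq.temp_101503}), and similarly $\|\hxRF\|_2^2$ concentrates around $\KRF(1)=1/(2d)$. Then I would argue that $\tilde f_{\Vzero}(\xsmall)$, which is a fixed Lipschitz/bounded function of these concentrated quantities, is close to $\f(\xsmall)=\int \KThree(\xsmall^T\bm{z})g\,d\xdensity$; the composition with $\KTwo$ and the normalizations is smooth on the relevant range (using that $2d\KRF(a)\in[0,1]$, Lemma~\ref{le.RF_kernel_bound}), so a deviation of size $\sqrt{\Qpd/d}$ in the inner products propagates to a deviation of the same order in the output. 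The quantity $\Qpd = 8d\sqrt{2(d+1)\log(p_1+1)/p_1}$ and the failure probability $\frac{d^2}{(p_1+1)e^{d+1}}$ strongly suggest that the deviation control for the $\Vzero$-averages is obtained not by Chebyshev but by a union bound over a net on $\sd$ together with a Hoeffding/Bernstein bound (the $\sqrt{\log(p_1+1)/p_1}$ rate and the exponentially small failure probability are the fingerprints of this), so I would set up a covering-number argument: cover $\sd$ at scale $\sim 1/p_1$, apply a sub-Gaussian tail at each net point with a $\log(p_1+1)$ budget, and handle off-net points by continuity. Adding the two failure probabilities $\tfrac{1}{q^2}$ and $\tfrac{d^2}{(p_1+1)e^{d+1}}$ and the two error terms gives the claimed bound.

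The main obstacle I anticipate is the second piece — propagating the $\Vzero$-concentration through the nonlinear kernel composition $\KThree = \KTwo(2d\KRF(\cdot))/(2d)$ while keeping the right (fourth-root-looking, once squared it is $\Qpd/d \sim \sqrt{\log p_1/p_1}$) rate and uniformity over $\xsmall$. In particular one must control not just $(\hxRF)^T\hzRF$ but also the normalizing norms $\|\hxRF\|_2,\|\hzRF\|_2$ that appear inside $\KTwo$'s argument, and ensure the argument stays bounded away from the point $a=1$ where $\arccos$ has infinite derivative, or else handle that boundary carefully; coupling the numerator and denominator deviations cleanly (rather than bounding them separately and losing a factor) is the delicate part. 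The uniform-over-$\bm{z}$ requirement inside the integral, handled by the covering argument, is what forces the $\log p_1$ factor and is where most of the technical work sits.
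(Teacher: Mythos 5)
Your decomposition and both sub-steps match the paper's proof in structure: introduce the intermediate $\expectation_{\Wzero}[K_k]$ (your $\tilde{f}_{\Vzero}$), control the $\Wzero$-fluctuation by Chebyshev on the i.i.d.\ single-neuron integrals $K_k$ (each bounded by $\|g\|_1$) to obtain the $q\|g\|_1/\sqrt{p_2}$ term with failure $1/q^2$, and control the $\Vzero$-dependent residual by a uniform deviation bound on $\frac{1}{p_1}(\hxRF)^T\hzRF$ over $(\xsmall,\bm{z})$ propagated through the $\arccos$ composition. The one genuine difference is the tool you pick for the $\Vzero$ uniform-convergence step. You propose a covering net on $\sd\times\sd$ with Hoeffding at each net point and a Lipschitz argument (valid, since the ReLU features are $1$-Lipschitz) to handle off-net error; this would work, but the paper instead uses a Rademacher-complexity bound (Lemma~\ref{le.Rademacher}) combined with polynomial discrimination of the indicator class $\indicator{\xsmall^T\bm{v}>0,\ \bm{z}^T\bm{v}>0}$ (Lemma~\ref{le.poly_discrimination}), together with a matrix trick that reduces $\sup_{\xsmall,\bm{z}}$ to a supremum over the $d^2$ entries of $\frac{1}{p_1}\sum_j\Vzero[j]\Vzero[j]^T\indicator{\cdot}$ (Lemma~\ref{le.max_RF_converge}). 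The VC-type route needs no off-net bookkeeping and yields the stated $\Qpd$ and the failure probability $\frac{d^2}{(p_1+1)e^{d+1}}$ in one shot; your covering route would give the same order but with different constants. A small slip worth flagging: $\expectation_{\Wzero}[|\Chzhx|/p_2]$ is $\frac{\pi-\arccos(a)}{2\pi}$, where $a$ is the cosine of the angle between $\hxRF$ and $\hzRF$, not $\KTwo(a)$ (which carries an extra factor of $a$); your later Chebyshev discussion makes clear you intend the former, but the intermediate quantity as written in your first sentence is off by that factor. You correctly identify the H\"older-$\tfrac{1}{2}$ behavior of $\arccos$ at the endpoint as the source of the fourth-root rate $\sqrt{\Qpd/d}$; the paper realizes this via explicit lemmas bounding $|\arccos a_1-\arccos a_2|$ by a constant times $\sqrt{|a_1-a_2|}$ and a ratio-perturbation bound for coupling the numerator and denominator deviations, exactly the coupling you anticipated as the delicate part.
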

The proof of Proposition~\ref{prop.term_C} is in Appendix~\ref{app.proof_term_C}. Note that as $p_1$ and $p_2$ increase, both $\frac{q\|g\|_1}{\sqrt{p_2}}$ and $\sqrt{\frac{\Qpd}{d}} \|g\|_1$ decrease, which implies that the pseudo ground-truth $\pseudoGT(\xsmall)$ approaches $\f(\xsmall)$ with high probability. The above result thus directly bounds term C.

\textbf{Step 4: estimate terms B and D.}

We note that both terms~B and D are of a similar form. Specifically, we can view the difference between $\F(\XX)$ and $\FWVg(\XX)$ as a special type of ``noise'' due to random $\Vzero$ and $\Wzero$ (which will approaches zero when $p_1,p_2\to \infty$). Then, both terms~B and D are the multiplication of $\hhx\HHt^T(\HHt \HHt^T)^{-1}$ with the noise (either real noise or the special ``noise'' above). Further, we can show that the magnitude of $\hhx\HHt^T(\HHt \HHt^T)^{-1}$ can be upper bounded by a quantity inversely proportional to the minimum eigenvalue of $\HHt \HHt^T$. Thus, a key step of the proof is to estimate the minimum eigenvalue of $\HHt \HHt^T$. We prove the following proposition about $\min\eig(\HHt\HHt^T)$ in Appendix~\ref{app.minEig}.

\begin{proposition}\label{prop.min_eig}
Recall the definition of $J(\cdot)$ in Eq.~\eqref{eq.def_J}. For any $q>0$, when Condition~\ref{cond.large_p1} is satisfied,
we must have
\begin{align*}
    \prob_{\XX,\Vzero,\Wzero}\left\{\frac{1}{p_1p_2}\min\eig(\HHt\HHt^T)\leq \Jnppdq\right\}\leq \frac{7}{q^2}.
\end{align*}
\end{proposition}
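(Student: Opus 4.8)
The plan is to...The plan is to follow the decomposition used for the 2-layer case in \cite{ju2021generalization}: compare the finite-width normalized Gram matrix with its infinite-width kernel limit, lower-bound the eigenvalue of the limit, and control the deviation. Recall from Appendix~\ref{app.learnableSet_derivation} that the $(i,j)$ entry of $\frac{1}{p_1p_2}\HHt\HHt^T$ equals $\frac{(\hXiRF)^T\hXjRF}{p_1}\cdot\frac{|\mathcal{C}^{\Wzero}_{\hXiRF,\hXjRF}|}{p_2}$, and that this converges in probability to $\KThree(\XX_i^T\XX_j)$ as $p_1,p_2\to\infty$. Let $\Hinf$ be the matrix with entries $\Hinf_{ij}=\KThree(\XX_i^T\XX_j)$. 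By Weyl's inequality, $\frac{1}{p_1p_2}\min\eig(\HHt\HHt^T)\geq\min\eig(\Hinf)-\bigl\|\frac{1}{p_1p_2}\HHt\HHt^T-\Hinf\bigr\|_{op}$, so it suffices to (a) lower-bound $\min\eig(\Hinf)$ with high probability over the random data $\XX$, and (b) upper-bound the spectral deviation with high probability over $\Vzero$ and $\Wzero$. Collecting the two bounds will produce exactly $\frac{1}{16\pi d}\sqrt{\Cdq/\log(4n)}-(\qnThree)=\Jnppdq$.

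For part (a): since $\XX_1,\dots,\XX_n$ are i.i.d. uniform on $\sd$, with high probability their pairwise angles are bounded away from $0$ and $\pi$, i.e.\ $|\XX_i^T\XX_j|$ is bounded away from $1$ by a gap of order $n^{-2/(d-1)}$; this standard packing/anti-concentration estimate on the sphere is the source of the $(qn)^{-4/(d-1)}$ factor in $\Cdq$. Given such separation, one shows the composed kernel $\KThree=\frac{1}{2d}\KTwo(2d\cdot\KRF(\cdot))$ is non-degenerate — it inherits a strictly positive expansion (in Taylor/spherical-harmonic coefficients) from those of $\KTwo$ and $\KRF$ — so the Gram matrix of well-separated points has $\min\eig(\Hinf)$ bounded below by a quantity governed by $\Cdq$ (with the $\sqrt{\cdot}$ and $\log(4n)$ corrections arising when this is combined with the finite-width analysis). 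This step parallels the min-eigenvalue estimate for 2-layer NTK in \cite{ju2021generalization}, with $\KThree$ in place of $\KTwo$.

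For part (b): I would bound $\bigl\|\frac{1}{p_1p_2}\HHt\HHt^T-\Hinf\bigr\|_{op}$ via the Frobenius norm, controlling each entry and splitting its error into a first-hidden-layer contribution and a second-hidden-layer contribution. Conditioning first on $\Vzero$ (so that $\hXiRF$ are fixed): for each $k$, $\frac{|\mathcal{C}^{\Wzero}_{\hXiRF,\hXjRF}|}{p_2}$ is an average of i.i.d.\ indicators in $\Wzero[k]$, so by Chebyshev's inequality it concentrates around $\frac{\pi-\arccos(\cdot)}{2\pi}$ at rate $1/\sqrt{p_2}$; multiplying by the bounded factor $\frac{(\hXiRF)^T\hXjRF}{p_1}$ and taking a union bound over the $O(n^2)$ entries gives the $qn^2/\sqrt{p_2}$ term of $\qnThree$. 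Then, over $\Vzero$: $\frac{(\hXiRF)^T\hXjRF}{p_1}$ is an average of i.i.d.\ block contributions, concentrating around $\KRF(\XX_i^T\XX_j)$ at rate $1/\sqrt{p_1}$; using $2d\KRF(\cdot)\in[0,1]$ (Lemma~\ref{le.RF_kernel_bound}) and the Lipschitz control of $\KTwo$ on $[0,1]$, this first-layer fluctuation propagates through the second-layer kernel with a bounded factor, producing the $qn^2\sqrt{2d/p_1}$ term, while the product of the two fluctuating factors yields the quadratic-in-$q$ term $q^2n^3d/p_1$. Each estimate is a second-moment (Chebyshev) bound; summing the resulting $O(1/q^2)$ failure probabilities with the event from part (a) gives the overall $7/q^2$ in the statement, and Condition~\ref{cond.large_p1} guarantees $p_1,p_2$ are large enough that $\Jnppdq>0$ and that the separation estimate in (a) applies.

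The main obstacle is the nested concentration in part (b): the second-hidden-layer activation count $|\mathcal{C}^{\Wzero}_{\hXiRF,\hXjRF}|$ depends not only on the fresh randomness $\Wzero$ but also on the already-random first-layer outputs $\hXiRF,\hXjRF$, so one cannot take expectations over $\Wzero$ with $\hXiRF$ pinned at their limiting values. Handling this requires a careful conditioning order (fix $\Vzero$, bound the $\Wzero$-fluctuation, then bound the $\Vzero$-fluctuation and Lipschitz-propagate it through the second-layer kernel), and care at the boundary where $\arccos$ fails to be Lipschitz — i.e.\ when two first-layer outputs are nearly parallel, a case again controlled by the pairwise-separation event of part (a). Bad low-probability events for $\Vzero$ (atypically small $\|\hXiRF\|$, near-degenerate pairs) must be folded into the second-moment bounds rather than handled by a separate exponential tail. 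This is precisely the step that is substantially more involved than in \cite{ju2021generalization}, because the 3-layer feature map composes two ReLU nonlinearities rather than one.
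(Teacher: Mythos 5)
Your overall strategy (lower-bound a limit matrix, then control the spectral deviation) has the right flavor, but you chose the wrong comparison point, and that choice creates a genuine gap in the rate. You compare $\frac{1}{p_1p_2}\HHt\HHt^T$ with the \emph{fully averaged} kernel matrix $\Hinf_{ij}=\KThree(\XX_i^T\XX_j)$, which integrates out both $\Vzero$ and $\Wzero$. Replacing the random RF angles $\RFangleij=\arccos\bigl(\frac{(\hXiRF)^T\hXjRF}{\RFnormij}\bigr)$ by their $\Vzero$-averaged values $\arccos(2d\KRF(\XX_i^T\XX_j))$ necessarily passes a $\Vzero$-fluctuation of size $O(p_1^{-1/2})$ through $\arccos$, which is not Lipschitz near $1$. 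The best control available is the H\"older-$1/2$ bound of Lemma~\ref{le.temp_083001}, which turns an $O(p_1^{-1/2})$ input fluctuation into an $O(p_1^{-1/4})$ output fluctuation. Entry-wise this gives a deviation of order $n^2 p_1^{-1/4}$, not the $q n^2\sqrt{2d/p_1}$ you list; that term in $\Jnppdq$ is therefore not reachable along your route. (Indeed, the paper's Lemma~\ref{le.arccos_to_arccos}, which is used in the unrelated Proposition~\ref{prop.term_C} and genuinely must cross $\arccos$, only achieves the slower $\sqrt{\Qpd/d}\sim(\log p_1/p_1)^{1/4}$ rate for exactly this reason.)

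The paper avoids this by stopping at an \emph{intermediate} matrix $\Hinfnormalized$ with $\Hinfnormalized_{ij}=\frac{p_1}{2d}\cos(\RFangleij)\frac{\pi-\RFangleij}{2\pi}$, which averages over $\Wzero$ and replaces the norm product $\RFnormij$ by its concentrated value $p_1/(2d)$, but \emph{keeps the random angles $\RFangleij$ intact}. The matrix comparison (Lemma~\ref{le.diff_H_Hinf}, via Lemmas~\ref{le.diff_1} and~\ref{le.diff_2}) then only needs to concentrate the norm $\RFnormij$, which is a smooth function of $\Vzero$ and gives the clean $p_1^{-1/2}$ rate; $\arccos$ never appears in the deviation estimate. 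The eigenvalue of $\Hinfnormalized$ is then lower-bounded directly via a Gershgorin/Taylor argument (Lemma~\ref{le.Srikant}, adapted from \citet{satpathi2021dynamics}) applied to the $\KTwo$-structured kernel on the normalized RF features; that argument requires only that $\cos\RFanglemin$ be bounded away from~$1$, which Lemma~\ref{le.min_angle} supplies over $\Vzero$ and $\XX$ jointly. In short: the deviation estimate must not cross the $\arccos$ barrier, and the eigenvalue lower bound must be stated for $\Hinfnormalized$, not for your $\Hinf$. Your part~(a), lower-bounding $\min\eig$ of $[\KThree(\XX_i^T\XX_j)]$ via the Taylor expansion of the composed kernel $\KThree$, would also require re-deriving the quantitative Gershgorin bound for the more complicated $\KThree$ coefficient decay rather than reusing the known $\KTwo$ expansion; that is doable but is additional work the paper's choice of intermediate matrix sidesteps.
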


Using Proposition~\ref{prop.min_eig}, we can then bound terms~B and D by the following Proposition~\ref{prop.term_DB}.
\begin{proposition}\label{prop.term_DB}
    For any $q>0$, when Condition~\ref{cond.large_p1} is satisfied, we must have
    \begin{align*}
        &\prob_{\XX,\Vzero,\Wzero}\left\{\text{term D}+\text{term B of Eq.~\eqref{eq.term_ABC}}\geq \frac{\sqrt{n}\|g\|_1\left(\frac{q}{\sqrt{p_2}}+\sqrt{\frac{\Qpd}{d}}\right)+\|\esmall\|_2}{\sqrt{\Jnppdq}} \right\}\\
        &\leq \frac{d^2}{(p_1+1)e^{d+1}} +\frac{8}{q^2}.
    \end{align*}
\end{proposition}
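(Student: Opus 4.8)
The plan is to handle terms~B and~D of Eq.~\eqref{eq.term_ABC} simultaneously, since both have the form $|\hhx\HHt^T(\HHt\HHt^T)^{-1}\bm{u}|$ for some $\bm{u}\in\mathds{R}^n$ ($\bm{u}=\F(\XX)-\FWVg(\XX)$ for term~B, $\bm{u}=\esmall$ for term~D), which is at most $\|\hhx\HHt^T(\HHt\HHt^T)^{-1}\|_2\cdot\|\bm{u}\|_2$. The first factor I would bound using the eigenvalue-wise inequality $\HHt^T(\HHt\HHt^T)^{-2}\HHt\preceq \mathbf{P}/\min\eig(\HHt\HHt^T)$, where $\mathbf{P}=\HHt^T(\HHt\HHt^T)^{-1}\HHt$ is the orthogonal projection onto the row space of $\HHt$; this yields $\|\hhx\HHt^T(\HHt\HHt^T)^{-1}\|_2\le\|\hhx\|_2/\sqrt{\min\eig(\HHt\HHt^T)}$. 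Combining this with $\|\hhx\|_2\le\sqrt{p_1p_2}$ (already used in the proof of Proposition~\ref{prop.pseudoGT}: each block $\hhx[k]$ equals $(\hxRF)^T$ or $\bm{0}$, and $\|\hxRF\|_2\le\sqrt{p_1}$ since $|\hxRF[j]|\le\|\xsmall\|_2\|\Vzero[j]\|_2=1$) and with Proposition~\ref{prop.min_eig}, on an event $G_1$ with $\prob(G_1^{c})\le 7/q^2$ we get $\|\hhx\HHt^T(\HHt\HHt^T)^{-1}\|_2\le\sqrt{p_1p_2}/\sqrt{p_1p_2\,\Jnppdq}=1/\sqrt{\Jnppdq}$. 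It then remains to bound $\|\bm{u}\|_2$ in the two cases: for $\bm{u}=\esmall$ nothing is needed, while $\|\F(\XX)-\FWVg(\XX)\|_2=\sqrt{\sum_{i=1}^n(\f(\XX_i)-\pseudoGT(\XX_i))^2}$ is the crux.

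For the latter I would \emph{not} use Proposition~\ref{prop.term_C} as a black box with a union bound over the $n$ training inputs, as that costs a factor $n$ in the failure probability. Instead I would reopen its proof and use the additive split it provides: on a $\Vzero$-event $E_\Vzero$ with $\prob(E_\Vzero^{c})\le \frac{d^2}{(p_1+1)e^{d+1}}$, write $\pseudoGT(\XX_i)-\f(\XX_i)=A_i+B_i$, where $|B_i|\le\sqrt{\Qpd/d}\,\|g\|_1$ holds deterministically (the first-hidden-layer kernel error controlled by $\Qpd$) and $\expectation[A_i^2\mid\Vzero\in E_\Vzero]\le\|g\|_1^2/p_2$ (the $p_2\to\infty$ concentration of the second-hidden-layer kernel, cf.\ Eq.~\eqref{eq.temp_101502}). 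By linearity of expectation, $\expectation[\sum_{i=1}^n A_i^2\mid\Vzero\in E_\Vzero]\le n\|g\|_1^2/p_2$, so a single second-moment (Markov) inequality gives $\sum_i A_i^2\le q^2 n\|g\|_1^2/p_2$ off an event of conditional probability at most $1/q^2$. Hence, on an event $G_2$ with $\prob(G_2^{c})\le \frac{1}{q^2}+\frac{d^2}{(p_1+1)e^{d+1}}$, the triangle inequality gives $\|\F(\XX)-\FWVg(\XX)\|_2\le\sqrt{\sum_iA_i^2}+\sqrt{\sum_iB_i^2}\le\sqrt{n}\,\|g\|_1\big(q/\sqrt{p_2}+\sqrt{\Qpd/d}\big)$.

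Combining, on $G_1\cap G_2$ we get $\text{term B}\le\sqrt{n}\,\|g\|_1(q/\sqrt{p_2}+\sqrt{\Qpd/d})/\sqrt{\Jnppdq}$ and $\text{term D}\le\|\esmall\|_2/\sqrt{\Jnppdq}$, whose sum is exactly the claimed bound; a union bound over the two bad events gives failure probability at most $7/q^2+1/q^2+\frac{d^2}{(p_1+1)e^{d+1}}=\frac{8}{q^2}+\frac{d^2}{(p_1+1)e^{d+1}}$, as required. The main obstacle is the middle paragraph: Proposition~\ref{prop.term_C} is stated per test point, so I must re-enter its proof to isolate the deterministic bias $B_i$ from the mean-square-bounded fluctuation $A_i$, and then aggregate the \emph{variances} of the $A_i$ across the $n$ samples with a single inequality rather than a union bound; I also need to check that conditioning on $E_\Vzero$ does not inflate $\expectation[A_i^2\mid\Vzero\in E_\Vzero]$ beyond $\|g\|_1^2/p_2$ up to an absolute constant.
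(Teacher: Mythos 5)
Your proof is correct and is essentially the paper's own argument: the paper avoids the union bound over $n$ training points exactly as you do, by defining the $n$-dimensional random vectors $\mathbf{K}_k$ and applying the vector Chebyshev inequality (Lemma~\ref{le.chebyshev}/\ref{le.Chebyshev_vector}), which internally is nothing other than your ``linearity of expectation plus Markov on $\sum_i A_i^2$'' step, while the deterministic $B_i$ part is handled via the uniform-in-$(\xsmall,\bm{z})$ bound from Lemma~\ref{le.arccos_to_arccos} on a single $\Vzero$-event, just as you propose. Your concern about conditioning on $E_\Vzero$ is moot since the $\Wzero$-variance bound $\expectation_{\Wzero}[A_i^2\mid\Vzero]\le\|g\|_1^2/p_2$ holds pointwise in $\Vzero$, and the paper likewise applies the two bounds on disjoint sources of randomness and unions the three failure events ($7/q^2$, $1/q^2$, $d^2/((p_1+1)e^{d+1})$) exactly as you do.
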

Note that $\sqrt{n}\|g\|_1\left(\frac{q}{\sqrt{p_2}}+\sqrt{\frac{\Qpd}{d}}\right)$ and $\|\esmall\|_2$ correspond to the magnitude of the special ``noise'' $(\F(\XX)-\FWVg(\XX))$ (which can be bounded just like Proposition~\ref{prop.term_C}) and the real noise $\esmall$, respectively.
The proof of Proposition~\ref{prop.term_DB} is in Appendix~\ref{app.proof_term_DB}.


Plugging the results in Steps 2, 3, and 4 into Eq.~\eqref{eq.term_ABC}, the result of Theorem~\ref{th.main} thus follows. Appendices~\ref{app.pseudoGT} to \ref{app.main} will prove the above propositions, after we present some supporting lemmas in Appendix~\ref{app.support_lemmas}.

\section{Useful Notations and Lemmas}\label{app.support_lemmas}
We first collect some useful notations and lemmas, which will be used in the proofs of propositions appeared in Appendix~\ref{sec.sketch_proof}, as well as the analysis of learnable functions.
Let $I_{\cdot}(\cdot,\cdot)$ denote the regularized incomplete beta function \cite{dutka1981incomplete}. 
Let $B(\cdot, \cdot)$ denote the beta function \cite{chaudhry1997extension}. Specifically,
\begin{align}
    &B(x,y)\defeq\int_0^1 t^{x-1}(1-t)^{y-1}dt,\label{eq.def_betaFunction}\\
    &I_x(a,b)\defeq\frac{\int_0^x t^{a-1}(1-t)^{b-1}dt}{B(a,b)}.\label{eq.def_reg_incomplete_beta}
\end{align}
Define a cap on a unit hyper-sphere $\sd$ as the intersection of $\sd$ with an open ball in $\mathds{R}^d$ centered at $\bm{v}_*$ with radius $r$, i.e.,
\begin{align}\label{eq.temp_100101}
    \capsr\defeq \left\{\bm{v}\in\sd\ |\ \|\bm{v}-\bm{v}_*\|_2< r\right\}.
\end{align}
\begin{remark}\label{remark.no_star}
For ease of exposition, we will sometimes neglect the subscript $\bm{v}_*$ of $\capsr$ and use $\capr$ instead, when the quantity that we are estimating only depends on $r$ but not $\bm{v}_*$. For example, where we are interested in the area of $\capsr$, it only depends on $r$ but not $\bm{v}_*$. Thus, we write $\lambda_{d-1}(\capr)$ instead.
\end{remark}

\subsection{Quantities related to the area of a cap on a hyper-sphere}
The lemmas of this subsection support for the proof of Proposition~\ref{prop.min_eig}.
The following lemma is introduced by \cite{li2011concise}, which gives the area of a cap on a hyper-sphere with respect to the colatitude angle.
\begin{lemma}\label{le.original_cap}
Let $\phi\in[0,\ \frac{\pi}{2}]$ denote the colatitude angle of the smaller cap on the unit hyper-sphere $\mathcal{S}^{a-1}$, then the area (in the measure of $\lambda_{a-1}$) of this hyper-spherical cap is
\begin{align*}
    \frac{1}{2}\lambda_{a-1}(\mathcal{S}^{a-1})I_{\sin^2\phi}\left(\frac{a-1}{2},\ \frac{1}{2}\right),
\end{align*}
or equivalently\footnote{Proof of this equivalence can be found in Lemma~9 of \cite{ju2021generalization}.},
\begin{align*}
    \lambda_{a-1}(\capr)=\frac{1}{2}\lambda_{a-1}(\mathcal{S}^{a-1})I_{r^2(1-\frac{r^2}{4})}\left(\frac{a-1}{2},\frac{1}{2}\right).
\end{align*}
where $r\leq \sqrt{2}$.
\end{lemma}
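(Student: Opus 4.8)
The first displayed formula is the classical expression for the area of a hyper-spherical cap in terms of its colatitude angle and is due to \cite{li2011concise}, so the plan is to quote it verbatim. (If one prefers a self-contained derivation: write the surface measure on $\mathcal{S}^{a-1}$ in polar form $\sin^{a-2}\theta\,d\theta\,d\sigma_{\mathcal{S}^{a-2}}$, so the cap of colatitude $\phi$ has area $\lambda_{a-2}(\mathcal{S}^{a-2})\int_0^{\phi}\sin^{a-2}\theta\,d\theta$; the substitution $u=\sin^2\theta$ turns $\int_0^{\phi}\sin^{a-2}\theta\,d\theta$ into $\tfrac12 B\bigl(\tfrac{a-1}{2},\tfrac12\bigr)I_{\sin^2\phi}\bigl(\tfrac{a-1}{2},\tfrac12\bigr)$, and normalizing by the full-sphere value $\lambda_{a-1}(\mathcal{S}^{a-1})=\lambda_{a-2}(\mathcal{S}^{a-2})B\bigl(\tfrac{a-1}{2},\tfrac12\bigr)$ produces exactly the stated formula.)

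The substantive content is the ``or equivalently'' claim, namely that the Euclidean-radius cap $\capr$ coincides with the colatitude-angle cap for the appropriate $\phi$. First I would match the two parametrizations: for $\bm{v},\bm{v}_*\in\mathcal{S}^{a-1}$ with angle $\psi$ between them, the chord length obeys $\|\bm{v}-\bm{v}_*\|_2^2 = 2-2\cos\psi = 4\sin^2(\psi/2)$, so $\|\bm{v}-\bm{v}_*\|_2< r$ is equivalent to $\sin(\psi/2)< r/2$, i.e.\ to $\psi<\phi$ where $\sin(\phi/2)=r/2$. Hence $\capr$ is precisely the cap of colatitude $\phi$ with $\sin(\phi/2)=r/2$. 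Then I would rewrite the argument of the incomplete beta function: by the double-angle identity, $\sin^2\phi = 4\sin^2(\phi/2)\cos^2(\phi/2) = 4\cdot\tfrac{r^2}{4}\cdot\bigl(1-\tfrac{r^2}{4}\bigr) = r^2\bigl(1-\tfrac{r^2}{4}\bigr)$, and substituting this into the colatitude formula yields the stated radius formula. Finally I would verify the range condition: $r\le\sqrt{2}$ is equivalent to $\sin(\phi/2)\le 1/\sqrt{2}$, i.e.\ $\phi\le\pi/2$, which is exactly the ``smaller cap'' regime under which the colatitude formula is valid, so the two statements are consistent on a common domain.

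I do not expect a real obstacle here; the lemma is essentially a change of variables. The only points that require care are bookkeeping of conventions — open versus closed caps (immaterial, since the boundary sphere $\{\psi=\phi\}$ has $\lambda_{a-1}$-measure zero) and the restriction $\phi\in[0,\pi/2]$ (equivalently $r\le\sqrt{2}$), which must be carried along so that the identity $\sin^2\phi = r^2(1-r^2/4)$ is applied only where $\cos(\phi/2)=\sqrt{1-r^2/4}$ is the correct (nonnegative) root.
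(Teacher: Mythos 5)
Your proof is correct and complete. The paper itself does not prove this lemma — the first formula is cited to \cite{li2011concise} and the equivalence to Lemma~9 of \cite{ju2021generalization} — but your derivation is exactly the standard argument one would find there: polar-coordinate integration and the substitution $u=\sin^2\theta$ for the colatitude form, the chord-angle relation $\|\bm{v}-\bm{v}_*\|_2 = 2\sin(\psi/2)$ together with the double-angle identity $\sin^2\phi = r^2(1-\tfrac{r^2}{4})$ for the equivalence, and the check that $r\le\sqrt{2}$ corresponds to $\phi\le\pi/2$ so the nonnegative root $\cos(\phi/2)=\sqrt{1-r^2/4}$ is the correct one.
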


The following lemma is shown by Lemma~35 of \cite{ju2021generalization}.
\begin{lemma}\label{le.estimate_Ix}
For any $x\in[0,\ 1]$, we must have
\begin{align*}
    I_x\left(\frac{a-1}{2},\frac{1}{2}\right)\in\left[ \frac{2x^{\frac{a-1}{2}}}{B(\frac{a-1}{2}, \frac{1}{2})\cdot (a-1)},\  \frac{2x^{\frac{a-1}{2}}}{B(\frac{a-1}{2}, \frac{1}{2})\cdot (a-1)\sqrt{1-x}}\right].
\end{align*}
\end{lemma}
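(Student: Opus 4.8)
The plan is to estimate the incomplete beta integral in the numerator of $I_x(\cdot,\cdot)$ directly, by replacing the mildly singular factor $(1-t)^{-1/2}$ with a constant bound on the integration interval $[0,x]$, and then evaluating the leftover power integral in closed form. Unfolding the definition in Eq.~\eqref{eq.def_reg_incomplete_beta} with the parameters $\frac{a-1}{2}$ and $\frac12$, we have
\[
  I_x\!\left(\tfrac{a-1}{2},\tfrac12\right) \;=\; \frac{1}{B\!\left(\frac{a-1}{2},\frac12\right)}\int_0^x t^{\frac{a-3}{2}}(1-t)^{-\frac12}\,dt ,
\]
so it suffices to sandwich $\int_0^x t^{\frac{a-3}{2}}(1-t)^{-1/2}\,dt$ between suitable multiples of $x^{\frac{a-1}{2}}$.

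The execution is short. On $t\in[0,x]$ one has $1-x\le 1-t\le 1$, hence $1\le (1-t)^{-1/2}\le (1-x)^{-1/2}$. Substituting these two constants for the factor $(1-t)^{-1/2}$ inside the integral gives
\[
  \int_0^x t^{\frac{a-3}{2}}\,dt \;\le\; \int_0^x t^{\frac{a-3}{2}}(1-t)^{-\frac12}\,dt \;\le\; \frac{1}{\sqrt{1-x}}\int_0^x t^{\frac{a-3}{2}}\,dt .
\]
The elementary power integral evaluates to $\int_0^x t^{\frac{a-3}{2}}\,dt = \frac{2}{a-1}\,x^{\frac{a-1}{2}}$, and dividing the whole chain by $B\!\left(\frac{a-1}{2},\frac12\right)$ yields exactly the lower endpoint $\frac{2x^{(a-1)/2}}{B(\frac{a-1}{2},\frac12)(a-1)}$ and the upper endpoint $\frac{2x^{(a-1)/2}}{B(\frac{a-1}{2},\frac12)(a-1)\sqrt{1-x}}$ claimed in the statement.

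There is no genuinely hard step here; the only points requiring care are the degenerate cases and a convergence/positivity check. At $x=1$ the stated upper bound is $+\infty$, so the claim is vacuous there and no argument is needed; at $x=0$ both bounds and $I_0$ equal $0$. For $x\in(0,1)$ one must know that the power integral converges and that $a-1>0$, which is legitimate because $a$ plays the role of an ambient dimension (the lemma is only invoked with $a=d$ or $a=n$, both at least $2$), so $\frac{a-3}{2}>-1$ and the division by $a-1$ is valid. Thus the whole proof reduces to the one-line monotonicity bound on $(1-t)^{-1/2}$ together with this bookkeeping.
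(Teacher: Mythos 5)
Your proof is correct. The paper itself does not prove this lemma inline—it cites Lemma~35 of \cite{ju2021generalization}—but your argument (replace the factor $(1-t)^{-1/2}$ by its constant bounds $1$ and $(1-x)^{-1/2}$ on $[0,x]$, then evaluate the remaining power integral in closed form) is the standard direct derivation and matches what the cited lemma does; the bookkeeping on the degenerate endpoints $x=0,1$ and the integer constraint $a\geq 2$ is handled appropriately.
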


The following lemma is shown by Lemma~32 of \cite{ju2021generalization}.
\begin{lemma}\label{le.bound_B}
For any integer $a\geq 2$,
\begin{align*}
    B\left(\frac{a-1}{2},\ \frac{1}{2}\right)\in \left[\frac{1}{\sqrt{a}},\ \pi\right].
\end{align*}
Further, if $a\geq 5$, we have
\begin{align*}
    B\left(\frac{a-1}{2},\ \frac{1}{2}\right)\in \left[\frac{1}{\sqrt{a}},\ \frac{4}{\sqrt{a-3}}\right].
\end{align*}
\end{lemma}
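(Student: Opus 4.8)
The plan is to reduce $B\!\left(\tfrac{a-1}{2},\tfrac12\right)$ to a Gamma-function ratio and then squeeze that ratio between two square-root expressions using only monotonicity and a one-step recursion. Writing $B(x,y)=\Gamma(x)\Gamma(y)/\Gamma(x+y)$ and $\Gamma(\tfrac12)=\sqrt\pi$, one has
\[ B\!\left(\tfrac{a-1}{2},\tfrac12\right)=\sqrt{\pi}\;\frac{\Gamma\!\left(\frac{a-1}{2}\right)}{\Gamma\!\left(\frac{a}{2}\right)}, \]
so with $x=\tfrac{a-1}{2}\ge\tfrac12$ and $\beta(x):=B(x,\tfrac12)=\sqrt{\pi}\,\Gamma(x)/\Gamma(x+\tfrac12)$ the lemma becomes a statement about $\beta$. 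I would record two facts: (i) $\beta$ is strictly decreasing on $(0,\infty)$, since in $\beta(x)=\int_0^1 t^{x-1}(1-t)^{-1/2}\,dt$ the integrand is, for each fixed $t\in(0,1)$, strictly decreasing in $x$; and (ii) the recursion
\[ \beta(x)\,\beta\!\left(x+\tfrac12\right)=\frac{\Gamma(x)\,\pi}{\Gamma(x+1)}=\frac{\pi}{x}, \]
which follows from $\Gamma(x+1)=x\Gamma(x)$.

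Next I would combine (i) and (ii). From $\beta(x+\tfrac12)<\beta(x)$ we get $\beta(x)^2>\beta(x)\beta(x+\tfrac12)=\pi/x$, and for $x>\tfrac12$, from $\beta(x-\tfrac12)>\beta(x)$ we get $\beta(x)^2<\beta(x-\tfrac12)\beta(x)=\pi/(x-\tfrac12)$. Hence
\[ \sqrt{\frac{\pi}{x}}<\beta(x)<\sqrt{\frac{\pi}{x-1/2}}\qquad\bigl(x>\tfrac12\bigr), \]
and substituting $x=\tfrac{a-1}{2}$ gives, for every integer $a\ge3$,
\[ \sqrt{\frac{2\pi}{a-1}}<B\!\left(\tfrac{a-1}{2},\tfrac12\right)<\sqrt{\frac{2\pi}{a-2}}. \]
The excluded case $a=2$ is immediate: $B(\tfrac12,\tfrac12)=\Gamma(\tfrac12)^2/\Gamma(1)=\pi$.

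It then remains to verify three elementary scalar inequalities. First, $\sqrt{2\pi/(a-1)}\ge 1/\sqrt a$ is equivalent to $2\pi a\ge a-1$, which always holds; combined with $B=\pi\ge 1/\sqrt2$ at $a=2$ this gives the lower bound $B\ge 1/\sqrt a$ for all $a\ge2$. Second, $\sqrt{2\pi/(a-2)}\le\pi$ for $a\ge3$ is equivalent to $a-2\ge 2/\pi$, true for integer $a\ge3$; with the $a=2$ value this gives $B\le\pi$ for all $a\ge2$. Third, $\sqrt{2\pi/(a-2)}\le 4/\sqrt{a-3}$ for $a\ge5$ is equivalent to $2\pi(a-3)\le16(a-2)$, i.e.\ $a\ge(32-6\pi)/(16-2\pi)\approx1.35$, true for all $a\ge5$; this gives the sharper upper bound $B\le 4/\sqrt{a-3}$ for $a\ge5$.

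I expect the one genuinely delicate point to be getting a two-sided Gamma-ratio estimate that is tight enough at the small values $a=2,3,4$. A crude bound such as Wendel's inequality $\Gamma(x+\tfrac12)/\Gamma(x)\le\sqrt{x}$ instantly yields the lower bound, but only the upper bound $\sqrt{2\pi a}/(a-1)$, which already fails at $a=2$ (it equals $2\sqrt\pi>\pi$). The recursion-plus-monotonicity argument avoids this by producing the shifted denominator $x-\tfrac12$, which is precisely what makes the constant $\pi$ work for every $a\ge3$ while leaving only $a=2$ to be checked by hand; alternatively one could invoke a sharp Gautschi-/Kershaw-type ratio inequality and verify the same three scalar inequalities, with no change to the overall structure.
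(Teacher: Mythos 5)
Your argument is correct and self-contained, and each step checks out: the strict monotonicity of $\beta(x)=B(x,\tfrac12)=\int_0^1 t^{x-1}(1-t)^{-1/2}\,dt$ in $x$, the functional identity $\beta(x)\beta(x+\tfrac12)=\pi/x$, the resulting two-sided bound $\sqrt{\pi/x}<\beta(x)<\sqrt{\pi/(x-\tfrac12)}$ for $x>\tfrac12$, the separate evaluation $B(\tfrac12,\tfrac12)=\pi$ at $a=2$, and the three closing scalar inequalities (with the third reducing to $a\ge(32-6\pi)/(16-2\pi)\approx1.35$, easily satisfied for $a\ge5$). Note, however, that the paper itself does not prove this lemma here; it simply cites it as Lemma~32 of \cite{ju2021generalization}. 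Your proposal thus does something the present manuscript does not attempt: it gives a short, elementary, and self-contained derivation using only the reflection-free recursion $\Gamma(x+1)=x\Gamma(x)$ and monotonicity of the Beta integral, which is arguably cleaner than invoking a Gamma-ratio (Wendel/Gautschi/Kershaw) inequality from the literature. In particular, your observation that the naive Wendel bound fails at $a=2$ and that the half-step shift in the denominator is what rescues the constant $\pi$ is exactly the right diagnostic; the argument is tight where it needs to be.
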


\subsection{Estimation of certain norms}
In our proofs, we will often need to estimate the norms of the NTK feature vectors. We list some useful lemmas below.
\begin{lemma}\label{le.bound_hx}
For any $\xsmall\in\sd$, we have
\begin{align*}
    \|\hxRF\|_2\leq \sqrt{p_1},\quad \|\hhx\|_2\leq \sqrt{p_1 p_2}.
\end{align*}
\end{lemma}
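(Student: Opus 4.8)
The plan is to bound the squared Euclidean norm of each feature vector directly, using two elementary facts: (i) the ReLU indicators appearing in \eqref{eq.def_hxRF} and \eqref{eq.def_h3} only zero out coordinates, so they are non-expansive, and (ii) both $\xsmall$ and each block $\Vzero[j]$ lie on the unit sphere $\sd$, so their inner product is at most $1$ in absolute value by Cauchy--Schwarz.

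First I would handle $\hxRF$. From \eqref{eq.def_hxRF}, $\hxRF[j]=(\xsmall^T\Vzero[j])\indicator{\xsmall^T\Vzero[j]>0}$, hence $(\hxRF[j])^2\le (\xsmall^T\Vzero[j])^2\le \|\xsmall\|_2^2\,\|\Vzero[j]\|_2^2 = 1$ for every $j=1,\dots,p_1$. Summing over $j$ gives $\|\hxRF\|_2^2\le p_1$, i.e.\ $\|\hxRF\|_2\le\sqrt{p_1}$. Next I would use this to bound $\hhx$. By \eqref{eq.def_h3}, $\hhx$ is partitioned into $p_2$ blocks of length $p_1$, where the $k$-th block equals $(\hxRF)^T$ if $(\hxRF)^T\Wzero[k]>0$ and equals $\bm{0}$ otherwise. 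Therefore $\|\hhx\|_2^2=\sum_{k=1}^{p_2}\indicator{(\hxRF)^T\Wzero[k]>0}\,\|\hxRF\|_2^2\le p_2\,\|\hxRF\|_2^2\le p_1 p_2$, so $\|\hhx\|_2\le\sqrt{p_1 p_2}$.

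There is no substantive obstacle in this lemma; it is a bookkeeping statement. The only care needed is in correctly reading the block structure of $\hhx$ off of \eqref{eq.def_h3} so that the sum over the $p_2$ blocks is set up properly, and in noting that the bound on $\hxRF$ feeds into the bound on $\hhx$.
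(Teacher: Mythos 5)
Your proof is correct and follows essentially the same route as the paper: bound each coordinate of $\hxRF$ via Cauchy--Schwarz and the unit-norm constraints, then use the block structure of $\hhx$ together with the first bound. The only cosmetic difference is that you spell out Cauchy--Schwarz and the indicator's non-expansiveness explicitly, whereas the paper absorbs these observations into a single inline inequality.
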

\begin{proof}
Notice that $\|\xsmall\|_2=1$ and $\|\Vzero[j]\|_2=1$ for all $j\in\{1,2\cdots,p_1\}$. By Eq.~\eqref{eq.def_hxRF}, we have
\begin{align*}
    \|\hxRF\|_2=\sqrt{\sum_{j=1}^{p_1}\left((\xsmall^T\Vzero[j]) \indicator{\xsmall^T\Vzero[j]>0}\right)^2}\leq \sqrt{p_1}.
\end{align*}
Thus, by Eq.~\eqref{eq.def_h3}, we have
\begin{align*}
    \|\hhx\|_2=\sqrt{\sum_{k=1}^{p_2}\|\hxRF\indicator{(\hxRF)^T\Wzero[k]>0}\|_2^2}\leq \sqrt{\sum_{k=1}^{p_2}\|\hxRF\|_2^2}\leq \sqrt{p_1p_2}.
\end{align*}
\end{proof}

The following lemma is from Lemma~12 of \cite{ju2021generalization}, but we repeat here for the convenience of the readers.
\begin{lemma}\label{le.matrix_norm}
If $\mathbf{C}=\mathbf{A}\mathbf{B}$, then $\|\mathbf{C}\|_2\leq \|\mathbf{A}\|_2\cdot\|\mathbf{B}\|_2$. Here $\mathbf{A}$, $\mathbf{B}$, and $\mathbf{C}$ could be scalars, vectors, or matrices.
\end{lemma}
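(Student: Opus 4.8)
The plan is to reduce the statement to the textbook submultiplicativity of the operator (spectral) $2$-norm for matrices, and to dispatch the scalar and vector cases as special instances of it. I would identify a scalar with a $1\times 1$ matrix, a column vector in $\mathds{R}^{m}$ with an $m\times 1$ matrix, and a row vector with a $1\times m$ matrix; under these identifications the operator $2$-norm of a column or row vector equals its Euclidean length, and the operator $2$-norm of a scalar equals its absolute value. Hence it suffices to prove $\|\mathbf{A}\mathbf{B}\|_2\le\|\mathbf{A}\|_2\|\mathbf{B}\|_2$ for matrices $\mathbf{A},\mathbf{B}$ whose dimensions are compatible for multiplication — compatibility being automatic here, since the hypothesis $\mathbf{C}=\mathbf{A}\mathbf{B}$ presupposes it.

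First I would recall the variational characterization $\|\mathbf{A}\|_2=\sup_{\bm{x}\neq\bm{0}}\|\mathbf{A}\bm{x}\|_2/\|\bm{x}\|_2$ with Euclidean norms on vectors, which immediately yields $\|\mathbf{A}\bm{y}\|_2\le\|\mathbf{A}\|_2\,\|\bm{y}\|_2$ for every $\bm{y}$ in the domain of $\mathbf{A}$. Then, for an arbitrary $\bm{x}\neq\bm{0}$ in the domain of $\mathbf{B}$, writing $\bm{y}=\mathbf{B}\bm{x}$ and applying this bound twice gives
\begin{align*}
\|\mathbf{C}\bm{x}\|_2=\|\mathbf{A}\mathbf{B}\bm{x}\|_2=\|\mathbf{A}\bm{y}\|_2\le\|\mathbf{A}\|_2\,\|\bm{y}\|_2=\|\mathbf{A}\|_2\,\|\mathbf{B}\bm{x}\|_2\le\|\mathbf{A}\|_2\,\|\mathbf{B}\|_2\,\|\bm{x}\|_2 .
\end{align*}
Dividing by $\|\bm{x}\|_2$ and taking the supremum over $\bm{x}\neq\bm{0}$ gives $\|\mathbf{C}\|_2=\|\mathbf{A}\mathbf{B}\|_2\le\|\mathbf{A}\|_2\|\mathbf{B}\|_2$, as claimed.

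I do not expect a genuine obstacle. The only step requiring a little care is the bookkeeping of norm conventions across the scalar, vector, and matrix cases: one should check that, under the identifications above, the spectral norm of a vector really is its Euclidean norm (for a column vector $\bm{v}$, $\|\bm{v}x\|_2=|x|\,\|\bm{v}\|_2$, so the operator norm is $\|\bm{v}\|_2$; for a row vector the defining supremum is attained by Cauchy--Schwarz), which is exactly what makes the single uniform inequality in the statement well posed. Everything else is the two-line argument above, which is why the authors can equivalently just cite it from \cite{ju2021generalization}.
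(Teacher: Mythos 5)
Your proposal is correct and is essentially the standard proof the paper alludes to when it says the lemma ``directly follows the definition of matrix norm'' and then notes, in the accompanying remark, that the spectral norm of a row or column vector coincides with its Euclidean norm. You have simply spelled out the two-step variational argument and the scalar/vector identifications that the paper leaves implicit, so no genuine difference in approach.
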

\begin{proof}
This lemma directly follows the definition of matrix norm.
\end{proof}
\begin{remark}
Note that the ($\ell_2$) matrix-norm (i.e., spectral norm) of a vector is exactly its $\ell_2$ vector-norm (i.e., Euclidean norm)\footnote{To see this, consider a (row or column) vector $\bm{a}$. The matrix norm of $\bm{a}$ is \begin{align*}
    &\max_{|x|=1}\|\bm{a}x\|_2\text{ (when $\bm{a}$ is a column vector)},\\
    \text{or }&\max_{\|\xsmall\|_2=1}\|\bm{a}\xsmall\|_2\text{ (when $\bm{a}$ is a row vector)}.
\end{align*}
In both cases, the value of the matrix-norm equals to $\sqrt{\sum a_i^2}$, which is exactly the $\ell_2$-norm (Euclidean norm) of $\bm{a}$.
}. Therefore, when applying Lemma~\ref{le.matrix_norm}, we do not need to worry about whether $\mathbf{A}$, $\mathbf{B}$, and $\mathbf{C}$ are matrices or vectors.
\end{remark}

\begin{lemma}\label{le.small_diff_eig}
For any $\mathbf{A},\mathbf{B}\in\mathds{R}^{k\times k}$, we must have
\begin{align*}
    \|\mathbf{A}-\mathbf{B}\|_2\leq  k\cdot\max_{i,j}|\mathbf{A}_{i,j}-\mathbf{B}_{i,j}|.
\end{align*}
Consequently, if both $\mathbf{A}$ and $\mathbf{B}$ are positive semi-definite, then
\begin{align*}
    \left|\min\eig (\mathbf{A})-\min\eig(\mathbf{B})\right|\leq k\cdot\max_{i,j}|\mathbf{A}_{i,j}-\mathbf{B}_{i,j}|.
\end{align*}
\end{lemma}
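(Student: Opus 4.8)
\textbf{Proof proposal for Lemma~\ref{le.small_diff_eig}.}

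The plan is to prove the two claims separately, deriving the eigenvalue bound as an easy consequence of the matrix-norm bound. For the first claim, I would set $\mathbf{D} \defeq \mathbf{A} - \mathbf{B}$ and bound $\|\mathbf{D}\|_2$ from above by a crude but elementary estimate. The key step is that for any matrix $\mathbf{D}\in\mathds{R}^{k\times k}$, the spectral norm is controlled by the Frobenius norm, $\|\mathbf{D}\|_2 \leq \|\mathbf{D}\|_F = \sqrt{\sum_{i,j}\mathbf{D}_{i,j}^2}$, and each of the $k^2$ entries satisfies $\mathbf{D}_{i,j}^2 \leq \left(\max_{i,j}|\mathbf{D}_{i,j}|\right)^2$; hence $\|\mathbf{D}\|_F \leq k\cdot\max_{i,j}|\mathbf{D}_{i,j}|$. (Alternatively, one can bound $\|\mathbf{D}\|_2 \leq \sqrt{\|\mathbf{D}\|_1\|\mathbf{D}\|_\infty}$ by the row-sum and column-sum norms, each of which is at most $k\cdot\max_{i,j}|\mathbf{D}_{i,j}|$, giving the same conclusion; I would pick whichever is cleanest to state.) This yields $\|\mathbf{A}-\mathbf{B}\|_2 \leq k\cdot\max_{i,j}|\mathbf{A}_{i,j}-\mathbf{B}_{i,j}|$.

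For the second claim, I would invoke Weyl's inequality for symmetric (here, positive semi-definite) matrices, which gives $|\lambda_{\min}(\mathbf{A}) - \lambda_{\min}(\mathbf{B})| \leq \|\mathbf{A}-\mathbf{B}\|_2$. Combined with the first claim, this immediately gives $\left|\min\eig(\mathbf{A}) - \min\eig(\mathbf{B})\right| \leq k\cdot\max_{i,j}|\mathbf{A}_{i,j}-\mathbf{B}_{i,j}|$. If I wanted to avoid citing Weyl's inequality, I would instead argue directly: for unit vectors $\bm{u}$, $\bm{u}^T\mathbf{A}\bm{u} = \bm{u}^T\mathbf{B}\bm{u} + \bm{u}^T(\mathbf{A}-\mathbf{B})\bm{u}$, and $|\bm{u}^T(\mathbf{A}-\mathbf{B})\bm{u}| \leq \|\mathbf{A}-\mathbf{B}\|_2$, so taking infima over unit $\bm{u}$ on both sides (using the variational characterization $\min\eig(\mathbf{A}) = \inf_{\|\bm{u}\|_2=1}\bm{u}^T\mathbf{A}\bm{u}$ for symmetric matrices) shows $\min\eig(\mathbf{A})$ and $\min\eig(\mathbf{B})$ differ by at most $\|\mathbf{A}-\mathbf{B}\|_2$.

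Honestly, there is no real obstacle here — this is a standard, self-contained linear-algebra fact, and the only thing to be a little careful about is making sure the matrix-norm-to-entrywise-max bound is stated with the correct factor of $k$ (not $\sqrt{k}$, which would be the sharper Frobenius bound but is not what the lemma claims, and $k$ is all that is needed downstream in the proof of Proposition~\ref{prop.min_eig}). I would write the proof in three or four lines, citing Lemma~\ref{le.matrix_norm} only if convenient and otherwise keeping it elementary.
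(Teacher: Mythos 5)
Your proof is correct, and both pieces go through a slightly different route than the paper's. For the matrix-norm bound, you pass through the Frobenius norm, $\|\mathbf{D}\|_2 \leq \|\mathbf{D}\|_F \leq k\max_{i,j}|\mathbf{D}_{i,j}|$, which is the cleanest way to get the factor $k$. The paper instead bounds $\|\mathbf{C}\bm{a}\|_2^2$ directly for a unit vector $\bm{a}$: each row-sum $\sum_j \mathbf{C}_{i,j}a_j$ is bounded in magnitude by $(\max_{i,j}|\mathbf{C}_{i,j}|)\sum_j |a_j|$, and then Cauchy--Schwarz gives $(\sum_j|a_j|)^2 \leq k$. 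The two derivations are of comparable length; yours has the small advantage of being a one-line cite of $\|\cdot\|_2 \leq \|\cdot\|_F$, while the paper's keeps everything self-contained without invoking the Frobenius norm. For the eigenvalue bound, you use the Rayleigh-quotient characterization $\min\eig(\mathbf{A}) = \inf_{\|\bm{u}\|=1}\bm{u}^T\mathbf{A}\bm{u}$ (equivalently, Weyl's inequality); the paper instead uses the PSD identity $\min\eig(\mathbf{A}) = \min_{\bm{a}\in\mathcal{S}^{k-1}}\|\mathbf{A}\bm{a}\|_2$ and the triangle inequality applied at an optimal $\bm{a}^*$ for $\mathbf{B}$. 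Both are standard and equally valid; your quadratic-form version is perhaps more widely recognizable, and it also makes clear that only symmetry (not full PSD-ness) is needed for the eigenvalue comparison, whereas the paper's $\|\mathbf{A}\bm{a}\|_2$ characterization genuinely requires PSD. Nothing is missing in either argument.
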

\begin{proof}
Let $\mathbf{C}\defeq\mathbf{A}-\mathbf{B}$.
For any $\bm{a}\in \mathcal{S}^{k-1}$, we have
\begin{align*}
    \|\mathbf{C}\bm{a}\|_2^2=&\sum_{i=1}^k \left(\sum_{j=1}^k\mathbf{C}_{i,j}a_{j}\right)^2\\
    \leq & k(\max_{i,j}\mathbf{C}_{i,j})^2\left(\sum_{j=1}^k a_j\right)^2\\
    \leq & k^2(\max_{i,j}\mathbf{C}_{i,j})^2\sum_{j=1}^k a_j^2\text{ (by Cauchy–Schwarz inequality)}\\
    =&k^2(\max_{i,j}\mathbf{C}_{i,j})^2\text{ (because $\|\bm{a}\|_2=1$)}.
\end{align*}
Because $\|\mathbf{C}\|_2=\max_{\bm{a}\in\mathcal{S}^{k-1}}\|\mathbf{C}\bm{a}\|_2$, we have $\|\mathbf{A}-\mathbf{B}\|_2\leq  k\cdot\max_{i,j}|\mathbf{A}_{i,j}-\mathbf{B}_{i,j}|$.

Let $\bm{a}^*\in\argmin_{\bm{a}\in\mathcal{S}^{k-1}}\|\mathbf{B}\bm{a}\|_2$. We have
\begin{align*}
    \min\eig(\mathbf{A})=&\min_{\bm{a}\in\mathcal{S}^{k-1}}\|\mathbf{A}\bm{a}\|_2\\
    \leq &\|\mathbf{A}\bm{a}^*\|_2\\
    = & \|(\mathbf{A}-\mathbf{B})\bm{a}^*+\mathbf{B}\bm{a}^*\|_2\\
    \leq& \|(\mathbf{A}-\mathbf{B})\bm{a}^*\|_2+ \|\mathbf{B}\bm{a}^*\|_2\\
    \leq & \|\mathbf{A}-\mathbf{B}\|_2+\min\eig(\mathbf{B})\text{ (by the definition of $\bm{a}^*$)}.
\end{align*}
Thus, we have $\min\eig(\mathbf{A})-\min\eig(\mathbf{B})\leq \|\mathbf{A}-\mathbf{B}\|_2\leq k\cdot \max_{i,j}|\mathbf{C}_{i,j}|$. Similarly, we have $\min\eig(\mathbf{B})-\min\eig(\mathbf{A})\leq  k\cdot \max_{i,j}|\mathbf{C}_{i,j}|$. The result of this lemma thus follows.
\end{proof}

\subsection{Estimates of certain tail probabilities}


\begin{lemma}[Chebyshev's inequality on the sum of \emph{i.i.d.} random variables/vectors]\label{le.chebyshev}
Let $X_1,X_2,\cdots,X_k$ be $\emph{i.i.d.}$ random variables and $|X_i|\leq U$ for all $i=1,2,\cdots,k$. Then, for any $m>0$,
\begin{align*}
    \prob\left\{\left|\left(\frac{1}{k}\sum_{i=1}^k X_i\right)-\expectation X_1\right|\geq \frac{mU}{\sqrt{k}}\right\}\leq \frac{1}{m^2}.
\end{align*}
This inequality also holds when $X_1,X_2,\cdots,X_k$ are \emph{i.i.d.} random vectors and $\|X_i\|_2\leq U$ for all $i=1,2,\cdots,k$.
\end{lemma}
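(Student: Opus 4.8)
The plan is to reduce the statement to the textbook Chebyshev inequality applied to the sample mean $\bar{X} \defeq \frac{1}{k}\sum_{i=1}^k X_i$, with the boundedness $|X_i| \leq U$ supplying the crude variance bound. First I would record that, by i.i.d.-ness, $\expectation \bar{X} = \expectation X_1$ and $\vari(\bar{X}) = \frac{1}{k}\vari(X_1)$, the latter because the cross-covariance terms vanish by independence. Next I would bound $\vari(X_1) = \expectation[(X_1 - \expectation X_1)^2] \leq \expectation[X_1^2] \leq U^2$, using only $|X_1| \leq U$. Combining, $\vari(\bar{X}) \leq U^2/k$.

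Then I would invoke the one-dimensional Chebyshev inequality $\prob\{|Y - \expectation Y| \geq t\} \leq \vari(Y)/t^2$ with $Y = \bar{X}$ and $t = mU/\sqrt{k}$, which gives
\begin{align*}
\prob\left\{\left|\bar{X} - \expectation X_1\right| \geq \frac{mU}{\sqrt{k}}\right\} \leq \frac{\vari(\bar{X})}{m^2 U^2/k} \leq \frac{U^2/k}{m^2 U^2/k} = \frac{1}{m^2},
\end{align*}
which is exactly the claimed bound.

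For the vector case, I would work with the scalar quantity $\|\bar{X} - \expectation X_1\|_2^2$ instead. Expanding $\bar{X} - \expectation X_1 = \frac{1}{k}\sum_{i=1}^k (X_i - \expectation X_1)$ and taking expectations, all cross terms $\expectation\langle X_i - \expectation X_1,\, X_j - \expectation X_1\rangle$ with $i \neq j$ vanish by independence, leaving $\expectation\|\bar{X} - \expectation X_1\|_2^2 = \frac{1}{k}\expectation\|X_1 - \expectation X_1\|_2^2 \leq \frac{1}{k}\expectation\|X_1\|_2^2 \leq U^2/k$. Applying Markov's inequality to the nonnegative random variable $\|\bar{X} - \expectation X_1\|_2^2$ at level $m^2 U^2/k$ then yields $\prob\{\|\bar{X} - \expectation X_1\|_2 \geq mU/\sqrt{k}\} \leq 1/m^2$, completing the argument.

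There is no real obstacle here; the only point requiring a line of care is the vanishing of the cross terms in the vector computation, which is immediate from independence and the fact that $\expectation[X_i - \expectation X_1] = 0$. The result is entirely standard and I would keep the write-up to a few lines.
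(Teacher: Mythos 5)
Your proof is correct and follows essentially the same route as the paper: bound $\vari(X_1)\le U^2$ from boundedness, use i.i.d.-ness to get $\vari(\bar X)\le U^2/k$, and apply Chebyshev. The only cosmetic difference is that for the vector case you derive the vector Chebyshev bound inline via Markov on $\|\bar X-\expectation X_1\|_2^2$, whereas the paper simply cites the generalized Chebyshev inequality for random vectors (its Lemma~\ref{le.Chebyshev_vector}); these are the same argument.
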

\begin{proof}
Because $|X_1|\leq U$, we have
\begin{align*}
    \vari[X_1]=\expectation[(X_1-\expectation[X_1])^2]= \expectation[X_1^2]-(\expectation[X_1])^2\leq\expectation[X_1^2] \leq U^2.
\end{align*}
Because all $X_i$'s are \emph{i.i.d.}, we have
\begin{align*}
    \vari\left[\frac{1}{k}\sum_{i=1}^k X_i\right]\leq \frac{U^2}{k},\quad \expectation\left[\frac{1}{k}\sum_{i=1}^k X_i\right]=\expectation[X_1].
\end{align*}
The result of this lemma thus follows by applying Chebyshev's inequality on $\frac{1}{k}\sum_{i=1}^k X_i$. For the situation that $X_1,X_2,\cdots,X_k$ are vectors, the proof is the same by using the generalized Chebyshev's inequality for random vectors which we state in Lemma~\ref{le.Chebyshev_vector} as follows.
\end{proof}

The following is the Chebyshev's inequality for random vectors that can be found in many textbooks of probability theory (see, e.g.,  pp.~446-451 of \cite{laha1979probability}).
\begin{lemma}[Chebyshev's inequality for random vectors]\label{le.Chebyshev_vector}
For a random vector $\bm{w}\in\mathds{R}^a$ with probability distribution $\Lambda(\cdot)$, for any $\delta>0$, we must have
\begin{align*}
    \prob\left\{\|\bm{w}-\expectation(\bm{w})\|_2\geq \delta\right\}\leq \frac{\vari (\bm{w})}{\delta^2},
\end{align*}
where
\begin{align*}
    \vari(\bm{w})\defeq\int_{\bm{v}\in\mathds{R}^a} \|\bm{v}-\expectation(\bm{w})\|_2^2 \ d\Lambda(\bm{v}).
\end{align*}
\end{lemma}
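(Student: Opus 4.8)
The plan is to reduce the statement to the scalar Markov inequality applied to the non-negative random variable $Y\defeq\|\bm{w}-\expectation(\bm{w})\|_2^2$. First I would record that, by the very definition of $\vari(\bm{w})$ given in the lemma, $\vari(\bm{w})=\expectation[Y]$; in particular the bound is vacuous when $\vari(\bm{w})=\infty$, so we may assume $\expectation[Y]<\infty$ (this also presupposes, as is implicit in the statement, that $\expectation(\bm{w})$ is finite so that $\bm{w}-\expectation(\bm{w})$ is well defined).

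Next, I would observe the event identity: for any $\delta>0$, since $\|\bm{w}-\expectation(\bm{w})\|_2\geq 0$ and $t\mapsto t^2$ is strictly increasing on $[0,\infty)$, the event $\{\|\bm{w}-\expectation(\bm{w})\|_2\geq\delta\}$ coincides with $\{Y\geq\delta^2\}$. Then I would apply Markov's inequality to the non-negative random variable $Y$ with threshold $\delta^2>0$, namely $\prob\{Y\geq\delta^2\}\leq \expectation[Y]/\delta^2$. Substituting $\expectation[Y]=\vari(\bm{w})$ and using the event identity yields exactly $\prob\{\|\bm{w}-\expectation(\bm{w})\|_2\geq\delta\}\leq \vari(\bm{w})/\delta^2$.

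For self-containedness I would include the one-line proof of Markov's inequality itself: pointwise one has $\delta^2\cdot\indicator{Y\geq\delta^2}\leq Y$ (because $Y\geq 0$), and taking expectations gives $\delta^2\,\prob\{Y\geq\delta^2\}\leq\expectation[Y]$. There is no genuine obstacle in this proof — it is a standard textbook fact — so the only ``hard part'' is purely expository: deciding how much of Markov's inequality to reprove versus simply cite, and being careful to flag the finiteness of $\expectation(\bm{w})$ and the trivial case $\vari(\bm{w})=\infty$.
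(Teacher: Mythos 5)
Your proof is correct, and it is the standard argument: apply Markov's inequality to the nonnegative scalar $Y=\|\bm{w}-\expectation(\bm{w})\|_2^2$, using the monotonicity of $t\mapsto t^2$ on $[0,\infty)$ to rewrite the event, and note that $\vari(\bm{w})=\expectation[Y]$ by definition. The paper itself supplies no proof for this lemma --- it simply cites a probability textbook (Laha and Rohatgi, pp.~446--451) --- and the textbook argument is exactly the Markov reduction you give, so your approach matches the intended one; your remarks about the trivial case $\vari(\bm{w})=\infty$ and the implicit finiteness of $\expectation(\bm{w})$ are harmless and correct.
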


\subsection{Estimation about double factorial}
Let $m$ be a positive integer. A double factorial can be defined by
\begin{align}\label{def.double_factorial}
    (2m)!!\defeq \prod_{i=1}^m (2i),\quad (2m-1)!!\defeq \prod_{i=1}^m (2i-1).
\end{align}
They are useful in our study of learnable functions.
The following lemma is proven by \cite{chen2005best}.
\begin{lemma}[Improved Wallis' Inequality]\label{le.wallis_inequality}
For all natural numbers $k$, let $k!!$ denote a double factorial. Then
\begin{align*}
    \frac{1}{\sqrt{\pi\left(k+\frac{4}{\pi}-1\right)}}\leq  \frac{(2k-1)!!}{(2k)!!}<\frac{
    1}{\sqrt{\pi\left(k+\frac{1}{4}\right)}}.
\end{align*}
Further, the constants $\frac{4}{\pi}-1$ and $\frac{1}{4}$ are the best possible.
\end{lemma}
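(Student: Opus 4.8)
The plan is to reduce everything to the behaviour of the shifted sequence $y_k^{(a)} \defeq (k+a)\,W_k^{2}$, where $W_k \defeq \frac{(2k-1)!!}{(2k)!!}$ is the Wallis ratio, so that the two desired bounds read $y_k^{(4/\pi-1)} \ge \tfrac1\pi$ and $y_k^{(1/4)} < \tfrac1\pi$. First I would record two elementary facts. From $(2k+2)!! = (2k+2)(2k)!!$ and $(2k+1)!! = (2k+1)(2k-1)!!$ one gets the recursion $W_{k+1} = \frac{2k+1}{2k+2} W_k$, hence $W_{k+1}^2 = \frac{(2k+1)^2}{(2k+2)^2}W_k^2$; substituting this and simplifying yields
\[
 y_{k+1}^{(a)} - y_k^{(a)} \;=\; \frac{W_k^2}{(2k+2)^2}\Big[(1-4a)\,k + (1-3a)\Big].
\]
Second, Wallis' product $\prod_{j\ge1}\frac{(2j)^2}{(2j-1)(2j+1)} = \frac{\pi}{2}$ has $k$-th partial product equal to $\frac{1}{(2k+1)W_k^2}$, so $(2k+1)W_k^2 \to \frac2\pi$ and therefore $y_k^{(a)} \to \frac1\pi$ for every fixed $a$ (using $W_k^2\to0$).

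For the upper bound I would take $a=\tfrac14$: the bracket collapses to the constant $\tfrac14>0$, so $y_k^{(1/4)}$ is strictly increasing, and as it converges to $\tfrac1\pi$ we get $y_k^{(1/4)}<\tfrac1\pi$, i.e. $W_k^2<\frac{1}{\pi(k+1/4)}$. For the lower bound I would take $a=\tfrac4\pi-1$: here $1-4a = 5-\tfrac{16}{\pi}<0$ while $1-3a = 4-\tfrac{12}{\pi}>0$, so the bracket $(1-4a)k+(1-3a)$ is positive at $k=1$ and negative for all $k\ge2$; thus $y_k^{(4/\pi-1)}$ rises from $k=1$ to $k=2$ and then strictly decreases to its limit $\tfrac1\pi$, whence $y_k^{(4/\pi-1)}\ge\min\{\,y_1^{(4/\pi-1)},\ \tfrac1\pi\,\}$. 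A direct computation gives $y_1^{(4/\pi-1)} = \big(1+\tfrac4\pi-1\big)\cdot\tfrac14 = \tfrac1\pi$, so in fact $y_k^{(4/\pi-1)}\ge\tfrac1\pi$ for all $k$, with equality exactly at $k=1$; this is $W_k^2\ge\frac{1}{\pi(k+4/\pi-1)}$. Taking square roots of the two displays gives the stated chain of inequalities (strict on the right, with equality at $k=1$ on the left).

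For the ``best possible'' part: the lower constant $\tfrac4\pi-1$ cannot be decreased since any $c<\tfrac4\pi-1$ makes $\frac{1}{\sqrt{\pi(1+c)}}>\tfrac12=W_1$, so the lower bound already fails at $k=1$. The upper constant $\tfrac14$ cannot be increased: for $c>\tfrac14$ write $(k+c)W_k^2 = y_k^{(1/4)} + (c-\tfrac14)W_k^2$, and from the increment formula $\tfrac1\pi - y_k^{(1/4)} = \sum_{j\ge k}\frac{W_j^2}{4(2j+2)^2} = O(k^{-2})$ (using the just-proved $W_j^2=O(1/j)$), whereas $(c-\tfrac14)W_k^2 = \Theta(1/k)$ by the lower bound, so $(k+c)W_k^2>\tfrac1\pi$ for all large $k$, violating the supposed uniform upper bound. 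I expect the genuinely subtle point to be the lower bound: the clean ``monotone-to-its-limit'' argument that disposes of the upper bound fails there because $y_k^{(4/\pi-1)}$ is not monotone; the value $a=\tfrac4\pi-1$ is precisely the shift making $(k+a)W_k^2$ equal to its own eventual limit $\tfrac1\pi$ at $k=1$, and one must notice the sequence is merely \emph{eventually} decreasing, so its infimum is attained jointly at $k=1$ and in the limit. (Alternatively, the lemma can simply be cited verbatim from \cite{chen2005best}.)
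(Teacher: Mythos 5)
Your proof is correct, and it supplies a genuine argument where the paper itself does not: for this lemma the paper merely cites \cite{chen2005best} without reproducing any proof, a fact you even anticipate in your final parenthetical. Your telescoping reduction via $y_k^{(a)} = (k+a)W_k^2$ is the right device: the algebra giving $y_{k+1}^{(a)} - y_k^{(a)} = \frac{W_k^2}{(2k+2)^2}\bigl[(1-4a)k + (1-3a)\bigr]$ checks out, the Wallis-product identity $(2k+1)W_k^2 \to \tfrac2\pi$ correctly gives $y_k^{(a)} \to \tfrac1\pi$, and the case analysis is sound. The choice $a=\tfrac14$ kills the $k$-term and leaves a positive constant, making $y_k^{(1/4)}$ strictly increasing to $\tfrac1\pi$ and hence strictly below it. The choice $a=\tfrac4\pi-1$ gives $1-4a = 5-\tfrac{16}{\pi} < 0$ and $1-3a = 4-\tfrac{12}{\pi}>0$, so the bracket is positive only at $k=1$ (since $14-\tfrac{44}{\pi}<0$), and the ``up once, then down to the limit'' shape with $y_1^{(4/\pi-1)} = \tfrac4\pi\cdot\tfrac14 = \tfrac1\pi$ forces $y_k^{(4/\pi-1)} \ge \tfrac1\pi$ with equality exactly at $k=1$. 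The optimality arguments are also both correct: decreasing the lower constant breaks $k=1$ directly since $W_1=\tfrac12$ is the boundary case, and increasing the upper constant fails asymptotically because the tail defect $\tfrac1\pi - y_k^{(1/4)} = \sum_{j\ge k}\frac{W_j^2}{4(2j+2)^2} = O(k^{-2})$ is dominated by the extra term $(c-\tfrac14)W_k^2 = \Theta(k^{-1})$. The only thing I would add is an explicit note that the $k\ge 2$ increments are \emph{strictly} negative (the bracket is a strictly decreasing affine function of $k$ already below zero at $k=2$), which is what makes the right-hand inequality strict for all $k$, matching the lemma's strict ``$<$''.
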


\subsection{Taylor expansion of kernels}
The following Taylor expansions are related to the NTK kernel functions, which will also be used in our characterization of the learnable functions.
\begin{lemma}\label{le.taylor_of_kernel}
For any $\theta\in[0,\pi]$,
\begin{align*}
    &\cos\theta\frac{(\pi-\theta)}{2\pi}=\frac{\cos\theta}{4}+\frac{1}{2\pi}\sum_{k=0}^\infty\frac{(2k)!}{(k!)^2}\frac{4}{2k+1}\left(\frac{\cos\theta}{2}\right)^{2k+2},\\
    &\frac{\sin\theta+(\pi-\theta)\cos\theta}{\pi}=\frac{1}{\pi}\left(1+\frac{\pi}{2}\cos\theta+\sum_{k=0}^\infty\frac{2(2k)!}{(k+1)(2k+1)(k!)^2}\left(\frac{\cos\theta}{2}\right)^{2k+2}\right).
\end{align*}
Consequently, recalling Eq.~\eqref{eq.def_KTwo} and Eq.~\eqref{eq.def_KRF}, by  letting $a=\cos \theta$, we have
\begin{align*}
    \KTwo(a)=&a\frac{\pi-\arccos a}{2\pi}=\frac{a}{4}+\frac{1}{2\pi}\sum_{k=0}^\infty \frac{(2k)!}{(k!)^2}\frac{4}{2k+1}\left(\frac{a}{2}\right)^{2k+2},\\
    2d\cdot \KRF(a)=&\frac{\sqrt{1-a^2}+a(\pi-\arccos a)}{\pi}\\
    =&\frac{1}{\pi}\left(1+\frac{\pi}{2}a+\sum_{k=0}^\infty\frac{2(2k)!}{(k+1)(2k+1)(k!)^2}\left(\frac{a}{2}\right)^{2k+2}\right).
\end{align*}
\end{lemma}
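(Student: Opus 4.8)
The plan is to reduce both identities to the classical Maclaurin series of $\arcsin$ and of $(1-x)^{1/2}$. Set $a=\cos\theta$ for $\theta\in[0,\pi]$; then $\theta=\arccos a$ and $\sqrt{1-a^2}=\sin\theta\ge 0$, so the left-hand quantities are exactly $\KTwo(a)=a\,\frac{\pi-\arccos a}{2\pi}$ and $2d\cdot\KRF(a)=\frac{\sqrt{1-a^2}+a(\pi-\arccos a)}{\pi}$ of Eq.~\eqref{eq.def_KTwo} and Eq.~\eqref{eq.def_KRF}. The one structural observation that drives everything is $\pi-\arccos a=\tfrac{\pi}{2}+\arcsin a$, which replaces the seemingly nonanalytic factor $\pi-\theta$ by a convergent power series. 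I will use $\arcsin a=\sum_{k=0}^\infty \frac{(2k)!}{4^k(k!)^2(2k+1)}\,a^{2k+1}$, valid for $|a|\le 1$, together with the binomial series $\sqrt{1-a^2}=1-\sum_{k=1}^\infty \frac{1}{2k-1}\binom{2k}{k}\frac{a^{2k}}{4^k}$.

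For the first identity I would substitute the $\arcsin$ series into $a\cdot\frac{\pi-\arccos a}{2\pi}=\frac{a}{2\pi}\bigl(\tfrac{\pi}{2}+\arcsin a\bigr)=\frac{a}{4}+\frac{1}{2\pi}\sum_{k=0}^\infty\frac{(2k)!}{4^k(k!)^2(2k+1)}\,a^{2k+2}$, and then rewrite $\frac{a^{2k+2}}{4^k}=4\bigl(\tfrac{a}{2}\bigr)^{2k+2}$ to land on exactly $\frac{1}{2\pi}\sum_{k}\frac{(2k)!}{(k!)^2}\frac{4}{2k+1}\bigl(\tfrac{a}{2}\bigr)^{2k+2}$. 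For the second identity, the $a(\pi-\arccos a)$ piece expands in the same way as $\frac{\pi a}{2}+\sum_{k=0}^\infty\frac{4}{2k+1}\binom{2k}{k}\bigl(\tfrac{a}{2}\bigr)^{2k+2}$; for $\sqrt{1-a^2}$ I would reindex $k\mapsto k+1$ in the binomial series and simplify with $\binom{2k+2}{k+1}=\frac{2(2k+1)}{k+1}\binom{2k}{k}$ to obtain $\sqrt{1-a^2}=1-\sum_{k=0}^\infty\frac{2}{k+1}\binom{2k}{k}\bigl(\tfrac{a}{2}\bigr)^{2k+2}$. Adding the two expansions, the constant and linear contributions combine to $1+\tfrac{\pi}{2}a$, and the coefficient of $\binom{2k}{k}\bigl(\tfrac{a}{2}\bigr)^{2k+2}$ is $\frac{4}{2k+1}-\frac{2}{k+1}=\frac{2}{(2k+1)(k+1)}$, which is precisely the stated $\frac{2(2k)!}{(k+1)(2k+1)(k!)^2}$; dividing by $\pi$ yields the claim.

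The only step needing a little care is the validity of these expansions at the endpoints $a=\pm1$ (that is, $\theta\in\{0,\pi\}$). Here I would note that $\frac{(2k)!}{4^k(k!)^2}=\frac{(2k-1)!!}{(2k)!!}=\Theta(k^{-1/2})$ by Lemma~\ref{le.wallis_inequality}, so every series above has coefficients of order $O(k^{-3/2})$ and hence converges absolutely on the \emph{closed} interval $[-1,1]$; since $a\mapsto\arcsin a$ and $a\mapsto\sqrt{1-a^2}$ are continuous on $[-1,1]$, Abel's theorem upgrades the open-interval identities to the whole closed interval, covering all $\theta\in[0,\pi]$. Every manipulation above (termwise substitution, reindexing, addition) is legitimate for $|a|<1$ by absolute convergence, so the rest of the argument is the coefficient bookkeeping just indicated. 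I do not expect a genuine obstacle; the ``hard part'' is merely matching the two normalizations — $4^{-k}$ versus $(1/2)^{2k+2}$, and $\binom{2k}{k}$ versus $\frac{(2k)!}{(k!)^2}$ — without an arithmetic slip.
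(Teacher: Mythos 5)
Your proof is correct and follows essentially the same route as the paper's: both expand $\pi-\theta$ via the Maclaurin series for $\arccos$ (equivalently $\frac{\pi}{2}+\arcsin$) and $\sin\theta=\sqrt{1-\cos^2\theta}$ via the binomial series, then match coefficients using $\frac{4}{2k+1}-\frac{2}{k+1}=\frac{2}{(k+1)(2k+1)}$. Your extra remark justifying validity at the endpoints $a=\pm1$ via the $\Theta(k^{-3/2})$ coefficient decay and Abel's theorem is a small refinement the paper leaves implicit.
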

\begin{proof}
Using Taylor expansion on $\arccos{x}$, we have
\begin{align*}
    \arccos(x)=\frac{\pi}{2}-\sum_{k=0}^{\infty}\frac{(2k)!}{2^{2k}(k!)^2}\frac{x^{2k+1}}{2k+1}.
\end{align*}
We then have
\begin{align*}
    \theta = \arccos (\cos\theta)=\frac{\pi}{2}-\sum_{k=0}^\infty\frac{(2k)!}{(k!)^2}\frac{2}{2k+1}\left(\frac{\cos\theta}{2}\right)^{2k+1}.
\end{align*}
Thus, we have
\begin{align}
    \cos\theta\frac{(\pi-\theta)}{2\pi}=&\cos\theta\cdot\left(\frac{1}{2}-\frac{1}{2\pi}\left(\frac{\pi}{2}-\sum_{k=0}^\infty\frac{(2k)!}{(k!)^2}\frac{2}{2k+1}\left(\frac{\cos\theta}{2}\right)^{2k+1}\right)\right)\nonumber\\
    =&\frac{\cos\theta}{4}+\frac{1}{2\pi}\sum_{k=0}^\infty\frac{(2k)!}{(k!)^2}\frac{4}{2k+1}\left(\frac{\cos\theta}{2}\right)^{2k+2}.\label{eq.temp_012101}
\end{align}
Using Taylor expansion on $\sqrt{1+x}$, we have
\begin{align*}
    \sqrt{1+x}=1-\sum_{k=0}^{\infty}\frac{2}{k+1}\binom{2k}{k}\left(-\frac{x}{4}\right)^{k+1},
\end{align*}
Replacing $x$ by $-\cos^2\theta$, we thus have
\begin{align*}
    \sin\theta = \sqrt{1-\cos^2\theta}=1- \sum_{k=0}^\infty \frac{2}{k+1}\binom{2k}{k}\left(\frac{\cos\theta}{2}\right)^{2k+2}.
\end{align*}
Therefore, using Eq.~\eqref{eq.temp_012101} again, we have
\begin{align*}
    \frac{\sin\theta + (\pi-\theta)\cos\theta}{\pi}&=\frac{1}{\pi}\left(1+\frac{\pi}{2}\cos\theta+\sum_{k=0}^\infty\left(\frac{2}{2k+1}-\frac{1}{k+1}\right)\frac{2(2k)!}{(k!)^2}\left(\frac{\cos\theta}{2}\right)^{2k+2}\right)\\
    &=\frac{1}{\pi}\left(1+\frac{\pi}{2}\cos\theta+\sum_{k=0}^\infty\frac{2(2k)!}{(k+1)(2k+1)(k!)^2}\left(\frac{\cos\theta}{2}\right)^{2k+2}\right).
\end{align*}
The result of this lemma thus follows.
\end{proof}

\subsection{Calculation of certain integrals}\label{app.integral}

\begin{lemma}\label{le.sin_k_power}
For any integer $k\geq 2$, we have
\begin{align*}
    \int_0^{\pi}\sin^k\varphi\ d\varphi = \frac{k-1}{k}\int_0^{\pi}\sin^{k-2}\varphi\  d\varphi.
\end{align*}
\end{lemma}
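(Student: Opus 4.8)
The plan is to prove this standard trigonometric reduction formula by integration by parts, exploiting the vanishing of the boundary term at $0$ and $\pi$. First I would split off one factor of $\sin\varphi$, writing $\sin^k\varphi = \sin^{k-1}\varphi \cdot \sin\varphi$, and then apply integration by parts with $u = \sin^{k-1}\varphi$ and $dv = \sin\varphi\, d\varphi$, so that $du = (k-1)\sin^{k-2}\varphi\cos\varphi\, d\varphi$ and $v = -\cos\varphi$.

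The key observation is that the boundary term $\big[-\sin^{k-1}\varphi\cos\varphi\big]_0^{\pi}$ equals zero, since $\sin 0 = \sin\pi = 0$ and $k-1 \geq 1$. Hence the integration by parts yields
\begin{align*}
    \int_0^{\pi}\sin^k\varphi\, d\varphi = (k-1)\int_0^{\pi}\sin^{k-2}\varphi\cos^2\varphi\, d\varphi.
\end{align*}
Next I would substitute $\cos^2\varphi = 1 - \sin^2\varphi$ to rewrite the right-hand side as $(k-1)\int_0^{\pi}\sin^{k-2}\varphi\, d\varphi - (k-1)\int_0^{\pi}\sin^{k}\varphi\, d\varphi$. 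Moving the last term to the left-hand side gives $k\int_0^{\pi}\sin^k\varphi\, d\varphi = (k-1)\int_0^{\pi}\sin^{k-2}\varphi\, d\varphi$, and dividing by $k$ completes the proof.

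There is no real obstacle here: the argument is the classical Wallis-type reduction, and the only point requiring a moment's care is verifying that the boundary term vanishes, which follows immediately from the endpoints $0$ and $\pi$ being zeros of $\sin$ together with the hypothesis $k \geq 2$ (so that $\sin^{k-1}\varphi$ actually contains at least one factor of $\sin\varphi$). The whole computation is short and self-contained, and requires none of the earlier lemmas in the paper.
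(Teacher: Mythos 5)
Your proof is correct and follows essentially the same route as the paper: integration by parts with $u = \sin^{k-1}\varphi$, $dv = \sin\varphi\,d\varphi$, the vanishing boundary term, the identity $\cos^2\varphi = 1-\sin^2\varphi$, and rearranging to solve for the integral.
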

\begin{proof}
We have
\begin{align*}
    \int_0^{\pi}\sin^k\varphi\ d\varphi =& \int_0^{\pi} \sin\varphi\cdot  \sin^{k-1}\varphi\ d\varphi\\
    =&-\cos\varphi\cdot \sin^{k-1}\varphi \big|_0^{\pi} + (k-1)\int_0^{\pi}\cos^2\varphi\cdot \sin^{k-2}\varphi\ d\varphi\\
    &\quad \text{ (integration by parts)}\\
    =&(k-1)\int_0^{\pi} (1-\sin^2\varphi)\sin^{k-2}\varphi\ d\varphi\\
    =&(k-1)\int_0^{\pi}\sin^{k-2}\varphi\ d\varphi-(k-1)\int_0^{\pi}\sin^k\varphi\ d\varphi.
\end{align*}
Moving the second term of the right hand side to the left hand side, we have
\begin{align*}
    k\int_0^{\pi}\sin^k\varphi\ d\varphi=(k-1)\int_0^{\pi}\sin^{k-2}\varphi\ d\varphi.
\end{align*}
The result of this lemma thus follows.
\end{proof}

\begin{lemma}\label{le.cos_cos_integral}
For any $\theta\in[0,\pi]$,
\begin{align*}
    \int_{-\frac{\pi}{2}+\theta}^{\frac{\pi}{2}}\cos(\alpha)\cos(\alpha-\theta)\ d\alpha=\frac{\sin\theta}{2}+\frac{(\pi-\theta)\cos\theta}{2}.
\end{align*}
\end{lemma}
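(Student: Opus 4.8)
The plan is to reduce the integrand to a sum of elementary antiderivatives via the product-to-sum identity, then evaluate the resulting definite integrals directly. Concretely, I would first write
\begin{align*}
    \cos(\alpha)\cos(\alpha-\theta)=\tfrac{1}{2}\cos\theta+\tfrac{1}{2}\cos(2\alpha-\theta),
\end{align*}
which follows from $\cos A\cos B=\tfrac12(\cos(A-B)+\cos(A+B))$ with $A=\alpha$, $B=\alpha-\theta$. This splits the target integral into a constant-in-$\alpha$ piece and a piece whose antiderivative is $\tfrac12\sin(2\alpha-\theta)$.

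Next I would integrate each piece over $[-\tfrac{\pi}{2}+\theta,\ \tfrac{\pi}{2}]$, whose length is $\pi-\theta$. The first piece contributes $\tfrac12\cos\theta\cdot(\pi-\theta)$. For the second piece, the antiderivative $\tfrac14\sin(2\alpha-\theta)$ evaluated at the endpoints gives $\tfrac14\big(\sin(\pi-\theta)-\sin(\theta-\pi)\big)$; here I would use $\sin(\pi-\theta)=\sin\theta$ and $\sin(\theta-\pi)=-\sin\theta$ to simplify this to $\tfrac14(2\sin\theta)=\tfrac{\sin\theta}{2}$. Adding the two contributions yields exactly $\tfrac{\sin\theta}{2}+\tfrac{(\pi-\theta)\cos\theta}{2}$, as claimed.

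There is no real obstacle here — the only thing to be careful about is bookkeeping of the integration limits (in particular that the upper limit is fixed at $\tfrac{\pi}{2}$ while the lower limit depends on $\theta$) and the sign evaluations of $\sin$ at the shifted endpoints. One could alternatively substitute $\beta=\alpha-\tfrac{\theta}{2}$ to symmetrize the interval to $[-\tfrac{\pi-\theta}{2},\tfrac{\pi-\theta}{2}]$ and make the $\cos(2\alpha-\theta)=\cos(2\beta)$ term manifestly even, but the product-to-sum route above is the most direct.
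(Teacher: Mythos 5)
Your proof is correct and is essentially the same computation as the paper's: the paper verifies the antiderivative $\tfrac{1}{4}\bigl(\sin(2\alpha-\theta)+2\alpha\cos\theta\bigr)$ by differentiating it, while you obtain exactly that antiderivative by applying product-to-sum to the integrand first; the endpoint evaluations then coincide. Both are fine; yours is perhaps a touch more self-motivated since it derives rather than guesses the antiderivative.
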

\begin{proof}
Notice that
\begin{align*}
    \frac{\partial (\sin(2\alpha-\theta)+2\alpha\cos\theta)}{\partial \alpha}=&2\cos(2\alpha-\theta)+2\cos\theta\\
    =&2\cos(\alpha+(\alpha-\theta))+2\cos(\alpha-(\alpha-\theta))\\
    =& 4\cos(\alpha)\cos(\alpha-\theta).
\end{align*}
Thus, we have
\begin{align*}
    \int \cos(\alpha)\cos(\alpha - \theta)d\alpha = \frac{\sin(2\alpha-\theta)+2\alpha\cos(\theta)}{4}+\text{constant}.
\end{align*}
Notice that
\begin{align*}
    &\sin(2\alpha-\theta)\bigg|_{\alpha=-\frac{\pi}{2}+\theta}^{\frac{\pi}{2}}=\sin(\pi-\theta)-\sin (\theta-\pi)=2\sin\theta,\\
    &2\alpha \cos(\theta)\bigg|_{\alpha=-\frac{\pi}{2}+\theta}^{\frac{\pi}{2}}=2(\pi-\theta)\cos\theta.
\end{align*}
The result of this lemma thus follows.
\end{proof}

\begin{lemma}\label{le.NTK2_kernel}
Recall that $\wdensity(\cdot)$ denotes the probability density function of $\Wzero[k]$ and is $\mathsf{unif}(\spp)$ by Assumption~\ref{as.uniform}.
For any $\bm{a},\bm{b}\in \mathds{R}^{p_1}$, we have
\begin{align*}
    \int_{\spp}\bm{a}^T\bm{b}\cdot \indicator{\bm{a}^T\bm{w}>0,\ \bm{b}^T\bm{w}>0} d \wdensity(\bm{w})=\bm{a}^T\bm{b}\frac{\pi-\arccos\left(\frac{\bm{a}^T\bm{b}}{\|\bm{a}\|_2\|\bm{b}\|_2}\right)}{2\pi}.
\end{align*}
(Although the right hand side is not defined when $\bm{a}=\bm{0}$ or $\bm{b}=\bm{0}$, we can artificially re-define the value of the right hand side as $0$ when $\bm{a}=\bm{0}$ or $\bm{b}=\bm{0}$, so the equation still holds.)
\end{lemma}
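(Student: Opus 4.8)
The plan is to pull the constant $\bm{a}^T\bm{b}$ out of the integral and recognize what remains as the probability, under $\bm{w}\sim\wdensity=\mathsf{unif}(\spp)$, that $\bm{w}$ lies in the positive open half-space of both $\bm{a}$ and $\bm{b}$ simultaneously; this is a classical quantity depending only on the angle between $\bm{a}$ and $\bm{b}$. Concretely, if $\bm{a}=\bm{0}$ or $\bm{b}=\bm{0}$ the integrand vanishes identically and the right-hand side is $0$ by the stated convention, so assume $\bm{a},\bm{b}\neq\bm{0}$ and set $\theta\defeq\arccos\!\big(\tfrac{\bm{a}^T\bm{b}}{\|\bm{a}\|_2\|\bm{b}\|_2}\big)\in[0,\pi]$. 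Since $\bm{a}^T\bm{b}=\|\bm{a}\|_2\|\bm{b}\|_2\cos\theta$ does not depend on $\bm{w}$, it factors out and it suffices to prove
\[
\int_{\spp}\indicator{\bm{a}^T\bm{w}>0,\ \bm{b}^T\bm{w}>0}\,d\wdensity(\bm{w})=\frac{\pi-\theta}{2\pi}.
\]
The case $p_1=1$ is immediate (then $\theta\in\{0,\pi\}$, and the two conditions either both force $\bm{w}$ to a single fixed sign or are jointly impossible), so assume $p_1\geq 2$.

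Next I would use the orthogonal invariance of $\mathsf{unif}(\spp)$: after a rotation that leaves the integral unchanged, we may take $\bm{a}=\|\bm{a}\|_2\bm{e}_1$ and $\bm{b}=\|\bm{b}\|_2(\cos\theta\,\bm{e}_1+\sin\theta\,\bm{e}_2)$. Then $\bm{a}^T\bm{w}>0\iff w_1>0$ and $\bm{b}^T\bm{w}>0\iff \cos\theta\,w_1+\sin\theta\,w_2>0$, so the integrand depends only on $(w_1,w_2)$. By the rotational symmetry of $\mathsf{unif}(\spp)$ in its first two coordinates, the normalized projection $(w_1,w_2)/\sqrt{w_1^2+w_2^2}$ is uniform on $\mathcal{S}^1$; writing it as $(\cos\phi,\sin\phi)$ with $\phi\sim\mathsf{unif}[-\pi,\pi)$, the two constraints become $\cos\phi>0$ and $\cos(\phi-\theta)>0$, i.e. $\phi\in(-\tfrac{\pi}{2},\tfrac{\pi}{2})\cap(\theta-\tfrac{\pi}{2},\theta+\tfrac{\pi}{2})$. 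For $\theta\in[0,\pi]$ this intersection is the arc $(\theta-\tfrac{\pi}{2},\tfrac{\pi}{2})$, of length $\pi-\theta$, so the probability equals $\frac{\pi-\theta}{2\pi}$, which finishes the proof after multiplying back by $\bm{a}^T\bm{b}$.

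I do not expect a genuine obstacle here: this is the standard ``two random half-spaces'' computation. The only points needing care are (i) the two invocations of rotational invariance — first to normalize $\bm{a},\bm{b}$ into the $\bm{e}_1$–$\bm{e}_2$ plane, then to conclude the direction of the two-coordinate marginal is uniform on $\mathcal{S}^1$ — both of which follow from invariance of $\mathsf{unif}(\spp)$ under the orthogonal group; and (ii) the edge cases $\bm{a}=\bm{0}$, $\bm{b}=\bm{0}$, $p_1=1$, and $\theta\in\{0,\pi\}$, which should be dispatched explicitly so the formula stays consistent with the convention stated in the parenthetical remark of the lemma.
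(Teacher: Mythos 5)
Your proof is correct and matches the paper's initial reduction: pull out $\bm{a}^T\bm{b}$, dispatch the $\bm{a}=\bm{0}$ or $\bm{b}=\bm{0}$ case by the stated convention, and establish that the remaining integral equals $\frac{\pi-\theta}{2\pi}$. The paper then simply cites Lemma~17 of \cite{ju2021generalization} for that probability identity rather than proving it, whereas you supply the underlying rotational-invariance argument (reduce to the first two coordinates, note the normalized projection is uniform on $\mathcal{S}^1$, and measure the arc of length $\pi-\theta$), which is exactly the standard derivation behind the cited result and is carried out correctly, including the $p_1=1$ and $\theta\in\{0,\pi\}$ edge cases.
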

\begin{proof}
The result holds trivially when $\bm{a}=0$ or $\bm{b}=0$. When $\bm{a}$ and $\bm{b}$ are both non-zero,
it suffices to prove that
\begin{align*}
    \int_{\spp} \indicator{\bm{a}^T\bm{w}>0,\ \bm{b}^T\bm{w}>0} d \wdensity(\bm{w})=\frac{\pi-\arccos\left(\frac{\bm{a}^T\bm{b}}{\|\bm{a}\|_2\|\bm{b}\|_2}\right)}{2\pi},
\end{align*}
which has been proven by Lemma~17 of \cite{ju2021generalization} (where its geometric explanation is given as well).
\end{proof}

\begin{lemma}\label{le.RF_kernel}
For any $\xsmall,\bm{z}\in\sd$, we have
\begin{align}\label{eq.RF_temp_111001}
    \int_{\sd} (\xsmall^T \bm{v})(\bm{z}^T\bm{v})\indicator{\bm{z}^T\bm{v}>0,\ \xsmall^T\bm{v}>0}d\vdensity(\bm{v})=\frac{\sin\theta + (\pi-\theta)\cos\theta}{2d\pi},
\end{align}
where $\theta$ denotes the angle between $\xsmall$ and $\bm{z}$, i.e.,
\begin{align}\label{eq.RF_temp_020901}
    \theta = \arccos(\xsmall^T\bm{z})\in [0,\ \pi].
\end{align}
\end{lemma}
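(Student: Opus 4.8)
The plan is to compute the integral in \eqref{eq.RF_temp_111001} by exploiting rotational symmetry to reduce it to a two-dimensional calculation, and then to apply Lemma~\ref{le.cos_cos_integral}. First I would observe that the integrand depends only on the components of $\bm{v}$ in the plane spanned by $\xsmall$ and $\bm{z}$: writing $\bm{v} = (\xsmall^T\bm{v})\,\bm{u}_1 + \cdots$ in an orthonormal basis where the first two coordinates span $\mathrm{span}(\xsmall,\bm{z})$, the indicator $\indicator{\bm{z}^T\bm{v}>0,\ \xsmall^T\bm{v}>0}$ and the product $(\xsmall^T\bm{v})(\bm{z}^T\bm{v})$ are functions only of those first two coordinates. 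So I would use the standard parametrization of $\sd$ that first chooses the ``latitude'' of $\bm{v}$ relative to this plane (equivalently, chooses a point on a great circle $\mathcal{S}^1$ together with a polar angle $\varphi$ describing how far $\bm{v}$ tilts out of the plane) and integrates the out-of-plane directions out. Concretely, parametrizing by the in-plane angle $\alpha \in [0, 2\pi)$ and a co-latitude $\varphi$, one has $\xsmall^T\bm{v} = \sin\varphi\cos\alpha$ and $\bm{z}^T\bm{v} = \sin\varphi\cos(\alpha-\theta)$ (after aligning $\xsmall$ with $\alpha=0$ and $\bm{z}$ at angle $\theta$), with the uniform measure $d\vdensity(\bm{v})$ factoring as (normalized) $\sin^{d-2}\varphi\, d\varphi$ times $d\alpha/(2\pi)$ times the measure on the remaining sphere.

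Next I would carry out the two integrals separately. The $\alpha$-integral is $\int \cos(\alpha)\cos(\alpha-\theta)\,d\alpha$ over the region where both cosines are positive, which is exactly $\alpha\in(-\tfrac{\pi}{2}+\theta,\ \tfrac{\pi}{2})$; by Lemma~\ref{le.cos_cos_integral} this equals $\tfrac{\sin\theta}{2} + \tfrac{(\pi-\theta)\cos\theta}{2}$. The $\varphi$-integral contributes a factor $\int_0^\pi \sin^2\varphi \cdot \sin^{d-2}\varphi\, d\varphi \big/ \int_0^\pi \sin^{d-2}\varphi\, d\varphi$ (the numerator has the extra $\sin^2\varphi$ coming from $(\xsmall^T\bm{v})(\bm{z}^T\bm{v})$ producing two factors of $\sin\varphi$), and Lemma~\ref{le.sin_k_power} with $k=d$ shows this ratio equals $\tfrac{d-1}{d}$. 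Combining, and tracking the $\tfrac{1}{2\pi}$ from the $\alpha$-measure normalization together with the $\tfrac{d-1}{d}$ factor, I would need the overall constant to collapse to $\tfrac{1}{2d\pi}$; this requires being careful that the remaining (out-of-plane) angular integrals cancel exactly between the numerator and the normalization of $d\vdensity$.

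The main obstacle I anticipate is precisely this bookkeeping of the spherical measure: getting the right power of $\sin\varphi$ in the Jacobian for $\sd$, correctly normalizing so that $\vdensity$ is a probability measure, and verifying that the factor $\tfrac{d-1}{d}$ from Lemma~\ref{le.sin_k_power} combines with the $\tfrac{1}{2\pi}$ to give $\tfrac{1}{2d\pi}$ rather than some off-by-constant. A clean way to sidestep some of this is to first verify the normalization constant by plugging in $\theta = 0$ (i.e., $\xsmall = \bm{z}$): then the left side becomes $\int_{\sd}(\xsmall^T\bm{v})^2\indicator{\xsmall^T\bm{v}>0}\,d\vdensity(\bm{v}) = \tfrac12\int_{\sd}(\xsmall^T\bm{v})^2 d\vdensity(\bm{v}) = \tfrac{1}{2d}$ by symmetry (since $\expectation[(\xsmall^T\bm{v})^2] = 1/d$ for $\bm{v}$ uniform on $\sd$), and the right side is $\tfrac{\sin 0 + \pi\cos 0}{2d\pi} = \tfrac{1}{2d}$, confirming the constant. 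With the constant pinned down, the $\theta$-dependence follows entirely from the $\alpha$-integral via Lemma~\ref{le.cos_cos_integral}, and the proof concludes by assembling these pieces. (An alternative, perhaps faster route: differentiate both sides in $\theta$, or reduce directly to Lemma~17 of \cite{ju2021generalization} applied in the $2$-dimensional in-plane subspace after conditioning on $\sin\varphi$.)
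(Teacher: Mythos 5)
Your overall route matches the paper's: project onto the plane $\mathrm{span}(\xsmall,\bm{z})$, factor the integral into a radial (in-plane magnitude) part and an angular part, and evaluate the angular integral by Lemma~\ref{le.cos_cos_integral}. However, the explicit Jacobian you wrote down for the single-colatitude parametrization is wrong, and it yields the wrong constant. If you decompose $\bm{v}=\sin\varphi\,(\cos\alpha,\sin\alpha,0,\dots,0)+\cos\varphi\,(0,0,\bm{w})$ with $\bm{w}\in\mathcal{S}^{d-3}$ and $\varphi\in[0,\pi/2]$, the uniform surface measure on $\sd$ factors as $\sin\varphi\cos^{d-3}\varphi\,d\varphi\,d\alpha\,d\sigma(\bm{w})$ (a join decomposition), not as $\sin^{d-2}\varphi\,d\varphi\,d\alpha\,d\sigma(\bm{w})$. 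The correct radial factor is therefore
\begin{align*}
\frac{\int_0^{\pi/2}\sin^3\varphi\cos^{d-3}\varphi\,d\varphi}{\int_0^{\pi/2}\sin\varphi\cos^{d-3}\varphi\,d\varphi}=\frac{2}{d},
\end{align*}
which, combined with the $\tfrac{1}{2\pi}$ from the $\alpha$-normalization and the $\alpha$-integral $\tfrac{\sin\theta+(\pi-\theta)\cos\theta}{2}$ from Lemma~\ref{le.cos_cos_integral}, produces $\tfrac{\sin\theta+(\pi-\theta)\cos\theta}{2d\pi}$. Your ratio $\tfrac{d-1}{d}$ (from the density $\propto\sin^{d-2}\varphi$ on $[0,\pi]$) equals $\tfrac{2}{d}$ only when $d=3$, so in general your explicit computation would not reproduce the claimed constant.

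That said, you anticipated exactly this difficulty and your fallback is sound: once the integral is shown to factor into a $\theta$-independent radial constant times the $\alpha$-integral (which is correct, because the indicator $\indicator{\cos\alpha>0,\cos(\alpha-\theta)>0}$ is independent of the radial coordinate for almost every $\bm{v}$), the constant is pinned by evaluating at $\theta=0$ via $\expectation[(\xsmall^T\bm{v})^2\indicator{\xsmall^T\bm{v}>0}]=\tfrac{1}{2d}$, or even more directly by noting the radial factor is $\expectation[\bm{v}_1^2+\bm{v}_2^2]=\tfrac{2}{d}$. The paper avoids the single-colatitude Jacobian issue entirely by using sequential spherical coordinates $\varphi_1,\dots,\varphi_{d-1}$: the $\sin^2$ weight from the product $(\xsmall^T\bm{v})(\bm{z}^T\bm{v})$ is distributed across the $d-2$ out-of-plane angles, and Lemma~\ref{le.sin_k_power} applied angle by angle gives a telescoping product $\tfrac{d-1}{d}\cdot\tfrac{d-2}{d-1}\cdots\tfrac{2}{3}=\tfrac{2}{d}$, recovering the same constant. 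If you carry out your version, either correct the Jacobian or lean on the $\theta=0$ calibration; as written, the $\sin^{d-2}\varphi$ density is the step that would fail.
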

To help readers understand the correctness of Lemma~\ref{le.RF_kernel}, we first give a simple proof for the special case that $d=2$, i.e., when vectors $\bm{x}$, $\bm{z}$, and $\bm{v}$ are all in the 2-D plane. Then we prove Lemma~\ref{le.RF_kernel} for the general cases that $d=2,3,4,\cdots$.

\emph{Proof (of the case when $d=2$)}: Without loss of generality, we let
\begin{align*}
    \bm{v}=\begin{bmatrix}
    \cos \alpha\\
    \sin \alpha
    \end{bmatrix},\quad \bm{z}=\begin{bmatrix}
    1\\
    0
    \end{bmatrix},\quad\text{and }
    \bm{x}=\begin{bmatrix}
    \cos \theta\\
    \sin \theta
    \end{bmatrix}.
\end{align*}
Thus, we have
\begin{align*}
    \text{The left-hand-side of Eq.~\eqref{eq.RF_temp_111001}}=& \frac{1}{2\pi}\int_{\left(\theta-\frac{\pi}{2},\ \theta+\frac{\pi}{2}\right)\cap \left(-\frac{\pi}{2},\ \frac{\pi}{2}\right)}  \left(\cos\alpha \cos \theta+\sin\alpha\sin \theta \right)\cos \alpha\ d\alpha\\
    =&\frac{1}{2\pi}\int_{\theta-\frac{\pi}{2}}^{\frac{\pi}{2}} \cos (\alpha -\theta)\cos \alpha\ d\alpha\quad \text{ (since $\theta\in [0, \pi]$)}\\
    =&\frac{\sin\theta +(\pi-\theta)\cos \theta}{4\pi}\quad \text{ (by Lemma~\ref{le.cos_cos_integral})}.
\end{align*}

\begin{proof}[Proof (of the general case)]
Due to symmetry, we know that the integral in the left-hand-side of Eq.~\eqref{eq.RF_temp_111001} only depends on the angle between $\xsmall$ and $\bm{z}$. Thus, without loss of generality, we let
\begin{align*}
    \xsmall =[\xsmall_1\ \xsmall_2\ \cdots\ \xsmall_d]= [0\ 0\ \cdots\ 0\ 1\ 0]^T,\ \bm{z}=[0\ 0\ \cdots\ 0\ \cos\theta\ \sin\theta]^T.
\end{align*}
Thus, for any $\bm{v}=[\bm{v}_1\ \bm{v}_2\ \cdots\ \bm{v}_{d}]^T$, in order for $\bm{z}^T\bm{v}>0$ and $\xsmall^T\bm{v}>0$ to hold, it only needs to satisfy
\begin{align}\label{eq.RF_temp_122901}
    [\cos\theta\ \sin\theta]\begin{bmatrix}\bm{v}_{d-1}\\\bm{v}_d\end{bmatrix}>0,\quad [1\ 0]\begin{bmatrix}\bm{v}_{d-1}\\\bm{v}_d\end{bmatrix}>0.
\end{align}
We use the spherical coordinate $\bm{\varphi}_{\xsmall}=[\varphi_1^{\xsmall}\ \varphi_2^{\xsmall}\ \cdots\ \varphi_{d-1}^{\xsmall}]^T$ where $\varphi_1^{\xsmall},\cdots,\varphi_{d-2}^{\xsmall}\in [0,\pi]$ and $\varphi_{d-1}^{\xsmall}\in[0,2\pi)$ with the convention that
\begin{align*}
    &\bm{x}_1=\cos(\varphi_1^{\xsmall}),\\
    &\bm{x}_2=\sin(\varphi_1^{\xsmall})\cos(\varphi_2^{\xsmall}),\\
    &\bm{x}_3=\sin(\varphi_1^{\xsmall})\sin(\varphi_2^{\xsmall})\cos(\varphi_3^{\xsmall}),\\
    &\vdots\\
    &\bm{x}_{d-1}=\sin(\varphi_1^{\xsmall})\sin(\varphi_2^{\xsmall})\cdots\sin(\varphi_{d-2}^{\xsmall})\cos(\varphi_{d-1}^{\xsmall}),\\
    &\bm{x}_d=\sin(\varphi_1^{\xsmall})\sin(\varphi_2^{\xsmall})\cdots\sin(\varphi_{d-2}^{\xsmall})\sin(\varphi_{d-1}^{\xsmall}).
\end{align*}
Thus, we have $\bm{\varphi}_{\xsmall}=[\pi/2\ \pi/2\ \cdots\ \pi/2\ 0]^T$. Similarly, the spherical coordinate for $\bm{z}$ is $\bm{\varphi}_{\bm{z}}=[\pi/2\ \pi/2\ \cdots \pi/2\ \theta]^T$. Let the spherical coordinates for $\bm{v}$ be $\bm{\varphi}_{\bm{v}}=[\varphi_1^{\bm{v}}\ \varphi_2^{\bm{v}}\ \cdots\ \varphi_{d-1}^{\bm{v}}]^T$. Thus, Eq.~\eqref{eq.RF_temp_122901} is equivalent to
\begin{align}
    &\bm{z}^T\bm{v}=\sin(\varphi_1^{\bm{v}})\sin(\varphi_2^{\bm{v}})\cdots\sin(\varphi_{d-2}^{\bm{v}})\left(\cos\theta \cos(\varphi_{d-1}^{\bm{v}})+\sin\theta\sin(\varphi_{d-1}^{\bm{v}})\right)>0,\label{eq.RF_temp_020902}\\
    &\xsmall^T\bm{v}=\sin(\varphi_1^{\bm{v}})\sin(\varphi_2^{\bm{v}})\cdots\sin(\varphi_{d-2}^{\bm{v}})\cos(\varphi_{d-1}^{\bm{v}})>0.\label{eq.RF_temp_020903}
\end{align}
Because $\varphi_1^{\bm{v}},\cdots,\varphi_{d-2}^{\bm{v}}\in[0,\pi]$ (by the convention of spherical coordinates), we have
\begin{align*}
    \sin(\varphi_1^{\bm{v}})\sin(\varphi_2^{\bm{v}})\cdots\sin(\varphi_{d-2}^{\bm{v}})\geq 0.
\end{align*}
Thus, for Eq.~\eqref{eq.RF_temp_020902} and Eq.~\eqref{eq.RF_temp_020903} to hold, we must have
\begin{align*}
    \cos(\theta-\varphi_{d-1}^{\bm{v}})>0,\quad \cos(\varphi_{d-1}^{\bm{v}})>0,
\end{align*}
i.e., $\varphi_{d-1}^{\bm{v}}\in (-\pi/2,\ \pi/2)\cap (\theta-\pi/2,\ \theta+\pi/2)\pmod{2\pi}$. By Eq.~\eqref{eq.RF_temp_020901}, we thus have
\begin{align*}
    \varphi_{d-1}\in \left(-\frac{\pi}{2}+\theta,\ \frac{\pi}{2}\right) \pmod{2\pi}.
\end{align*}
Let
\begin{align*}
    A(\theta,\varphi_{d-1}^{\bm{v}})\defeq \left(\cos\theta \cos(\varphi_{d-1}^{\bm{v}})+\sin\theta\sin(\varphi_{d-1}^{\bm{v}})\right)\cos(\varphi_{d-1}^{\bm{v}})=\cos(\varphi_{d-1}^{\bm{v}}-\theta)\cos\varphi_{d-1}^{\bm{v}}.
\end{align*}
By Eq.~\eqref{eq.RF_temp_020902} and Eq.~\eqref{eq.RF_temp_020903}, we have
\begin{align*}
    (\xsmall^T \bm{v})(\bm{z}^T\bm{v})\bm{1}_{\{\bm{z}^T\bm{v}>0,\ \xsmall^T\bm{v}>0\}}=\sin^2(\varphi_1^{\bm{v}})\sin^2(\varphi_2^{\bm{v}})\cdots\sin^2(\varphi_{d-2}^{\bm{v}})A(\theta,\varphi_{d-1}^{\bm{v}}).
\end{align*}
Integrating using such spherical coordinates, we have
\begin{align*}
    &\int_{\sd} (\xsmall^T \bm{v})(\bm{z}^T\bm{v})\bm{1}_{\{\bm{z}^T\bm{v}>0,\ \xsmall^T\bm{v}>0\}}d\vdensity(\bm{v})\\
    =&\frac{\int_{-\frac{\pi}{2}+\theta}^{\frac{\pi}{2}}A(\theta,\varphi_{d-1}^{\bm{v}})\int_0^\pi\cdots\int_0^\pi \sin^{d}(\varphi_1)\sin^{d-1}(\varphi_2)\cdots \sin^3(\varphi_{d-2})\ d\varphi_1\ d\varphi_2\cdots d\varphi_{d-1}}{\int_0^{2\pi}\int_0^\pi\cdots\int_0^\pi \sin^{d-2}(\varphi_1)\sin^{d-3}(\varphi_2)\cdots \sin(\varphi_{d-2})\ d\varphi_1\ d\varphi_2\cdots d\varphi_{d-1}}\\
    =&\frac{\int_{-\frac{\pi}{2}+\theta}^{\frac{\pi}{2}}A(\theta,\varphi_{d-1}^{\bm{v}})\cdot d\varphi_{d-1}}{\int_0^{2\pi}d\varphi_{d-1}}\cdot\frac{d-1}{d}\frac{d-2}{d-1}\cdots\frac{2}{3}\text{ (by Lemma~\ref{le.sin_k_power})}\\
    =&\frac{\sin\theta + (\pi-\theta)\cos\theta}{2d\cdot\pi}\text{ (by Lemma~\ref{le.cos_cos_integral})}.
\end{align*}
The result of this lemma thus follows.
\end{proof}

\subsection{Convergence of \texorpdfstring{$\frac{1}{p_1}(\hxRF)^T\hzRF$ with respect to $p_1$}{RF kernel}}

\begin{lemma}[Theorem 4.2 of \cite{uniformConvergence}]\label{le.Rademacher}
Let $\mathscr{F}$ be a class of real-valued functions $f$ such that $\|f\|_{\infty}\leq b$ for all $f\in\mathscr{F}$. Then for all $k\geq 1$ and $\delta \geq 0$, we have
\begin{align*}
    \prob\left\{\sup_{f\in \mathscr{F}}\left|\frac{1}{k}\sum_{i=1}^kf(X_i)-\expectation_{x\sim \mathcal{X}(\cdot)}f(x)\right|\leq 2\mathcal{R}_k(\mathscr{F})+\delta\right\}\geq 1 - \exp\left(-\frac{k\delta^2}{8b^2}\right),
\end{align*}
where $\mathcal{R}_k(\mathscr{F})$ denotes the Rademacher complexity, $X_1,X_2,\cdots,X_k$ are $\emph{i.i.d.}$ random variables/vectors that follow the distribution $\mathcal{X}(\cdot)$.
\end{lemma}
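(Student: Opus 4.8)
This is the classical uniform-deviation bound obtained by combining McDiarmid's bounded-differences inequality with the symmetrization argument, so the cleanest option is to invoke \cite{uniformConvergence} directly; for completeness, here is the plan for a self-contained proof. First I would set $\Phi(X_1,\dots,X_k)\defeq \sup_{f\in\mathscr{F}}\bigl|\frac{1}{k}\sum_{i=1}^k f(X_i)-\expectation_{x\sim\mathcal{X}(\cdot)}f(x)\bigr|$ and reduce the claim to two facts: (i) $\Phi$ concentrates around its mean with a Gaussian tail, and (ii) $\expectation\Phi\leq 2\mathcal{R}_k(\mathscr{F})$. The lemma then follows by substituting (ii) into the concentration bound from (i) and taking the complement of the bad event.

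For (i), note that since every $f\in\mathscr{F}$ satisfies $\|f\|_\infty\leq b$, replacing a single argument $X_i$ by an arbitrary point changes $\frac{1}{k}\sum_j f(X_j)$ by at most $2b/k$ uniformly in $f$, and hence changes the supremum $\Phi$ by at most $2b/k$; feeding this bounded-differences property into McDiarmid's inequality yields $\prob\{\Phi\geq\expectation\Phi+\delta\}\leq \exp\!\bigl(-k\delta^2/(8b^2)\bigr)$, and this is the only place the uniform bound $b$ enters. For (ii), I would introduce an independent ghost sample $X_1',\dots,X_k'$ with the same law, rewrite $\expectation_{x\sim\mathcal{X}(\cdot)}f(x)=\expectation_{X'}\frac{1}{k}\sum_i f(X_i')$, pull the supremum inside the expectation over the ghost sample by Jensen's inequality to get $\expectation\Phi\leq\expectation\sup_f\bigl|\frac{1}{k}\sum_i (f(X_i)-f(X_i'))\bigr|$, and then use that each summand $f(X_i)-f(X_i')$ is symmetric so that multiplying the $i$-th term by an independent Rademacher sign $\sigma_i$ does not change its law; the triangle inequality then splits off the two halves and gives $\expectation\Phi\leq 2\,\expectation\sup_f\bigl|\frac{1}{k}\sum_i \sigma_i f(X_i)\bigr| = 2\mathcal{R}_k(\mathscr{F})$.

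Combining, on the event that $\Phi\leq\expectation\Phi+\delta$ — which has probability at least $1-\exp(-k\delta^2/(8b^2))$ — we get $\Phi\leq 2\mathcal{R}_k(\mathscr{F})+\delta$, which is exactly the stated inequality. There is no deep obstacle here, since both ingredients are textbook; the one thing I would be careful about is the bookkeeping of the constants and of the leading factor $2$: the precise convention for $\mathcal{R}_k(\mathscr{F})$ (whether it already folds in a ghost sample or an absolute value) determines whether symmetrization costs a factor $2$, and the exact form of the bounded-differences inequality one invokes fixes the constant $8b^2$ in the exponent, so I would keep these conventions consistent with \cite{uniformConvergence}, since the paper only ever applies this lemma as a black box to a bounded class of random-feature functions.
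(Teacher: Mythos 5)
The paper does not prove this lemma: it is stated purely as a citation of Theorem~4.2 of \cite{uniformConvergence} and used as a black box (it feeds into Lemma~\ref{le.max_RF_converge_single_element} via Lemma~\ref{le.poly_discrimination}). So there is no ``paper's own proof'' to compare against, only the external reference.

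Your reconstruction is the standard proof of that result, and it is correct: set $\Phi=\sup_f|\frac{1}{k}\sum_i f(X_i)-\expectation f|$; McDiarmid's bounded-differences inequality with per-coordinate Lipschitz constant $2b/k$ gives the sub-Gaussian concentration of $\Phi$ around $\expectation\Phi$, and the ghost-sample symmetrization argument gives $\expectation\Phi\le 2\mathcal{R}_k(\mathscr{F})$, where the factor $2$ comes from using the triangle inequality after introducing the Rademacher signs. Combining the two on the good event yields the stated inequality. On the one place you hedged — the exponent — you are safe in a way that is worth making explicit: with $c_i=2b/k$, McDiarmid gives $\prob\{\Phi\ge\expectation\Phi+\delta\}\le\exp\!\bigl(-2\delta^2/\sum_i c_i^2\bigr)=\exp\!\bigl(-k\delta^2/(2b^2)\bigr)$, which is \emph{stronger} than the $\exp\!\bigl(-k\delta^2/(8b^2)\bigr)$ stated here, so the lemma follows a fortiori regardless of exactly which looser constant \cite{uniformConvergence} reports. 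The convention for $\mathcal{R}_k$ you flag (with or without the absolute value inside the supremum; with or without the ghost sample already averaged out) is indeed the only genuine source of ambiguity in the factor $2$, and as you note the paper only ever applies this to a uniformly bounded class of indicator-weighted functions, so nothing downstream is sensitive to it.
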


\textbf{Polynomial discrimination}. A class $\mathscr{F}$ of functions with domain $\mathcal{X}$ has polynomial discrimination of order $\nu\geq 1$ if for each positive integer $k$ and collection $X_1^k=\{X_1,\cdots,X_k\}$ of $k$ points in $\mathcal{X}$, the set $\mathscr{F}(X_1^k)$ has cardinality upper bounded by 
\begin{align*}
    \text{card}(\mathscr{F}(X_1^k))\leq (k+1)^{\nu}.
\end{align*}

\begin{lemma}[Lemma~4.1 and Eq.~(4.23) of \cite{uniformConvergence}]\label{le.poly_discrimination}
Suppose that $\mathscr{F}$ has polynomial discrimination of order $\nu$ and $\|f\|_{\infty}\leq b$ for all $f\in\mathscr{F}$. Then
\begin{align*}
    \mathcal{R}_k(\mathscr{F})\leq 3\sqrt{\frac{b^2\nu \log(k+1)}{k}}\quad \text{ for all }k\geq 10.
\end{align*}
\end{lemma}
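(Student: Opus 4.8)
The plan is to establish this via Massart's finite-class maximal inequality (the ``one-step discretization'' route), rather than a full chaining argument, since the displayed bound already carries the $\log(k+1)$ factor that crude discretization produces; a Dudley integral would in fact remove that factor, so it is stronger than needed here. Write $\mathcal{R}_k(\mathscr{F})=\expectation_{X}\expectation_{\sigma}\big[\sup_{f\in\mathscr{F}}\big|\frac{1}{k}\sum_{i=1}^k\sigma_i f(X_i)\big|\big]$, where $\sigma_1,\ldots,\sigma_k$ are i.i.d.\ Rademacher signs independent of $X_1,\ldots,X_k$. The first step is to condition on the sample: for a fixed realization $X_1^k=\{X_1,\ldots,X_k\}$, the inner supremum depends only on the finite set of vectors $\mathscr{F}(X_1^k)=\{(f(X_1),\ldots,f(X_k)):f\in\mathscr{F}\}\subseteq[-b,b]^k$, which has cardinality at most $(k+1)^{\nu}$ by the polynomial-discrimination hypothesis and $\ell_2$-radius at most $b\sqrt{k}$. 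To absorb the absolute value in the definition of $\mathcal{R}_k$, replace this set by $\mathscr{F}(X_1^k)\cup\big(-\mathscr{F}(X_1^k)\big)$, which keeps the radius $b\sqrt{k}$ and at most doubles the cardinality, to $2(k+1)^{\nu}$.

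Next I would invoke Massart's lemma: for a finite set $A\subseteq\mathds{R}^k$ with $\max_{a\in A}\|a\|_2\leq R$, the sub-Gaussian maximal inequality gives $\expectation_{\sigma}\big[\max_{a\in A}\langle\sigma,a\rangle\big]\leq R\sqrt{2\log|A|}$ (each $\langle\sigma,a\rangle$ is mean-zero and $\|a\|_2$-sub-Gaussian in $\sigma$). Taking $R=b\sqrt{k}$ and $|A|\leq 2(k+1)^{\nu}$, and dividing by $k$, yields
\[
\expectation_{\sigma}\Big[\sup_{f\in\mathscr{F}}\big|\tfrac{1}{k}\sum\nolimits_{i=1}^k\sigma_i f(X_i)\big|\Big]\ \leq\ \frac{b\sqrt{k}\,\sqrt{2\log\!\big(2(k+1)^{\nu}\big)}}{k}\ =\ b\sqrt{\frac{2\big(\log 2+\nu\log(k+1)\big)}{k}},
\]
and since this bound does not depend on the sample, averaging over $X$ changes nothing. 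Finally, in the regime $k\geq 10$ the term $\log 2$ is dominated by $\nu\log(k+1)$ (indeed $\nu\geq 1$ and $\log 11>2\log 2$ give $2\log 2\leq \nu\log(k+1)$, hence $2(\log 2+\nu\log(k+1))\leq 3\nu\log(k+1)$), so the right-hand side is at most $b\sqrt{3\nu\log(k+1)/k}\leq 3\sqrt{b^{2}\nu\log(k+1)/k}$ (with $\sqrt{3}<3$ to spare), which is the claimed inequality; this is precisely where the hypothesis $k\geq 10$ enters.

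I do not expect a serious obstacle, as the statement is essentially a packaging of the finite-class maximal inequality. The two points needing care are: (i) stating Massart's lemma with the correct variance proxy and constant, which follows from the standard maximal inequality for a finite collection of sub-Gaussian variables; and (ii) handling the absolute value inside the Rademacher complexity, which is what forces the symmetrized vector set and hence the harmless factor of $2$ inside the logarithm (and, ultimately, the mild restriction $k\geq 10$). If one instead wanted the sharper, $\log$-free bound $\mathcal{R}_k(\mathscr{F})\lesssim b\sqrt{\nu/k}$, one would translate polynomial discrimination of order $\nu$ into the empirical metric-entropy estimate $\log N(t;\mathscr{F},\|\cdot\|_{L^2(P_k)})\lesssim \nu\log(b/t)$ and run Dudley's entropy integral; but the displayed statement is exactly the discretization bound, so the Massart route is the natural match and I would use it.
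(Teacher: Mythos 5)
Your derivation is correct, and it is the same argument the cited reference (Wainwright, Lemma 4.1 / Eq.~(4.23)) uses: condition on the sample, bound the cardinality of $\mathscr{F}(X_1^k)\cup(-\mathscr{F}(X_1^k))$ by $2(k+1)^\nu$ via the polynomial-discrimination hypothesis, apply Massart's finite-class maximal inequality with radius $b\sqrt{k}$, and absorb the $\log 2$ into $\nu\log(k+1)$ using $k\ge 10$ and $\nu\ge 1$. The paper itself states this lemma only by citation, with no in-text proof, so there is nothing further to compare; your constant $\sqrt{3}$ is in fact tighter than the stated $3$, so the inequality follows with room to spare.
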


Given a function $h:\sd\mapsto\mathds{R}$ such that $\|h\|_{\infty}<\infty$ and given any $\delta>0$, consider the function class $\mathscr{F}_*$ that consists of functions $h(\bm{v})\indicator{\xsmall^T\bm{v}>0,\bm{z}^T\bm{v}>0}$, which maps $\bm{v}\in\sd$ to either $0$ or $h(\bm{v})$. By Lemma~20 of \cite{ju2021generalization}, we have 
\begin{align*}
    \text{card}(\mathscr{F}_*(X_1^k))\leq (k+1)^{2(d+1)}.
\end{align*}
(Here $X_1^k$ corresponds to $\{\Vzero[1],\cdots,\Vzero[k]\}$.) Thus, combined with Lemma~\ref{le.Rademacher} and Lemma~\ref{le.poly_discrimination}, we have
\begin{align*}
    \prob_{\Vzero}\bigg\{ & \max_{\xsmall,\bm{z}}\left|\frac{1}{p_1}\sum_{j=1}^{p_1}h(\Vzero[j])\indicator{\xsmall^T\Vzero[j]>0,\bm{z}^T\Vzero[j]>0}-\expectation_{\bm{v}}[h(\bm{v})\indicator{\xsmall^T\bm{v}>0,\bm{z}^T\bm{v}>0}]\right|\\
    &\leq 6\sqrt{\frac{\|h\|_{\infty}^2 2(d+1)\log(p_1+1)}{p_1}}+\delta\bigg\}\geq 1 - \exp\left(-\frac{p_1\delta^2}{8\|h\|_{\infty}^2}\right).
\end{align*}
Further, if we let $\delta = 2\sqrt{\frac{\|h\|_{\infty}^2 2(d+1)\log(p_1+1)}{p_1}}$, we have proven the following lemma.

\begin{lemma}\label{le.max_RF_converge_single_element}
For any given function $h:\sd\mapsto\mathds{R}$ that $\|h\|_{\infty}<\infty$, when $p_1\geq 10$, we have
\begin{align*}
    \prob_{\Vzero}\bigg\{ & \max_{\xsmall,\bm{z}}\left|\frac{1}{p_1}\sum_{j=1}^{p_1}h(\Vzero[j])\indicator{\xsmall^T\Vzero[j]>0,\bm{z}^T\Vzero[j]>0}-\expectation_{\bm{v}}[h(\bm{v})\indicator{\xsmall^T\bm{v}>0,\bm{z}^T\bm{v}>0}]\right|\\
    &\leq 8\sqrt{\frac{\|h\|_{\infty}^2 2(d+1)\log(p_1+1)}{p_1}}\bigg\}\geq 1 - \frac{1}{(p_1+1)e^{d+1}}.
\end{align*}
\end{lemma}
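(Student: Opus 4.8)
The plan is to read the quantity inside the probability as the supremum deviation of an empirical average from its mean over a suitable function class, and then to apply a Rademacher-complexity-based uniform convergence bound. Concretely, set $b\defeq\|h\|_\infty$, write $X_j\defeq\Vzero[j]$ (i.i.d.\ $\mathsf{unif}(\sd)$ by Assumption~\ref{as.uniform}), and consider the class $\mathscr{F}_*=\{\,\bm{v}\mapsto h(\bm{v})\indicator{\xsmall^T\bm{v}>0,\ \bm{z}^T\bm{v}>0}\ :\ \xsmall,\bm{z}\in\sd\,\}$. Then $\max_{\xsmall,\bm{z}}\big|\frac{1}{p_1}\sum_{j}h(\Vzero[j])\indicator{\xsmall^T\Vzero[j]>0,\ \bm{z}^T\Vzero[j]>0}-\expectation_{\bm{v}}[h(\bm{v})\indicator{\xsmall^T\bm{v}>0,\ \bm{z}^T\bm{v}>0}]\big|$ is exactly $\sup_{f\in\mathscr{F}_*}\big|\frac{1}{p_1}\sum_j f(X_j)-\expectation f\big|$. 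Every $f\in\mathscr{F}_*$ obeys $\|f\|_\infty\le b$, so Lemma~\ref{le.Rademacher} (the standard symmetrization plus bounded-differences bound) applies with $k=p_1$: for any $\delta\ge0$, with probability at least $1-\exp(-p_1\delta^2/(8b^2))$ this supremum is at most $2\mathcal{R}_{p_1}(\mathscr{F}_*)+\delta$.

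Next I would control $\mathcal{R}_{p_1}(\mathscr{F}_*)$ through the polynomial-discrimination estimate. The key combinatorial input is that for any finite sample $\{\Vzero[1],\dots,\Vzero[k]\}$ the restriction $\mathscr{F}_*(X_1^k)$ has cardinality at most $(k+1)^{2(d+1)}$: this is Lemma~20 of \cite{ju2021generalization}, which holds because the pair of half-space indicators $\indicator{\xsmall^T\bm{v}>0}$ and $\indicator{\bm{z}^T\bm{v}>0}$ realizes only polynomially many sign patterns on $k$ fixed points (a VC-type count for halfspaces through the origin in $\mathds{R}^d$), and multiplying each coordinate by the fixed scalar $h(\Vzero[j])$ does not change the number of distinct patterns. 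Hence $\mathscr{F}_*$ has polynomial discrimination of order $\nu=2(d+1)$, and since $p_1\ge10$, Lemma~\ref{le.poly_discrimination} gives $\mathcal{R}_{p_1}(\mathscr{F}_*)\le 3\sqrt{b^2\cdot 2(d+1)\log(p_1+1)/p_1}$, so $2\mathcal{R}_{p_1}(\mathscr{F}_*)\le 6\sqrt{\|h\|_\infty^2\cdot 2(d+1)\log(p_1+1)/p_1}$.

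Finally I would tune $\delta$. Taking $\delta=2\sqrt{\|h\|_\infty^2\cdot 2(d+1)\log(p_1+1)/p_1}$ makes the high-probability bound $2\mathcal{R}_{p_1}(\mathscr{F}_*)+\delta\le 8\sqrt{\|h\|_\infty^2\cdot 2(d+1)\log(p_1+1)/p_1}$, which is the asserted bound. For this choice the exponent is $p_1\delta^2/(8b^2)=(d+1)\log(p_1+1)$, so the failure probability equals $(p_1+1)^{-(d+1)}$; and because $p_1\ge10$ implies $(p_1+1)^d\ge 11^d\ge e^{d+1}$ (equivalently $d\log 11\ge d+1$ for $d\ge1$), we get $(p_1+1)^{-(d+1)}\le \frac{1}{(p_1+1)e^{d+1}}$, which is the stated probability. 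I expect the only genuinely delicate ingredient to be the polynomial-discrimination count for the ``double halfspace'' indicators, but that is imported wholesale from \cite{ju2021generalization}; everything else is a routine assembly of the Rademacher machinery together with constant bookkeeping and the elementary inequality $(p_1+1)^d\ge e^{d+1}$.
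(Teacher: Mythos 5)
Your proposal is correct and follows exactly the same route as the paper: identify the maximum deviation as a supremum over the class $\mathscr{F}_*$, import the polynomial-discrimination cardinality bound $(k+1)^{2(d+1)}$ from Lemma~20 of \cite{ju2021generalization}, apply Lemma~\ref{le.Rademacher} and Lemma~\ref{le.poly_discrimination}, set $\delta=2\sqrt{\|h\|_\infty^2\cdot 2(d+1)\log(p_1+1)/p_1}$ to absorb into the $8\sqrt{\cdot}$ factor, and note that the resulting failure probability $(p_1+1)^{-(d+1)}$ is bounded by $\frac{1}{(p_1+1)e^{d+1}}$ because $(p_1+1)^d\ge 11^d\ge e^{d+1}$ when $p_1\ge10$. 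Your last step makes explicit the elementary inequality that the paper leaves implicit, but there is no substantive difference in approach.
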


By Eq.~\eqref{eq.def_hxRF}, we have
\begin{align*}
    \frac{1}{p_1}(\hxRF)^T\hzRF=&\frac{1}{p_1}\sum_{j=1}^{p_1}(\xsmall^T\Vzero[j])(\Vzero[j]^T\bm{z})\indicator{\xsmall^T\Vzero[j]>0,\bm{z}^T\Vzero[j]>0}\\
    =&\xsmall^T\left(\frac{1}{p_1}\sum_{j=1}^{p_1}(\Vzero[j]\Vzero[j]^T)\indicator{\xsmall^T\Vzero[j]>0,\bm{z}^T\Vzero[j]>0}\right)\bm{z}.
\end{align*}
Notice that $\Vzero[j]\Vzero[j]^T$ is a $d\times d$ matrix.
Define
\begin{align*}
    \mathbf{K}_j\defeq (\Vzero[j]\Vzero[j]^T)\indicator{\xsmall^T\Vzero[j]>0,\bm{z}^T\Vzero[j]>0}\in\mathds{R}^{d\times d}.
\end{align*}
Thus, we have
\begin{align}
    &\max_{\xsmall,\bm{z}}\left|\frac{1}{p_1}(\hxRF)^T\hzRF-\KRF(\xsmall,\bm{z})\right|\nonumber\\
    = & \max_{\xsmall,\bm{z}}\left|\xsmall^T\left(\frac{1}{p_1}\sum_{j=1}^{p_1}\mathbf{K}_j-\expectation_{\bm{v}\sim\vdensity(\cdot) }(\bm{v}\bm{v}^T)\indicator{\xsmall^T\bm{v}>0,\bm{z}^T\bm{v}>0}\right)\bm{z}\right|\nonumber\\
    \leq & \max_{\xsmall,\bm{z}}\left\|\xsmall^T\right\|_2\cdot \left\|\left(\frac{1}{p_1}\sum_{j=1}^{p_1}\mathbf{K}_j-\expectation_{\bm{v}\sim\vdensity(\cdot) }(\bm{v}\bm{v}^T)\indicator{\xsmall^T\bm{v}>0,\bm{z}^T\bm{v}>0}\right)\right\|_2\cdot\left\|\bm{z}\right\|_2\text{ (by Lemma~\ref{le.matrix_norm})}\nonumber\\
    =&\max_{\xsmall,\bm{z}}\left\|\frac{1}{p_1}\sum_{j=1}^{p_1}\mathbf{K}_j-\expectation_{\bm{v}\sim\vdensity(\cdot) }(\bm{v}\bm{v}^T)\indicator{\xsmall^T\bm{v}>0,\bm{z}^T\bm{v}>0}\right\|_2\text{ (because $\|\xsmall\|_2=\|\bm{z}\|_2=1$)}.\label{eq.temp_042301}
\end{align}

For any $k,l\in \{1,2,\cdots,d\}$, define the $(k,l)$-th element of $\mathbf{K}_j$ as $K_{j,k,l}$. Thus, by Lemma~\ref{le.max_RF_converge_single_element} (notice that $|K_{j,k,l}|\leq 1$), we have
\begin{align*}
    \prob_{\Vzero}\bigg\{ & \max_{\xsmall,\bm{z}}\left|\frac{1}{p_1}\sum_{j=1}^{p_1}K_{j,k.l}-\left(\expectation_{\bm{v}\sim\vdensity(\cdot) }(\bm{v}\bm{v}^T)\indicator{\xsmall^T\bm{v}>0,\bm{z}^T\bm{v}>0}\right)_{k,l}\right|\leq 8\sqrt{\frac{2(d+1)\log(p_1+1)}{p_1}}\bigg\}\\
    &\geq 1-\frac{1}{(p_1+1)e^{d+1}}.
\end{align*}

Applying the union bound on all $d\times d$ elements of $\mathbf{K}_j$ and by Lemma~\ref{le.small_diff_eig}, we have
\begin{align*}
    \prob_{\Vzero}\bigg\{ & \max_{\xsmall,\bm{z}}\left\|\frac{1}{p_1}\sum_{j=1}^{p_1}\mathbf{K}_j-\expectation_{\bm{v}\sim\vdensity(\cdot) }(\bm{v}\bm{v}^T)\indicator{\xsmall^T\bm{v}>0,\bm{z}^T\bm{v}>0}\right\|_2\leq 8d\sqrt{\frac{2(d+1)\log(p_1+1)}{p_1}}\bigg\}\\
    &\geq 1-\frac{d^2}{(p_1+1)e^{d+1}}.
\end{align*}

Plugging it into Eq.~\eqref{eq.temp_042301}, we thus have proven the following lemma.
\begin{lemma}\label{le.max_RF_converge}
Recall the definition of $Q(\cdot,\cdot)$ in Eq.~\eqref{eq.def_Q}. When $p_1\geq 10$, we have
\begin{align*}
    \prob_{\Vzero}\left\{\max_{\xsmall,\bm{z}}\left|\frac{1}{p_1}(\hxRF)^T\hzRF-\KRF(\xsmall,\bm{z})\right|\leq \Qpd\right\}\geq 1-\frac{d^2}{(p_1+1)e^{d+1}}.
\end{align*}
\end{lemma}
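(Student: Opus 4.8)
The plan is to reduce the uniform control of the scalar quantity $\frac{1}{p_1}(\hxRF)^T\hzRF$ to the uniform entrywise control of a $d\times d$ empirical matrix, and then to handle that matrix by a Rademacher-complexity argument. First I would rewrite, using \eqref{eq.def_hxRF},
\[
\frac{1}{p_1}(\hxRF)^T\hzRF=\xsmall^T\Big(\frac{1}{p_1}\sum_{j=1}^{p_1}\mathbf{K}_j\Big)\bm{z},\qquad \mathbf{K}_j\defeq(\Vzero[j]\Vzero[j]^T)\indicator{\xsmall^T\Vzero[j]>0,\ \bm{z}^T\Vzero[j]>0},
\]
and observe that by Lemma~\ref{le.RF_kernel} the expectation of $\xsmall^T\mathbf{K}_j\bm{z}$ (over $\Vzero[j]\sim\mathsf{unif}(\sd)$) equals $\KRF(\xsmall^T\bm{z})$. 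Then, by sub-multiplicativity of the spectral norm (Lemma~\ref{le.matrix_norm}) together with $\|\xsmall\|_2=\|\bm{z}\|_2=1$, the error $|\frac{1}{p_1}(\hxRF)^T\hzRF-\KRF(\xsmall,\bm{z})|$ is bounded by the spectral norm of the centered matrix $\frac{1}{p_1}\sum_j\mathbf{K}_j-\expectation_{\bm v}[(\bm v\bm v^T)\indicator{\xsmall^T\bm v>0,\bm z^T\bm v>0}]$, and Lemma~\ref{le.small_diff_eig} bounds that spectral norm by $d$ times the worst entrywise deviation.

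Next I would control each entry. The $(k,l)$ entry of $\mathbf{K}_j$ is $h_{k,l}(\Vzero[j])\indicator{\xsmall^T\Vzero[j]>0,\ \bm{z}^T\Vzero[j]>0}$ with $h_{k,l}(\bm v)=\bm v_k\bm v_l$, so $\|h_{k,l}\|_\infty\le 1$. Here is where the genuine difficulty lies: the bound must hold \emph{simultaneously} over all $\xsmall,\bm{z}\in\sd$, so a pointwise Chebyshev estimate is not enough. The key observation is that for fixed sample points $\Vzero[1],\dots,\Vzero[k]$ the halfspace indicators $\bm v\mapsto\indicator{\xsmall^T\bm v>0,\ \bm z^T\bm v>0}$ realize only polynomially many sign patterns in $k$ (this is exactly Lemma~20 of \cite{ju2021generalization}, giving polynomial discrimination of order $2(d+1)$). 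Hence the relevant function class has bounded Rademacher complexity (Lemma~\ref{le.poly_discrimination}), and the uniform deviation bound of Lemma~\ref{le.Rademacher} applies; choosing the slack $\delta$ proportional to $\sqrt{(d+1)\log(p_1+1)/p_1}$ yields precisely Lemma~\ref{le.max_RF_converge_single_element}, i.e.\ with probability at least $1-\frac{1}{(p_1+1)e^{d+1}}$ the chosen entry deviates by at most $8\sqrt{2(d+1)\log(p_1+1)/p_1}$ uniformly in $\xsmall,\bm z$.

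Finally I would take a union bound over the $d^2$ entries (costing a factor $d^2$ in the failure probability) and feed the resulting entrywise bound into Lemma~\ref{le.small_diff_eig}, obtaining $\max_{\xsmall,\bm z}\big\|\frac{1}{p_1}\sum_j\mathbf{K}_j-\expectation_{\bm v}[(\bm v\bm v^T)\indicator{\xsmall^T\bm v>0,\bm z^T\bm v>0}]\big\|_2\le 8d\sqrt{2(d+1)\log(p_1+1)/p_1}=\Qpd$ with probability at least $1-\frac{d^2}{(p_1+1)e^{d+1}}$; combined with the reduction of the first paragraph this is exactly the claimed inequality. The main obstacle throughout is the uniformity over $\xsmall,\bm z$; once that is handled by the polynomial-discrimination/Rademacher route, the remainder is bookkeeping with matrix-norm inequalities and a union bound.
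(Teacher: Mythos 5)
Your proposal is correct and follows essentially the same route as the paper: rewrite $\tfrac{1}{p_1}(\hxRF)^T\hzRF$ as $\xsmall^T\bigl(\tfrac{1}{p_1}\sum_j\mathbf{K}_j\bigr)\bm{z}$, reduce to an entrywise uniform bound via Lemma~\ref{le.matrix_norm} and Lemma~\ref{le.small_diff_eig}, obtain the uniform entrywise control from the polynomial-discrimination/Rademacher argument (Lemma~\ref{le.max_RF_converge_single_element}), and union-bound over the $d^2$ entries. The decomposition, the invoked lemmas, and the constants all match the paper's derivation.
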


\subsection{Some useful lemmas about multinomial expansion}
\begin{lemma}[Multinomial theorem (multinomial expansion)]\label{le.multinomial}
For any positive integer $i$ and non-negative integer $j$,
\begin{align*}
    (x_1+x_2+\cdots+x_i)^j=\sum_{k_1+k_2+\cdots k_i=j}(k_1,k_2,\cdots,k_i)!\cdot x_1^{k_1}x_2^{k_2}\cdots x_i^{k_i},
\end{align*}
where
\begin{align*}
    (k_1,k_2,\cdots,k_i)!=\frac{(k_1+k_2+\cdots+k_i)!}{k_1!k_2!\cdots k_i!}
\end{align*}
denotes the multinomial coefficient.
\end{lemma}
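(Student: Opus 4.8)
The plan is to prove Lemma~\ref{le.multinomial} by induction on the number of variables $i$, with the binomial theorem serving as the engine. The base case $i=1$ is immediate: the sum on the right has a single term, namely $k_1=j$, which contributes $x_1^j$ with coefficient $1$, matching the left side. (Equivalently, one may start the induction at $i=2$, where the statement is exactly the binomial theorem.)

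For the inductive step, suppose the formula holds for $i$ variables and consider $i+1$ variables. First I would group the first $i$ of them and apply the binomial theorem:
\[
  (x_1+\cdots+x_{i+1})^j=\bigl((x_1+\cdots+x_i)+x_{i+1}\bigr)^j=\sum_{m=0}^{j}\binom{j}{m}(x_1+\cdots+x_i)^m\,x_{i+1}^{j-m}.
\]
Then I would apply the induction hypothesis to each power $(x_1+\cdots+x_i)^m$, expanding it as $\sum_{k_1+\cdots+k_i=m}\binom{m}{k_1,\ldots,k_i}x_1^{k_1}\cdots x_i^{k_i}$, substitute into the display above, and set $k_{i+1}=j-m$. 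Each monomial $x_1^{k_1}\cdots x_{i+1}^{k_{i+1}}$ with $k_1+\cdots+k_{i+1}=j$ then appears exactly once, carrying coefficient $\binom{j}{m}\binom{m}{k_1,\ldots,k_i}$.

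The only genuine computation is the coefficient identity, which I would check straight from the factorial definition: when $m=k_1+\cdots+k_i$ and $k_{i+1}=j-m$,
\[
  \binom{j}{m}\binom{m}{k_1,\ldots,k_i}=\frac{j!}{m!\,(j-m)!}\cdot\frac{m!}{k_1!\cdots k_i!}=\frac{j!}{k_1!\cdots k_i!\,k_{i+1}!}=(k_1,\ldots,k_{i+1})!,
\]
which is exactly the multinomial coefficient in the notation of the lemma. This closes the induction.

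I do not expect a real obstacle, since the result is classical; the only point requiring mild care is bookkeeping --- ensuring the index ranges line up so that composing the binomial expansion with the $i$-variable expansion produces each degree-$j$ monomial exactly once, with no omissions or double counting. An alternative and equally routine route is the direct combinatorial argument: expand the $j$-fold product $\prod_{\ell=1}^{j}(x_1+\cdots+x_i)$, observe that a monomial $x_1^{k_1}\cdots x_i^{k_i}$ arises precisely by choosing which $k_1$ of the $j$ factors contribute $x_1$, which $k_2$ of the remaining factors contribute $x_2$, and so on, and count these choices as $\binom{j}{k_1}\binom{j-k_1}{k_2}\cdots\binom{k_i}{k_i}=(k_1,\ldots,k_i)!$.
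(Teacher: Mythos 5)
Your proof is correct. The paper states Lemma~\ref{le.multinomial} as a standard classical fact without giving a proof, so there is nothing to compare against on the paper's side; your argument simply supplies what the paper omits. Both routes you outline are valid: the induction on the number of variables reduces cleanly to the binomial theorem, and the coefficient identity
\[
  \binom{j}{m}\binom{m}{k_1,\ldots,k_i}=\frac{j!}{m!\,(j-m)!}\cdot\frac{m!}{k_1!\cdots k_i!}=\frac{j!}{k_1!\cdots k_i!\,k_{i+1}!}
\]
with $m=k_1+\cdots+k_i$ and $k_{i+1}=j-m$ is exactly what makes the bookkeeping close; the direct combinatorial count $\binom{j}{k_1}\binom{j-k_1}{k_2}\cdots\binom{k_i}{k_i}$ telescopes to the same multinomial coefficient. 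Either is a complete, self-contained justification.
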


\begin{lemma}\label{le.expansion}
We have
\begin{align*}
    \left(\sum_{i=0}^{\infty}a_i x^i\right)^j=\sum_{s=0}^{\infty}\left(\sum_{\substack{k_0+k_1+\cdots+k_s=j\\
    k_1+2k_2+\cdots+sk_s=s\\
    k_0,k_1,\cdots,k_s\in \mathds{Z}_{\geq 0}}}(k_1,k_2,\cdots,k_s)!\cdot a_0^{k_0}a_1^{b_1}\cdots a_s^{k_s}\right)x^s.
\end{align*}
\end{lemma}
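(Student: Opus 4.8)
The plan is to derive the identity by invoking the multinomial theorem (Lemma~\ref{le.multinomial}) term by term on the series $\sum_{i\ge 0}a_i x^i$ and then regrouping the resulting monomials according to the power of $x$ they carry. Since both sides are (formal) power series in $x$, it suffices to show that, for each fixed $s\ge 0$, the coefficient of $x^s$ on the left-hand side equals the inner sum displayed on the right-hand side, after which summing against $x^s$ over all $s$ gives the claim.

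First I would reduce the computation of the coefficient of $x^s$ to a finite sum. In any product of $j$ factors drawn from $\{a_i x^i\}_{i\ge 0}$, a factor $a_i x^i$ with $i>s$ already forces the total degree to exceed $s$, since every remaining factor contributes nonnegative degree. Hence the coefficient of $x^s$ in $\left(\sum_{i=0}^{\infty}a_i x^i\right)^j$ coincides with the coefficient of $x^s$ in $\left(\sum_{i=0}^{s}a_i x^i\right)^j$. Applying Lemma~\ref{le.multinomial} to this finite sum, with the $s+1$ summands $a_0x^0,a_1x^1,\cdots,a_sx^s$ playing the roles of the variables $x_1,\cdots,x_{s+1}$, gives
\begin{align*}
    \left(\sum_{i=0}^{s}a_i x^i\right)^j=\sum_{k_0+k_1+\cdots+k_s=j}(k_0,k_1,\cdots,k_s)!\cdot a_0^{k_0}a_1^{k_1}\cdots a_s^{k_s}\cdot x^{k_1+2k_2+\cdots+sk_s},
\end{align*}
where the exponent of $x$ equals $\sum_{i=0}^{s} i\,k_i=k_1+2k_2+\cdots+sk_s$ because $a_0$ carries degree $0$. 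Collecting precisely those terms whose exponent equals $s$ yields the asserted coefficient of $x^s$, namely the sum of $(k_0,k_1,\cdots,k_s)!\,a_0^{k_0}a_1^{k_1}\cdots a_s^{k_s}$ over all nonnegative $(k_0,\cdots,k_s)$ satisfying $k_0+\cdots+k_s=j$ and $k_1+2k_2+\cdots+sk_s=s$, which is exactly the inner sum in the statement.

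I do not expect any genuine obstacle here; the argument is essentially bookkeeping. The only point deserving care is the legitimacy of manipulating an infinite series term by term, which is handled either by reading the identity in the ring of formal power series (where it is purely combinatorial) or, when $\sum_i a_i x^i$ has positive radius of convergence, by noting that the rearrangement is absolute on that disk; the truncation step above then makes the reduction to the finite multinomial theorem rigorous. (Note that the multinomial coefficient appearing is $j!/(k_0!\,k_1!\cdots k_s!)$, which in the notation of Lemma~\ref{le.multinomial} is $(k_0,k_1,\cdots,k_s)!$ once the index $k_0$ is retained.)
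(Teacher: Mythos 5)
Your proof is correct and follows the same route as the paper's: invoke the multinomial theorem (Lemma~\ref{le.multinomial}) and observe that $a_i x^i$ with $i>s$ cannot contribute to the coefficient of $x^s$. The paper states this in two sentences without the detailed bookkeeping, but the underlying argument is identical; your parenthetical about the index $k_0$ correctly flags what is evidently a typo in the statement, where $(k_1,\cdots,k_s)!$ should read $(k_0,k_1,\cdots,k_s)!$ and $a_1^{b_1}$ should read $a_1^{k_1}$.
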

\begin{proof}
The result directly follows from Lemma~\ref{le.multinomial}. Notice that $a_ix^i$ will not contribute to $x^s$ when $i>s$.
\end{proof}
\section{Proof of Proposition~\ref{prop.pseudoGT}}\label{app.pseudoGT}

Define
\begin{align}\label{eq.def_DWs}
    \DWs[k]\defeq \int_{\sd}\indicator{(\hzRF)^T\Wzero[k]>0}\hzRF\frac{g(\bm{z})}{p_1p_2}d\mu(\bm{z}),\quad k=1,2,\cdots,p_2.
\end{align}
Notice that $\DWs[k]$ is a vector of size $p_1\times 1$ (same as the size of $\hzRF$ and $\hxRF$).
The connection between $\DWs$ and the pseudo ground-truth $\pseudoGT$ is shown by the following lemma.
\begin{lemma}\label{le.pseudo_is_linear}
For all $\bm{x}\in\sd$, we have
\begin{align*}
    \hhx\cdot \DWs=\pseudoGT(\xsmall).
\end{align*}
\end{lemma}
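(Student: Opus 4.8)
The plan is to unfold both sides using their definitions and match them term by term for each index $k$ in the second hidden layer. First I would write out the left-hand side using the block structure of $\hhx$: by Eq.~\eqref{eq.def_h3}, $\hhx[k] = (\hxRF)^T \cdot \indicator{(\hxRF)^T\Wzero[k]>0}$, so the matrix-vector product $\hhx\cdot\DWs$ decomposes as $\sum_{k=1}^{p_2} \hhx[k]\,\DWs[k] = \sum_{k=1}^{p_2} \indicator{(\hxRF)^T\Wzero[k]>0}\,(\hxRF)^T\DWs[k]$. Then I would substitute the definition of $\DWs[k]$ from Eq.~\eqref{eq.def_DWs} into this expression and pull the (finite) sum over $k$ inside the integral over $\bm{z}$, which is justified since the integrand is bounded (using $\|g\|_\infty<\infty$ and the fact that $\|\hzRF\|_2\le\sqrt{p_1}$ by Lemma~\ref{le.bound_hx}).

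After exchanging sum and integral, the left-hand side becomes
\begin{align*}
\int_{\sd}\left(\sum_{k=1}^{p_2}\indicator{(\hxRF)^T\Wzero[k]>0}\indicator{(\hzRF)^T\Wzero[k]>0}\,(\hxRF)^T\hzRF\right)\frac{g(\bm{z})}{p_1p_2}\,d\mu(\bm{z}).
\end{align*}
The key observation is that the inner sum over $k$ is exactly $(\hxRF)^T\hzRF$ times $\left|\Chzhx\right|$, since by the definition of $\Chzhx$ in Eq.~\eqref{eq.def_Cw} (with $\bm{a}=\hxRF$, $\bm{b}=\hzRF$) the product of indicators is $1$ precisely when $k\in\Chzhx$. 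This matches the second line of the definition of $\pseudoGT$ in Eq.~\eqref{eq.def_pseudoGT}, up to the normalization by $p_1p_2$ which is already present in both expressions; hence the two sides agree.

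I do not anticipate a serious obstacle here — this is essentially a bookkeeping lemma that records the (already-noted) fact that $\pseudoGT$ is a linear function of the feature vector $\hhx$, with $\DWs$ as the explicit coefficient vector. The only minor care needed is the interchange of the finite sum and the integral; since $p_2$ is finite this is immediate (no dominated-convergence subtlety), but it is worth stating that the integrand is bounded so the integral is well-defined. One should also double-check the index conventions for the block notation $\bm{a}[k]$ (each block of $\DWs$ and $\hhx$ has size $p_1$), which is the one place a sign or indexing slip could creep in.
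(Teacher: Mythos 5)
Your proposal is correct and follows essentially the same argument as the paper: unfold $\hhx\cdot\DWs$ into a sum over blocks $k$, substitute the definitions of $\hhx[k]$ and $\DWs[k]$, interchange the finite sum with the integral over $\bm{z}$, and recognize the inner sum of indicator products as $\left|\Chzhx\right|$ to recover Eq.~\eqref{eq.def_pseudoGT}. The paper's proof skips the boundedness remark you add (since the sum is finite it is unnecessary), but the route is identical.
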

\begin{proof}
We have
\begin{align*}
    &\hhx\cdot \DWs\\
    =&\sum_{k=1}^{p_2}(\hhx[k])^T\DWs[k]\\
    =&\sum_{k=1}^{p_2}\int_{\sd}\indicator{(\hxRF)^T\Wzero[k]>0,(\hzRF)^T\Wzero[k]>0}(\hxRF)^T\hzRF\frac{g(\bm{z})}{p_1p_2}d\xdensity(\bm{z})\\
    &\text{(by Eq.~\eqref{eq.def_h3} and Eq.~\eqref{eq.def_DWs})}\\
    =&\int_{\sd}\sum_{k=1}^{p_2}\frac{\indicator{(\hxRF)^T\Wzero[k]>0,(\hzRF)^T\Wzero[k]>0}}{p_1p_2}(\hxRF)^T\hzRF g(\bm{z})d\xdensity(\bm{z})\\
    =&\int_{\sd} (\hzRF)^T\hxRF \frac{\left|\Chzhx\right|}{p_1p_2}g(\bm{z})d\xdensity(\bm{z})\text{ (by Eq.~\eqref{eq.def_Cw})}\\
    =&\pseudoGT(\xsmall)\text{ (by Eq.~\eqref{eq.def_pseudoGT})}.
\end{align*}
\end{proof}

The following lemma bounds the test error for the pseudo ground-truth function with respect to the distance between $\DWs$ and the row-space of $\HHt$.
\begin{lemma}\label{le.hPIDW}
For all $\bm{a}\in\mathds{R}^{n}$, we have
\begin{align*}
    |\pseudoGT(\xsmall)-\fl(\xsmall)|\leq \sqrt{p_1p_2}\|\DWs-\HHt^T\bm{a}\|_2.
\end{align*}
\end{lemma}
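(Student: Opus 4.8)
\textbf{Proof proposal for Lemma~\ref{le.hPIDW}.}

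The plan is to reduce everything to a single Cauchy--Schwarz step after identifying $\fl(\xsmall)$ as an orthogonal projection of $\DWs$. First I would invoke Lemma~\ref{le.pseudo_is_linear} to write the pseudo ground-truth as the linear model $\pseudoGT(\xsmall)=\hhx\DWs$. Since the setting of Proposition~\ref{prop.pseudoGT} has no noise and uses $\f=\pseudoGT$, the training labels are $\FWVg(\XX)=\HHt\DWs$ (because the $i$-th label is $\pseudoGT(\XX_i)=\hhXi\DWs=\HHt_i\DWs$ by Eq.~\eqref{eq.def_F_pseudoGT}). Plugging this into Eq.~\eqref{eq.def_minl2_solution} gives $\fl(\xsmall)=\hhx\HHt^T(\HHt\HHt^T)^{-1}\HHt\DWs=\hhx\PH\DWs$, where $\PH\defeq\HHt^T(\HHt\HHt^T)^{-1}\HHt$ is the orthogonal projection onto the row space of $\HHt$.

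Next I would subtract: $\pseudoGT(\xsmall)-\fl(\xsmall)=\hhx(\identity-\PH)\DWs$. Applying Lemma~\ref{le.matrix_norm} (or Cauchy--Schwarz directly, since $\hhx$ is a row vector) yields
\begin{align*}
    |\pseudoGT(\xsmall)-\fl(\xsmall)|\leq \|\hhx\|_2\cdot\|(\identity-\PH)\DWs\|_2.
\end{align*}
By Lemma~\ref{le.bound_hx}, $\|\hhx\|_2\leq\sqrt{p_1p_2}$. For the second factor, $\identity-\PH$ is the orthogonal projection onto the orthogonal complement of the row space of $\HHt$, so $\|(\identity-\PH)\DWs\|_2$ equals the distance from $\DWs$ to that row space, i.e.\ $\min_{\bm{a}\in\mathds{R}^n}\|\DWs-\HHt^T\bm{a}\|_2$. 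Since this is a minimum over $\bm{a}$, it is bounded above by $\|\DWs-\HHt^T\bm{a}\|_2$ for any fixed $\bm{a}\in\mathds{R}^n$. Combining the two bounds gives exactly the claimed inequality.

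There is no serious obstacle here; the only point requiring care is the bookkeeping in the first paragraph, namely recognizing that in the noiseless pseudo-ground-truth setting the label vector is precisely $\HHt\DWs$, which makes $\fl$ a genuine orthogonal projection of $\DWs$ rather than something more complicated. The remaining work of this subsection (which I would address in the proof of Proposition~\ref{prop.pseudoGT} itself, not here) is to exhibit a good choice of $\bm{a}$ so that $\|\DWs-\HHt^T\bm{a}\|_2$ is small; as the excerpt notes, this is where the three-layer structure makes the estimate substantially more involved than in the two-layer case of \cite{ju2021generalization}.
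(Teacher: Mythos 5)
Your proposal is correct and matches the paper's argument: identify $\fl$ as the orthogonal projection $\hhx\PH\DWs$ via Lemma~\ref{le.pseudo_is_linear}, subtract, apply Cauchy--Schwarz with $\|\hhx\|_2\le\sqrt{p_1p_2}$, and bound the residual $\|(\identity-\PH)\DWs\|_2$ by $\|\DWs-\HHt^T\bm{a}\|_2$. The only cosmetic difference is that you invoke the distance-to-subspace characterization of the orthogonal complement projection directly, whereas the paper reaches the same bound by inserting and removing $\HHt^T\bm{a}$ and using $\PH\HHt^T=\HHt^T$.
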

\begin{proof}
Define $\mathbf{P}\defeq \HHt^T(\HHt\HHt^T)^{-1}\HHt$. It is easy to verify that $\mathbf{P}^2=\mathbf{P}=\mathbf{P}^T$, so $\mathbf{P}$ is an orthogonal projection onto the space spanned by the rows of $\HHt$. By Lemma~\ref{le.pseudo_is_linear} and Eq.~\eqref{eq.def_minl2_solution}, when $\esmall=\bm{0}$ and the ground-truth function is $\pseudoGT$, we have $\F(\XX)=\HHt \DWs$ and 
\begin{align*}
    \fl(\xsmall)=\hhx\HHt^T(\HHt\HHt^T)^{-1}\HHt\DWs=\hhx\mathbf{P}\DWs.
\end{align*}
Thus, by Lemma~\ref{le.pseudo_is_linear}, we have
\begin{align}\label{eq.temp_040205}
    |\pseudoGT(\xsmall)-\fl(\xsmall)|=|\hhx (\mathbf{P}-\mathbf{I})\DWs|.
\end{align}
Because $\PH=\HHt^T(\HHt\HHt^T)^{-1}\HHt$, we have
\begin{align}\label{eq.temp_033001}
    \PH\HHt^T=\HHt^T(\HHt\HHt^T)^{-1}\HHt\HHt^T=\HHt^T.
\end{align}
We then have
\begin{align}
    \|(\PH-\identity)\DWs\|_2&=\|\PH\DWs-\DWs\|_2\nonumber\\
    &=\|\PH(\HHt^T\bm{a}+\DWs-\HHt^T\bm{a})-\DWs\|_2\nonumber\\
    &=\|\PH\HHt^T\bm{a}+\PH(\DWs-\HHt^T\bm{a})-\DWs\|_2\nonumber\\
    &=\|\HHt^T\bm{a}+\PH(\DWs-\HHt^T\bm{a})-\DWs\|_2\text{ (by Eq.~\eqref{eq.temp_033001})}\nonumber\\
    &=\|(\PH-\identity)(\DWs-\HHt^T\bm{a})\|_2\nonumber\\
    &\leq \|\DWs-\HHt^T\bm{a}\|_2\text{ (because $\PH$ is an orthogonal projection)}.\label{eq.temp_082901}
\end{align}
Therefore, we have
\begin{align*}
    \left|\hhx(\PH-\identity)\DWs\right|=&\left\|\hhx(\PH-\identity)\DWs\right\|_2\\
    \leq &\|\hhx\|_2\cdot\|(\PH-\identity)\DWs\|_2\text{ (by Lemma~\ref{le.matrix_norm})}\\
    \leq& \sqrt{p_1p_2}\|\DWs-\HHt^T\bm{a}\|_2\text{ (by Lemma~\ref{le.bound_hx} and Eq.~\eqref{eq.temp_082901})}.
\end{align*}
By Eq.~\eqref{eq.temp_040205}, the result of this lemma thus follows.
\end{proof}

Now we are ready to prove Proposition~\ref{prop.pseudoGT}.

Define $\mathbf{K}_i\in\mathds{R}^{(p_1p_2)\times 1}$ (the same shape as $\Wzero$) as
\begin{align}\label{eq.temp_040101}
    \mathbf{K}_i[k]\defeq \hXiRF\indicator{(\hXiRF)^T\Wzero[k]>0}\frac{g(\XX_i)}{p_1 p_2},\ i\in\{1,2,\cdots,n\},\ k\in\{1,2,\cdots,p_2\}.
\end{align}
It is obvious that $\mathbf{K}_1,\mathbf{K}_2,\cdots,\mathbf{K}_n$ are \emph{i.i.d.} with respect to the randomness of $\XX$.
By Eq.~\eqref{eq.def_DWs}, for all $k=1,2,\cdots, p_2$, we have
\begin{align}\label{eq.temp_040203}
    \expectation_{\XX_i}\left[\mathbf{K}_i[k]\right]=\DWs[k].
\end{align}
Further, note that
\begin{align*}
    \|\mathbf{K}_i[k]\|_2\leq & \frac{\|g\|_{\infty}}{p_1p_2}\|\hXiRF\|_2\text{ (by Lemma~\ref{le.matrix_norm} and Eq.~\eqref{eq.temp_040101})}\\
    \leq & \frac{\|g\|_{\infty}}{\sqrt{p_1}p_2}\text{ (by Lemma~\ref{le.bound_hx})}.
\end{align*}
Thus, we have
\begin{align*}
    \|\mathbf{K}_i\|_2=\sqrt{\sum_{k=1}^{p_2}\|\mathbf{K}_i[k]\|_2^2}\leq \frac{\|g\|_{\infty}}{\sqrt{p_1p_2}},
\end{align*}
i.e.,
\begin{align}\label{eq.temp_040201}
    \sqrt{p_1p_2}\|\mathbf{K}_i\|_2\leq \|g\|_{\infty}.
\end{align}

We now construct the vector $\bm{a}\in\mathds{R}^n$ that we will use in Lemma~\ref{le.hPIDW}. Its $i$-th element is $\bm{a}_i=\frac{g(\XX_i)}{np_1p_2}$, $i=1,2,\cdots,n$. Then, for all $k\in\{1,2,\cdots, p_2\}$, we have
\begin{align*}
    (\HHt^T\bm{a})[k]=&\sum_{i=1}^n\HHt_i^T[k]\bm{a}_i\\
    =&\sum_{i=1}^n\hXiRF\indicator{(\hXiRF)^T\Wzero[k]>0}\frac{g(\XX_i)}{n p_1 p_2}\text{ (by Eq.~\eqref{eq.def_h3})}\\
    =&\frac{1}{n}\sum_{i=1}^n \mathbf{K}_i[k]\text{ (by Eq.~\eqref{eq.temp_040101})},
\end{align*}
i.e.,
\begin{align}\label{eq.temp_040202}
    \HHt^T\bm{a}=\frac{1}{n}\sum_{i=1}^n \mathbf{K}_i.
\end{align}

Thus, by Lemma~\ref{le.chebyshev} (with $X_i=\sqrt{p_1p_2}\mathbf{K}_i$, $U=\|g\|_{\infty}, m=q$), we have
\begin{align*}
    \prob_{\XX}\left\{\sqrt{p_1p_2}\left\|\left(\frac{1}{n}\sum_{i=1}^n \mathbf{K}_i\right)-\expectation_{\XX}\mathbf{K}_1\right\|_2\geq \frac{q\|g\|_{\infty}}{\sqrt{n}}\right\}\leq \frac{1}{q^2}.
\end{align*}
Further, by Eq.~\eqref{eq.temp_040202} and Eq.~\eqref{eq.temp_040203}, we have
\begin{align*}
    \prob_{\XX}\left\{\sqrt{p_1p_2}\left\|\HHt^T\bm{a}-\DWs\right\|_2\geq \frac{q\|g\|_{\infty}}{\sqrt{n}}\right\}\leq \frac{1}{q^2}.
\end{align*}
By Lemma~\ref{le.hPIDW}, we thus have
\begin{align*}
    \prob_{\XX}\left\{|\pseudoGT(\xsmall)-\fl(\xsmall)|\geq \frac{q\|g\|_{\infty}}{\sqrt{n}}\right\}\leq \frac{1}{q^2}.
\end{align*}
The result of Proposition~\ref{prop.pseudoGT} thus follows.


\section{Proof of Proposition~\ref{prop.min_eig} (Minimum Eigenvalue of \texorpdfstring{$\HHt\HHt^T$}{HHT})}\label{app.minEig}
Define
\begin{align*}
    &\RFnormij\defeq \left\|\hXiRF\right\|_2\cdot\left\|\hXjRF\right\|_2,\\
    &\RFangleij\defeq \arccos\left(\frac{(\hXiRF)^T\hXjRF}{\RFnorm_{i,j}}\right)\in \left[0,\ \frac{\pi}{2}\right],\\
    &\RFanglemin\defeq \min_{i\neq j}\RFangleij.
\end{align*}

(By Eq.~\eqref{eq.def_hxRF}, we know that every element of $\hXiRF$ and $\hXjRF$ are non-negative, and hence $\RFangleij\in[0,\ \frac{\pi}{2}]$.)

Define $\Hinfnormalized\in\mathds{R}^{n\times n}$ as
\begin{align}\label{eq.def_Hinf}
    \Hinfnormalized_{i,j}\defeq \frac{p_1}{2d}\cos(\RFangleij)\cdot\frac{\pi-\RFangleij}{2\pi}.
\end{align}

The following lemma (restated) is from the proof of Lemma~1 of \cite{satpathi2021dynamics}, which relates $\min\eig (\Hinfnormalized)$ to $\RFanglemin$. For reader's convenience, we also provide its proof in Appendix~\ref{app.proof_lemma_srikant}.
\begin{lemma}\label{le.Srikant}
\begin{align*}
   \min\eig (\Hinfnormalized) \geq \frac{1}{8\pi}\cdot \frac{p_1}{2d}\cdot \sqrt{\frac{\log(1/\cos\RFanglemin)}{\log(2n/\cos\RFanglemin)}}.
\end{align*}
\end{lemma}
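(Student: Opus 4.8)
The plan is to write $\Hinfnormalized$ as $\tfrac{p_1}{2d}$ times the matrix obtained by applying the $2$-layer kernel $\KTwo$ entrywise to a Gram matrix, expand $\KTwo$ as a Hadamard power series, discard all but the high-degree tail, and lower bound the smallest eigenvalue of the tail with the Gershgorin circle theorem. Set $\rho\defeq\cos\RFanglemin$. If $\RFanglemin=0$ then two normalized first-layer feature vectors coincide, two rows of $\Hinfnormalized$ are equal, $\min\eig(\Hinfnormalized)=0$, and the right-hand side is $0$ as well; if $\rho=0$ (all pairs orthogonal) then $\Hinfnormalized$ is diagonal and the bound is checked directly; so assume $\rho\in(0,1)$. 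Since each $\RFangleij\in[0,\tfrac{\pi}{2}]$ with $\RFangle_{i,i}=0$, Eq.~\eqref{eq.def_Hinf} and Eq.~\eqref{eq.def_KTwo} give $\Hinfnormalized_{i,j}=\tfrac{p_1}{2d}\KTwo(\cos\RFangleij)$, so it suffices to prove $\min\eig(M)\ge\tfrac{1}{8\pi}\sqrt{\log(1/\rho)/\log(2n/\rho)}$ for the matrix $M$ with $M_{i,j}\defeq\KTwo(\cos\RFangleij)$.

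First I would set $G_{i,j}\defeq\cos\RFangleij$; then $G=U^\top U$ for the matrix $U$ whose columns are the unit vectors $\hXiRF/\|\hXiRF\|_2$, so $G\succeq0$, $G_{i,i}=1$, and $0\le G_{i,j}\le\rho<1$ for $i\ne j$. By the Taylor expansion in Lemma~\ref{le.taylor_of_kernel}, $\KTwo(a)=\tfrac a4+\sum_{k\ge0}c_{2k+2}\,a^{2k+2}$ with $c_{2k+2}\defeq\tfrac{1}{2\pi}\binom{2k}{k}\tfrac{1}{(2k+1)4^{k}}>0$, so, applied entrywise, $M=\tfrac14 G+\sum_{k\ge0}c_{2k+2}\,G^{\circ(2k+2)}$, where $G^{\circ m}$ is the $m$-fold Hadamard (entrywise) product of $G$. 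By the Schur product theorem every $G^{\circ m}$ is positive semidefinite. Hence, for any integer $m_0\ge1$, with $N\defeq\sum_{k:\,2k+2\ge m_0}c_{2k+2}\,G^{\circ(2k+2)}$ the difference $M-N=\tfrac14 G+\sum_{k:\,2k+2<m_0}c_{2k+2}\,G^{\circ(2k+2)}$ is a finite sum of positive semidefinite matrices, so $M\succeq N\succeq0$ and $\min\eig(M)\ge\min\eig(N)$.

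Next I would estimate $N$. Its diagonal entries are all equal to $D\defeq\sum_{k:\,2k+2\ge m_0}c_{2k+2}$, while each off-diagonal entry satisfies $0\le N_{i,j}=\sum_{k:\,2k+2\ge m_0}c_{2k+2}\,G_{i,j}^{2k+2}\le\rho^{m_0}D$ because $0\le G_{i,j}\le\rho<1$. The Gershgorin circle theorem then gives $\min\eig(N)\ge D\bigl(1-(n-1)\rho^{m_0}\bigr)$. Choose $m_0\defeq\lceil\log(2n)/\log(1/\rho)\rceil$: then $\rho^{m_0}\le 1/(2n)$, so $(n-1)\rho^{m_0}\le\tfrac12$ and $\min\eig(M)\ge\tfrac12 D$; moreover $m_0\le 1+\log(2n)/\log(1/\rho)=\log(2n/\rho)/\log(1/\rho)$, hence $m_0^{-1/2}\ge\sqrt{\log(1/\rho)/\log(2n/\rho)}$. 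Thus the lemma reduces to the scalar estimate $D\ge\tfrac{1}{4\pi}\,m_0^{-1/2}$.

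For that tail estimate, Wallis' inequality (Lemma~\ref{le.wallis_inequality}) gives $\binom{2k}{k}4^{-k}=\tfrac{(2k-1)!!}{(2k)!!}\ge(\pi(k+1))^{-1/2}$, whence $c_{2k+2}\ge c_0\,(2k+2)^{-3/2}$ for an explicit constant (one checks $c_0=(\sqrt2\,\pi^{3/2})^{-1}$ works), and comparing the sum of $(2k+2)^{-3/2}$ over the even indices $2k+2\ge m_0$ with $\int_{m_0}^\infty\tfrac12 t^{-3/2}\,dt=m_0^{-1/2}$ yields $D\ge \tfrac{c_0}{\sqrt2}m_0^{-1/2}=\tfrac{1}{2\pi^{3/2}}m_0^{-1/2}\ge\tfrac{1}{4\pi}m_0^{-1/2}$, after a separate check for the few small values of $m_0$ (where $D=\KTwo(1)-\tfrac14=\tfrac14$). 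Combining the displays gives $\min\eig(M)\ge\tfrac12 D\ge\tfrac{1}{8\pi}\sqrt{\log(1/\rho)/\log(2n/\rho)}$, and multiplying by $\tfrac{p_1}{2d}$ completes the proof. The hard part is exactly this last step: the choice of $m_0$ must balance the Gershgorin spreading term $(n-1)\rho^{m_0}$ against the diagonal mass $D$, and because the target constant $\tfrac{1}{8\pi}$ is explicit, the bound $D\ge\tfrac{1}{4\pi}m_0^{-1/2}$ has to be pushed through with honest constants (Wallis for the central binomial coefficients, an integral comparison that accounts for the spacing of even indices, and a direct check for small $m_0$) rather than with soft asymptotics; the remaining ingredients — the Schur product theorem, Gershgorin's theorem, and the kernel Taylor expansion — are routine.
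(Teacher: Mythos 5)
Your proposal is correct and follows essentially the same route as the paper's proof: Taylor-expand $\KTwo$, use positive semidefiniteness of the entrywise powers of the Gram matrix to drop all but a high-degree tail, apply Gershgorin to that tail, pick the truncation threshold so the off-diagonal spread is at most half the diagonal, and estimate the remaining coefficient mass via Wallis' inequality plus an integral comparison. The only differences are presentational: where you invoke the Schur product theorem for $G^{\circ m}\succeq 0$, the paper constructs it explicitly via the Kronecker-power identity $(\bm{a}_i^T\bm{a}_j)^m=(\bm{a}_i^{\otimes m})^T(\bm{a}_j^{\otimes m})$ (which is precisely the standard proof of Schur's theorem for Gram matrices); and where you sum the tail into one matrix $N$ and apply Gershgorin once, the paper applies Gershgorin to each $\mathbf{B}^{(2k+2)}$ separately and then uses superadditivity of $\min\eig$ under nonnegative combinations. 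These variations are cosmetic, and the constants come out the same.
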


We then focus on estimating $\RFanglemin$.
\begin{lemma}\label{le.min_angle}
Recall the definition of $\Cdq$ in Eq.~\eqref{eq.def_Cndq}.
For any $q>0$, when $p_1$ is sufficient large such that
\begin{align}\label{eq.temp_041202}
    \frac{10dnq\sqrt{2d}}{\sqrt{p_1}}\leq \Cdq .
\end{align}
we have
\begin{align*}
    \prob_{\Vzero,\XX}\left\{\cos\RFanglemin\geq 1-\Cdq  \right\}\leq \frac{4}{q^2}.
\end{align*}
\end{lemma}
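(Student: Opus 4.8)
The plan is to show that, with high probability, no two of the first-hidden-layer feature vectors $\hXiRF$ and $\hXjRF$ are nearly parallel, which is equivalent to lower-bounding $\RFanglemin$ away from $0$. The strategy proceeds in three steps. First, I would express $\cos\RFangleij = \frac{(\hXiRF)^T\hXjRF}{\|\hXiRF\|_2\|\hXjRF\|_2}$ in terms of the random-feature kernel. By Lemma~\ref{le.RF_kernel}, $\expectation_{\Vzero}\left[\frac{1}{p_1}(\hXiRF)^T\hXjRF\right] = \KRF(\XX_i^T\XX_j)$ (using the definition of $\KRF$ in Eq.~\eqref{eq.def_KRF}), and in particular $\expectation\left[\frac{1}{p_1}\|\hXiRF\|_2^2\right]=\KRF(1)=\frac{1}{2d}$. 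Then $\cos\RFangleij$ concentrates around $2d\cdot\KRF(\XX_i^T\XX_j)$. Second, I would invoke Lemma~\ref{le.max_RF_converge} to control the deviation $\max_{\xsmall,\bm{z}}\left|\frac{1}{p_1}(\hxRF)^T\hzRF-\KRF(\xsmall^T\bm{z})\right|\le\Qpd$ with probability at least $1-\frac{d^2}{(p_1+1)e^{d+1}}$; combined with the bound $\|\hXiRF\|_2\le\sqrt{p_1}$ (Lemma~\ref{le.bound_hx}) and a lower bound on $\|\hXiRF\|_2$ obtained from the same concentration, this converts the additive kernel error into a multiplicative error on $\cos\RFangleij$. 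Since $\Qpd\to 0$, this piece is $O(\sqrt{\log p_1/p_1})$, much smaller than $\Cdq$ under the hypothesis Eq.~\eqref{eq.temp_041202} (note $\Cdq$ only depends on $n,d,q$), so it can be absorbed. Actually, looking more carefully, the cleaner route is to bound $\frac{1}{p_1}(\hXiRF)^T\hXjRF$ and $\frac{1}{p_1}\|\hXiRF\|_2^2$ directly by Chebyshev (Lemma~\ref{le.chebyshev}) over the $p_1$ i.i.d.\ terms indexed by $j=1,\dots,p_1$, since each summand is bounded by $1$; this gives a deviation of order $\frac{q}{\sqrt{p_1}}$ and fits the $\frac{10dnq\sqrt{2d}}{\sqrt{p_1}}$ form appearing in Eq.~\eqref{eq.temp_041202} exactly, after a union bound over the $\binom{n}{2}+n = O(n^2)$ relevant pairs.

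The third and central step is to show that the \emph{expected} kernel value $2d\cdot\KRF(\XX_i^T\XX_j)$ is itself bounded away from $1$ with high probability over the random data $\XX$. Here we need a lower bound on the angle between distinct inputs $\XX_i,\XX_j$, which are i.i.d.\ uniform on $\sd$ by Assumption~\ref{as.uniform}. Concretely, $\XX_i^T\XX_j$ has small magnitude with high probability: $\prob\{|\XX_i^T\XX_j|\ge t\}$ decays like the relative area of a spherical cap, controllable via Lemma~\ref{le.original_cap} and Lemma~\ref{le.estimate_Ix}, giving a tail of order $(1-t^2)^{(d-1)/2}$ up to the Beta-function factor in Lemma~\ref{le.bound_B}. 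Choosing $t$ appropriately (on the order of a small constant depending on $d$, which is where the exponent $-\frac{4}{d-1}$ and the factor $(qn)^{-4/(d-1)}$ in the definition of $\Cdq$ in Eq.~\eqref{eq.def_Cndq} come from — the $qn$ accounts for the union bound over $O(n^2)$ pairs and the Chebyshev failure probability budget $4/q^2$), and using that $2d\cdot\KRF(a)=\frac{\sqrt{1-a^2}+a(\pi-\arccos a)}{\pi}$ is a strictly increasing function of $a$ on $[-1,1]$ with value $1$ only at $a=1$, a bound $|\XX_i^T\XX_j|\le 1-\epsilon$ for all $i\ne j$ translates into $2d\cdot\KRF(\XX_i^T\XX_j)\le 1-c(\epsilon,d)$. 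One must also handle the regime $a$ close to $-1$: there $2d\cdot\KRF(a)$ is close to $0$, which is even better, so only the $a$-near-$1$ side is binding. Quantifying $c(\epsilon,d)$ requires a Taylor estimate of $2d\cdot\KRF$ near $a=1$ (its derivative there is finite and positive), and I would use the expansion in Lemma~\ref{le.taylor_of_kernel} or a direct computation to get $1-2d\cdot\KRF(1-\epsilon)=\Theta(\epsilon)$.

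The main obstacle I anticipate is the bookkeeping to make all the constants line up with the precise definition of $\Cdq$ in Eq.~\eqref{eq.def_Cndq} — in particular, correctly propagating the $\min\{1/2,\ \cdots\}$ structure and the dimension-dependent exponents through (i) the spherical-cap tail bound for $\XX_i^T\XX_j$, (ii) the linearization of $\KRF$ near $1$, and (iii) the Chebyshev concentration of $\frac{1}{p_1}(\hXiRF)^T\hXjRF$, all while distributing the failure probability across the three bad events so that the total is at most $\frac{4}{q^2}$ (the $\frac{d^2}{(p_1+1)e^{d+1}}$ term from Lemma~\ref{le.max_RF_converge} is, in this Chebyshev-only route, not needed — it reappears later in Proposition~\ref{prop.min_eig}). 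The geometric and analytic content is routine given the lemmas already established; the difficulty is purely in tuning the thresholds so the stated inequality $\cos\RFanglemin\ge 1-\Cdq$ holds with the claimed probability.
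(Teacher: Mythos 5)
Your proposal tracks the paper's own proof closely. The paper (Appendix~\ref{app.le_min_angle}) does exactly what you propose in your self-corrected route: Chebyshev on $\frac{1}{p_1}(\hXiRF)^T\hXjRF$ (Step~1) and on $\|\hXiRF\|_2\|\hXjRF\|_2$ via Lemma~\ref{le.single_RFnormij} (Step~2), a spherical-cap tail bound on $\sin^2\theta_{i,j}$ via Lemmas~\ref{le.original_cap}, \ref{le.estimate_Ix}, \ref{le.bound_B}, a deterministic lower bound $1 - 2d\cdot\KRF(\cos\theta)\ge \tfrac{\pi-1}{2\pi}\sin^2\theta$ (Lemma~\ref{le.temp_041001}) playing the role of your ``linearization of $\KRF$ near $1$,'' and a union bound over $n(n-1)$ pairs with $m=qn$ in Chebyshev, giving $\tfrac{4}{q^2}$. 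Your decision to abandon Lemma~\ref{le.max_RF_converge} in favor of plain Chebyshev is the right call and matches the paper exactly --- the $\tfrac{d^2}{(p_1+1)e^{d+1}}$ term does not appear in this lemma.

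Two small points of precision. First, you write that Chebyshev gives a deviation of order $\tfrac{q}{\sqrt{p_1}}$; the paper's deviation is $\tfrac{qn}{\sqrt{p_1}}$ (since $m=qn$ is used to absorb the $O(n^2)$ union bound), which you correctly note a sentence later, so this is only a local slip. Second, your phrase ``Taylor estimate of $2d\cdot\KRF$ near $a=1$'' underestimates what is needed: you need a \emph{global} lower bound on $1-2d\KRF(a)$ valid for all $a\in[-1,1]$, not just an asymptotic near $a=1$, because the bound has to be applied uniformly before conditioning on the data angles. The paper's Lemma~\ref{le.temp_041001} provides exactly such a global inequality in terms of $\sin^2\theta$ (which reduces to $\Theta(1-a)$ only as $a\to 1$, consistent with your local claim). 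Your observation that the $a$-near-$-1$ regime is harmless is correct, though the paper does not exploit it --- the spherical-cap probability budget covers both caps of the two-sided tail $\sin^2\theta_{i,j}\le\alpha$, with the $\theta$-near-$\pi$ cap being automatically benign because Lemma~\ref{le.temp_041001} holds trivially there. This is a minor stylistic difference, not a gap. The rest --- the conversion of additive errors on numerator and denominator to a multiplicative error on the ratio (Lemmas~\ref{le.frac_converge}, \ref{le.temp_082501}), and the careful constant matching against $\Cdq$ --- is the bookkeeping you flagged, and it goes through as you anticipate.
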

The proof of Lemma~\ref{le.min_angle} is in Appendix~\ref{app.le_min_angle}. Intuitively, when $n$ becomes larger, some $\Xii$'s (together with $\hXiRF$'s) will get closer to each other, and thus $\RFanglemin$ will get closer to zero. Such intuition is captured by Lemma~\ref{le.min_angle} since $\Cdq$ is monotone decreasing with respect to $n$.

The above lemmas study the minimum eigenvalue of $\Hinfnormalized$. We need to relate it to the minimum eigenvalue of $\HHt\HHt^T$, which is achieved by the following lemma.

\begin{lemma}\label{le.diff_H_Hinf}
For any $q>0$,
\begin{align*}
    \prob_{\XX,\Vzero,\Wzero}\left\{\left|\frac{1}{p_2}\min\eig(\HHt\HHt^T)-\min\eig(\Hinfnormalized)\right|\geq \qnTwo\right\} \leq \frac{3}{q^2}.
\end{align*}
\end{lemma}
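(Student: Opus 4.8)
The plan is to reduce the eigenvalue comparison to an entrywise comparison via Lemma~\ref{le.small_diff_eig}, and then bound each entry by two separate rounds of concentration: one over the $p_2$ middle-layer weights $\Wzero[k]$ (which produces the $1/\sqrt{p_2}$ term) and one over the $p_1$ bottom-layer weights $\Vzero[l]$ (which produces the remaining two terms).

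First I would observe that both $\frac{1}{p_2}\HHt\HHt^T$ and $\Hinfnormalized$ are $n\times n$ positive semi-definite matrices: the former is a Gram matrix, and the latter equals $\frac{p_1}{2d}\bigl[\KTwo(\hat{\bm h}_i^T\hat{\bm h}_j)\bigr]_{i,j}$ with $\hat{\bm h}_i=\hXiRF/\|\hXiRF\|_2$, hence PSD since $\KTwo$ is an NTK kernel. By Lemma~\ref{le.small_diff_eig} it therefore suffices to control $\max_{i,j}\bigl|\frac{1}{p_2}(\HHt\HHt^T)_{i,j}-\Hinfnormalized_{i,j}\bigr|$ and multiply by $n$. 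Using Eq.~\eqref{eq.def_h3},
\[
\frac{1}{p_2}(\HHt\HHt^T)_{i,j}=(\hXiRF)^T\hXjRF\cdot\frac{1}{p_2}\sum_{k=1}^{p_2}\indicator{(\hXiRF)^T\Wzero[k]>0,\ (\hXjRF)^T\Wzero[k]>0},
\]
and, conditioning on $(\XX,\Vzero)$, Lemma~\ref{le.NTK2_kernel} identifies $\expectation_{\Wzero}$ of the averaged indicator as $\tfrac{\pi-\RFangleij}{2\pi}$, so the conditional mean of this entry is $\RFnormij\cos\RFangleij\cdot\tfrac{\pi-\RFangleij}{2\pi}$, versus $\Hinfnormalized_{i,j}=\tfrac{p_1}{2d}\cos\RFangleij\cdot\tfrac{\pi-\RFangleij}{2\pi}$.

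I would then split the entrywise error as ``$\Wzero$ part'' plus ``$\Vzero$ part''. For the $\Wzero$ part each summand lies in $\{0,1\}$, so Chebyshev (Lemma~\ref{le.chebyshev}) with deviation parameter of order $nq$, combined with a union bound over the at most $n^2$ pairs, gives $\bigl|\tfrac{1}{p_2}\sum_k\indicator{\cdot}-\tfrac{\pi-\RFangleij}{2\pi}\bigr|\le \tfrac{nq}{\sqrt{p_2}}$ for all $i,j$ simultaneously with probability at least $1-1/q^2$; multiplying by $(\hXiRF)^T\hXjRF\le p_1$ (Lemma~\ref{le.bound_hx}) bounds this contribution by $\tfrac{nqp_1}{\sqrt{p_2}}$ per entry, which is $\tfrac{qn^2p_1}{\sqrt{p_2}}$ after the factor $n$ --- the last summand of $\qnTwo$. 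For the $\Vzero$ part, the difference between the conditional mean and $\Hinfnormalized_{i,j}$ equals $\bigl(\RFnormij-\tfrac{p_1}{2d}\bigr)\cos\RFangleij\tfrac{\pi-\RFangleij}{2\pi}$, of magnitude at most $\tfrac12\bigl|\RFnormij-\tfrac{p_1}{2d}\bigr|$. Since $\RFnormij=\|\hXiRF\|_2\|\hXjRF\|_2$ and $\|\hXiRF\|_2^2=\sum_{l=1}^{p_1}(\XX_i^T\Vzero[l])^2\indicator{\XX_i^T\Vzero[l]>0}$ is a sum of $p_1$ i.i.d.\ $[0,1]$-variables with mean $\expectation[(\XX_i^T\bm v)^2\indicator{\XX_i^T\bm v>0}]=\tfrac{1}{2d}$ (using $\expectation[(\XX_i^T\bm v)^2]=1/d$ and the symmetry $\bm v\mapsto-\bm v$), Chebyshev plus a union bound over $i$ makes $\bigl|\|\hXiRF\|_2^2-\tfrac{p_1}{2d}\bigr|$ small for every $i$; converting this to a bound on $\bigl|\|\hXiRF\|_2\|\hXjRF\|_2-\tfrac{p_1}{2d}\bigr|$ via $|\sqrt a-\sqrt b|\le |a-b|/\sqrt b$ and expanding the product of the two perturbed norms yields one term linear and one term quadratic in the Chebyshev parameter, which after the factor $n$ become the $qn^2\sqrt{2p_1d}$ and $q^2n^3d$ summands of $\qnTwo$. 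Tuning the Chebyshev parameters so that each concentration step fails with probability at most $1/q^2$, the union of the (at most three) bad events has probability at most $3/q^2$.

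The main obstacle I anticipate is bookkeeping rather than conceptual: choosing the Chebyshev deviation parameters as the correct powers of $n$ and the union-bound cardinalities so the failure probability collapses to exactly $3/q^2$ while the per-entry deviation, scaled by $n$, lands on the stated $\qnTwo$. Within that, the delicate point is the passage from concentration of the squared norm $\|\hXiRF\|_2^2$ to concentration of the product $\|\hXiRF\|_2\|\hXjRF\|_2$, whose quadratic error term is the origin of the $q^2n^3d$ summand; everything else (the entrywise reduction, the conditional-expectation identity, and the indicator concentration) is routine given Lemmas~\ref{le.small_diff_eig}, \ref{le.NTK2_kernel}, \ref{le.chebyshev} and \ref{le.bound_hx}.
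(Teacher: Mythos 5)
Your proposal is correct and follows essentially the same architecture as the paper's proof. The paper factors the argument into Lemma~\ref{le.diff_1} (concentration over $\Wzero$, failure probability $1/q^2$, deviation $qnp_1/\sqrt{p_2}$), Lemma~\ref{le.diff_2} (concentration over $\Vzero$ via Lemma~\ref{le.single_RFnormij}, failure probability $2/q^2$, deviation $qn\sqrt{2p_1d}+q^2n^2d$), a triangle inequality on $\frac1{p_2}(\HHt\HHt^T)_{i,j}-\Hinf_{i,j}$ and $\Hinf_{i,j}-\Hinfnormalized_{i,j}$, and finally Lemma~\ref{le.small_diff_eig} with $k=n$; your split into a ``$\Wzero$ part'' and a ``$\Vzero$ part'' via the conditional expectation over $\Wzero$ is exactly the introduction of the intermediate matrix $\Hinf$, your identification of the quadratic cross-term in $\|\hXiRF\|_2\|\hXjRF\|_2$ as the source of the $q^2n^3d$ summand matches Lemma~\ref{le.single_RFnormij}, and your $3/q^2$ accounting agrees. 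The only bookkeeping detail worth double-checking is that the paper obtains $2/q^2$ for the $\Vzero$ part by applying Lemma~\ref{le.single_RFnormij} with $m=qn$ (per-pair failure $2/(qn)^2$) and union-bounding over the $n^2$ ordered pairs; a union bound over only the $n$ norms $\|\hXiRF\|_2$ (as you sketch) also works and is actually tighter, but then the Chebyshev parameter would need to be retuned to land on the exact constants in $\qnTwo$.
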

The proof of Lemma~\ref{le.diff_H_Hinf} is in Appendix~\ref{app.proof_le_diff_H_Hinf}. From the derivation in Appendix~\ref{app.learnableSet_derivation}, we know that each element of $\frac{\HHt\HHt^T}{p_1p_2}$ will approach the corresponding element of $\frac{1}{p_1}\Hinfnormalized$ as $p_1$ and $p_2$ get larger. Therefore, it is natural to expect that the minimum eigenvalue of those two matrices will also be closer to each other when $p_1$ and $p_2$ becomes larger, which is captured by Lemma~\ref{le.diff_H_Hinf}.

\begin{lemma}\label{le.log_estimate}
For any $a\in(0,\ 1]$, we have $\log \frac{1}{a}\geq 1-a$.
\end{lemma}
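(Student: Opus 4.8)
The plan is to reduce the claim to the classical inequality $\log x \le x - 1$ for $x > 0$ and verify the latter by a one-line monotonicity argument. Concretely, I would set $a \in (0,1]$ and rewrite the desired inequality $\log\frac{1}{a} \ge 1 - a$ in the equivalent form $\log a \le a - 1$ (using $\log\frac{1}{a} = -\log a$).

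To establish $\log a \le a - 1$ on $(0,1]$, I would introduce the auxiliary function $f(a) \defeq \log a - a + 1$ and compute $f(1) = 0$ and $f'(a) = \frac{1}{a} - 1 = \frac{1-a}{a}$. Since $f'(a) \ge 0$ for all $a \in (0,1]$, the function $f$ is nondecreasing on $(0,1]$, so $f(a) \le f(1) = 0$, i.e.\ $\log a \le a - 1$, which is exactly what we need. (An equally short alternative is to substitute $u = 1/a \ge 1$ and show $g(u) \defeq \log u - 1 + 1/u \ge 0$ via $g(1) = 0$ and $g'(u) = (u-1)/u^2 \ge 0$; either route works.)

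There is no real obstacle here: the statement is an elementary calculus fact, and the only thing to be careful about is the direction of the inequality after taking reciprocals/negating, and that the sign of $f'$ is correct precisely on the interval $(0,1]$ claimed. The write-up should be just a few lines.
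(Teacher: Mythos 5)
Your proof is correct and takes essentially the same approach as the paper: the paper defines $h(a) = \log(1/a) - 1 + a$, shows $h'(a) \le 0$ so $h$ is decreasing on $(0,1]$, and concludes $h(a) \ge h(1) = 0$; your $f(a) = \log a - a + 1$ is simply $-h(a)$, so the two arguments coincide up to a sign flip.
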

\begin{proof}
Consider the function $h(a)\defeq \log(1/a)-1+a$. We have $\frac{\partial h(a)}{\partial a}=-\frac{1}{a}+1\leq 0$. Thus, we know $h(a)$ is monotone decreasing in $a\in(0,\ 1]$. Thus, we have $h(a)\geq h(1)=0$. The result of this lemma thus follows.
\end{proof}

Now we are ready to prove Proposition~\ref{prop.min_eig}.
\begin{proof}[Proof of Proposition~\ref{prop.min_eig}]
We define three events
\begin{align*}
    &\mathcal{J}_1\defeq\left\{\cos\RFanglemin\geq 1-\Cdq \right\},\\
    &\mathcal{J}_2\defeq \left\{\left|\frac{1}{p_2}\min\eig(\HHt\HHt^T)-\min\eig(\Hinfnormalized)\right|\geq \qnTwo\right\},\\
    &\mathcal{J}_3\defeq \left\{\frac{1}{p_2}\min\eig(\HHt\HHt^T)\leq p_1J(n,p_1,p_2,d,q)\triangleq\frac{p_1}{16\pi d}\sqrt{\frac{\Cdq }{\log(4n)}}\right.\\
     &\qquad\qquad \left.-\left(\qnTwo\right)\right\}.
\end{align*}

\textbf{Step 1: prove $\mathcal{J}_1\cup \mathcal{J}_2\supseteq \mathcal{J}_3$.}

In order to prove $\mathcal{J}_1\cup \mathcal{J}_2\supseteq \mathcal{J}_3$, it is equivalent to prove $\mathcal{J}_1^c\cap \mathcal{J}_2^c\subseteq \mathcal{J}_3^c$. To that end, suppose $\mathcal{J}_1^c$ and $\mathcal{J}_2^c$ happen. Thus, we have
\begin{align}\label{eq.temp_041401}
    \log(1/\cos\RFanglemin)\geq & 1-\cos\RFanglemin\text{ (by Lemma~\ref{le.log_estimate})}\nonumber\\
    \geq &\Cdq \quad \text{ (by the event $\mathcal{J}_1^c$)}.
\end{align}
Thus, we have
\begin{align*}
    &\min\eig(\Hinfnormalized)\\
    \geq &\frac{1}{8\pi}\cdot \frac{p_1}{2d}\sqrt{\frac{\log(1/\cos\RFanglemin)}{\log(2n/\cos\RFanglemin)}}\text{ (by Lemma~\ref{le.Srikant})}\\
    =&\frac{p_1}{16\pi d}\sqrt{\frac{\log(1/\cos\RFanglemin)}{\log(2n)+\log(1/\cos \RFanglemin)}}\\
    \geq& \frac{p_1}{16\pi d}\sqrt{\frac{\Cdq }{\log(2n)+\Cdq }}\\
    &\text{ (by Eq.~\eqref{eq.temp_041401} and $\frac{a}{\log(2n)+a}$ is monotone increasing with respect to $a$)}\\
    \geq &\frac{p_1}{16\pi d}\sqrt{\frac{\Cdq }{\log(4n)}}\text{ (since $\log(2)\approx 0.7$, $\Cdq \leq\frac{\pi-1}{4\pi}\cdot \frac{1}{2}\leq \frac{1}{8}\leq \log 2$)}.
\end{align*}
Thus, we have
\begin{align*}
    \frac{1}{p_2}\min\eig(\HHt\HHt^T)\geq & \min\eig(\Hinfnormalized)-\left|\frac{1}{p_2}\min\eig(\HHt\HHt^T)-\min\eig(\Hinfnormalized)\right|\\
    &\text{ (by the triangle inequality)}\\
    > &\frac{p_1}{16\pi d}\sqrt{\frac{\Cdq }{\log(4n)}}-\left(\qnTwo\right)\text{ (by the event $\mathcal{J}_2^c$)}\\
    =&p_1J(n,p_1,p_2,d,q)\quad \text{ (by Eq.~\eqref{eq.def_J})},
\end{align*}
i.e., $\mathcal{J}_3^c$ must then occur. Thus, we have shown that $\mathcal{J}_1^c\cap \mathcal{J}_2^c\subseteq \mathcal{J}_3^c$, which implies that $\mathcal{J}_1\cup \mathcal{J}_2\supseteq \mathcal{J}_3$.

\textbf{Step 2: estimate $\mathcal{J}_3$}

We have
\begin{align*}
    \prob_{\XX,\Vzero,\Wzero}[\mathcal{J}_3]\leq & \prob_{\XX,\Vzero,\Wzero}[\mathcal{J}_1\cup \mathcal{J}_2]\text{ (because $\mathcal{J}_1\cup \mathcal{J}_2\supseteq \mathcal{J}_3$)}\\
    \leq & \prob_{\XX,\Vzero,\Wzero}[\mathcal{J}_1]+\prob_{\XX,\Vzero,\Wzero}[\mathcal{J}_2]\text{ (by the union bound)}\\
    =&\prob_{\XX,\Vzero}[\mathcal{J}_1]+\prob_{\XX,\Vzero,\Wzero}[\mathcal{J}_2]\text{ (as $\mathcal{J}_1$ is independent of $\Wzero$)}\\
    \leq & \frac{7}{q^2}\text{ (by Lemma~\ref{le.min_angle} and Lemma~\ref{le.diff_H_Hinf})}. 
\end{align*}
The result of Proposition~\ref{prop.min_eig} thus follows.
\end{proof}

In the rest of this section, we prove  Lemma~\ref{le.Srikant}, Lemma~\ref{le.min_angle}, and Lemma~\ref{le.diff_H_Hinf}.
\subsection{Proof of Lemma~\ref{le.Srikant}}\label{app.proof_lemma_srikant}
\begin{proof}
For simplicity of notation, we define $\bm{a}_i\in \mathds{R}^{p_1}$ as
\begin{align*}
    \bm{a}_i\defeq \frac{\hXiRF}{\|\hXiRF\|_2}\text{ for all }i=1,2,\cdots, n.
\end{align*}
Let $\bm{a}_i^{\otimes k}\in \mathds{R}^{p_1 k}$ (a column vector with $p_1 k$ elements) denote the $k$-time Kronecker product of the vector $\bm{a}_i$ with itself. We define
\begin{align*}
    &\mathbf{A}\defeq [\bm{a}_1\ \bm{a}_2\ \cdots\ \bm{a}_n]\in \mathds{R}^{p_1\times n},\\
    &\mathbf{A}^{(k)}\defeq \left[\bm{a}_1^{\otimes k}\ \bm{a}_2^{\otimes k}\ \cdots\ \bm{a}_n^{\otimes k}\right]\in \mathds{R}^{(p_1 k)\times n},\\
    &\mathbf{B}^{(k)}\defeq \left(\mathbf{A}^{(k)}\right)^T\mathbf{A}^{(k)}.
\end{align*}
Thus, we have
\begin{align}\label{eq.temp_081801}
    \cos\RFangleij=\bm{a}_i^T\bm{a}_j.
\end{align}
By the definition of Kronecker product, we thus have\footnote{To help  readers understand the correctness of Eq.~\eqref{eq.temp_081701}, we give a toy example as follows. We have
\begin{align*}
    \left(\begin{bmatrix}a &b\end{bmatrix}\begin{bmatrix}c\\ d\end{bmatrix}\right)^2=(ac+bd)^2=a^2c^2+2abcd + b^2d^2.
\end{align*}
We also have
\begin{align*}
    \begin{bmatrix}a\\ b\end{bmatrix}^{\otimes 2}=\begin{bmatrix}aa\\
    ab\\
    ba\\
    bb\end{bmatrix},\  \begin{bmatrix}c\\ d\end{bmatrix}^{\otimes 2}=\begin{bmatrix}cc\\
    cd\\
    dc\\
    dd\end{bmatrix}\implies \left(\begin{bmatrix}a\\ b\end{bmatrix}^{\otimes 2}\right)^T\left(\begin{bmatrix}c\\ d\end{bmatrix}^{\otimes 2}\right)=a^2c^2+2abcd+b^2d^2.
\end{align*}
Thus, we have shown that $ \left(\begin{bmatrix}a &b\end{bmatrix}\begin{bmatrix}c\\ d\end{bmatrix}\right)^2= \left(\begin{bmatrix}a\\ b\end{bmatrix}^{\otimes 2}\right)^T\left(\begin{bmatrix}c\\ d\end{bmatrix}^{\otimes 2}\right)$.
}
\begin{align}\label{eq.temp_081701}
    \left(\bm{a}_i^T\bm{a}_j\right)^k=\left(\bm{a}_i^{\otimes k}\right)^T\left(\bm{a}_j^{\otimes k}\right).
\end{align}
Thus, by Lemma~\ref{le.taylor_of_kernel}, we have
\begin{align*}
    \cos(\RFangleij)\cdot\frac{\pi-\RFangleij}{2\pi}=&\frac{\cos\RFangleij}{4}+\frac{1}{2\pi}\sum_{k=0}^\infty\frac{(2k)!}{(k!)^2}\frac{4}{2k+1}\left(\frac{\cos\RFangleij}{2}\right)^{2k+2}\\
    =&\frac{\bm{a}_i^T\bm{a}_j}{4}+\frac{1}{2\pi}\sum_{k=0}^\infty\frac{(2k)!}{(k!)^2}\frac{4}{2k+1}\left(\frac{\bm{a}_i^T\bm{a}_j}{2}\right)^{2k+2}\\
    =&\frac{\bm{a}_i^T\bm{a}_j}{4}+\frac{1}{2\pi}\sum_{k=0}^\infty\frac{(2k)!}{(k!)^2}\frac{4}{2k+1}\left(\frac{1}{2}\right)^{2k+2}\left(\bm{a}_i^{\otimes 2k+2}\right)^T\bm{a}_j^{\otimes 2k+2}.
\end{align*}
Using Eq.~\eqref{eq.def_Hinf}, we then have
\begin{align*}
    \Hinfnormalized=\frac{p_1}{2d}\left(\frac{\mathbf{A}^T\mathbf{A}}{4}+\frac{1}{2\pi}\sum_{k=0}^\infty\frac{(2k)!}{(k!)^2}\frac{4}{2k+1}\left(\frac{1}{2}\right)^{2k+2}\mathbf{B}^{(2k+2)}\right).
\end{align*}
Thus, we have
\begin{align}
    \min\eig (\Hinfnormalized)=&\min_{\bm{u}:\ \|\bm{u}\|_2=1}\bm{u}^T\Hinfnormalized\bm{u}\nonumber\\
    \geq &\frac{p_1}{2d}\cdot \frac{1}{2\pi}\sum_{k=0}^\infty\frac{(2k)!}{(k!)^2}\frac{4}{2k+1}\left(\frac{1}{2}\right)^{2k+2}\min_{\bm{u}:\ \|\bm{u}\|_2=1}\bm{u}^T\mathbf{B}^{(2k+2)}\bm{u}\nonumber\\
    =& \frac{p_1}{2d}\cdot \frac{1}{2\pi}\sum_{k=0}^\infty\frac{(2k)!}{(k!)^2}\frac{4}{2k+1}\left(\frac{1}{2}\right)^{2k+2}\min\eig(\mathbf{B}^{(2k+2)}).\label{eq.temp_081802}
\end{align}
Notice that all diagonal elements of $\mathbf{B}^{(2k+2)}$ equal to $1$. Thus, by Gershgorin circle theorem \citep{bell1965gershgorin}, we have
\begin{align}\label{eq.temp_081803}
    \min\eig(\mathbf{B}^{(2k+2)})\geq 1 - \max_{i}\sum_{j\neq i}\mathbf{B}^{(2k+2)}_{ij},
\end{align}
where $\mathbf{B}^{(2k+2)}_{ij}$ denotes the $(i,j)$-th element of $\mathbf{B}^{(2k+2)}$. Notice that
\begin{align*}
    \max_{i}\sum_{j\neq i}\mathbf{B}^{(2k+2)}_{ij}=&\max_i \sum_{j\neq i}\left(\bm{a}_i^{\otimes 2k+2}\right)^T\left(\bm{a}_j^{\otimes 2k+2}\right)\\
    =&\max_i\sum_{j\neq i}\left(\cos\RFangleij\right)^{2k+2}\text{ (by Eq.~\eqref{eq.temp_081701} and Eq.~\eqref{eq.temp_081801})}\\
    \leq & (n-1)\left(\cos\RFanglemin\right)^{2k+2}.
\end{align*}
Note that, when $k\geq k^*\defeq  \frac{\log(2n-2)}{2\log(1/\cos\RFanglemin)}-1$, we have
\begin{align*}
    &2k+2\geq \frac{\log(2n-2)}{\log(1/\cos\RFanglemin)}\\
    \implies & (2k+2)\log(1/\cos\RFanglemin)\geq \log(2n-2)\\
    \implies &(n-1)\left(\cos\RFanglemin\right)^{2k+2}\leq \frac{1}{2}.
\end{align*}
Therefore, we have
\begin{align*}
    \max_{i}\sum_{j\neq i}\mathbf{B}^{(2k+2)}_{ij}\leq \frac{1}{2},\text{ for all } k\geq k^*.
\end{align*}
By Eq.~\eqref{eq.temp_081802} and Eq.~\eqref{eq.temp_081803}, we thus have
\begin{align*}
    \min\eig(\Hinfnormalized)\geq & \frac{p_1}{8d\pi}\sum_{k\geq k^*}\frac{(2k)!}{(k!)^2}\frac{4}{2k+1}\left(\frac{1}{2}\right)^{2k+2}\\
    =&\frac{p_1}{8d\pi}\sum_{k\geq \lceil k^*\rceil}\frac{(2k-1)!!}{(2k)!!}\frac{1}{2k+1}\\
    \geq & \frac{p_1}{8d\pi}\sum_{k\geq \lceil k^*\rceil}\frac{1}{\sqrt{\pi\left(k+\frac{4}{\pi}-1\right)}}\frac{1}{2k+1}\text{ (by Lemma~\ref{le.wallis_inequality})}\\
    \geq & \frac{p_1}{8d\pi}\int_{k^*+1}^\infty \frac{1}{\sqrt{\pi\left(x+\frac{4}{\pi}-1\right)}}\frac{1}{2x+1}dx\\
    \geq & \frac{p_1}{8d\pi}\int_{k^*+1}^\infty \frac{1}{2\sqrt{\pi}}(x+1)^{-\frac{3}{2}}dx\text{ (notice that $\frac{4}{\pi}-1\approx 0.27\leq 1$)}\\
    =&\frac{p_1}{8d\pi}\frac{1}{\sqrt{\pi}}\frac{1}{\sqrt{k^*+2}}.
\end{align*}
Notice that $\frac{1}{\sqrt{\pi}}\geq \frac{1}{2}$ and
\begin{align*}
    k^*+2=\frac{\log(2n-2)}{2\log(1/\cos \RFanglemin)}+1\leq \frac{\log (2n)}{\log(1/\cos\RFanglemin)}+1=\frac{\log(2n/\cos\RFanglemin)}{\log(1/\cos\RFanglemin)}.
\end{align*}
We thus have
\begin{align*}
    \min\eig(\Hinfnormalized)\geq \frac{p_1}{16d\pi}\sqrt{\frac{\log(1/\cos\RFanglemin)}{\log(2n/\cos\RFanglemin)}}.
\end{align*}
\end{proof}

\subsection{Proof of Lemma~\ref{le.min_angle}}\label{app.le_min_angle}
We first show some useful lemmas.

\begin{lemma}\label{le.temp_041001}
For any $\theta\in[0,\ \pi]$, we have
\begin{align*}
    &1-\frac{\sin\theta+(\pi-\theta)\cos\theta}{\pi}\geq \frac{\pi-1}{2\pi}\sin^2\theta,\\
    &\lim_{\theta\to0^+}\frac{1-\frac{\sin\theta+(\pi-\theta)\cos\theta}{\pi}}{\sin^2\theta}=\frac{1}{2}.
\end{align*}
\end{lemma}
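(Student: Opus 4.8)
Write $f(\theta)\defeq 1-\frac{\sin\theta+(\pi-\theta)\cos\theta}{\pi}$ for the left-hand side of the first inequality, and note that $f(0)=1-\frac{0+\pi\cdot 1}{\pi}=0$. The plan for the first inequality is to show that $\psi(\theta)\defeq f(\theta)-\frac{\pi-1}{2\pi}\sin^2\theta$ is non-decreasing on $[0,\pi]$; combined with $\psi(0)=0$ this yields $\psi\ge 0$ on $[0,\pi]$, which is exactly the claim. Differentiating, and using $\frac{d}{d\theta}\big[\sin\theta+(\pi-\theta)\cos\theta\big]=\cos\theta-\cos\theta-(\pi-\theta)\sin\theta=-(\pi-\theta)\sin\theta$, one gets $f'(\theta)=\frac{(\pi-\theta)\sin\theta}{\pi}$, hence
\[
\psi'(\theta)=\frac{(\pi-\theta)\sin\theta}{\pi}-\frac{(\pi-1)\sin\theta\cos\theta}{\pi}=\frac{\sin\theta}{\pi}\Big[(\pi-\theta)-(\pi-1)\cos\theta\Big].
\]
Since $\sin\theta\ge 0$ on $[0,\pi]$, it suffices to prove the elementary scalar inequality $\cos\theta\le\frac{\pi-\theta}{\pi-1}$ for all $\theta\in[0,\pi]$.

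I would establish this by splitting $[0,\pi]$ into three ranges. For $\theta\in[0,1]$, the right-hand side is $\ge\frac{\pi-1}{\pi-1}=1\ge\cos\theta$. For $\theta\in[\pi/2,\pi]$, the left-hand side is $\le 0$ while the right-hand side is $\ge 0$. For the remaining range $\theta\in[1,\pi/2]$, monotonicity gives $\cos\theta\le\cos 1$ and $\frac{\pi-\theta}{\pi-1}\ge\frac{\pi/2}{\pi-1}$, so it is enough to verify $\cos 1\le\frac{\pi/2}{\pi-1}$: a crude truncation of the alternating cosine series gives $\cos 1<1-\tfrac12+\tfrac1{24}=\tfrac{13}{24}$, while $\frac{\pi/2}{\pi-1}>\tfrac{7}{10}>\tfrac{13}{24}$, the first bound because $\pi<3.5$. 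This shows $\psi'\ge 0$ on $[0,\pi]$ and hence the first inequality.

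For the limit, note $f(\theta)\to 0$ and $\sin^2\theta\to 0$ as $\theta\to 0^+$, so l'Hôpital applied once gives
\[
\lim_{\theta\to 0^+}\frac{f(\theta)}{\sin^2\theta}=\lim_{\theta\to 0^+}\frac{f'(\theta)}{2\sin\theta\cos\theta}=\lim_{\theta\to 0^+}\frac{(\pi-\theta)\sin\theta/\pi}{2\sin\theta\cos\theta}=\lim_{\theta\to 0^+}\frac{\pi-\theta}{2\pi\cos\theta}=\frac12,
\]
as claimed (equivalently one can Taylor-expand: $f(\theta)=\tfrac{\theta^2}{2}+O(\theta^3)$ and $\sin^2\theta=\theta^2+O(\theta^4)$). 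I do not expect a genuine obstacle here; the only mildly delicate point is the scalar inequality $\cos 1\le\frac{\pi/2}{\pi-1}$ on the middle interval, which is routine once one invokes the alternating cosine bound together with a rational estimate for $\pi$.
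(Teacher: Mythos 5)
Your proof is correct, and for the first inequality it takes a genuinely different route from the paper. The paper argues purely algebraically: it bounds $\cos\theta\le\sqrt{1-\sin^2\theta}\le\sqrt{1-\sin^2\theta+\tfrac14\sin^4\theta}=1-\tfrac12\sin^2\theta$, then uses $\theta\ge\sin\theta$ to get $(\pi-\theta)(1-\tfrac12\sin^2\theta)\le(\pi-\sin\theta)(1-\tfrac12\sin^2\theta)$, and finally $\sin\theta\le 1$ to reach the constant $\tfrac{\pi-1}{2\pi}$; each step is a one-line inequality. You instead set up $\psi(\theta)=f(\theta)-\tfrac{\pi-1}{2\pi}\sin^2\theta$, compute $\psi'(\theta)=\tfrac{\sin\theta}{\pi}[(\pi-\theta)-(\pi-1)\cos\theta]$, and reduce the problem to the scalar inequality $\cos\theta\le\tfrac{\pi-\theta}{\pi-1}$, which you settle by a three-case split with a numeric estimate $\cos 1<\tfrac{13}{24}<\tfrac{7}{10}<\tfrac{\pi/2}{\pi-1}$. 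Both are valid; the paper's chain is shorter and avoids any numeric constants, while your monotonicity argument is more mechanical and also transparently explains why the constant $\tfrac{\pi-1}{2\pi}$ works (it is exactly the threshold that keeps $\psi'\ge0$ at $\theta$ near $1$). For the limit you use L'H\^opital exactly as the paper does, so that part is essentially identical.

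One small remark: your three-case split is slightly redundant in the sense that the cases $[0,1]$ and $[\pi/2,\pi]$ already cover $[0,1]\cup[\pi/2,\pi]$, leaving only $(1,\pi/2)$ to check — which is what you do — but it would be worth noting explicitly that the numeric verification $\cos 1<\tfrac{\pi/2}{\pi-1}$ is the only place where constants enter, and that the paper's proof avoids this entirely by trading $\theta\ge\sin\theta$ and $\sin\theta\le 1$ for the two numeric bounds. If you wanted a purely symbolic version of your own argument you could instead verify $(\pi-\theta)-(\pi-1)\cos\theta\ge 0$ by noting it equals $0$ only at $\theta=0$ and its derivative $-1+(\pi-1)\sin\theta$ changes sign once, which again requires locating where $\sin\theta=\tfrac{1}{\pi-1}$ — so some arithmetic seems unavoidable on this path.
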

\begin{proof}
To prove the first part, we have
\begin{align*}
    \frac{\sin\theta+(\pi-\theta)\cos\theta}{\pi}\leq &\frac{\sin\theta+(\pi-\theta)\sqrt{1-\sin^2\theta}}{\pi}\\
    &\text{ (although $\cos\theta$ could be negative, we always have $\cos\theta\leq \sqrt{1-\sin^2\theta}$)}\\
    \leq &\frac{\sin\theta+(\pi-\theta)\sqrt{1-\sin^2\theta+\frac{1}{4}\sin^4\theta}}{\pi}\\
    =&\frac{\sin\theta+(\pi-\theta)\left(1-\frac{1}{2}\sin^2\theta\right)}{\pi}\\
    \leq & \frac{\sin\theta+(\pi-\sin\theta)\left(1-\frac{1}{2}\sin^2\theta\right)}{\pi}\text{ (because $\sin\theta\leq \theta$)}\\
    =&1-\frac{\pi-\sin\theta}{2\pi}\sin^2\theta\\
    \leq & 1- \frac{\pi-1}{2\pi}\sin^2\theta\text{ (because $\sin\theta \leq 1$)},
\end{align*}
i.e.,
\begin{align*}
    1-\frac{\sin\theta+(\pi-\theta)\cos\theta}{\pi}\geq \frac{\pi-1}{2\pi}\sin^2\theta.
\end{align*}
To prove the second part, we have
\begin{align*}
    \lim_{\theta\to0^+}\frac{1-\frac{\sin\theta+(\pi-\theta)\cos\theta}{\pi}}{\sin^2\theta}= &\lim_{\theta\to0^+}\frac{\frac{\partial}{\partial\theta}\left(1-\frac{\sin\theta+(\pi-\theta)\cos\theta}{\pi}\right)}{\frac{\partial \sin^2\theta}{\partial \theta}}\text{ (by L'Hospital's rule)}\\
    =&\lim_{\theta\to0^+}\frac{-\cos\theta+(\pi-\theta)\sin\theta+\cos\theta}{2\pi\sin\theta\cos\theta}\\
    =&\lim_{\theta\to0^+}\frac{\pi-\theta}{2\pi\cos\theta}\\
    =&\frac{1}{2}.
\end{align*}
\end{proof}

\begin{lemma}\label{le.frac_converge}
Consider $a\geq 0$ and $b>0$. Let $\delta\defeq |b-1|$. If $\delta\in [0,\ 0.5]$, we then have
\begin{align*}
    a-a\delta\leq \frac{a}{b}\leq a+2a\delta.
\end{align*}
Therefore, for any $c\in \mathds{R}$, we have
\begin{align*}
    \left|\frac{a}{b}-c\right|\leq |a-c|+2a\delta.
\end{align*}
Further, if we know the upper bound of $a$, we have the following conclusion: (i) if $a\leq 1$, we must have $\frac{a}{b}\leq a + 2\delta$; (ii) if $a\leq 1.5$, we must have $\left|\frac{a}{b}-c\right|\leq |a-c|+3a$.
\end{lemma}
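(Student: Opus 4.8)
The plan is to reduce the whole lemma to two elementary one-sided estimates for $1/b$ when $b$ lies within $\delta$ of $1$, and then to assemble the consequences by the triangle inequality. Nothing earlier in the paper is needed; the argument is self-contained.

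First I would record that the hypothesis $\delta = |b-1| \le 1/2$ forces $b \in [1-\delta,\,1+\delta] \subseteq [1/2,\,3/2]$, so $b$ is positive and bounded away from $0$, and $1/b$ is well behaved. Since $a \ge 0$, this immediately squeezes $\frac{a}{1+\delta} \le \frac{a}{b} \le \frac{a}{1-\delta}$. For the upper bound it then suffices to prove $\frac{1}{1-\delta} \le 1 + 2\delta$ on $\delta \in [0,1/2]$; clearing denominators, this is equivalent to $1 \le (1+2\delta)(1-\delta) = 1 + \delta(1-2\delta)$, which holds because both $\delta \ge 0$ and $1-2\delta \ge 0$ on that range. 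Hence $\frac{a}{b} \le a(1+2\delta) = a + 2a\delta$. For the lower bound it suffices to prove $\frac{1}{1+\delta} \ge 1-\delta$, i.e.\ $1 \ge 1-\delta^2$, which is immediate, giving $\frac{a}{b} \ge a(1-\delta) = a - a\delta$. This establishes the first display.

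Next I would derive the ``therefore'' statement and items (i)--(ii). From the two-sided bound just proved, $\frac{a}{b} - a \in [-a\delta,\,2a\delta]$, so $|\tfrac{a}{b} - a| \le 2a\delta$; combining with the triangle inequality $|\tfrac{a}{b} - c| \le |\tfrac{a}{b} - a| + |a - c|$ yields $|\tfrac{a}{b} - c| \le |a-c| + 2a\delta$ for every real $c$. Item (i) follows by substituting $a \le 1$ into $\frac{a}{b} \le a + 2a\delta$, so that $2a\delta \le 2\delta$ and $\frac{a}{b} \le a + 2\delta$. Item (ii) follows by substituting the bound $a \le 3/2$, together with $\delta \le 1/2$, into $|\tfrac{a}{b} - a| \le 2a\delta$ and crudely bounding the resulting slack.

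There is no genuine obstacle here: the lemma is just a quantitative first-order expansion of $1/b$ around $b=1$ with explicit constants. The only point requiring a little care is that the constant $2$ in $\frac{1}{1-\delta} \le 1+2\delta$ is correct precisely because $\delta$ is confined to $[0,1/2]$ — at $\delta = 1/2$ the inequality is tight, both sides equalling $2$ — so the hypothesis $\delta \le 1/2$ cannot be weakened, and one must pair each one-sided estimate for $1/b$ with the matching direction ($1/(1-\delta)$ for the upper bound on $a/b$, $1/(1+\delta)$ for the lower bound).
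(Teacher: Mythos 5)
Your proof is correct and follows essentially the same route as the paper's: both pin $b$ to $[1-\delta,1+\delta]$, reduce the upper bound to $\tfrac{1}{1-\delta}\le 1+2\delta$ (which you establish by clearing denominators; the paper inserts $1+(1-2\delta)\delta$ into the numerator and factors, which is the same algebra read backwards), and reduce the lower bound to $\tfrac{1}{1+\delta}\ge 1-\delta$. The paper in fact only writes out the two-sided bound and asserts the rest ``thus follows,'' so your derivation of the triangle-inequality consequence and of (i)--(ii) fills in exactly what the paper leaves implicit. One small remark: as literally stated, item (ii) ($\le |a-c|+3a$) already follows from $\delta\le 1/2$ alone without invoking $a\le 1.5$; comparing with how (ii) is actually used in the proof of Lemma~\ref{le.arccos_to_arccos}, the intended bound there is $|a-c|+3\delta$, and it is precisely the hypothesis $a\le 3/2$ that yields $2a\delta\le 3\delta$. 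Your ``crude bound'' remark is therefore fine for the statement as printed, but it is worth being aware that the $a\le 1.5$ condition only earns its keep under the $3\delta$ reading.
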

\begin{proof}
We have
\begin{align*}
    \frac{a}{b}\leq& a\frac{1}{1-\delta}\text{ (because $b\geq 1-|1-b|=1-\delta$)}\\
    \leq &a \frac{1+(1-2\delta)\delta}{1-\delta}\text{ (by $(1-2\delta)\geq 0$ because $\delta\in[0,\ 0.5]$)}\\
    =& a \frac{1+\delta-2\delta^2}{1-\delta}\\
    =&a \frac{(1-\delta)(1+2\delta)}{1-\delta}\\
    =&a+2a\delta.
\end{align*}
We also have
\begin{align*}
    \frac{a}{b}\geq & a\frac{1}{1+\delta}\text{ (because $b\leq 1+|1-b|=1+\delta$)}\\
    \geq & a\frac{1-\delta^2}{1+\delta}\\
    =& a-a\delta.
\end{align*}
The result of this lemma thus follows.
\end{proof}

\begin{lemma}\label{le.temp_082501}
If the condition in Eq.~\eqref{eq.temp_041202} is satisfied, then
\begin{align*}
    \frac{1- 2\Cdq +\frac{2dnq}{\sqrt{p_1}}}{1-\frac{4dnq\sqrt{2d}}{\sqrt{p_1}}-\frac{4d^2n^2q^2}{p_1}}\leq 1-\Cdq .
\end{align*}
\end{lemma}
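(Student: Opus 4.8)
The plan is to strip the hypothesis and the claim of their many occurrences of $d$, $n$, $q$, $p_1$ and reduce everything to a single small scalar. Write $C\defeq\Cdq$ and $t\defeq \frac{dnq\sqrt{2d}}{\sqrt{p_1}}$, so that Eq.~\eqref{eq.temp_041202} is exactly $10t\le C$. In this notation $\frac{4dnq\sqrt{2d}}{\sqrt{p_1}}=4t$, $\frac{2dnq}{\sqrt{p_1}}=\frac{2t}{\sqrt{2d}}\le 2t$, and $\frac{4d^2n^2q^2}{p_1}=\frac{2t^2}{d}\le 2t^2$, where the last two inequalities use $d\ge1$. Hence the numerator on the left of the claim is at most $N'\defeq 1-2C+2t$ and the denominator is at least $D'\defeq 1-4t-2t^2$.

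Next I would record the crude bounds that make $t$ genuinely small: from Eq.~\eqref{eq.def_Cndq}, $C=\Cdq\le \frac{\pi-1}{8\pi}<\frac18$, so $t\le C/10<\frac1{80}$. This already gives $D'>0$, hence the denominator in the statement is positive, and $1-C>0$; so the claimed inequality is equivalent to ``numerator $\le (1-C)\cdot$ denominator''. Since the numerator is $\le N'$, the denominator is $\ge D'>0$, and $1-C>0$, it suffices to establish $N'\le (1-C)D'$, i.e. $1-2C+2t\le (1-C)(1-4t-2t^2)$.

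Then I would expand the right-hand side, cancel the leading $1$, and collect terms; the inequality to prove becomes the clean polynomial estimate $6t+2t^2\le C\,(1+4t+2t^2)$. The right side is $\ge C$ because $4t+2t^2\ge 0$, while the left side is $\le 6\cdot\tfrac{C}{10}+2\cdot\tfrac{C^2}{100}=0.6C+0.02C^2<0.6C+0.0025\,C<C$, using $t\le C/10$ and $C<\tfrac18$ (so $C^2<C/8$). Chaining $N\le N'\le(1-C)D'\le(1-C)D$ and dividing by the positive $D$ then yields the lemma.

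Since every inequality involved is elementary, there is no serious obstacle; the only care needed is bookkeeping — correctly absorbing the stray factors of $d$ in the two denominator terms when passing to $D'$, checking the denominator is positive before clearing it, and keeping the chain of $\le$'s consistent (weakening the numerator up and the denominator down is legitimate here precisely because $1-C>0$ and $D>0$).
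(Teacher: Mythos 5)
Your proof is correct, and it takes a mildly different route from the paper. Both proofs begin the same way: observe from Eq.~\eqref{eq.def_Cndq} that $\Cdq\le\frac{\pi-1}{8\pi}<\frac18$ and use the hypothesis to conclude the perturbation terms are tiny. The difference is in how the fraction is handled. The paper invokes its pre-established Lemma~\ref{le.frac_converge}(i) (a reusable bound $\frac{a}{b}\le a+2\delta$ valid whenever $a\le1$ and $|b-1|\le\frac12$), which requires separately checking those side conditions (Eq.~\eqref{eq.temp_082301}, Eq.~\eqref{eq.temp_082302}) and then absorbing $\frac{2dnq}{\sqrt{p_1}}+\frac{8dnq\sqrt{2d}}{\sqrt{p_1}}+\frac{8d^2n^2q^2}{p_1}\le \Cdq$ (Eq.~\eqref{eq.temp_082303}). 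You instead substitute the single scalar $t=\frac{dnq\sqrt{2d}}{\sqrt{p_1}}$, cross-multiply after checking positivity of $1-C$ and of the denominator, and reduce the claim to the clean polynomial inequality $6t+2t^2\le C(1+4t+2t^2)$, which then follows trivially from $t\le C/10$ and $C<\frac18$. Your version is more self-contained and elementary, at the cost of not re-using Lemma~\ref{le.frac_converge}, which the paper leans on again in Lemma~\ref{le.arccos_to_arccos}. One small remark: your chain $N\le N'\le(1-C)D'\le(1-C)D$ followed by dividing by $D>0$ is exactly right, and has the minor advantage of never needing to argue $N\ge0$, whereas the paper must establish Eq.~\eqref{eq.temp_082302} to meet the $a\ge0$ hypothesis of its auxiliary lemma.
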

\begin{proof}
By Eq.~\eqref{eq.temp_041202} and the definition of $C(n,d,q)$ in Eq.~\eqref{eq.def_Cndq}, we have
\begin{align}
    &\frac{10dnq\sqrt{2d}}{\sqrt{p_1}}\leq \frac{\pi-1}{4\pi}\cdot \frac{1}{2}\leq \frac{1}{8}\leq \frac{1}{2}\label{eq.temp_012201}\\
    \implies &\frac{dnq\sqrt{2d}}{\sqrt{p_1}}\leq \frac{1}{20},\ \text{and }\frac{d^2n^2q^2\cdot 2d}{p_1}\leq\left(\frac{1}{20}\right)^2= \frac{1}{400}\nonumber\\
    \implies & \frac{4dnq\sqrt{2d}}{\sqrt{p_1}}+\frac{4d^2n^2q^2}{p_1}\leq \frac{1}{5}+\frac{1}{200d}\leq 0.5.\label{eq.temp_082301}
\end{align}
We also have
\begin{align}
    &\frac{2dnq}{\sqrt{p_1}}+\frac{8dnq\sqrt{2d}}{\sqrt{p_1}}+\frac{8d^2n^2q^2}{p_1}\nonumber\\
    \leq & \frac{dnq\sqrt{2d}}{\sqrt{p_1}}+\frac{8dnq\sqrt{2d}}{\sqrt{p_1}}+\frac{1}{2}\sqrt{\frac{8d^2n^2q^2}{p_1}}\text{ (because $2\leq \sqrt{2d}$ and $\frac{8d^2n^2q^2}{p_1}\leq \frac{1}{100d}\leq \frac{1}{4}$)}\nonumber\\
    =&\frac{9dnq\sqrt{2d}}{\sqrt{p_1}}+\frac{dnq\sqrt{2}}{\sqrt{p_1}}\nonumber\\
    \leq & \frac{10dnq\sqrt{2d}}{\sqrt{p_1}}\nonumber\\
    \leq & \Cdq \text{ (by Eq.~\eqref{eq.temp_041202})}.\label{eq.temp_082303}
\end{align}
Thus, we have
\begin{align}
    &\frac{2dnq}{\sqrt{p_1}}\leq \Cdq \text{ (by Eq.~\eqref{eq.temp_041202})}\nonumber\\
    \implies &1- 2\Cdq +\frac{2dnq}{\sqrt{p_1}}\in [0, 1]\quad \text{ (since $\Cdq\leq \frac{1}{8}$ by Eq.~\eqref{eq.temp_012201})}.\label{eq.temp_082302}
\end{align}
By Eq.~\eqref{eq.temp_082301}, Eq.~\eqref{eq.temp_082302} and applying Lemma~\ref{le.frac_converge}(i) (where $a=1- 2\Cdq +\frac{2dnq}{\sqrt{p_1}}$, $b=1-\frac{4dnq\sqrt{2d}}{\sqrt{p_1}}-\frac{4d^2n^2q^2}{p_1}$, and $\delta=\frac{4dnq\sqrt{2d}}{\sqrt{p_1}}+\frac{4d^2n^2q^2}{p_1}$), we thus have
\begin{align*}
    &\frac{1- 2\Cdq +\frac{2dnq}{\sqrt{p_1}}}{1-\frac{4dnq\sqrt{2d}}{\sqrt{p_1}}-\frac{4d^2n^2q^2}{p_1}}\\
    \leq & 1-2\Cdq  + \frac{2dnq}{\sqrt{p_1}}+ \frac{8dnq\sqrt{2d}}{\sqrt{p_1}}+\frac{8d^2n^2q^2}{p_1}\\
    \leq & 1-\Cdq \text{ (by Eq.~\eqref{eq.temp_082303})}.
\end{align*}
\end{proof}



\begin{lemma}\label{le.single_RFnormij}
Given $\XX$, for any $m>0$,
\begin{align*}
    \prob_{\Vzero}\left\{\left|\RFnormij - \frac{p_1}{2d}\right|\geq 2m\sqrt{2p_1 d}+2 m^2 d \right\}\leq \frac{2}{m^2}.
\end{align*}
In other words, given any $\xsmall,\bm{z}\in \sd$ and for any $q>0$,
\begin{align*}
    \prob_{\Vzero}\left\{\left|\|\hxRF\|_2\cdot \|\hzRF\|_2 - \frac{p_1}{2d}\right|\geq 2m\sqrt{2p_1 d}+2 m^2 d \right\}\leq \frac{2}{m^2}.
\end{align*}
\end{lemma}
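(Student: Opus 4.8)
The plan is to control $\|\hXiRF\|_2^2$ by Chebyshev's inequality, pass from the squared norm to the norm itself, and then multiply the two resulting bounds. Throughout, $\XX$ (hence $\Xii$, $\Xjj$) is fixed and all randomness is over $\Vzero$; since the argument never uses that the two unit vectors come from the data, the ``in other words'' restatement for arbitrary $\xsmall,\bm{z}\in\sd$ is the same proof, so I argue for $\Xii,\Xjj$.

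First I would observe, from Eq.~\eqref{eq.def_hxRF}, that $\|\hXiRF\|_2^2=\sum_{j=1}^{p_1}(\Xii^T\Vzero[j])^2\indicator{\Xii^T\Vzero[j]>0}$ is a sum of $p_1$ i.i.d.\ random variables (over the draws $\Vzero[1],\dots,\Vzero[p_1]$), each lying in $[0,1]$ because $\|\Xii\|_2=\|\Vzero[j]\|_2=1$. Their common mean is $\expectation_{\bm{v}\sim\vdensity(\cdot)}\big[(\Xii^T\bm{v})^2\indicator{\Xii^T\bm{v}>0}\big]=\tfrac1{2d}$; this follows from Lemma~\ref{le.RF_kernel} taken with $\bm{z}=\Xii$ (so the angle is $\theta=0$ and the stated integral equals $\tfrac{\sin0+\pi\cos0}{2d\pi}=\tfrac1{2d}$), or directly from $\expectation[\bm{v}\bm{v}^T]=\tfrac1d\identity$ together with the symmetry $\bm{v}\mapsto-\bm{v}$. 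Applying Lemma~\ref{le.chebyshev} with $k=p_1$, $U=1$, and parameter $m$ then gives
\begin{align*}
    \prob_{\Vzero}\left\{\left|\|\hXiRF\|_2^2-\frac{p_1}{2d}\right|\geq m\sqrt{p_1}\right\}\leq \frac{1}{m^2},
\end{align*}
and the same statement with $i$ replaced by $j$.

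Next I would pass to the norm. On the complement of the event above, with $A=\|\hXiRF\|_2^2$ and $\mu=\tfrac{p_1}{2d}$, the identity $\big|\sqrt A-\sqrt\mu\,\big|=\tfrac{|A-\mu|}{\sqrt A+\sqrt\mu}\leq\tfrac{|A-\mu|}{\sqrt\mu}$ yields $\big|\,\|\hXiRF\|_2-\sqrt\mu\,\big|\leq \tfrac{m\sqrt{p_1}}{\sqrt{p_1/(2d)}}=m\sqrt{2d}$, and similarly for $j$. A union bound over these two events (which need not be independent, but that is irrelevant for a union bound) shows that, with probability at least $1-\tfrac2{m^2}$, one may write $\|\hXiRF\|_2=\sqrt\mu+\delta_i$ and $\|\hXjRF\|_2=\sqrt\mu+\delta_j$ with $|\delta_i|,|\delta_j|\leq m\sqrt{2d}$. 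On that event $\RFnormij=\|\hXiRF\|_2\|\hXjRF\|_2=\mu+\sqrt\mu(\delta_i+\delta_j)+\delta_i\delta_j$, whence, bounding $\sqrt\mu=\sqrt{p_1/(2d)}\leq\sqrt{p_1}$ (one may also invoke Lemma~\ref{le.bound_hx} for $\|\hXiRF\|_2\le\sqrt{p_1}$),
\begin{align*}
    \left|\RFnormij-\frac{p_1}{2d}\right|\leq 2\sqrt\mu\, m\sqrt{2d}+2m^2d\leq 2m\sqrt{2p_1d}+2m^2d,
\end{align*}
which is exactly the claimed inequality.

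I do not expect a genuinely hard step here. The only two points needing a bit of care are the transition from second-moment concentration to a bound on the norm (handled cleanly by the $\sqrt A-\sqrt\mu$ identity) and the fact that $\hXiRF$ and $\hXjRF$ depend on the same weights $\Vzero$, hence are not independent; the union bound absorbs this at the cost of the factor $2$ in the probability. (A slightly sharper constant is in fact available — the same estimates give $2m\sqrt{p_1}+2m^2d$ — but the stated bound, with $\sqrt{2p_1d}$ in place of $\sqrt{p_1}$, is what is used later and follows from the crude bound $\sqrt\mu\le\sqrt{p_1}$.)
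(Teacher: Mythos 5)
Your proof is correct and follows essentially the same route as the paper's: Chebyshev's inequality on $\|\hXiRF\|_2^2$ (with $U=1$, mean $1/(2d)$ from Lemma~\ref{le.RF_kernel}), a factoring identity to transfer concentration from the squared norm to the norm, and a product expansion plus union bound. The only cosmetic difference is that you expand $\RFnormij-\mu$ around $(\sqrt\mu+\delta_i)(\sqrt\mu+\delta_j)$ and then bound $\sqrt\mu\le\sqrt{p_1}$, whereas the paper writes $ab-c^2=a(b-c)+b(a-c)-(b-c)(a-c)$ and bounds $a,b\le\sqrt{p_1}$ via Lemma~\ref{le.bound_hx}; the two are algebraically equivalent and give the same final constants.
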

\begin{proof}
Define
\begin{align*}
    Q_k^i\defeq  (\XX_i^T\Vzero[k])(\XX_i^T\Vzero[k])\indicator{\XX_i^T\Vzero[k]> 0}.
\end{align*}
By Eq.~\eqref{eq.def_hxRF}, we have
\begin{align*}
    \frac{1}{p_1}\|\hXiRF\|_2^2=\frac{1}{p_1}(\hXiRF)^T\cdot \hXiRF=\frac{1}{p_1}\sum_{k=1}^{p_1}Q_k^i.
\end{align*}
Note that
\begin{align*}
    |Q_k^i|\leq \|\XX_i\|_2\cdot \|\Vzero[k]\|_2\cdot \|\XX_i\|_2\cdot\|\Vzero[k]\|=1\text{ (by Assumption~\ref{as.normalize})}.
\end{align*}
Further, note that
\begin{align*}
    \expectation_{\Vzero}[Q_1^i]=&\int_{\sd}(\XX_i^T\bm{v})(\XX_i^T\bm{v})\indicator{\XX_i^T\bm{v}>0}d\vdensity(\bm{v})\text{ (by Eq.~\eqref{eq.def_hxRF})}\\
    =&\frac{\sin 0  + \pi\cos 0}{2d\pi}\text{ (by Lemma~\ref{le.RF_kernel})}\\
    =&\frac{1}{2d}.
\end{align*}
By Lemma~\ref{le.chebyshev}, we thus have
\begin{align}
    \prob_{\Vzero}\left\{\left|\frac{1}{p_1}\|\hXiRF\|_2^2-\frac{1}{2d}\right|\geq \frac{m}{\sqrt{p_1}}\right\}\leq \frac{1}{m^2}.\label{eq.temp_012202}
\end{align}
Notice that
\begin{align}
    \left|\|\hXiRF\|_2^2-\frac{p_1}{2d}\right|=\left(\|\hXiRF\|_2+\sqrt{\frac{p_1}{2d}}\right)\cdot \left|\|\hXiRF\|_2-\sqrt{\frac{p_1}{2d}}\right|\geq \sqrt{\frac{p_1}{2d}}\cdot \left|\|\hXiRF\|_2-\sqrt{\frac{p_1}{2d}}\right|.\label{eq.temp_012203}
\end{align}
Combining Eq.~\eqref{eq.temp_012202} and Eq.~\eqref{eq.temp_012203}, we then have
\begin{align*}
    \prob_{\Vzero}\left\{\left|\|\hXiRF\|_2-\sqrt{\frac{p_1}{2d}}\right|\geq m\sqrt{2d}\right\}\leq \prob_{\Vzero}\left\{\left|\|\hXiRF\|_2^2-\frac{p_1}{2d}\right|\geq m\sqrt{p_1}\right\} \leq \frac{1}{m^2}.
\end{align*}
Finally, notice that
\begin{align*}
    &\left|\RFnormij - \frac{p_1}{2d}\right|\\
    =&\Big|\|\hXiRF\|_2\left(\|\hXjRF\|_2-\sqrt{\frac{p_1}{2d}}\right)+\|\hXjRF\|_2\left(\|\hXiRF\|_2-\sqrt{\frac{p_1}{2d}}\right)\\
    &-\left(\|\hXjRF\|_2-\sqrt{\frac{p_1}{2d}}\right)\left(\|\hXiRF\|_2-\sqrt{\frac{p_1}{2d}}\right)\Big|\\
    \leq &\sqrt{p_1}\left|\|\hXjRF\|_2-\sqrt{\frac{p_1}{2d}}\right|+\sqrt{p_1}\left|\|\hXiRF\|_2-\sqrt{\frac{p_1}{2d}}\right|\\
    +&\left|\|\hXjRF\|_2-\sqrt{\frac{p_1}{2d}}\right|\cdot \left|\|\hXiRF\|_2-\sqrt{\frac{p_1}{2d}}\right|\text{ (by the triangle inequality and Lemma~\ref{le.bound_hx})}.
\end{align*}
Thus, we have
\begin{align*}
    &\left\{\left|\|\hXiRF\|_2-\sqrt{\frac{p_1}{2d}}\right|\geq m\sqrt{2d}\right\} \cup \left\{\left|\|\hXjRF\|_2-\sqrt{\frac{p_1}{2d}}\right|\geq m\sqrt{2d}\right\}\\
    \supseteq & \left\{\left|\RFnormij - \frac{p_1}{2d}\right|\geq 2m\sqrt{2p_1 d}+2 m^2 d\right\}.
\end{align*}
Applying the union bound, we thus have
\begin{align*}
    \prob_{\Vzero}\left\{\left|\RFnormij - \frac{p_1}{2d}\right|\geq 2m\sqrt{2p_1 d}+2 m^2 d \right\}\leq \frac{2}{m^2}.
\end{align*}
\end{proof}

Now we are ready to prove Lemma~\ref{le.min_angle}.
\begin{proof}[Proof of Lemma~\ref{le.min_angle}]
Define three events as
\begin{align*}
    &\mathcal{J}_{1,i,j}\defeq \left\{\frac{2d}{p_1}(\hXiRF)^T\cdot \hXjRF\geq 1- 2\Cdq +\frac{2dnq}{\sqrt{p_1}}\right\},\\
    &\mathcal{J}_{2,i,j}\defeq \left\{\RFnormij\leq \frac{p_1}{2d}-2nq\sqrt{2p_1d}-2n^2q^2d\right\},\\
    &\mathcal{J}_{3,i,j}\defeq \left\{\cos\RFangleij\geq 1-2\Cdq  \right\}.
\end{align*}

We take a few steps as follows to finish the proof.

\textbf{Step 1: estimate $\mathcal{J}_{1,i,j}$.}

Define
\begin{align*}
    Q_k^{i,j}\defeq (\XX_i^T\Vzero[k])(\XX_j^T\Vzero[k])\indicator{\XX_i^T\Vzero[k]> 0,\ \XX_j^T\Vzero[k]>0},\quad k=1,2,\cdots,p_1.
\end{align*}
By Eq.~\eqref{eq.def_hxRF} and the definition of $Q_k^{i,j}$, we have
\begin{align}\label{eq.temp_041201}
    (\hXiRF)^T\cdot \hXjRF=\sum_{k=1}^{p_1}Q_k^{i,j}.
\end{align}
Note that
\begin{align*}
    |Q_k^{i,j}|\leq \|\XX_i\|_2\cdot \|\Vzero[k]\|_2\cdot \|\XX_j\|_2\cdot\|\Vzero[k]\|_2=1\text{ (by Assumption~\ref{as.normalize} and Lemma~\ref{le.matrix_norm})}.
\end{align*}
By Lemma~\ref{le.chebyshev}, we then have
\begin{align*}
    \prob_{\Vzero}\left\{\left|\frac{1}{p_1}\sum_{k=1}^{p_1}Q_k^{i,j}-\expectation_{\Vzero} [Q_1^{i,j}]\right|\geq \frac{m}{\sqrt{p_1}}\right\}\leq \frac{1}{m^2}.
\end{align*}
Let $\theta_{i,j}=\arccos(\XX_i^T\XX_j)\in[0,\pi]$ denote the angle between $\XX_i$ and $\XX_j$, where $i\neq j$ and $ i,j\in\{1,2,\cdots, n\}$.
Notice that
\begin{align*}
    \expectation_{\Vzero}[Q_1^{i,j}]=&\int_{\sd}(\XX_i^T\bm{v})(\XX_j^T\bm{v})\indicator{\XX_i^T\bm{v}> 0, \XX_j^T\bm{v}>0}d\vdensity(\bm{v}) \text{ (by Eq.~\eqref{eq.def_hxRF})}\\
    =&\frac{\sin\theta_{i,j}+(\pi-\theta_{i,j})\cos\theta_{i,j}}{2d\pi}\text{ (by Lemma~\ref{le.RF_kernel})}\\
    \leq & \frac{1}{2d}\left(1-\frac{\pi-1}{2\pi}\sin^2\theta_{i,j}\right)\text{ (by Lemma~\ref{le.temp_041001})}.
\end{align*}
Thus, we have
\begin{align*}
    \prob_{\Vzero}\left\{\frac{1}{p_1}\sum_{k=1}^{p_1}Q_k^{i,j}\geq \frac{1}{2d}\left(1-\frac{\pi-1}{2\pi}\sin^2\theta_{i,j}\right)+\frac{m}{\sqrt{p_1}}\right\}\leq \frac{1}{m^2}.
\end{align*}
For any $\alpha\in [0, 1]$, we have
\begin{align*}
    &\prob_{\XX}\left\{\sin^2\theta_{i,j}\leq \alpha\right\}\\
    =&\prob_{\XX}\left\{\theta_{i,j}\leq \arcsin\left(\sqrt{\alpha}\right)\text{ OR }\pi-\theta_{i,j}\leq\arcsin\left(\sqrt{\alpha}\right) \right\}\\
    \leq &\prob_{\XX}\left\{\theta_{i,j}\leq \arcsin\left(\sqrt{\alpha}\right)\right\}+\prob_{\XX}\left\{\pi-\theta_{i,j}\leq \arcsin\left(\sqrt{\alpha}\right)\right\}\text{ (by the union bound)}\\
    =&I_{\alpha}\left(\frac{d-1}{2},\ \frac{1}{2}\right)\text{ (area of two caps, by Lemma~\ref{le.original_cap} and Assumption~\ref{as.normalize})}\\
    \leq &  \frac{2\sqrt{d}\alpha^{\frac{d-1}{2}}}{(d-1)\sqrt{1-\alpha}}\text{ (by Lemma~\ref{le.estimate_Ix} and Lemma~\ref{le.bound_B})}.
\end{align*}
Further, because
\begin{align*}
    &\left\{\sin^2\theta_{i,j}> \alpha\right\}\cap \left\{ \frac{1}{p_1}\sum_{k=1}^{p_1}Q_k^{i,j}< \frac{1}{2d}\left(1-\frac{\pi-1}{2\pi}\sin^2\theta_{i,j}\right)+\frac{m}{\sqrt{p_1}}\right\}\\
    \subseteq & \left\{\frac{2d}{p_1}\sum_{k=1}^{p_1}Q_k^{i,j}< 1- \frac{\pi-1}{2\pi}\alpha+\frac{2dm}{\sqrt{p_1}}\right\},
\end{align*}
we have
\begin{align*}
    &\left\{\sin^2\theta_{i,j}\leq \alpha\right\} \cup \left\{\frac{1}{p_1}\sum_{k=1}^{p_1}Q_k^{i,j}\geq \frac{1}{2d}\left(1-\frac{\pi-1}{2\pi}\sin^2\theta_{i,j}\right)+\frac{m}{\sqrt{p_1}}\right\}\\
    \supseteq &\left\{\frac{2d}{p_1}\sum_{k=1}^{p_1}Q_k^{i,j}\geq 1- \frac{\pi-1}{2\pi}\alpha+\frac{2dm}{\sqrt{p_1}}\right\}.
\end{align*}
Thus, by the union bound and Eq.~\eqref{eq.temp_041201}, we have
\begin{align}\label{eq.temp_041203}
    \prob_{\Vzero,\XX}\left\{\frac{2d}{p_1}(\hXiRF)^T\cdot \hXjRF\geq 1- \frac{\pi-1}{2\pi}\alpha+\frac{2dm}{\sqrt{p_1}}\right\}\leq \frac{1}{m^2}+\frac{2\sqrt{d}\alpha^{\frac{d-1}{2}}}{(d-1)\sqrt{1-\alpha}}.
\end{align}
By letting
\begin{align*}
    \alpha = \min\left\{\frac{1}{2},\ \left(\frac{(d-1)^2}{8d}\right)^{\frac{1}{d-1}}\left(qn\right)^{-\frac{4}{d-1}}\right\},\ \text{and } m=qn,
\end{align*}
we have
\begin{align*}
    \frac{2\sqrt{d}\alpha^{\frac{d-1}{2}}}{(d-1)\sqrt{1-\alpha}}\leq& \frac{2\sqrt{2}\sqrt{d}\alpha^{\frac{d-1}{2}}}{(d-1)}\text{ (because $\alpha\leq \frac{1}{2}$)}\\
    \leq & \frac{2\sqrt{2}\sqrt{d}\sqrt{\frac{(d-1)^2}{8d}}\frac{1}{q^2 n^2}}{d-1}\text{ (because $\alpha\leq \left(\frac{(d-1)^2}{8d}\right)^{\frac{1}{d-1}}\left(qn\right)^{-\frac{4}{d-1}}$)}\\
    =&\frac{1}{q^2n^2}.
\end{align*}
Thus, by Eq.~\eqref{eq.temp_041203}, we have
\begin{align}\label{eq.temp_041211}
    \prob_{\Vzero,\XX}[\mathcal{J}_{1,i,j}]\leq \frac{2}{q^2n^2}.
\end{align}

\textbf{Step 2: estimate $\mathcal{J}_{2,i,j}$.}
By Lemma~\ref{le.single_RFnormij}, we have
\begin{align*}
    \prob_{\Vzero}\left\{\RFnormij\leq \frac{p_1}{2d}-2m\sqrt{2p_1d}-2m^2d\right\}\leq \frac{2}{m^2}.
\end{align*}
Letting $m=qn$, we then have
\begin{align}\label{eq.temp_041212}
    \prob_{\Vzero}[\mathcal{J}_{2,i,j}]\leq \frac{2}{q^2n^2}.
\end{align}

\textbf{Step 3: prove $\mathcal{J}_{3,i,j}\subseteq \mathcal{J}_{1,i,j}\cup \mathcal{J}_{2,i,j}$.}

In order to show $\mathcal{J}_{3,i,j}\subseteq \mathcal{J}_{1,i,j}\cup \mathcal{J}_{2,i,j}$, it suffices to show $\mathcal{J}_{3,i,j}^c\supseteq \mathcal{J}_{1,i,j}^c\cap \mathcal{J}_{2,i,j}^c$. When $\mathcal{J}_{1,i,j}^c\cap \mathcal{J}_{2,i,j}^c$ happens, we have
\begin{align*}
    &\frac{2d}{p_1}\RFnormij> 1-\frac{2d}{p_1}\cdot 2nq\sqrt{2p_1d}-\frac{2d}{p_1}\cdot 2n^2q^2d=1-\frac{4dnq\sqrt{2d}}{\sqrt{p_1}}-\frac{4d^2n^2q^2}{p_1},\\
    &\frac{2d}{p_1}(\hXiRF)^T\cdot \hXjRF< 1- 2\Cdq +\frac{2dnq}{\sqrt{p_1}}.
\end{align*}
Thus, we have
\begin{align*}
    \cos\RFangleij=&\frac{(\hXiRF)^T\cdot \hXjRF}{\RFnormij}\nonumber\\
    < &\frac{1- 2\Cdq +\frac{2dnq}{\sqrt{p_1}}}{1-\frac{4dnq\sqrt{2d}}{\sqrt{p_1}}-\frac{4d^2n^2q^2}{p_1}}\nonumber\\
    \leq & 1- \Cdq  \text{ (by Lemma~\ref{le.temp_082501})}
\end{align*}

i.e., the event $\mathcal{J}_{3,i,j}^c$ happens. To sum up, we have proven that $\mathcal{J}_{3,i,j}^c\supseteq \mathcal{J}_{1,i,j}^c\cap \mathcal{J}_{2,i,j}^c$, which implies $\mathcal{J}_{3,i,j}\subseteq \mathcal{J}_{1,i,j}\cup \mathcal{J}_{2,i,j}$.

\textbf{Step 4: estimate $\mathcal{J}_{3,i,j}$.}
We have
\begin{align}
    \prob_{\Vzero,\XX}[\mathcal{J}_{3,i,j}]\leq &  \prob_{\Vzero,\XX}[\mathcal{J}_{1,i,j}]+\prob_{\Vzero,\XX}[\mathcal{J}_{2,i,j}]\text{ (by $\mathcal{J}_{3,i,j}\subseteq \mathcal{J}_{1,i,j}\cup \mathcal{J}_{2,i,j}$ and the union bound)}\nonumber\\
    \leq & \frac{4}{q^2n^2}\text{ (by Eq.~\eqref{eq.temp_041211} and Eq.~\eqref{eq.temp_041212})}.\label{eq.temp_041213}
\end{align}

\textbf{Step 5: estimate }$\cos\RFanglemin$.
We have
\begin{align*}
    &\prob_{\Vzero,\XX}\left\{\cos\RFanglemin\geq 1-\Cdq \right\}\\
    =&\prob_{\Vzero,\XX}\left[\bigcup_{i\neq j}\mathcal{J}_{3,i,j}\right]\\
    \leq & n(n-1)\prob_{\Vzero,\XX}[\mathcal{J}_{3,i,j}]\text{ (by the union bound)}\\
    \leq & \frac{4}{q^2}\text{ (by Eq.~\eqref{eq.temp_041213})}.
\end{align*}
The result of Lemma~\ref{le.min_angle} thus follows.
\end{proof}

\subsection{Proof of Lemma~\ref{le.diff_H_Hinf}}\label{app.proof_le_diff_H_Hinf}

We first introduce two useful lemmas.
Define $\Hinf\in \mathds{R}^{n\times n}$ as
\begin{align*}
    \Hinf_{i,j}\defeq \RFnormij\cos(\RFangleij)\cdot\frac{\pi-\RFangleij}{2\pi}.
\end{align*}
\begin{lemma}\label{le.diff_1}
Given $\XX$ and $\Vzero$, for any $q>0$, we have
\begin{align*}
    \prob_{\Wzero}\left\{\max_{i,j}\left|\frac{1}{p_2}(\HHt\HHt^T)_{i,j}-\Hinf_{i,j}\right|\geq \frac{q np_1 }{\sqrt{p_2}}\right\}\leq \frac{1}{q^2}.
\end{align*}
Thus, we also have
\begin{align*}
    \prob_{\Wzero,\XX,\Vzero}\left\{\max_{i,j}\left|\frac{1}{p_2}(\HHt\HHt^T)_{i,j}-\Hinf_{i,j}\right|\geq \frac{q np_1 }{\sqrt{p_2}}\right\}\leq \frac{1}{q^2}.
\end{align*}
\end{lemma}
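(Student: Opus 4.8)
The plan is to recognize that each entry $\frac{1}{p_2}(\HHt\HHt^T)_{i,j}$ is an empirical average over the $p_2$ i.i.d.\ middle-layer weights $\Wzero[k]$, to identify its mean as $\Hinf_{i,j}$, and then combine Chebyshev's inequality with a union bound over the $n^2$ pairs $(i,j)$. First I would unfold the definitions: since $\HHt_i=\hhXi$ and, by Eq.~\eqref{eq.def_h3}, $\hhXi[k]=(\hXiRF)^T\indicator{(\hXiRF)^T\Wzero[k]>0}$, the inner product of the $i$-th and $j$-th rows becomes
\begin{align*}
    (\HHt\HHt^T)_{i,j}=\sum_{k=1}^{p_2}(\hXiRF)^T\hXjRF\cdot\indicator{(\hXiRF)^T\Wzero[k]>0,\ (\hXjRF)^T\Wzero[k]>0}\eqqcolon\sum_{k=1}^{p_2}S_k^{i,j}.
\end{align*}
Given $\XX$ and $\Vzero$, the feature vectors $\hXiRF,\hXjRF$ are fixed, so $S_1^{i,j},\dots,S_{p_2}^{i,j}$ are i.i.d.\ with respect to $\Wzero$, and by Cauchy–Schwarz together with Lemma~\ref{le.bound_hx} we have $|S_k^{i,j}|\le\|\hXiRF\|_2\cdot\|\hXjRF\|_2\le p_1$.

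Next I would compute the common mean. Applying Lemma~\ref{le.NTK2_kernel} with $\bm{a}=\hXiRF$ and $\bm{b}=\hXjRF$, and recalling $\RFnormij=\|\hXiRF\|_2\|\hXjRF\|_2$, $\RFangleij=\arccos\bigl((\hXiRF)^T\hXjRF/\RFnormij\bigr)$, hence $(\hXiRF)^T\hXjRF=\RFnormij\cos\RFangleij$, we obtain
\begin{align*}
    \expectation_{\Wzero}\bigl[S_1^{i,j}\bigr]=\RFnormij\cos\RFangleij\cdot\frac{\pi-\RFangleij}{2\pi}=\Hinf_{i,j},
\end{align*}
with the degenerate case $\hXiRF=\bm0$ or $\hXjRF=\bm0$ covered by the convention in Lemma~\ref{le.NTK2_kernel}. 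Then, for each fixed $(i,j)$, Lemma~\ref{le.chebyshev} applied to $X_k=S_k^{i,j}$ with $U=p_1$, sample size $p_2$, and $m=qn$ yields
\begin{align*}
    \prob_{\Wzero}\left\{\left|\frac{1}{p_2}(\HHt\HHt^T)_{i,j}-\Hinf_{i,j}\right|\ge\frac{qnp_1}{\sqrt{p_2}}\right\}\le\frac{1}{q^2n^2}.
\end{align*}
A union bound over all $n^2$ pairs $(i,j)\in\{1,\dots,n\}^2$ gives the first inequality; the second follows by integrating this bound, which holds uniformly in $(\XX,\Vzero)$, against the distributions of $\XX$ and $\Vzero$.

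I do not expect a genuine obstacle here: the statement is essentially a per-entry concentration result. The only points requiring care are (i) writing $S_k^{i,j}$ correctly and bounding it by $p_1$ via Lemma~\ref{le.bound_hx}, (ii) invoking Lemma~\ref{le.NTK2_kernel} to pin the mean exactly to $\Hinf_{i,j}$, and (iii) choosing the Chebyshev parameter as $m=qn$ (rather than $m=q$) so that the per-pair failure probability $1/(qn)^2$ absorbs the factor $n^2$ from the union bound and leaves the clean bound $1/q^2$.
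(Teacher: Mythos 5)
Your proof is correct and follows essentially the same argument as the paper: express each entry of $\frac{1}{p_2}\HHt\HHt^T$ as an empirical average over the i.i.d.\ $\Wzero[k]$, bound each summand by $p_1$ via Lemma~\ref{le.bound_hx}, identify the mean as $\Hinf_{i,j}$ via Lemma~\ref{le.NTK2_kernel}, then apply Lemma~\ref{le.chebyshev} with $m=qn$ and a union bound over the $n^2$ index pairs. The second inequality's passage from conditional to unconditional probability is handled exactly as you describe.
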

\begin{proof}
For notation simplicity, given any $i,j\in\{1,2,\cdots,n\}$, we define
\begin{align*}
    Q_k^{i,j}\defeq (\hXiRF)^T\hXjRF\indicator{(\hXiRF)^T\Wzero[k]>0,\ (\hXjRF)^T\Wzero[k]}\text{ for all }k\in \{1,2,\cdots,p_2\}.
\end{align*}
By Eq.~\eqref{eq.def_h3}, we thus have
\begin{align*}
    (\HHt\HHt^T)_{i,j}=(\hhXi)^T\hhXj=&\sum_{k=1}^{p_2}(\hXiRF)^T\hXjRF\indicator{(\hXiRF)^T\Wzero[k]>0,\ (\hXjRF)^T\Wzero[k]}\\
    =&\sum_{k=1}^{p_2}Q_k^{i,j}.
\end{align*}
By Lemma~\ref{le.NTK2_kernel} and recalling Eq.~\eqref{eq.def_Hinf}, we have
\begin{align*}
    \expectation_{\Wzero}[Q_k^{i,j}]=\Hinf_{i,j}.
\end{align*}
By Lemma~\ref{le.bound_hx} and Lemma~\ref{le.matrix_norm}, we have
\begin{align*}
    |Q_k^{i,j}|\leq \|\hXiRF\|_2\cdot \|\hXjRF\|_2\leq  p_1.
\end{align*}
Note that $Q_k^{i,j}$ are independent across $k$. By Lemma~\ref{le.chebyshev}, for any $m>0$, we have
\begin{align}\label{eq.temp_041502}
    \prob_{\Wzero}\left\{\left|\frac{1}{p_2}(\HHt\HHt^T)_{i,j}-\Hinf_{i,j}\right|\geq m\frac{p_1}{\sqrt{p_2}}\right\}\leq \frac{1}{m^2}.
\end{align}
The result of this lemma thus follows by letting $m=qn$ and the union bound, i.e.,
\begin{align*}
    \prob_{\Wzero}\left\{\max_{i,j}\left|\frac{1}{p_2}(\HHt\HHt^T)_{i,j}-\Hinf_{i,j}\right|\geq \frac{q np_1 }{\sqrt{p_2}}\right\}=&\prob_{\Wzero}\left\{\bigcup_{i,j}\left\{\left|\frac{1}{p_2}(\HHt\HHt^T)_{i,j}-\Hinf_{i,j}\right|\geq \frac{q np_1 }{\sqrt{p_2}}\right\}\right\}\\
    \leq & \sum_{i,j}\prob_{\Wzero}\left\{\left|\frac{1}{p_2}(\HHt\HHt^T)_{i,j}-\Hinf_{i,j}\right|\geq \frac{q n p_1}{\sqrt{p_2}}\right\}\\
    &\text{ (by the union bound)}\\
    \leq &\sum_{i,j}\frac{1}{q^2n^2}\text{ (by letting $m=qn$ in Eq.~\eqref{eq.temp_041502})}\\
    =&\frac{1}{q^2}.
\end{align*}
\end{proof}

\begin{lemma}\label{le.diff_2}
Given $\XX$, for any $q>0$, we must have 
\begin{align*}
    \prob_{\Vzero}\left\{\max_{i,j}\left|\Hinf_{i,j}-\Hinfnormalized_{i,j}\right|\geq qn\sqrt{2p_1d}+q^2n^2d\right\}\leq \frac{2}{q^2}.
\end{align*}
Thus, we also have
\begin{align*}
    \prob_{\Vzero,\XX,\Wzero}\left\{\max_{i,j}\left|\Hinf_{i,j}-\Hinfnormalized_{i,j}\right|\geq qn\sqrt{2p_1d}+q^2n^2d\right\}\leq \frac{2}{q^2}.
\end{align*}
\end{lemma}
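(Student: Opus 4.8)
The plan is to reduce the statement directly to Lemma~\ref{le.single_RFnormij}, using the fact that $\Hinfnormalized_{i,j}$ is obtained from $\Hinf_{i,j}$ simply by replacing the random factor $\RFnormij$ with the deterministic quantity $p_1/(2d)$. From the definitions of $\Hinf_{i,j}$ and $\Hinfnormalized_{i,j}$ we get the exact identity
\begin{align*}
    \Hinf_{i,j}-\Hinfnormalized_{i,j}=\left(\RFnormij-\frac{p_1}{2d}\right)\cos(\RFangleij)\cdot\frac{\pi-\RFangleij}{2\pi}.
\end{align*}
Since $\RFangleij\in[0,\tfrac{\pi}{2}]$ (recall every coordinate of $\hXiRF$ is nonnegative), we have $\cos(\RFangleij)\in[0,1]$ and $\frac{\pi-\RFangleij}{2\pi}\in[\tfrac14,\tfrac12]$, so the multiplicative factor lies in $[0,\tfrac12]$. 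Hence, for all $i,j$ (including $i=j$, where $\RFangle_{i,i}=0$ and the factor equals exactly $\tfrac12$),
\begin{align*}
    \left|\Hinf_{i,j}-\Hinfnormalized_{i,j}\right|\leq\frac{1}{2}\left|\RFnormij-\frac{p_1}{2d}\right|.
\end{align*}

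Next I would invoke Lemma~\ref{le.single_RFnormij} with $m=qn$, which gives, for each fixed pair $(i,j)$,
\begin{align*}
    \prob_{\Vzero}\left\{\left|\RFnormij-\frac{p_1}{2d}\right|\geq 2qn\sqrt{2p_1d}+2q^2n^2d\right\}\leq\frac{2}{q^2n^2}.
\end{align*}
By the previous display, the event $\left|\Hinf_{i,j}-\Hinfnormalized_{i,j}\right|\geq qn\sqrt{2p_1d}+q^2n^2d$ forces $\left|\RFnormij-\frac{p_1}{2d}\right|\geq 2qn\sqrt{2p_1d}+2q^2n^2d$, so it has $\prob_{\Vzero}$-probability at most $\frac{2}{q^2n^2}$ for every $(i,j)$. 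A union bound over the $n^2$ index pairs then yields
\begin{align*}
    \prob_{\Vzero}\left\{\max_{i,j}\left|\Hinf_{i,j}-\Hinfnormalized_{i,j}\right|\geq qn\sqrt{2p_1d}+q^2n^2d\right\}\leq n^2\cdot\frac{2}{q^2n^2}=\frac{2}{q^2},
\end{align*}
which is the claimed bound. The second assertion, with $\prob_{\Vzero,\XX,\Wzero}$ in place of $\prob_{\Vzero}$, follows at once: the estimate above holds for every fixed $\XX$ (Lemma~\ref{le.single_RFnormij} is conditional on $\XX$) and involves no randomness in $\Wzero$, so integrating over the laws of $\XX$ and $\Wzero$ leaves it unchanged.

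I do not expect any genuine obstacle here; the lemma is essentially a substitution into Lemma~\ref{le.single_RFnormij}. The only point requiring care is the bookkeeping of constants: one must use the sharp bound $\cos(\RFangleij)\tfrac{\pi-\RFangleij}{2\pi}\le\tfrac12$ (not the crude $\le 1$) together with the choice $m=qn$, so that the per-pair failure probability is exactly $\tfrac{2}{q^2n^2}$ and the union bound over all $n^2$ entries collapses cleanly to $\tfrac{2}{q^2}$ rather than a larger constant multiple.
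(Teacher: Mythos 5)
Your proposal is correct and follows essentially the same route as the paper's proof: the same exact decomposition of $\Hinf_{i,j}-\Hinfnormalized_{i,j}$, the same $\tfrac12$ bound on the factor $\cos(\RFangleij)\cdot\tfrac{\pi-\RFangleij}{2\pi}$ coming from $\RFangleij\in[0,\tfrac{\pi}{2}]$, the same invocation of Lemma~\ref{le.single_RFnormij} with $m=qn$, and the same union bound over $n^2$ index pairs. The only cosmetic difference is that you justify the $\tfrac12$ bound by exhibiting the separate ranges of the two factors, whereas the paper cites $\RFangleij\in[0,\tfrac{\pi}{2}]$ directly; both are the same observation.
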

\begin{proof}
We have
\begin{align*}
    \left|\Hinf_{i,j}-\Hinfnormalized_{i,j}\right|=&\left|\left(\RFnormij-\frac{p_1}{2d}\right)\cos(\RFangleij)\cdot\frac{\pi-\RFangleij}{2\pi}\right|\text{ (by Eq.~\eqref{eq.def_Hinf})}\\
    \leq& \left|\RFnormij-\frac{p_1}{2d}\right|\cdot \left|\cos(\RFangleij)\right|\cdot \left|\frac{\pi-\RFangleij}{2\pi}\right|\text{ (by Lemma~\ref{le.matrix_norm})}\\
    \leq & \frac{1}{2} \left|\RFnormij-\frac{p_1}{2d}\right|\quad \text{ (since $0\leq \RFangleij\leq \frac{\pi}{2}$)}.
\end{align*}
The result of this lemma thus follows by letting $m=qn$ in Lemma~\ref{le.single_RFnormij} and the union bound, i.e.,
\begin{align*}
    &\prob_{\Vzero}\left\{\max_{i,j}\left|\Hinf_{i,j}-\Hinfnormalized_{i,j}\right|\geq qn\sqrt{2p_1d}+q^2n^2d\right\}\\
    \leq &  \prob_{\Vzero}\left\{\max_{i,j}\left|\RFnormij-\frac{p_1}{2d}\right|\geq 2qn\sqrt{2p_1d}+2q^2n^2d\right\}\\
    =& \prob_{\Vzero}\left\{\bigcup_{i,j}\left\{\left|\RFnormij-\frac{p_1}{2d}\right|\geq 2qn\sqrt{2p_1d}+2q^2n^2d\right\}\right\}\\
    \leq & \sum_{i,j}\prob_{\Vzero}\left\{\left|\RFnormij-\frac{p_1}{2d}\right|\geq 2qn\sqrt{2p_1d}+2q^2n^2d\right\}\text{ (by the union bound)}\\
    \leq & \sum_{i,j}\frac{2}{q^2 n^2}\text{ ( by letting $m=qn$ in Lemma~\ref{le.single_RFnormij})}\\
    =& \frac{2}{q^2}.
\end{align*}
\end{proof}

Now we are ready to prove Lemma~\ref{le.diff_H_Hinf}.
\begin{proof}[Proof of Lemma~\ref{le.diff_H_Hinf}]
By the triangle inequality, we have
\begin{align*}
    \left|\frac{1}{p_2}(\HHt\HHt^T)_{i,j}-\Hinfnormalized_{i,j}\right|\leq \left|\Hinf_{i,j}-\Hinfnormalized_{i,j}\right| + \left|\frac{1}{p_2}(\HHt\HHt^T)_{i,j}-\Hinf_{i,j}\right|.
\end{align*}
Thus, we have
\begin{align*}
    &\prob_{\XX,\Vzero,\Wzero}\left\{\max_{i,j}\left|\frac{1}{p_2}(\HHt\HHt^T)_{i,j}-\Hinfnormalized_{i,j}\right|\geq qn\sqrt{2p_1d}+q^2n^2d+\frac{qnp_1}{\sqrt{p_2}}\right\}\\
    \leq &\prob_{\XX,\Vzero,\Wzero}\left\{\left\{\max_{i,j}\left|\Hinf_{i,j}-\Hinfnormalized_{i,j}\right|\geq qn\sqrt{2p_1d}+q^2n^2d\right\}\right.\\
    &\qquad\left.\cup \left\{\max_{i,j}\left|\frac{1}{p_2}(\HHt\HHt^T)_{i,j}-\Hinf_{i,j}\right|\geq \frac{q np_1 }{\sqrt{p_2}}\right\}\right\}\\
    \leq & \prob_{\XX,\Vzero,\Wzero}\left\{\max_{i,j}\left|\Hinf_{i,j}-\Hinfnormalized_{i,j}\right|\geq qn\sqrt{2p_1d}+q^2n^2d\right\}\\
    &+\prob_{\XX,\Vzero,\Wzero}\left\{\max_{i,j}\left|\frac{1}{p_2}(\HHt\HHt^T)_{i,j}-\Hinf_{i,j}\right|\geq \frac{q np_1 }{\sqrt{p_2}}\right\} \text{ (by the union bound)}\\
    \leq & \frac{3}{q^2}\text{ (by Lemma~\ref{le.diff_1} and Lemma~\ref{le.diff_2})}.
\end{align*}
The result of Lemma~\ref{le.diff_H_Hinf} thus follows by Lemma~\ref{le.small_diff_eig} (where $k=n$).
\end{proof}

\section{Proof of Proposition~\ref{prop.term_C} and Proposition~\ref{prop.term_DB}}\label{app.main}
We first provide some useful lemmas.

\begin{lemma}\label{le.sin}
For any $\varphi \in[0,2\pi]$, we must have $\sin\varphi \leq \varphi$. For any $\varphi \in[0,\pi/2]$, we must have $\varphi \leq \frac{\pi}{2}\sin\varphi$.
\end{lemma}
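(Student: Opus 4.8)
The plan is to prove the two inequalities separately, each by a one-line monotonicity or convexity argument.

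For $\sin\varphi \le \varphi$, I would set $h(\varphi) \defeq \varphi - \sin\varphi$ and note that $h'(\varphi) = 1 - \cos\varphi \ge 0$ for every real $\varphi$, together with $h(0) = 0$. Hence $h$ is non-decreasing on $[0,\infty)$, so $h(\varphi) \ge 0$ for all $\varphi \ge 0$; restricting to $\varphi \in [0,2\pi]$ gives the claim (indeed the restriction to $[0,2\pi]$ plays no role, and the bound holds on the whole half-line).

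For the second bound, which is equivalent to $\sin\varphi \ge \tfrac{2}{\pi}\varphi$ on $[0,\tfrac{\pi}{2}]$, I would use the concavity of $\sin$ on $[0,\tfrac{\pi}{2}]$: there one has $\tfrac{d^2}{d\varphi^2}\sin\varphi = -\sin\varphi \le 0$. Writing an arbitrary $\varphi \in [0,\tfrac{\pi}{2}]$ as the convex combination $\varphi = t\cdot\tfrac{\pi}{2} + (1-t)\cdot 0$ with $t = \tfrac{2}{\pi}\varphi \in [0,1]$, Jensen's inequality yields $\sin\varphi \ge t\sin\tfrac{\pi}{2} + (1-t)\sin 0 = \tfrac{2}{\pi}\varphi$; multiplying through by $\tfrac{\pi}{2}$ gives $\varphi \le \tfrac{\pi}{2}\sin\varphi$. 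As an alternative route, one can show that $\varphi \mapsto \sin\varphi/\varphi$ is non-increasing on $(0,\tfrac{\pi}{2}]$ — the auxiliary function $\varphi\cos\varphi - \sin\varphi$ vanishes at $0$ and has derivative $-\varphi\sin\varphi \le 0$ on the interval — and then evaluate at the endpoint $\tfrac{\pi}{2}$.

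There is essentially no obstacle here; the only point requiring a moment's care is to confine the second-derivative (hence Jensen) argument to $[0,\tfrac{\pi}{2}]$, where $\sin$ is nonnegative and therefore concave, so that the Jordan-type lower bound is asserted precisely on the stated interval and not beyond it. This is the only inequality in the pair that genuinely uses the upper endpoint $\tfrac{\pi}{2}$.
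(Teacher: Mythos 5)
Your argument is correct on both counts and is exactly the standard route. The paper itself does not reprove this statement---it simply cites Lemma~41 of \cite{ju2021generalization}---but your monotonicity argument via $h(\varphi)=\varphi-\sin\varphi$, $h'(\varphi)=1-\cos\varphi\ge 0$, $h(0)=0$ for the first inequality, and the concavity/Jensen argument on $[0,\pi/2]$ (or equivalently the observation that $\sin\varphi/\varphi$ is non-increasing there) for the Jordan-type lower bound, are precisely what the cited lemma establishes, and you have correctly isolated the one subtle point, namely that concavity of $\sin$ is only available on $[0,\pi/2]$ so the second bound must be confined to that interval.
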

\begin{proof}
See Lemma 41 of \cite{ju2021generalization}.
\end{proof}

\begin{lemma}\label{le.temp_083001}
For any $a_1,a_2\in[-1,1]$ that $|a_1-a_2|\leq 1$, we must have
\begin{align*}
    |\arccos(a_1)-\arccos(a_2)|\leq \frac{\sqrt{2}\pi}{2}\sqrt{|a_1-a_2|}.
\end{align*}
\end{lemma}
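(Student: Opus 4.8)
## Proof proposal for Lemma~\ref{le.temp_083001}

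The plan is to reduce the claim to a simple statement about the $\arccos$ function and then exploit its concavity/monotonicity properties. First I would set $t \defeq |a_1 - a_2| \in [0,1]$ and, without loss of generality (by symmetry of the bound under swapping $a_1$ and $a_2$), assume $a_1 \geq a_2$, so $a_1 - a_2 = t$. Since $\arccos$ is decreasing, $\arccos(a_2) - \arccos(a_1) \geq 0$, and the quantity I need to bound is $\arccos(a_2) - \arccos(a_1)$. The key observation is that, for fixed gap $t$, the difference $\arccos(a_2) - \arccos(a_1) = \arccos(a_1 - t) - \arccos(a_1)$ is maximized over the feasible range of $a_1$ (namely $a_1 \in [-1+t,\ 1]$, equivalently $a_2 \in [-1,\ 1-t]$) at the endpoint where the pair straddles the point of steepest slope of $\arccos$. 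Because $\frac{d}{da}\arccos(a) = -\frac{1}{\sqrt{1-a^2}}$ blows up at $a = \pm 1$, the worst case is $a_1 = 1$ (so $a_2 = 1-t$) or, symmetrically, $a_2 = -1$ (so $a_1 = -1+t$); both give the same value $\arccos(1-t) - \arccos(1) = \arccos(1-t)$ and $\arccos(-1) - \arccos(-1+t) = \pi - \arccos(-1+t) = \arccos(1-t)$.

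So it suffices to prove the scalar inequality $\arccos(1-t) \leq \frac{\sqrt{2}\pi}{2}\sqrt{t}$ for all $t \in [0,1]$. I would prove this via a change of variables: write $1 - t = \cos\varphi$ with $\varphi = \arccos(1-t) \in [0, \pi/2]$ (valid since $t \in [0,1]$ forces $1-t \in [0,1]$). Then $t = 1 - \cos\varphi = 2\sin^2(\varphi/2)$, so $\sqrt{t} = \sqrt{2}\,\sin(\varphi/2)$, and the desired inequality becomes $\varphi \leq \frac{\sqrt{2}\pi}{2} \cdot \sqrt{2}\,\sin(\varphi/2) = \pi \sin(\varphi/2)$, i.e. $\varphi/2 \leq \frac{\pi}{2}\sin(\varphi/2)$. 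Since $\varphi \in [0,\pi/2]$ gives $\varphi/2 \in [0,\pi/4] \subseteq [0,\pi/2]$, this is exactly the second inequality of Lemma~\ref{le.sin}. To make the reduction to the worst case rigorous rather than heuristic, I would actually just apply the mean value theorem directly: $\arccos(a_2) - \arccos(a_1) = \frac{a_1 - a_2}{\sqrt{1-\xi^2}}$ for some $\xi$ between $a_2$ and $a_1$, and then bound $\frac{t}{\sqrt{1-\xi^2}}$; but since $\frac{1}{\sqrt{1-\xi^2}}$ is unbounded this needs care, so the clean route is really the monotonicity-plus-endpoint argument combined with concavity of $a \mapsto \arccos(a)$ on $[-1,0]$ and convexity on $[0,1]$ — or, most simply, the direct verification that $h(t) \defeq \frac{\sqrt{2}\pi}{2}\sqrt{t} - \arccos(1-t) \geq 0$ together with $h(0) = 0$ and a sign analysis of $h'(t)$.

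The main obstacle I anticipate is handling the reduction to the ``straddle the endpoint'' worst case cleanly. The slickest way to sidestep it: observe that $a \mapsto \arccos(a)$ satisfies, for any $a_1 \geq a_2$ with $a_1 - a_2 = t$, the bound $\arccos(a_2) - \arccos(a_1) \leq \arccos(1-t) - \arccos(1)$ because $\arccos$ is a decreasing function whose derivative $-1/\sqrt{1-a^2}$ has the largest absolute value near $a = 1$ — formally, one checks $\frac{d}{ds}\big[\arccos(s - t) - \arccos(s)\big] = \frac{1}{\sqrt{1-(s-t)^2}} - \frac{1}{\sqrt{1-s^2}}$, which is negative for $s < 1 - t/2$ and positive for $s > 1 - t/2$ within the feasible interval, so the maximum over $s \in [-1+t, 1]$ is attained at one of the endpoints $s = -1+t$ or $s = 1$, both yielding $\arccos(1-t)$. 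Once that is established, the scalar inequality $\arccos(1-t) \le \frac{\sqrt 2 \pi}{2}\sqrt t$ follows from Lemma~\ref{le.sin} as above, and the lemma is done.
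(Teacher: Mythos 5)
Your proof is correct and follows the same two-step plan as the paper: reduce to the worst case $\arccos(1-t)$ via a monotonicity/endpoint argument, then prove the scalar inequality $\arccos(1-t) \le \frac{\sqrt{2}\pi}{2}\sqrt{t}$ using Lemma~\ref{le.sin}. Your half-angle substitution $1-\cos\varphi = 2\sin^2(\varphi/2)$ is a marginally cleaner route to that scalar bound than the paper's $\cos\theta = \sqrt{1-\sin^2\theta} \le \sqrt{1-\sin^2\theta+\frac{1}{4}\sin^4\theta} = 1-\frac{1}{2}\sin^2\theta$ manipulation, but both ultimately invoke the same Lemma~\ref{le.sin}. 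One minor slip worth fixing: your expression for $\frac{d}{ds}\bigl[\arccos(s-t)-\arccos(s)\bigr]$ has its sign flipped (it should be $-\frac{1}{\sqrt{1-(s-t)^2}}+\frac{1}{\sqrt{1-s^2}}$), and the interior critical point is $s = t/2$ (corresponding to the symmetric pair $a_1=t/2$, $a_2=-t/2$), not $s = 1-t/2$; neither error changes the conclusion that the maximum is attained at the endpoints and equals $\arccos(1-t)$.
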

\begin{proof}
Without loss of generality, we assume $a_2\geq a_1$ and let $\delta\defeq a_2-a_1\in [0,\ 1]$. Because $\frac{\partial \arccos x}{\partial x}=-\frac{1}{\sqrt{1-x^2}}$, we have
\begin{align*}
    \frac{\partial(\arccos(a_1)-\arccos(a_1+\delta))}{\partial a_1}&=-\frac{1}{\sqrt{1-a_1^2}}+\frac{1}{\sqrt{1-(a_1+\delta)^2}}\\
    &\begin{cases}
    \leq 0, \text{ when }a_1\in[-1,\  -\frac{\delta}{2}]\\
    \geq 0, \text{ when $a_1\in[-\frac{\delta}{2},\ 1-\delta]$}
    \end{cases}.
\end{align*}
Thus, we know the largest value of $\arccos(a_1)-\arccos(a_1+\delta)$ can only be achieved at either $a_1=-1$ or $a_1=1-\delta$, i.e.,
\begin{align}\label{eq.temp_083001}
    \arccos(a_1)-\arccos(a_1+\delta)\leq \max\left\{\pi-\arccos(-1+\delta),\ \arccos(1-\delta)\right\}=\arccos(1-\delta).
\end{align}
(The last equality is because $\arccos(-x)=\pi-\arccos x$.)
It remains to show that $\arccos(1-\delta)\leq \frac{\sqrt{2}\pi}{2}\sqrt{\delta}$. To that end, it suffices to prove $\cos(\frac{\sqrt{2}\pi}{2}\sqrt{\delta})\leq 1-\delta$. Let $\theta\defeq \frac{\sqrt{2}\pi}{2}\sqrt{\delta}$, i.e., $\delta = \frac{2}{\pi^2}\theta^2$. When $\theta > \frac{\pi}{2}$, we have $\cos(\frac{\sqrt{2}\pi}{2}\sqrt{\delta})=\cos\theta < 0 < 1-\delta$ (since $\delta \in[0,\ 1]$). When $\theta\in[0,\ \frac{\pi}{2}]$, we have
\begin{align*}
    \cos(\frac{\sqrt{2}\pi}{2}\sqrt{\delta})=\cos\theta =\sqrt{1-\sin^2\theta}\leq \sqrt{1-\sin^2\theta+\frac{1}{4}\sin^4\theta}=&1-\frac{1}{2}\sin^2\theta\\
    \leq &1-\frac{1}{2}(\frac{2}{\pi}\theta)^2\text{ (by Lemma~\ref{le.sin})}\\
    =&1-\delta.
\end{align*}
Therefore, we have proven that $\arccos(1-\delta)\leq \frac{\sqrt{2}\pi}{2}\sqrt{\delta}$ for all $\delta\in[0,\ 1]$.
By Eq.~\eqref{eq.temp_083001}, the result of this lemma thus follows.
\end{proof}

\begin{lemma}\label{le.arccos_delta}
For any real number $a_1$, $a_2$, $\delta_1$, and $\delta_2$ such that $a_2\in[-1, 1]$, $a_2+\delta_2\in [-1, 1]$, and $|\delta_2|\leq 1$, we must have
\begin{align*}
    \left|(a_1+\delta_1)\frac{\pi-\arccos(a_2+\delta_2)}{2\pi}-a_1\frac{\pi-\arccos(a_2)}{2\pi}\right|\leq \frac{1}{2}|\delta_1|+\frac{\sqrt{2}|a_1|\sqrt{|\delta_2|}}{4}.
\end{align*}
\end{lemma}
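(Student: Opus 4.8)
The plan is to split the difference into two pieces by inserting the intermediate term $(a_1+\delta_1)\frac{\pi-\arccos(a_2)}{2\pi}$ and applying the triangle inequality, so that
\[
\left|(a_1+\delta_1)\tfrac{\pi-\arccos(a_2+\delta_2)}{2\pi}-a_1\tfrac{\pi-\arccos(a_2)}{2\pi}\right|
\le \left|\delta_1\right|\cdot\tfrac{\pi-\arccos(a_2)}{2\pi}
+ |a_1+\delta_1|\cdot\tfrac{\left|\arccos(a_2+\delta_2)-\arccos(a_2)\right|}{2\pi}.
\]
This is not quite the shape I want, because the second factor carries $|a_1+\delta_1|$ rather than $|a_1|$. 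So instead I would group differently: write the left-hand side as
\[
\left|(a_1+\delta_1)\tfrac{\pi-\arccos(a_2+\delta_2)}{2\pi}-(a_1+\delta_1)\tfrac{\pi-\arccos(a_2)}{2\pi}
+\delta_1\tfrac{\pi-\arccos(a_2)}{2\pi}\right|,
\]
which is the wrong split as well. The clean split is: add and subtract $a_1\frac{\pi-\arccos(a_2+\delta_2)}{2\pi}$, giving
\[
\left|\delta_1\cdot\tfrac{\pi-\arccos(a_2+\delta_2)}{2\pi}\right|
+\left|a_1\cdot\tfrac{\arccos(a_2+\delta_2)-\arccos(a_2)}{2\pi}\right|.
\]

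For the first term I would use that $\arccos$ takes values in $[0,\pi]$, so $\frac{\pi-\arccos(a_2+\delta_2)}{2\pi}\in[0,\tfrac12]$, hence this term is at most $\tfrac12|\delta_1|$. For the second term I would invoke Lemma~\ref{le.temp_083001} (which requires exactly the hypotheses $a_2,a_2+\delta_2\in[-1,1]$ and $|\delta_2|\le 1$) to get $|\arccos(a_2+\delta_2)-\arccos(a_2)|\le \frac{\sqrt2\,\pi}{2}\sqrt{|\delta_2|}$, so the second term is at most $|a_1|\cdot\frac{1}{2\pi}\cdot\frac{\sqrt2\,\pi}{2}\sqrt{|\delta_2|}=\frac{\sqrt2\,|a_1|\sqrt{|\delta_2|}}{4}$. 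Adding the two bounds yields exactly the claimed inequality.

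There is essentially no hard part here: the lemma is a routine consequence of Lemma~\ref{le.temp_083001} together with the elementary bound $0\le \pi-\arccos(\cdot)\le \pi$. The only point requiring a moment of care is choosing the right add-and-subtract term so that the coefficient multiplying the $\arccos$-difference is $a_1$ (not $a_1+\delta_1$), which is why I subtract and add $a_1\frac{\pi-\arccos(a_2+\delta_2)}{2\pi}$ rather than $(a_1+\delta_1)\frac{\pi-\arccos(a_2)}{2\pi}$; after that, the two resulting terms are bounded independently and summed.
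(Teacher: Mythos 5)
Your final split—add and subtract $a_1\frac{\pi-\arccos(a_2+\delta_2)}{2\pi}$, bound the first piece by $\tfrac12|\delta_1|$ using $\arccos(\cdot)\in[0,\pi]$, and bound the second piece via Lemma~\ref{le.temp_083001}—is exactly the paper's proof. Correct, and the same approach; the preliminary discussion of rejected splits is harmless scaffolding.
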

\begin{proof}
Define
\begin{align*}
    b\defeq a_1\frac{\pi-\arccos(a_2+\delta_2)}{2\pi}.
\end{align*}
we have
\begin{align*}
    &\left|(a_1+\delta_1)\frac{\pi-\arccos(a_2+\delta_2)}{2\pi}-a_1\frac{\pi-\arccos(a_2)}{2\pi}\right|\\
    =&\left|(a_1+\delta_1)\frac{\pi-\arccos(a_2+\delta_2)}{2\pi}-b+b-a_1\frac{\pi-\arccos(a_2)}{2\pi}\right|\\
    \leq & \left|(a_1+\delta_1)\frac{\pi-\arccos(a_2+\delta_2)}{2\pi}-b\right|+\left|b-a_1\frac{\pi-\arccos(a_2)}{2\pi}\right|\\
    =&|\delta_1|\cdot\left| \frac{\pi-\arccos(a_2+\delta_2)}{2\pi}\right|+|a_1|\cdot \left|\frac{\arccos(a_2+\delta_2)-\arccos(a_2)}{2\pi}\right|\\
    \leq & \frac{1}{2}|\delta_1|+\frac{\sqrt{2}|a_1|\sqrt{|\delta_2|}}{4}\text{ (since $\arccos(\cdot)\in [0,\pi]$ and by Lemma~\ref{le.temp_083001})}.
\end{align*}
\end{proof}

\begin{lemma}\label{le.RF_kernel_bound}
For any $\theta\in [0,\ \pi]$, we have
\begin{align*}
    \frac{\sin\theta+(\pi-\theta)\cos\theta}{\pi}\in[0,\ 1].
\end{align*}
\end{lemma}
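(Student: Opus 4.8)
The plan is to prove both inequalities at once by a single monotonicity argument. Define $h(\theta):=\sin\theta+(\pi-\theta)\cos\theta$, so that the claim is exactly $h(\theta)\in[0,\pi]$ for all $\theta\in[0,\pi]$. First I would differentiate: $h'(\theta)=\cos\theta-\cos\theta-(\pi-\theta)\sin\theta=-(\pi-\theta)\sin\theta$. On $[0,\pi]$ both factors $\pi-\theta$ and $\sin\theta$ are nonnegative, hence $h'(\theta)\le 0$, i.e.\ $h$ is nonincreasing on $[0,\pi]$.

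Consequently $h(\pi)\le h(\theta)\le h(0)$ for every $\theta\in[0,\pi]$. Evaluating the endpoints gives $h(0)=\sin 0+(\pi-0)\cos 0=\pi$ and $h(\pi)=\sin\pi+(\pi-\pi)\cos\pi=0$, so $0\le h(\theta)\le\pi$. Dividing through by $\pi>0$ yields $\frac{\sin\theta+(\pi-\theta)\cos\theta}{\pi}\in[0,1]$, which is the statement. Alternatively, the upper bound alone can be read off directly from Lemma~\ref{le.temp_041001}, which gives the sharper estimate $1-\frac{\sin\theta+(\pi-\theta)\cos\theta}{\pi}\ge\frac{\pi-1}{2\pi}\sin^2\theta\ge 0$, so only the lower bound $h(\theta)\ge 0$ would actually require the monotonicity step if one wishes to reuse that lemma.

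There is essentially no obstacle here; this is a routine one-variable calculus fact. The only mild subtlety is that a naive term-by-term sign check breaks on $[\pi/2,\pi]$, where $(\pi-\theta)\cos\theta<0$, so one cannot simply bound the two summands separately. The monotonicity argument circumvents this by controlling $h$ over the entire interval through its derivative, whose sign is unambiguously nonpositive on all of $[0,\pi]$.
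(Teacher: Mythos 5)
Your proof is correct and is essentially identical to the paper's: both compute $h'(\theta) = -(\pi-\theta)\sin\theta \le 0$ to establish monotonicity, then evaluate at the endpoints $\theta=0$ and $\theta=\pi$. The remark about Lemma~\ref{le.temp_041001} and the failure of a naive term-by-term bound is a nice sanity check but not needed.
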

\begin{proof}
We have
\begin{align*}
    \frac{\partial (\sin\theta+(\pi-\theta)\cos\theta)}{\partial \theta}=-(\pi-\theta)\sin\theta\leq 0.
\end{align*}
Thus, $\sin\theta+(\pi-\theta)\cos\theta$ is monotone decreasing. The result of this lemma thus follows by plugging $\theta=0$ and $\theta=\pi$ into the expression.
\end{proof}

\begin{lemma}\label{le.arccos_to_arccos}
Recall the definition of $\KThree(\cdot)$ in Eq.~\eqref{eq.def_KThree} and the definition of $\Qpd$ in Eq.~\eqref{eq.def_Q}. When $p_1$ is large enough such that $9d\cdot \Qpd\leq 1$, we must have
\begin{align*}
    &\prob_{\Vzero}\left\{\max_{\xsmall,\bm{z}}\left|\frac{1}{p_1}(\hzRF)^T\hxRF\frac{\pi-\arccos\left(\frac{(\hzRF)^T\hxRF}{\|\hzRF\|_2\cdot \|\hxRF\|_2}\right)}{2\pi}-\KThree(\xsmall^T\bm{z})\right|\geq \sqrt{\frac{\Qpd}{d}}\right\}\\
    &\leq\frac{d^2}{(p_1+1)e^{d+1}}.
\end{align*}
\end{lemma}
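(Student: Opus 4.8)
The plan is to condition on the single high-probability event $\mathcal E$ of Lemma~\ref{le.max_RF_converge}, on which $\max_{\xsmall,\bm z}\bigl|\tfrac1{p_1}(\hxRF)^T\hzRF-\KRF(\xsmall^T\bm z)\bigr|\le\Qpd$, and then derive the claimed bound \emph{deterministically} on $\mathcal E$; since $\prob_{\Vzero}[\mathcal E]\ge 1-\tfrac{d^2}{(p_1+1)e^{d+1}}$, this suffices. (The hypothesis $9d\,\Qpd\le1$ forces $p_1$ large enough that Lemma~\ref{le.max_RF_converge} applies.) A key observation is that the same event controls the norms: taking $\bm z=\xsmall$ and using $\KRF(1)=\tfrac1{2d}$ gives $\bigl|\tfrac1{p_1}\|\hxRF\|_2^2-\tfrac1{2d}\bigr|\le\Qpd$ for every $\xsmall\in\sd$, so we do not need a separate concentration bound for the norms.

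Next I would fix $\xsmall,\bm z\in\sd$ and set $\rho:=\tfrac1{p_1}(\hzRF)^T\hxRF$, $\kappa:=\KRF(\xsmall^T\bm z)$, and $w:=\tfrac1{p_1}\|\hxRF\|_2\|\hzRF\|_2$; all three are nonnegative since the entries of $\hxRF,\hzRF$ are nonnegative. Because $\tfrac1{p_1}\|\hxRF\|_2^2$ and $\tfrac1{p_1}\|\hzRF\|_2^2$ both lie in $[\tfrac1{2d}-\Qpd,\tfrac1{2d}+\Qpd]$, their geometric mean $w$ lies in the same interval, hence $|w-\tfrac1{2d}|\le\Qpd$; and $9d\,\Qpd\le1$ gives $w\ge\tfrac1{2d}-\tfrac1{9d}=\tfrac7{18d}>0$. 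The quantity to estimate is exactly $\rho\cdot\tfrac{\pi-\arccos(\rho/w)}{2\pi}$ (since $\tfrac{(\hzRF)^T\hxRF}{\|\hzRF\|_2\|\hxRF\|_2}=\rho/w$), while unfolding Eq.~\eqref{eq.def_KThree} and Eq.~\eqref{eq.def_KTwo} gives $\KThree(\xsmall^T\bm z)=\kappa\cdot\tfrac{\pi-\arccos(2d\kappa)}{2\pi}$, with $2d\kappa\in[0,1]$ by Lemma~\ref{le.RF_kernel_bound}. Writing $\rho=\kappa+\delta_\rho$, $w=\tfrac1{2d}+\delta_w$ with $|\delta_\rho|,|\delta_w|\le\Qpd$, a short computation yields $\tfrac{\rho}{w}-2d\kappa=\tfrac{\delta_\rho-2d\kappa\,\delta_w}{w}$, so $\bigl|\tfrac{\rho}{w}-2d\kappa\bigr|\le\tfrac{2\Qpd}{7/(18d)}=\tfrac{36d}{7}\Qpd$, using $2d\kappa\le1$.

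Finally I would invoke Lemma~\ref{le.arccos_delta} with $a_1=\kappa$, $a_2=2d\kappa$, $\delta_1=\delta_\rho$, $\delta_2=\tfrac{\rho}{w}-2d\kappa$; its hypotheses hold because $a_2\in[0,1]$, $a_2+\delta_2=\rho/w\in[0,1]$, and $|\delta_2|\le\tfrac{36d}{7}\Qpd\le\tfrac{36}{7\cdot9}=\tfrac47\le1$. This gives
\[
\Bigl|\rho\tfrac{\pi-\arccos(\rho/w)}{2\pi}-\kappa\tfrac{\pi-\arccos(2d\kappa)}{2\pi}\Bigr|\le\tfrac12|\delta_\rho|+\tfrac{\sqrt2\,\kappa\sqrt{|\delta_2|}}{4}.
\]
Using $\Qpd\le\tfrac1{9d}$ (hence $\Qpd\le\tfrac13\sqrt{\Qpd/d}$) bounds the first term by $\tfrac16\sqrt{\Qpd/d}$, and using $\kappa\le\tfrac1{2d}$ with $|\delta_2|\le\tfrac{36d}{7}\Qpd$ bounds the second by $\tfrac{3\sqrt2}{4\sqrt7}\sqrt{\Qpd/d}$; the sum is strictly below $\sqrt{\Qpd/d}$. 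As $\xsmall,\bm z$ were arbitrary, the bound holds for the maximum over $\xsmall,\bm z$ on $\mathcal E$, and the probability statement follows. The main obstacle is purely quantitative: $\arccos$ has only a square-root modulus of continuity (Lemma~\ref{le.temp_083001}), so the $O(\Qpd)$ error is amplified to $O(\sqrt{\Qpd})$ when passed through $\arccos$, and it is the multiplicative factor $\kappa\le\tfrac1{2d}$ that converts this into the desired $O(\sqrt{\Qpd/d})$; the condition $9d\,\Qpd\le1$ is calibrated precisely so that the accumulated constants stay below $1$ and so that $w$ stays bounded away from zero.
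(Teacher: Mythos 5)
Your proof is correct and follows the same overall structure as the paper's: condition on the event of Lemma~\ref{le.max_RF_converge}, deduce the concentration of $\frac{1}{p_1}\|\hxRF\|_2\|\hzRF\|_2$ around $\frac{1}{2d}$ by specializing to $\bm z=\xsmall$, bound the deviation of the normalized inner product $\rho/w$ from $2d\kappa$, and finish with Lemma~\ref{le.arccos_delta}. The one place where you diverge from the paper is the ratio step: the paper invokes the ``black-box'' perturbation Lemma~\ref{le.frac_converge}(ii) and arrives at the bound $9d\cdot\Qpd$, while you use the exact identity $\frac{\rho}{w}-2d\kappa=\frac{\delta_\rho-2d\kappa\,\delta_w}{w}$ together with the lower bound $w\ge\frac{7}{18d}$, giving the slightly sharper $\frac{36d}{7}\Qpd$; your arithmetic downstream ($\frac{1}{6}+\frac{3\sqrt2}{4\sqrt7}<1$) checks out. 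This is a minor local variation, not a different route --- the key ingredients (Lemmas~\ref{le.max_RF_converge}, \ref{le.RF_kernel_bound}, \ref{le.arccos_delta}, the observation $\KRF(1)=\frac{1}{2d}$, and the calibration of $9d\Qpd\le1$) are identical, and your version is arguably a bit cleaner since it avoids the auxiliary ratio lemma and tracks a tighter constant.
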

\begin{proof}
Because $9d\cdot \Qpd\leq 1$, we have
\begin{align}\label{eq.temp_042504}
    \Qpd=\sqrt{\Qpd}\sqrt{\Qpd}\leq \sqrt{\Qpd}\sqrt{\frac{1}{9d}}=\sqrt{\frac{\Qpd}{9d}}.
\end{align}
We also have
\begin{align}\label{eq.temp_042506}
    2d\cdot \Qpd\leq \frac{2}{9}\leq 0.5.
\end{align}
Define two events
\begin{align*}
    &\mathcal{J}_1\defeq \left\{\max_{\xsmall,\bm{z}}\left|\frac{1}{p_1}(\hxRF)^T\hzRF-\KRF(\xsmall^T\bm{z})\right|\geq \Qpd\right\},\\
    &\mathcal{J}_2\defeq \left\{\max_{\xsmall,\bm{z}}\left|a-\KThree(\xsmall^T\bm{z})\right|\geq \sqrt{\frac{\Qpd}{d}}\right\}.
\end{align*}
Notice that the randomness of those events is on $\Vzero$.
We first show $\mathcal{J}_1\supseteq \mathcal{J}_2$, i.e., $\mathcal{J}_1^c\subseteq \mathcal{J}_2^c$. To that end, suppose $\mathcal{J}_1^c$ happens.
Because of $\mathcal{J}_1^c$, we have
\begin{align}\label{eq.temp_042501}
    \max_{\xsmall,\bm{z}}\left|\frac{2d}{p_1}(\hzRF)^T\hxRF-2d\cdot\KRF(\xsmall^T\bm{z})\right|\leq 2d\cdot \Qpd.
\end{align}
By Eq.~\eqref{eq.def_KRF}, we have
\begin{align}\label{eq.temp_090101}
    \KRF(\xsmall^T\xsmall)=\KRF(1)=\frac{1}{2d}.
\end{align}
Thus, we have
\begin{align}\label{eq.temp_042502}
    &\max_{\xsmall,\bm{z}}\left|\frac{2d}{p_1}\left\|\hxRF\right\|_2\cdot \left\|\hzRF\right\|_2-1\right|\nonumber\\
    =&\max_{\xsmall}\left|\frac{2d}{p_1}\left\|\hxRF\right\|_2^2-1\right|\text{ (the max value is achieved when $\|\hxRF\|_2=\|\hzRF\|_2$)}\nonumber\\
    =&\max_{\xsmall}\left|\frac{2d}{p_1}\left\|\hxRF\right\|_2^2-2d\KRF(\xsmall^T\xsmall)\right|\text{ (by Eq.~\eqref{eq.temp_090101})}\nonumber\\
    \leq &\max_{\xsmall,\bm{z}}\left|\frac{2d}{p_1}\left\|\hxRF\right\|_2\left\|\hzRF\right\|_2 -2d\KRF(\xsmall^T\bm{z})\right|\text{ (since we could set $\bm{z}=\xsmall$ on the right hand side)}\nonumber\\
    \leq & 2d\cdot \Qpd\text{ (because of $\mathcal{J}_1^c$)}.
\end{align}
By Eq.~\eqref{eq.temp_042502}, Eq.~\eqref{eq.temp_042501}, and Eq.~\eqref{eq.temp_042506}, we thus have
\begin{align}\label{eq.temp_090102}
    \left|\frac{2d}{p_1}\left\|\hxRF\right\|_2\cdot \left\|\hzRF\right\|_2-1\right|\leq 0.5\text{ for all $\xsmall$ and $\bm{z}$}.
\end{align}
Thus, we then have $\frac{2d}{p_1}(\hzRF)^T\hxRF\leq \frac{2d}{p_1}\left\|\hxRF\right\|_2\cdot \left\|\hzRF\right\|_2\leq 1.5$. Besides, we have $\frac{2d}{p_1}(\hzRF)^T\hxRF\geq 0$ because all elements of $\hxRF$ and $\hzRF$ are non-negative by Eq.~\eqref{eq.def_hxRF}. In other words, we have
\begin{align}\label{eq.temp_012301}
    \frac{2d}{p_1}(\hzRF)^T\hxRF \in [0, 1.5] \text{ for all $\xsmall$ and $\bm{z}$}.
\end{align}
Therefore, we then have
\begin{align}
    &\max_{\xsmall,\bm{z}}\left|\frac{(\hzRF)^T\hxRF}{\|\hzRF\|_2\cdot \|\hxRF\|_2}-2d\cdot\KRF(\xsmall,\bm{z})\right|\nonumber\\
    =&\max_{\xsmall,\bm{z}} \left|\frac{\frac{2d}{p_1}(\hzRF)^T\hxRF}{\frac{2d}{p_1}\|\hzRF\|_2\cdot \|\hxRF\|_2}-2d\cdot\KRF(\xsmall,\bm{z})\right|\nonumber\\
    \leq &  \max_{\xsmall,\bm{z}}\left|\frac{2d}{p_1}(\hzRF)^T\hxRF-2d\cdot\KRF(\xsmall,\bm{z})\right|+3\max_{\xsmall,\bm{z}}\left|\frac{2d}{p_1}\left\|\hxRF\right\|_2\cdot \left\|\hzRF\right\|_2-1\right|\nonumber\\
    &\text{(by Lemma~\ref{le.frac_converge}(ii) where $a=\frac{2d}{p_1}(\hzRF)^T\hxRF\in [0, 1.5]$ by Eq.~\eqref{eq.temp_012301},}\nonumber\\
    &\text{$b=\frac{2d}{p_1}\left\|\hxRF\right\|_2\cdot \left\|\hzRF\right\|_2$,  and $\delta=\left|\frac{2d}{p_1}\left\|\hxRF\right\|_2\cdot \left\|\hzRF\right\|_2-1\right|\in [0, 0.5]$ by Eq.~\eqref{eq.temp_090102}).}\nonumber\\
    \leq &9d\cdot \Qpd\text{ (by Eq.~\eqref{eq.temp_042501} and Eq.~\eqref{eq.temp_042502})}.\label{eq.temp_041901}
\end{align}
Now we apply Lemma~\ref{le.arccos_delta} by letting $\delta_1=\frac{1}{p_1}(\hxRF)^T\hzRF-\KRF(\xsmall^T\bm{z})$, $\delta_2=\frac{(\hzRF)^T\hxRF}{\|\hzRF\|_2\cdot \|\hxRF\|_2}-2d\cdot \KRF(\xsmall^T\bm{z})$, $a_1=\KRF(\xsmall^T\bm{z})$, and $a_2=2d\cdot\KRF(\xsmall^T\bm{z})$. We first check the conditions required by Lemma~\ref{le.arccos_delta}. By Eq.~\eqref{eq.def_KRF} and Lemma~\ref{le.RF_kernel_bound}, we have
\begin{align*}
    a_2 = 2d\cdot \KRF(\xsmall^T\bm{z})\in [0, 1] \subseteq [-1, 1].
\end{align*}
Because $\left|(\hzRF)^T\hxRF\right|\leq \|\hzRF\|_2\cdot \|\hxRF\|_2$, we have
\begin{align*}
    a_2+\delta_2=\frac{(\hzRF)^T\hxRF}{\|\hzRF\|_2\cdot \|\hxRF\|_2}\in [-1, 1].
\end{align*}
By Eq.~\eqref{eq.temp_041901} and $9d\cdot \Qpd \leq 1$ (the condition of this lemma), we have
\begin{align*}
    |\delta_2|\leq 9d\cdot \Qpd \leq 1.
\end{align*}
Therefore, all conditions of Lemma~\ref{le.arccos_delta} are satisfied. According to Lemma~\ref{le.arccos_delta}, we then have
\begin{align*}
    &\left|(a_1+\delta_1)\frac{\pi-\arccos(a_2+\delta_2)}{2\pi}-a_1\frac{\pi-\arccos(a_2)}{2\pi}\right|\leq \frac{1}{2}|\delta_1|+\frac{\sqrt{2}|a_1|\sqrt{|\delta_2|}}{4}\\
    \implies &
    \left|\frac{1}{p_1}(\hzRF)^T\hxRF\frac{\pi-\arccos\left(\frac{(\hzRF)^T\hxRF}{\|\hzRF\|_2\cdot \|\hxRF\|_2}\right)}{2\pi}\right.\\
    &\left.-\KRF(\xsmall^T\bm{z})\frac{\pi-\arccos(2d\cdot \KRF(\xsmall^T\bm{z}))}{2\pi}\right|\leq\frac{1}{2}\left|\frac{1}{p_1}(\hxRF)^T\hzRF-\KRF(\xsmall^T\bm{z})\right|\\
    &+\frac{\sqrt{2}|\KRF(\xsmall^T\bm{z})|\sqrt{\left|\frac{(\hzRF)^T\hxRF}{\|\hzRF\|_2\cdot \|\hxRF\|_2}-2d\cdot \KRF(\xsmall^T\bm{z})\right|}}{4}.
\end{align*}
By $\mathcal{J}_1^c$ and Eq.~\eqref{eq.temp_041901}, we thus have
\begin{align*}
    &\max_{\xsmall,\bm{z}}\left|\frac{1}{p_1}(\hzRF)^T\hxRF\frac{\pi-\arccos\left(\frac{(\hzRF)^T\hxRF}{\|\hzRF\|_2\cdot \|\hxRF\|_2}\right)}{2\pi}-\KThree(\xsmall^T\bm{z})\right|\\
    \leq& \frac{\Qpd}{2}+\frac{\sqrt{2}|\KRF(\xsmall,\bm{z})|\sqrt{9d\cdot \Qpd}}{4}\\
    \leq&\frac{\Qpd}{2}+\sqrt{\frac{9}{32d}\cdot \Qpd}\text{ (because $\KRF(\xsmall,\bm{z})\in\left[0,\ \frac{1}{2d}\right]$ by Lemma~\ref{le.RF_kernel_bound})}\\
    \leq & \left(\sqrt{\frac{1}{36d}}+\sqrt{\frac{9}{32d}}\right)\sqrt{\Qpd}\text{ (by Eq.~\eqref{eq.temp_042504})}\\
    \leq & \sqrt{\frac{\Qpd}{d}}\text{ (since $\sqrt{\frac{1}{36}}+\sqrt{\frac{9}{32}}\approx \frac{1}{6}+0.53\leq 1$)},
\end{align*}
i.e., $\mathcal{J}_2^c$ happens. We next estimate the probability of $\mathcal{J}_2$. We have
\begin{align*}
    \prob_{\Vzero}[\mathcal{J}_2]\leq &\prob_{\Vzero}[\mathcal{J}_1]\text{ (because $\mathcal{J}_2\subseteq \mathcal{J}_1$)}\\
    \leq & \frac{d^2}{(p_1+1)e^{d+1}}\text{ (by Lemma~\ref{le.max_RF_converge}, noticing that $9d\cdot \Qpd\leq 1 \implies p_1\geq 10$)}.
\end{align*}
The result of this lemma thus follows.
\end{proof}

\begin{lemma}\label{le.HHH}
We have
\begin{align*}
    \|\HHt^T(\HHt\HHt^T)^{-1}\|_2\leq \frac{1}{\sqrt{\min\eig(\HHt\HHt^T)}}.
\end{align*}
\end{lemma}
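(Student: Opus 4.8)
The plan is to observe that Lemma~\ref{le.HHH} is an elementary linear-algebra identity that in fact holds with equality. First I would set $M\defeq \HHt^T(\HHt\HHt^T)^{-1}$ and recall that, since $\HHt$ has full row rank (which is the standing hypothesis whenever $(\HHt\HHt^T)^{-1}$ is written), the matrix $\HHt\HHt^T$ is symmetric positive definite, hence invertible, and $(\HHt\HHt^T)^{-1}$ is itself symmetric positive definite. A direct computation then gives
\[
M^T M = (\HHt\HHt^T)^{-1}\HHt\HHt^T(\HHt\HHt^T)^{-1} = (\HHt\HHt^T)^{-1}.
\]
Using the fact that $\|M\|_2^2 = \max\eig(M^T M)$ for any real matrix $M$, together with the identity $\max\eig(B^{-1}) = 1/\min\eig(B)$ for a positive definite $B$, we conclude $\|M\|_2^2 = 1/\min\eig(\HHt\HHt^T)$, so $\|\HHt^T(\HHt\HHt^T)^{-1}\|_2 = 1/\sqrt{\min\eig(\HHt\HHt^T)}$; the stated inequality (a slightly weaker form, which is all that is needed later) follows at once.

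There is no substantive obstacle here; the only point worth a sentence is the invertibility of $\HHt\HHt^T$, which is guaranteed with high probability under Condition~\ref{cond.large_p1} since Proposition~\ref{prop.min_eig} lower-bounds $\min\eig(\HHt\HHt^T)$ away from zero. An equivalent route is through the singular value decomposition $\HHt = U\Sigma V_1^T$, where $\Sigma\in\mathds{R}^{n\times n}$ is the diagonal matrix of (nonzero) singular values and $V_1$ collects the corresponding right singular vectors: one checks $\HHt^T(\HHt\HHt^T)^{-1} = V_1\Sigma^{-1}U^T$, whose spectral norm is $1/\sigma_{\min}(\HHt) = 1/\sqrt{\min\eig(\HHt\HHt^T)}$. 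I would present the first, purely computational version, since it avoids introducing SVD notation and makes clear that the bound is in fact tight.
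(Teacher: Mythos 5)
Your proof is correct and follows essentially the same route as the paper's: the paper computes $\|\HHt^T(\HHt\HHt^T)^{-1}\bm{a}\|_2^2 = \bm{a}^T(\HHt\HHt^T)^{-1}\bm{a}$ for arbitrary $\bm{a}$ and then bounds the quadratic form by $\|\bm{a}\|_2^2/\min\eig(\HHt\HHt^T)$, which is exactly your identity $M^TM=(\HHt\HHt^T)^{-1}$ combined with the eigenvalue characterization of the spectral norm. Your additional observation that the bound is actually an equality is correct but not used anywhere in the paper.
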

\begin{proof}
For any $\bm{a}\in\mathds{R}^n$, we have
\begin{align*}
    \|\HHt^T(\HHt\HHt^T)^{-1}\bm{a}\|_2=\sqrt{\left(\HHt^T(\HHt\HHt^T)^{-1}\bm{a}\right)^T\HHt^T(\HHt\HHt^T)^{-1}\bm{a}}&=\sqrt{\bm{a}^T(\HHt\HHt^T)^{-1}\bm{a}}\\
    &\leq \frac{\|\bm{a}\|_2}{\sqrt{\min\eig(\HHt\HHt^T)}}.
\end{align*}
The result of this lemma thus follows.
\end{proof}


We are now ready to prove Proposition~\ref{prop.term_C} and Proposition~\ref{prop.term_DB}.

\subsection{Proof of Proposition~\ref{prop.term_C}}\label{app.proof_term_C}
\begin{proof}
For $k=1,2,\cdots,p_2$, define
\begin{align}\label{eq.temp_012303}
    K_k=\int_{\sd}\frac{(\hxRF)^T\hzRF}{p_1}\indicator{(\hxRF)^T\Wzero[k]>0,\ (\hzRF)^T\Wzero[k]}g(\bm{z})d\xdensity(\bm{z}).
\end{align}
It is obvious that $K_1,K_2,\cdots,K_{p_2}$ are \emph{i.i.d.} (when randomness is on $\Wzero$).
By Eq.~\eqref{eq.def_pseudoGT} and Eq.~\eqref{eq.def_h3}, we have
\begin{align}\label{eq.temp_042603}
    \pseudoGT(\bm{x})=\frac{1}{p_2}\sum_{k=1}^{p_2}K_k.
\end{align}

Notice that
\begin{align}
    |K_k|\leq & \int_{\sd}\left|\frac{(\hxRF)^T\hzRF}{p_1}\right|\cdot |g(\bm{z})|\ d\xdensity(\bm{z})\nonumber\\
    \leq & \int_{\sd}|g(\bm{z})|\ d\xdensity(\bm{z})\text{ (by Lemma~\ref{le.bound_hx})}\nonumber\\
    =& \|g\|_1.\label{eq.temp_042801}
\end{align}

Thus, by Lemma~\ref{le.chebyshev}, we have
\begin{align}\label{eq.temp_042601}
    \prob_{\Wzero}\left\{\left|\frac{1}{p_2}\sum_{k=1}^{p_2} K_k-\expectation_{\Wzero}[K_1]\right|\geq \frac{q\|g\|_1}{\sqrt{p_2}}\right\}\leq\frac{1}{q^2}.
\end{align}

For any $k\in\{1,2,\cdots,p_2\}$, we have
\begin{align*}
    &\expectation_{\Wzero}[K_k]\\
    =&\int_{\sd}\expectation_{\Wzero}\left[\frac{(\hxRF)^T\hzRF}{p_1}\indicator{(\hxRF)^T\Wzero[k]>0,\ (\hzRF)^T\Wzero[k]}\right]g(\bm{z})d\xdensity(\bm{z})\\
    =&\int_{\sd}\frac{(\hxRF)^T\hzRF}{p_1}\cdot \frac{\pi-\arccos\left(\frac{(\hxRF)^T\hzRF}{\left\|\hxRF\right\|_2\cdot \left\|\hzRF\right\|_2}\right)}{2\pi}g(\bm{z})d\xdensity(\bm{z})\text{ (by Lemma~\ref{le.NTK2_kernel})}\\
    = & \f(\xsmall)+\int_{\sd}\left(\frac{(\hxRF)^T\hzRF}{p_1}\cdot \frac{\pi-\arccos\left(\frac{(\hxRF)^T\hzRF}{\left\|\hxRF\right\|_2\cdot \left\|\hzRF\right\|_2}\right)}{2\pi}-\KThree(\xsmall^T\bm{z})\right)g(\bm{z})d\xdensity(\bm{z})\\
    &\text{ (by $\f=\fg$ and Eq.~\eqref{eq.def_learnableSet})}.
\end{align*}
Thus, we have
\begin{align}
    &\left|\expectation_{\Wzero}[K_k]-\f(\xsmall)\right|\nonumber\\
    \leq &\max_{\xsmall,\bm{z}}\left|\frac{(\hxRF)^T\hzRF}{p_1}\cdot \frac{\pi-\arccos\left(\frac{(\hxRF)^T\hzRF}{\left\|\hxRF\right\|_2\cdot \left\|\hzRF\right\|_2}\right)}{2\pi}-\KThree(\xsmall^T\bm{z})\right|\cdot \|g\|_1.\label{eq.temp_042804}
\end{align}
Applying Lemma~\ref{le.arccos_to_arccos}, we then have
\begin{align}\label{eq.temp_042602}
    \prob_{\Vzero}\left\{\left|\expectation_{\Wzero}[K_k]-\f(\xsmall)\right|\geq \sqrt{\frac{\Qpd}{d}} \|g\|_1\right\}\leq\frac{d^2}{(p_1+1)e^{d+1}}.
\end{align}
Notice that
\begin{align*}
    \left|\pseudoGT(\xsmall)-\f(\xsmall)\right|&=\left|\frac{1}{p_2}\sum_{k=1}^{p_2} K_k-\f(\xsmall)\right|\text{ (by Eq.~\eqref{eq.temp_042603})}\\
    &\leq \left|\frac{1}{p_2}\sum_{k=1}^{p_2} K_k-\expectation_{\Wzero}[K_1]\right|+\left|\expectation_{\Wzero}[K_1]-\f(\xsmall)\right|\text{ (by the triangle inequality)}.
\end{align*}
Combining Eq.~\eqref{eq.temp_042601} and Eq.~\eqref{eq.temp_042602} by the union bound, we thus have
\begin{align}\label{eq.temp_042604}
    &\prob_{\Vzero,\Wzero}\left\{\left|\pseudoGT(\xsmall)-\f(\xsmall)\right|\geq \frac{q\|g\|_1}{\sqrt{p_2}}+ \sqrt{\frac{\Qpd}{d}} \|g\|_1\right\}\nonumber\\
    \leq &\prob_{\Vzero,\Wzero}\left\{\left|\frac{1}{p_2}\sum_{k=1}^{p_2} K_k-\expectation_{\Wzero}[K_1]\right|\geq \frac{q\|g\|_1}{\sqrt{p_2}}\right\}+\prob_{\Vzero,\Wzero}\left\{\left|\expectation_{\Wzero}[K_k]-\f(\xsmall)\right|\geq \sqrt{\frac{\Qpd}{d}} \|g\|_1\right\}\nonumber\\
    \leq &\frac{d^2}{(p_1+1)e^{d+1}} +\frac{1}{q^2}.
\end{align}
\end{proof}

\subsection{Proof of Proposition~\ref{prop.term_DB}}\label{app.proof_term_DB}
\begin{proof}
    For $k=1,2,\cdot,p_2$, define $\mathbf{K}_k\in\mathds{R}^n$ whose $i$-th element is
    \begin{align*}
        \mathbf{K}_{k,i}\defeq \int_{\sd}\frac{(\hXiRF)^T\hzRF}{p_1}\indicator{(\hXiRF)^T\Wzero[k]>0,\ (\hzRF)^T\Wzero[k]}g(\bm{z})d\xdensity(\bm{z}).
    \end{align*}
    Note that $\mathbf{K}_{k,i}$ is similar to $K_k$ in Eq.~\eqref{eq.temp_012303}, with the only difference that the former is defined with respect to $\Xii$ and the latter is defined with respect to $\xsmall$. Thus, we use a similar strategy to work with $\mathbf{K}_{k,i}$.
    By Eq.~\eqref{eq.def_F_pseudoGT} and Eq.~\eqref{eq.def_pseudoGT}, we have
    \begin{align*}
        \FWVg(\XX) = \frac{1}{p_2}\sum_{k=1}^{p_2}\mathbf{K}_k.
    \end{align*}
    Similar to Eq.~\eqref{eq.temp_042801}, we have
    \begin{align*}
        |\mathbf{K}_{k,i}|\leq \|g\|_1,\text{ for all }i=1,2,\cdots,n.
    \end{align*}
    Thus, we have
    \begin{align*}
        \|\mathbf{K}_k\|_2=\sqrt{\sum_{i=1}^n\mathbf{K}_{k,i}^2}\leq \sqrt{n}\|g\|_1.
    \end{align*}
    By Lemma~\ref{le.chebyshev}, we thus have
    \begin{align*}
        \prob_{\Wzero}\left\{\left\|\frac{1}{p_2}\sum_{k=1}^{p_2}\mathbf{K}_k-\expectation_{\Wzero}[\mathbf{K}_1]\right\|_2\geq \frac{q\sqrt{n}\|g\|_1}{\sqrt{p_2}}\right\}\leq \frac{1}{q^2}.
    \end{align*}
    Similar to Eq.~\eqref{eq.temp_042804}, we have
    \begin{align*}
        &\left\|\expectation_{\Wzero}[\mathbf{K}_1]-\F(\XX)\right\|_2\\
        \leq &\sqrt{n}\max_{\xsmall,\bm{z}}\left|\frac{(\hxRF)^T\hzRF}{p_1}\cdot \frac{\pi-\arccos\left(\frac{(\hxRF)^T\hzRF}{\left\|\hxRF\right\|_2\cdot \left\|\hzRF\right\|_2}\right)}{2\pi}-\KThree(\xsmall,\bm{z})\right|\cdot \|g\|_1.
    \end{align*}
    Thus, similar to Eq.~\eqref{eq.temp_042604}, we have
    \begin{align}\label{eq.temp_042805}
        \prob_{\Vzero,\Wzero}\left\{\left\|\F(\XX)-\FWVg(\XX)\right\|_2\geq \frac{q\sqrt{n}\|g\|_1}{\sqrt{p_2}}+ \sqrt{\frac{\Qpd}{d}}\sqrt{n} \|g\|_1\right\}\leq \frac{d^2}{(p_1+1)e^{d+1}} +\frac{1}{q^2}.
    \end{align}
    
    We note that
    \begin{align*}
        &\text{term D}+\text{term B of Eq.~\eqref{eq.term_ABC}}\\
        \leq &  \left\|\hhx\right\|_2\left\|\HHt^T(\HHt\HHt^T)^{-1}\right\|_2\cdot\left( \|\F(\XX)-\FWVg(\XX)\|_2+\|\esmall\|_2\right)\text{ (by Lemma~\ref{le.matrix_norm})}\\
        \leq &\frac{\sqrt{p_1 p_2}\cdot \left(\|\F(\XX)-\FWVg(\XX)\|_2+\|\esmall\|_2\right)}{\sqrt{\min\eig(\HHt\HHt^T)}}\text{ (by Lemma~\ref{le.HHH} and Lemma~\ref{le.bound_hx})}.
    \end{align*}
    Combining Eq.~\eqref{eq.temp_042805} and Proposition~\ref{prop.min_eig} by the union bound, we thus have
    \begin{align*}
        &\prob_{\XX,\Vzero,\Wzero}\left\{\text{term D}+\text{term B of Eq.~\eqref{eq.term_ABC}}\geq \frac{\sqrt{n}\|g\|_1\left(\frac{q}{\sqrt{p_2}}+\sqrt{\frac{\Qpd}{d}}\right)+\|\esmall\|_2}{\sqrt{\Jnppdq}} \right\}\\
        &\leq \frac{d^2}{(p_1+1)e^{d+1}} +\frac{8}{q^2}.
    \end{align*}
\end{proof}


\section{Details Related to Learnable Set}\label{app.details_learnable_set}

In this part, we first restate Proposition~\ref{prop.concise_learnable_set} in a more precise way, i.e.,  Proposition~\ref{prop.finite_sum_in_the_set} in Appendix~\ref{app.set_part_1} and Proposition~\ref{prop.compare_efficient_and_set} in Appendix~\ref{app.set_part_2}. Then, in Appendix~\ref{sec.outside_learnable_set} we discuss the generalization performance of ground-truth functions outside the learnable set.

\subsection{\texorpdfstring{$\learnableSet$}{Learnable set} contains all polynomials with finite degree}\label{app.set_part_1}
By the following proposition, we show that $\learnableSet$ contains all polynomials with finite degree. We formally state it in the following proposition.
\begin{proposition}\label{prop.finite_sum_in_the_set}
Let $k$ be a finite non-negative integer. For any $\f(\xsmall)=\sum_{i=0}^k c_i (\xsmall^T \bm{a}_i)^i$ where  $c_i\in\mathds{R}$ and $\bm{a}_i\in \mathds{R}^d$, we must have $\f\in \learnableSet$.
\end{proposition}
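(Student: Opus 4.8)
The plan is to show that $\learnableSet$ is a linear space (closed under finite sums and scalar multiples, since $\KThree$ integrates against an essentially arbitrary bounded $g$) and then to exhibit, for each monomial $(\xsmall^T\bm a)^i$ with $\|\bm a\|_2=1$, an explicit bounded $g_i$ such that $\int_{\sd}\KThree(\xsmall^T\bm z)\,g_i(\bm z)\,d\xdensity(\bm z)=(\xsmall^T\bm a)^i$. Because the kernel $\KThree$ is rotation-invariant (it depends only on $\xsmall^T\bm z$), the Funk–Hecke theorem applies: $\KThree$ has an expansion in Gegenbauer/Legendre polynomials (equivalently, each spherical harmonic of degree $\ell$ is an eigenfunction of the integral operator $T_{\KThree}g(\xsmall)=\int_{\sd}\KThree(\xsmall^T\bm z)g(\bm z)d\xdensity(\bm z)$, with eigenvalue $\harmonicCoeff_\ell$ say). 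Then $(\xsmall^T\bm a)^i$ is a finite linear combination of zonal harmonics of degrees $\ell\le i$ with the same parity as $i$, so as long as all the relevant eigenvalues $\harmonicCoeff_\ell$ are nonzero, we can write $(\xsmall^T\bm a)^i=T_{\KThree}g_i$ with $g_i$ a finite combination of zonal harmonics (hence a bounded function on $\sd$), giving $\|g_i\|_\infty<\infty$ as required.

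The key steps, in order, would be: (i) record that $\learnableSet$ is closed under finite linear combinations, so it suffices to handle each monomial $(\xsmall^T\bm a)^i$ separately, and by rotation invariance we may take $\bm a=\bm e_1$. (ii) Recall/derive the Taylor (equivalently harmonic) coefficients of $\KThree$: from Lemma~\ref{le.taylor_of_kernel} we have a power-series expansion of $\KTwo$ and of $2d\cdot\KRF$ in the argument $a=\xsmall^T\bm z$, and $\KThree(a)=\KTwo(2d\cdot\KRF(a))/(2d)$ is a composition of two such series; composing the two series (using Lemma~\ref{le.expansion} for powers of a power series) yields $\KThree(a)=\sum_{s\ge0}\taylorCoeff_s a^s$ with explicit nonnegative coefficients $\taylorCoeff_s$. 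The crucial arithmetic fact to establish is that $\taylorCoeff_s>0$ for every $s\ge0$ — i.e. the composed series has every Taylor coefficient strictly positive. (iii) Convert ``all Taylor coefficients positive'' into ``all Funk–Hecke eigenvalues positive'': a standard fact (Gegenbauer expansion / the ultraspherical analogue) is that if $\KThree(a)=\sum_s \taylorCoeff_s a^s$ with $\taylorCoeff_s\ge0$ and infinitely many (in fact all) positive, then the eigenvalue $\harmonicCoeff_\ell$ attached to degree-$\ell$ harmonics is a positive combination $\harmonicCoeff_\ell=\sum_{s\ge\ell}\taylorCoeff_s\,\kappa_{s,\ell}$ with $\kappa_{s,\ell}>0$ whenever $s\equiv\ell\pmod 2$ and $s\ge\ell$; hence $\harmonicCoeff_\ell>0$ for all $\ell\ge0$. (iv) Finally, expand $(\xsmall^T\bm e_1)^i=\sum_{\ell\le i,\ \ell\equiv i(2)}\alpha_{i,\ell}\,Z_\ell(\xsmall^T\bm e_1)$ in zonal harmonics, set $g_i(\bm z)=\sum_{\ell\le i,\ \ell\equiv i(2)}(\alpha_{i,\ell}/\harmonicCoeff_\ell)Z_\ell(\bm z^T\bm e_1)$, a finite sum of bounded functions, and verify $T_{\KThree}g_i=(\xsmall^T\bm e_1)^i$ by linearity and the eigenvalue relation. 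This $g_i$ has finite $\|\cdot\|_\infty$, so $(\xsmall^T\bm e_1)^i\in\learnableSet$, and summing finitely many such with scalar weights gives the proposition.

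The main obstacle I anticipate is step (ii)–(iii): showing that \emph{every} Taylor coefficient $\taylorCoeff_s$ of the composed kernel $\KThree(a)=\KTwo(2d\cdot\KRF(a))/(2d)$ is strictly positive. The inner function $2d\cdot\KRF(a)=\tfrac1\pi\big(1+\tfrac\pi2 a+\sum_{k\ge0}\tfrac{2(2k)!}{(k+1)(2k+1)(k!)^2}(a/2)^{2k+2}\big)$ has a positive constant term, a positive linear term, and positive coefficients on all even powers $\ge2$ but zero coefficients on odd powers $\ge3$; raising it to powers (Lemma~\ref{le.expansion}) mixes parities because of the nonzero constant and linear terms, so one must argue that no cancellation can occur — here all coefficients involved are nonnegative, so positivity of $\taylorCoeff_s$ reduces to exhibiting, for each $s$, at least one monomial in the multinomial expansion that contributes. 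A clean way is: the constant term of $2d\cdot\KRF$ is $1/\pi>0$ and its linear term is $1/2>0$, so $(2d\cdot\KRF(a))^j$ contains $a^s$ with positive coefficient for every $0\le s\le j$ (take $s$ copies of the linear term and $j-s$ of the constant term); since $\KTwo(u)=u/4+\sum_{k\ge0}(\dots)(u/2)^{2k+2}$ has a positive coefficient on $u^1$ and on all even powers $u^{2k+2}$, it involves $u^j$ for $j=1$ and all even $j\ge2$, in particular arbitrarily large $j$, so every power $a^s$ is reached with a strictly positive coefficient. Making this chain of positivity claims precise — and checking the boundary/edge cases $s=0,1$ and that the needed double-factorial/binomial constants are genuinely nonzero — is the only delicate bookkeeping; everything else (linearity of $\learnableSet$, Funk–Hecke, the zonal-harmonic expansion of a power of $\xsmall^T\bm e_1$) is standard and can be cited or done in a line.
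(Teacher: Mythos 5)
Your proposal is correct and follows essentially the same route as the paper: reduce to monomials by linearity and rotation invariance, decompose the integral operator of $\KThree$ in (zonal) spherical harmonics via a Funk--Hecke-type argument, show every eigenvalue is strictly positive by combining the nonnegative Taylor expansion of $\KThree$ with the positivity of the projection of each power $a^s$ onto harmonics of compatible degree and parity, and then build the bounded $g$ explicitly as a finite sum of zonal harmonics rescaled by those eigenvalues. The paper implements exactly this through Eq.~\eqref{eq.convolution}, Proposition~\ref{prop.coefficients}, and Lemmas~\ref{le.ck_positive} and~\ref{le.Q_positive}; your shortcut for the positivity of $\taylorCoeff_s$ (exploit the positive constant and linear terms of $2d\cdot\KRF$ raised to a power $j\ge s$ that $\KTwo$ touches with a positive coefficient) is a clean compression of the multinomial bookkeeping the paper carries out in Lemma~\ref{le.ck_positive}.
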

We prove Proposition~\ref{prop.finite_sum_in_the_set} in Appendix~\ref{app.proof_finite_degree}. Although Proposition~\ref{prop.finite_sum_in_the_set} is only for no-bias situation of 3-layer NTK, we can easily prove the similar results for the biased 3-layer NTK with the same proof technique.



\subsection{\texorpdfstring{$\learnableSet$}{F(3)} is a superset of \texorpdfstring{$\learnableSetTwoBias$}{F(2)b} (recall the definition of \texorpdfstring{$\learnableSetTwoBias$}{F(2)b} in Section~\ref{sec.high_d})}\label{app.set_part_2}


The learnable sets of both 3-layer and 2-layer NTK models also contain polynomials with \emph{infinite} degree. Notice that not all infinite-degree polynomials belong to the learnable sets, because the norm of the corresponding function $g$ may not be finite. As we mentioned in footnote~\ref{footnote.finite_g_1}, the constrain $\|g\|_\infty<\infty$ can be relaxed to $\|g\|_1<\infty$. However, with $\|g\|_1<\infty$, the comparison among those learnable sets becomes more difficult. For convenience, we just relax the constraint to $\|g\|_2<\infty$ (instead of $\|g\|_1<\infty$) in the following result.

\begin{proposition}\label{prop.compare_efficient_and_set}
Under the constraint of $\|g\|_2<\infty$, the learnable set of the 3-layer NTK (no bias) is at least as large as the 2-layer NTK (both with and without bias) ,i.e., $\learnableSetTwo\cup \learnableSetTwoBias \subseteq \learnableSet$. The learnable set of 2-layer NTK with bias is larger than that of 2-layer NTK without bias i.e., $\learnableSetTwo\subset \learnableSetTwoBias$. The learnable sets of 2-layer NTK with different bias settings are the same i.e., $\learnableSetTwoBiasA=\learnableSetTwoBiasB$ for any $b_1,b_2\in (0,1)$.
\end{proposition}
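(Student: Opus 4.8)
The plan is to analyze everything through the Taylor/harmonic expansions of the kernels, reducing the set-inclusion questions to the comparison of coefficient sequences. Concretely, the first step is to convert each learnable set $\mathcal{F}$ into a description in terms of a decomposition over spherical harmonics. Using Lemma~\ref{le.taylor_of_kernel}, each kernel $K(a)$ (whether $\KTwo$, $\KThree$, or a biased variant $K(\alpha a + \beta)$) has a power-series expansion $K(a)=\sum_{m\ge 0} \taylorCoeff_m a^m$ with nonnegative coefficients. A function $\fg(\xsmall)=\int_{\sd} K(\xsmall^T\bm z)g(\bm z)\,d\xdensity(\bm z)$ then expands, via the Funk–Hecke/zonal-harmonic expansion of $a^m$ on $\sd$, into $\fg=\sum_{\ell} \harmonicCoeff_\ell\, P_\ell$ where each $P_\ell$ is the projection onto the degree-$\ell$ harmonic subspace and the coefficient $\harmonicCoeff_\ell$ is (a fixed positive constant times) the corresponding harmonic coefficient of $g$. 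The key point is: $\fg$ belongs to the learnable set with $\|g\|_2<\infty$ iff the "deconvolved" coefficient sequence obtained by dividing by the eigenvalue sequence $\{\nu_\ell^{K}\}$ of the kernel is square-summable against the appropriate measure — i.e., the membership condition is exactly a weighted $\ell_2$ condition on the harmonic content of $\fg$ with weights $1/\nu_\ell^K$.

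**Second**, I would establish the inclusion $\learnableSetTwo\cup\learnableSetTwoBias\subseteq\learnableSet$ by showing the eigenvalues of $\KThree$ decay no faster than those of $\KTwo$ and of the biased 2-layer kernels — up to a point. The subtlety, which the excerpt already flags, is that $\learnableSetTwo$ (unbiased 2-layer) is missing the odd-degree harmonics beyond degree $1$: its kernel $\KTwo(a)=\frac a4 + \frac1{2\pi}\sum_k \frac{(2k)!}{(k!)^2}\frac{4}{2k+1}(a/2)^{2k+2}$ has only even powers plus the linear term, so $\nu_\ell^{\KTwo}=0$ for odd $\ell\ge 3$, and functions in $\learnableSetTwo$ have zero projection there. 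Meanwhile $\learnableSet$ contains all finite-degree polynomials by Proposition~\ref{prop.finite_sum_in_the_set}, so its kernel $\KThree$ has strictly positive eigenvalues in every degree. Hence for $\learnableSetTwo\subseteq\learnableSet$ I only need: (i) on the degrees where $\nu_\ell^{\KTwo}>0$, the ratio $\nu_\ell^{\KTwo}/\nu_\ell^{\KThree}$ stays bounded (so $\|g\|_2<\infty$ for $\KTwo$ implies the same for $\KThree$); this follows from comparing the composed expansion $\KThree = \KTwo(2d\cdot\KRF(\cdot))/(2d)$ term-by-term using Lemma~\ref{le.taylor_of_kernel}, since composition with the affine-plus-positive-tail map $2d\cdot\KRF$ cannot kill any harmonic and only rescales the decay polynomially. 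For $\learnableSetTwoBias\subseteq\learnableSet$ the same argument applies once we expand $\KTwo(\alpha a+\beta)$ in powers of $a$ — the shift introduces all powers with positive coefficients, and again we just need the eigenvalue ratio to be bounded, which is a routine estimate on binomial-type coefficients.

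**Third**, for the strict inclusion $\learnableSetTwo\subsetneq\learnableSetTwoBias$: inclusion is the $\alpha=1,\beta=0$ degenerate comparison above (the biased kernel's eigenvalues dominate the unbiased ones on even degrees, and are positive on odd degrees where the unbiased ones vanish); strictness follows by exhibiting a single odd non-linear polynomial, e.g. $\f(\xsmall)=(\xsmall^T\bm e_1)^3$, which lies in $\learnableSetTwoBias$ by Proposition~\ref{prop.finite_sum_in_the_set}-type reasoning (the biased 2-layer kernel has all eigenvalues positive) but has nonzero degree-$3$ harmonic projection, hence cannot be in $\learnableSetTwo$. Finally, for $\learnableSetTwoBiasA=\learnableSetTwoBiasB$: both biased kernels $\KTwo(\alpha_i a + \beta_i)$ with $\alpha_i=1-b_i^2$, $\beta_i=b_i^2\in(0,1)$ have strictly positive power-series coefficients in $a$ of every order (because $\KTwo$ has positive coefficients of every order $\ge 1$ and a positive value/derivatives at the interior point $\beta_i$, making every derivative $K^{(m)}(\beta_i a+\ldots)$ positive). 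Thus for both choices the eigenvalue sequences $\nu_\ell^{(i)}$ are strictly positive for all $\ell$, and I claim their ratio $\nu_\ell^{(1)}/\nu_\ell^{(2)}$ is bounded above and below uniformly in $\ell$; given that, the weighted-$\ell_2$ membership conditions coincide, so the sets are equal. Establishing two-sided bounds on this ratio is where I expect the real work: one must control how the affine argument-shift interacts with the asymptotics $\nu_\ell^{\KTwo}\asymp \ell^{-d}$ (up to constants depending on parity), and show the shift only perturbs the constant, not the exponent. I would do this by writing $\nu_\ell$ as a Gegenbauer-weighted integral $\int_{-1}^1 K(\alpha t+\beta) C_\ell^{(d/2-1)}(t)(1-t^2)^{(d-3)/2}dt$ and using the known fact that the singularity structure of $K$ at the endpoint $t=\pm1$ governs the decay rate — since all these kernels have the same $\mathrm{ReLU}$-type endpoint behavior, the exponents match and only the prefactor changes. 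The main obstacle, then, is making this endpoint-singularity / eigenvalue-asymptotics comparison rigorous and uniform in $\ell$; everything else reduces to bookkeeping with the explicit series from Lemma~\ref{le.taylor_of_kernel}.
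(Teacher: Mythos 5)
Your outline matches the paper's in its first reduction: both pass through the spherical-harmonic (zonal/convolution) decomposition, identify membership in each learnable set with a weighted $\ell_2$ condition on the harmonic content of $\fg$ with weights $1/c_h(l,\bm 0)$, and reduce all three claims to showing that the relevant coefficient ratios are bounded above and below (or, for $\learnableSetTwo$, correctly noting that the odd coefficients of degree $\geq 3$ vanish so the comparison is restricted to even degrees plus degree $1$). So the skeleton is right.

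The genuine gap is that you have left the single hard step — the two-sided asymptotic comparison of the coefficient sequences — as a plan. In the paper this step is essentially the entire content of the proof: Lemma~\ref{le.coeff_same_taylor} transfers the problem from harmonic coefficients to Taylor coefficients of the scalar kernel, and then Lemma~\ref{le.\taylorCoeff_a_over_\taylorCoeff_0} (two-sided bound for the biased $2$-layer kernel), Lemma~\ref{le.finite_poly_conclusion} (finite-degree inner polynomial), and Lemmas~\ref{le.lower_bound_of_e_k}--\ref{le.estimate_T_k} together with the explicit numerical certificate $L_3\approx 0.069>0$ (infinite-degree inner polynomial, i.e.\ the $\KRF$ case) do the actual bounding. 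You acknowledge this is "where the real work" lies, but a plan for the real work is not a proof of it, and the route you sketch would not go through as stated. The proposed Gegenbauer/endpoint-singularity argument rests on the claim that "all these kernels have the same ReLU-type endpoint behavior," which is false in detail: the unbiased $\KTwo$ has $\arccos$-type singularities at both endpoints $a=\pm1$; a biased $\KTwo((1-b^2)a+b^2)$ with $b>0$ is analytic at $a=-1$ (its argument stays interior there) and singular only at $a=1$; and $\KThree$ has a $(1+a)^{3/2}$ singularity at $a=-1$ (inherited from $\KRF$) in addition to the one at $a=1$. So the exponents at the two endpoints do \emph{not} uniformly match across the three kernel families, and one cannot conclude "the exponent is the same and only the prefactor changes" without a parity-by-parity analysis of how the two endpoint contributions combine. (That such an analysis should still produce the needed inequalities is plausible — the extra singularity of $\KThree$ only makes its coefficients larger, and the paper only needs the one-sided $\Omega$-bound $c_{\hf}(k,\bm 0)=\Omega(c_{\hftbA}(k,\bm 0))$ for the $\learnableSet$ inclusion, not a $\Theta$-match — but none of this is worked out.) By contrast, the paper's Taylor-coefficient route, while elementary, produces explicit, uniformly valid two-sided constants by tracking the recursion $e_k=\sum_{i<k}\tilde b_{k-i}e_i$ directly; it never needs the (harder, reference-dependent) asymptotic $\nu_\ell\asymp \ell^{-d}$ your plan would invoke.

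Your treatment of strictness $\learnableSetTwo\subsetneq\learnableSetTwoBias$ via a degree-$3$ odd harmonic is fine and is the same observation the paper makes implicitly; and your observation that the even-degree coefficients of the unbiased and biased $2$-layer kernels must be $\Theta$-comparable is exactly the first statement of Lemma~\ref{le.coefficient_compare}. So the conclusions you target are correct; what is missing is the quantitative coefficient comparison that justifies them, and the analytic route you propose for it needs repair before it could replace the paper's explicit estimates.
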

We prove Proposition~\ref{prop.compare_efficient_and_set} in Appendix~\ref{app.proof_compare_set}. An important message conveyed by Proposition~\ref{prop.compare_efficient_and_set} is that, 3-layer NTK can at least learn all learnable functions for 2-layer NTK under the constraint $\|g\|_2<\infty$. We conjecture that the same result may also hold for $\|g\|_1<\infty$, which we leave for future work.



\subsection{Generalization performance of ground-truth functions outside the learnable set}\label{sec.outside_learnable_set}

One may wonder what happens to the generalization performance for functions outside the learnable set.
Notice that although we have proven that ground-truth functions inside the learnable set can be learned, it is possible that some functions outside the learnable set could still be learnable. For 2-layer NTK models without bias, \cite{ju2021generalization} shows that if a ground-truth function has a positive distance away from the learnable set, then such distance becomes the lower bound of the generalization error. Such ground-truth functions with positive distance exist for 2-layer NTK, e.g., $(\xsmall^T\bm{e}_1)^3$, because $\learnableSetTwo$ does not contain odd power polynomials except linear functions. However, for 2-layer NTK with bias or 3-layer NTK, there do not exist such ground-truth functions with a positive distance away from the learnable set. In other words, functions outside the learnable set is still in the closure of the corresponding learnable set. Thus, it is unclear whether or not those functions have a very different generalization performance compared with functions inside the learnable set.

\begin{figure}[t!]
    \centering
    \includegraphics[width=0.7\textwidth]{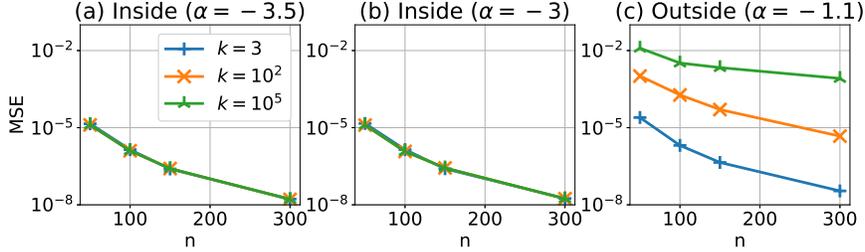}
    \caption{Curves of test MSE of 2-layer NTK with normal bias with respect to $n$ for the ground-truth functions $\f_{k, \alpha}(\xsmall)$ where $p_1,p_2\to\infty$, $d=2$, and $\esmall=\bm{0}$. Every curve is the average of 10 simulation runs.
    }\label{fig.boundary_check_x_axis_is_n}
\end{figure}

We now use simulation results in Fig.~\ref{fig.boundary_check_x_axis_is_n} to show that functions outside the learnable set may indeed exhibit qualitatively different generalization performance (and thus Proposition~\ref{prop.compare_efficient_and_set} will be meaningful in capturing ground-truth functions with good generalization performance). 
We construct an example of functions inside and outside the learnable set (in the sense of finite $\|g\|_2$, consistent with Proposition~\ref{prop.compare_efficient_and_set}). 
For simplicity, we focus on $\learnableSetTwoNB$, which is the learnable set for the 2-layer NTK with normal bias. We then consider a specific type of normalized ground-truth functions $\f_{k,\alpha}\defeq \bar{f}_{k,\alpha}/\|\bar{f}_{k,\alpha}\|_2$ where $\bar{f}_{k,\alpha}(\xsmall)\defeq \sum_{i=1}^k i^\alpha \left(\xsmall^T\bm{e}_d\right)^i$. By previous discussion, we have already known that if $k$ is finite, then $f_{k,\alpha}\in \learnableSetTwoNB$. However, when $k=\infty$, then whether $f_{\infty,\alpha}\in \learnableSetTwoNB$ or not is determined by the value of $\alpha$. We let $d=2$ and choose the value of $\alpha$ to be $-3.5$, $-3$, and $-1.1$, respectively. It can be verified that $f_{k,\alpha}\in \learnableSetTwoNB$ when $\alpha=-3.5$ or $\alpha=-3$, while $f_{k,\alpha}\notin \learnableSetTwoNB$ when $\alpha=-1.1$.
In numerical experiments, it is difficult to directly calculate $f_{\infty,\alpha}$, as we do not know the close form of $f_{\infty,\alpha}$. Therefore, we use $f_{k,\alpha}$ to approach $f_{\infty,\alpha}$ by increasing $k$. In Fig.~\ref{fig.boundary_check_x_axis_is_n}(a), we let $\alpha=-3.5$ and plot the test MSE with respect to $n$ when $k=3$ (blue curve), $k=10^2$ (orange curve), and $k=10^5$ (green curve), respectively. We can see that these three curves almost overlap with each other, which implies that increasing $k$ does not alter the test error significantly. (Similar phenomenon also appears in Fig.~\ref{fig.boundary_check_x_axis_is_n}(b) where $\alpha=-3$.) In contrast, when we let $\alpha=-1.1$ in Fig.~\ref{fig.boundary_check_x_axis_is_n}(c), larger $k$ leads to a much flatter curve. This phenomenon suggests that when $k\to \infty$, providing more training data becomes less effective in lowering the test error. Besides, by comparing the curve of $k=10^5$ in Fig.~\ref{fig.boundary_check_x_axis_is_n}(a) and (c), we can see that the curve in Fig.~\ref{fig.boundary_check_x_axis_is_n}(c) is higher than the one in Fig.~\ref{fig.boundary_check_x_axis_is_n}(a) by several orders of magnitude. Therefore, we can tell that the functions inside and outside the learnable set could have very different generalization performance. 



\begin{figure}[ht!]
\centering
\includegraphics[width=4in]{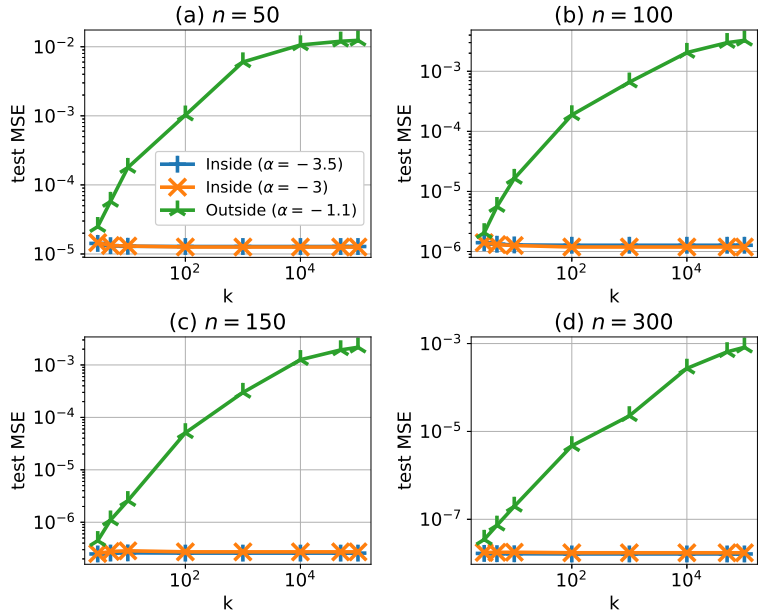}
\caption{Curves of test MSE of 2-layer NTK with normal bias with respect to $k$ for the ground-truth functions $\f_{k, \alpha}(\xsmall)$ where $p_1,p_2\to\infty$, $d=2$, and $\esmall=\bm{0}$. Every curve is the average of 10 simulation runs.}\label{fig.boundary_check}
\end{figure}


The setup of Fig.~\ref{fig.boundary_check} is the same as that of Fig.~\ref{fig.boundary_check_x_axis_is_n} except that here we let x-axis be $k$. In Fig.~\ref{fig.boundary_check}, we can see that the curves of $\alpha = -3.5$ and $\alpha=-3$ (finite $\|g\|_2$) in all sub-figures~(a)(b)(c)(d) are almost flat with respect to $k$. In contrast, the curves of $\alpha =-1.1$ (infinite $\|g\|_2$) keep increasing with respect to $k$, and have much higher generalization error when $k$ is large than those with finite $\|g\|_2$. This also validates our conjecture that the functions inside and outside the learnable set could have very different generalization performance.

\section{Proof of Proposition~\ref{prop.finite_sum_in_the_set}}\label{app.proof_finite_degree}
\begin{proof}
We prove Proposition~\ref{prop.finite_sum_in_the_set} by using similar methods as in \cite{ju2021generalization}.
For any $\fg\in\learnableSet$, we have
\begin{align}
    &\fg(\xsmall) =g\circledast \hf(\xsmall)\defeq\int_{\mathsf{SO}(d)}g(\mathbf{S}\bm{e}_d)\hf(\mathbf{S}^{-1}\xsmall)d\mathbf{S},\label{eq.convolution}\\
    &\hf(\xsmall)\defeq \KThree(\xsmall^T\bm{e}_d),\label{eq.h_in_convolution}
\end{align}
where $\bm{e}_d\defeq [0\ 0\ \cdots\ 0\ 1]^T\in\mathds{R}^d$,
and $\mathbf{S}$ is a $d\times d$ orthogonal matrix that denotes a rotation in $\sd$, chosen from the set $\mathsf{SO}(d)$ of all rotations.
An important property of the convolution Eq.~\eqref{eq.convolution} is that it corresponds to multiplication in the frequency domain, similar to Fourier coefficients. To define such a transformation to the frequency domain, we use a set of  hyper-spherical harmonics $\Xi_{\mathbf{K}}^l$ \citep{vilenkin1968special, dokmanic2009convolution} when $d\geq 3$, which forms an orthonormal basis for functions on $\sd$. These harmonics are indexed by $l$ and $\mathbf{K}$, where $\mathbf{K}=(k_1,k_2,\cdots,k_{d-2})$ and $l=k_0\geq k_1\geq k_2\geq \cdots \geq k_{d-2}\geq 0$ (those $k_i$'s and $l$ are all non-negative integers). Any function $f\in L^2(\sd\mapsto\mathds{R})$ (including even  $\delta$-functions \citep{li2013integral}) can be decomposed uniquely into these harmonics, i.e., $\f(\xsmall)=\sum_{l}\sum_{\mathbf{K}}c_{\f}(l,\mathbf{K})\Xi_{\mathbf{K}}^l(\xsmall)$,
where $c_{\f}(\cdot,\cdot)$ are projections of $\f$ onto the basis function.

In Eq.~\eqref{eq.convolution}, let $c_g(\cdot,\cdot)$ and $c_h(\cdot,\cdot)$ denote the coefficients corresponding to the decompositions of $g$ and $h$, respectively. Then, we must have \citep{dokmanic2009convolution}
\begin{align}\label{eq.freq_product}
    c_{\fg}(l,\mathbf{K})=\Lambda\cdot c_{g}(l,\mathbf{K})c_{\hf}(l,\bm{0}),
\end{align}
where $\Lambda$ is some normalization constant.

Eq.~\eqref{eq.freq_product} describes an interesting ``filtering'' interpretation on $\learnableSet$. Specifically, $\hf$ and $c_{\hf}$ work like a channel or a filter in a wireless communication system, where $c_{g}$ denotes the transmitted signal and $c_{\fg}$ denotes the received signal. Therefore, for any basis function $f(\xsmall)= \Xi_{\mathbf{K}}^l(\xsmall)$, as long as $c_{\hf}(l,\bm{0})\neq 0$, we must have $f=\fg\in\learnableSet$ where the corresponding $g(\cdot)$ can simply be chosen as $g(\bm{z})=\frac{\Xi_{\mathbf{K}}^l(\bm{z})}{\Lambda c_{\hf}(l,\bm{0})}$. Indeed, we have the following proposition about values of $c_{\hf}(l,\bm{0})$.

\begin{proposition}\label{prop.coefficients}
$c_{\hf}(l,\bm{0})>0$ for all $l=0,1,2,\cdots$.
\end{proposition}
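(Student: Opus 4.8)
\textbf{Proof proposal for Proposition~\ref{prop.coefficients}.}

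The plan is to compute the hyper-spherical harmonic coefficients $c_{\hf}(l,\bm 0)$ explicitly enough to see that they are strictly positive, by exploiting the fact that $\hf$ is a \emph{zonal} function (it depends only on $\xsmall^T\bm e_d$) and that the decomposition of zonal functions into hyper-spherical harmonics reduces to a one-dimensional expansion in Gegenbauer (ultraspherical) polynomials. Concretely, for a zonal kernel $\hf(\xsmall)=\psi(\xsmall^T\bm e_d)$ the only nonzero coefficients are the ``central'' ones $c_{\hf}(l,\bm 0)$, and up to a positive normalization constant they equal $\int_{-1}^{1}\psi(t)\,P_l^{(d)}(t)\,(1-t^2)^{(d-3)/2}\,dt$, where $P_l^{(d)}$ is the degree-$l$ Gegenbauer polynomial (Legendre when $d=3$) associated with the weight $(1-t^2)^{(d-3)/2}$. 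So the task becomes: show this integral is strictly positive for every $l\geq 0$, where $\psi(t)=\KThree(t)$.

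The key steps, in order: (i) recall from Lemma~\ref{le.taylor_of_kernel} (and the composition formula Eq.~\eqref{eq.def_KThree}) the power-series expansion of $\KThree$ in $t$, and argue that all its Taylor coefficients are nonnegative (this needs combining the nonnegativity of the Taylor coefficients of $\KTwo$ beyond the linear term, the nonnegativity of the coefficients of $2d\cdot\KRF$ — both visible from Lemma~\ref{le.taylor_of_kernel} — and the fact that composing two power series with nonnegative coefficients, via Lemma~\ref{le.expansion}, yields nonnegative coefficients; the one subtlety is the lone negative-looking linear term, which I will handle by checking it does not actually cause cancellation, e.g.\ because composing $\KTwo(a)=\frac a4+\sum(\dots)a^{2k+2}$ with $a=2d\,\KRF(t)\ge 0$ keeps everything nonnegative since $\KRF$ itself has nonnegative coefficients and the argument is nonnegative on $[0,1]$; more carefully, $\KThree$ is actually real-analytic and I will show each coefficient $\harmonicCoeff_m\ge 0$ with $\harmonicCoeff_0>0$). (ii) Show that a monomial $t^m$ has a \emph{strictly positive} projection onto the degree-$l$ Gegenbauer polynomial whenever $m\geq l$ and $m\equiv l\pmod 2$, and zero projection otherwise; this is the classical fact that the Gegenbauer coefficient of $t^m$ is a positive multiple of a ratio of Gamma/Pochhammer factors (or equivalently that $\int_{-1}^1 t^m P_l^{(d)}(t)(1-t^2)^{(d-3)/2}dt>0$ under these parity/degree conditions). (iii) Combine (i) and (ii): since $\KThree(t)=\sum_{m\ge 0}\harmonicCoeff_m t^m$ with all $\harmonicCoeff_m\ge 0$ and $\harmonicCoeff_0,\harmonicCoeff_1>0$, and for each $l$ there is at least one $m\ge l$ with $m\equiv l\pmod 2$ and $\harmonicCoeff_m>0$ (indeed infinitely many, from the series in Lemma~\ref{le.taylor_of_kernel}), the coefficient $c_{\hf}(l,\bm 0)$ is a sum of nonnegative terms with at least one strictly positive term, hence $c_{\hf}(l,\bm 0)>0$. (iv) Finally, deal separately with the low-dimensional case $d=2$ (where harmonics on $\mathcal S^1$ are just $\cos(l\varphi),\sin(l\varphi)$ and the same monomial/Chebyshev positivity argument applies, using $\cos(l\varphi)$ expansions), so the statement holds for all $d\ge 2$.

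The main obstacle I anticipate is step (i): making airtight the claim that \emph{all} Taylor coefficients of $\KThree$ are nonnegative. The composition $\KTwo(2d\cdot\KRF(t))/(2d)$ mixes an outer series whose linear term has coefficient $1/4>0$ and whose remaining terms are nonnegative (good), with an inner function $2d\cdot\KRF(t)=\frac1\pi(1+\frac\pi2 t+\sum_{k\ge0}\text{(positive)}\,t^{2k+2})$ that has all nonnegative coefficients and nonnegative constant term. Since substituting a power series with nonnegative coefficients into another with nonnegative coefficients produces nonnegative coefficients (Lemma~\ref{le.expansion} with $a_i\ge0$), the composition is fine; the only care needed is that I am not secretly relying on a \emph{negative} term anywhere, and that the resulting series has the right parity content (both even and odd powers appear, since $2d\,\KRF$ already contains the odd term $\frac12 t$), which is exactly what guarantees that for \emph{every} $l$ — even and odd — some admissible $m$ with $\harmonicCoeff_m>0$ exists. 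Once nonnegativity and the ``infinitely many positive coefficients of each parity'' facts are pinned down, steps (ii)–(iv) are standard orthogonal-polynomial bookkeeping.
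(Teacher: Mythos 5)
Your proposal mirrors the paper's own proof almost step for step: the paper's Lemma~\ref{le.ck_positive} is exactly your step~(i), showing positivity of the Taylor coefficients of $2d\cdot\KThree$ by composing the nonnegative-coefficient series from Lemma~\ref{le.taylor_of_kernel} via Lemma~\ref{le.expansion}; the paper's Lemma~\ref{le.Q_positive} is exactly your step~(ii), establishing positivity of the monomial-to-harmonic projections $Q(a,b)$ under the same degree/parity condition; and the concluding identity $c_{\hf}(l,\bm{0})=\frac{1}{2d}\sum_{k}\taylorCoeff_k\, Q(k,l)>0$ is your step~(iii). The only cosmetic difference is that the ``negative-looking linear term'' you flag as a subtlety never actually arises---both $\KTwo$ and $2d\cdot\KRF$ have all nonnegative Taylor coefficients from the outset---and the paper proves strict positivity of every $\taylorCoeff_k$, which is marginally stronger than your nonnegativity-plus-at-least-one-positive-per-parity formulation but is deployed in exactly the same way.
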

We provide its proof in Appendix~\ref{proof.positive}.

By Proposition~\ref{prop.coefficients}, we know that all harmonics $\Xi_{\mathbf{K}}^l\in \learnableSet$. Notice that the set $\learnableSet$ is invariant under addition and scale operation\footnote{Specifically, if $f_{g_1},f_{g_2}\in\learnableSet$, then $f_{g_1+g_2}\defeq f_{g_1}+f_{g_2}\in\learnableSet$ and $f_{\alpha g_1}\defeq \alpha f_{g_1}\in \learnableSet$.}. Therefore, any finite sum of $\Xi_{\mathbf{K}}^l$ belongs to $\learnableSet$. Notice that for any non-negative integer $i$ and a real-valued vector $\bm{a}\in\mathds{R}^d$, a polynomial $(\xsmall^T \bm{a})^i$ consists of a finite sum of harmonic basis. Thus, $\learnableSet$ contains any  polynomials $(\xsmall^T\bm{a})^l$ for all $l=0,1,2,\cdots$. Proposition~\ref{prop.finite_sum_in_the_set} thus follows.
\end{proof}

\subsection{Proof of Proposition~\ref{prop.coefficients}}\label{proof.positive}
It is relatively easy to prove the result when $d=2$, which is omitted here. We focus on the general case when $d\geq 3$.
By Eq.~(115) of \cite{ju2021generalization}, the harmonics $\Xi_{\bm{0}}^l$ can be expressed by
\begin{align}
    \Xi_{\bm{0}}^l(\xsmall)=A_{\bm{0}}^l\sum_{k=0}^{\lfloor\frac{l}{2}\rfloor}(-1)^k\frac{\Gamma(l-k+\frac{d-2}{2})}{\Gamma(\frac{d-2}{2})k!(l-2k)!}(2\xsmall^T\bm{e}_d)^{l-2k}\label{eq.temp_011305},
\end{align}
where $A_{\bm{0}}^l$ is a positive number as the normalization factor of $\Xi_{\bm{0}}^l$.
We give a few examples of $\Xi_{\mathbf{0}}^l$ as follows.
\begin{align*}
    &\Xi_{\mathbf{0}}^0(\xsmall)=A_{\bm{0}}^0,\\
    &\Xi_{\mathbf{0}}^1(\xsmall)=A_{\bm{0}}^1(d-2)\xsmall^T\bm{e}_d,\\
    &\Xi_{\mathbf{0}}^2(\xsmall)=A_{\bm{0}}^2\frac{d-2}{2}\left(d(\xsmall^T\bm{e}_d)^2-1\right),\\
    &\Xi_{\mathbf{0}}^3(\xsmall)=A_{\bm{0}}^3\frac{d-2}{2}\cdot d\cdot \left((\xsmall^T\bm{e}_d)^3-\xsmall^T\bm{e}_d\right).
\end{align*}

Recalling Eq.~\eqref{eq.def_KThree}, we perform a Taylor expansion of $\KThree(\cdot )$.
Let $\taylorCoeff_0,\taylorCoeff_1,\cdots$ denote the Taylor expansion coefficients of $2d\cdot \KThree$, i.e.,
\begin{align}\label{eq.temp_051301}
    2d\cdot \KThree(a)=\sum_{k=0}^\infty \taylorCoeff_k a^k.
\end{align}
The following lemma shows that all coefficients in Eq.~\eqref{eq.temp_051301} are positive.
\begin{lemma}\label{le.ck_positive}
For all $k=0,1,2,\cdots$, we have $\taylorCoeff_k>0$ in Eq.~\eqref{eq.temp_051301}.
\end{lemma}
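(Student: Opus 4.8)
The plan is to establish Lemma~\ref{le.ck_positive} by tracking positivity through the two-stage composition $\KThree(a) = \KTwo(2d\cdot\KRF(a))/(2d)$. First I would recall from Lemma~\ref{le.taylor_of_kernel} the explicit Taylor series of the two building blocks: $\KTwo(a) = \frac{a}{4} + \frac{1}{2\pi}\sum_{k\ge 0}\frac{(2k)!}{(k!)^2}\frac{4}{2k+1}(a/2)^{2k+2}$ and $2d\cdot\KRF(a) = \frac{1}{\pi}\big(1 + \frac{\pi}{2}a + \sum_{k\ge 0}\frac{2(2k)!}{(k+1)(2k+1)(k!)^2}(a/2)^{2k+2}\big)$. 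The crucial structural observation is that \emph{every} Taylor coefficient of $2d\cdot\KRF(a)$ is nonnegative, and in fact strictly positive: the constant term is $1/\pi>0$, the linear term is $1/2>0$, and all the even terms $a^{2k+2}$ for $k\ge 0$ carry strictly positive coefficients; moreover the odd terms of order $\ge 3$ are absent (coefficient $0$), but that is harmless. Likewise, $4\KTwo(a)$ has a strictly positive linear coefficient ($1$) and strictly positive coefficients on all even powers $a^{2k+2}$.

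The key step is then a composition-of-power-series argument. Writing $\phi(a) \defeq 2d\cdot\KRF(a) = \sum_{j\ge 0} r_j a^j$ with all $r_j \ge 0$ and $r_0, r_1 > 0$, and $\psi(b) \defeq 4\KTwo(b) = \sum_{m\ge 1} t_m b^m$ with all $t_m \ge 0$ and $t_1 > 0$, we have $8d^2\cdot\KThree(a) = 2d\cdot\psi(\phi(a))/\, \cdot$, more precisely $2d\cdot\KThree(a) = \tfrac{1}{2}\,\psi(\phi(a))$. I would expand $\psi(\phi(a)) = \sum_{m\ge 1} t_m (\phi(a))^m$ and invoke Lemma~\ref{le.expansion} (the multinomial expansion of a power of a power series) to express the coefficient of $a^s$ in $(\phi(a))^m$ as a sum of products of the $r_j$'s with nonnegative multinomial weights — hence nonnegative. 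Summing over $m$ with the nonnegative weights $t_m$ shows every coefficient $\taylorCoeff_s$ of $2d\cdot\KThree$ is $\ge 0$. For strict positivity I would argue separately for $s=0$ and $s\ge 1$: the constant term is $\taylorCoeff_0 = \tfrac12 t_1 r_0 + \cdots \ge \tfrac12 t_1 r_0 > 0$ since $t_1 = 1$ and $r_0 = 1/\pi$; for any $s\ge 1$, taking $m=s$ in the sum and using the term $r_1^s a^s$ inside $(\phi(a))^s$ (with multinomial coefficient $1$) contributes $\tfrac12 t_s r_1^s > 0$ to $\taylorCoeff_s$, and since all other contributions are nonnegative, $\taylorCoeff_s > 0$. (Here I use $t_s>0$: $4\KTwo$ has a positive coefficient on $a^1$ and on every even power; for odd $s\ge 3$ one instead uses that $\phi$ itself already supplies positive coefficients on all powers $\ge 2$ via composition, so even if $t_s=0$ one can route through $m<s$ — I would make this routing explicit rather than relying on $t_s>0$ alone.)

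The main obstacle I anticipate is handling the odd powers $a^s$ with $s\ge 3$ cleanly, because $\KTwo$ only has a \emph{linear} odd term and then purely even terms, so $t_s = 0$ for odd $s\ge 3$; likewise $\KRF$'s series has $r_s = 0$ for odd $s\ge 3$. Thus positivity of $\taylorCoeff_s$ for odd $s\ge 3$ cannot come from a single ``diagonal'' term and genuinely requires the composition: one needs a monomial $r_{j_1}\cdots r_{j_m}$ with $j_1+\cdots+j_m = s$, each $j_i\in\{0,1\}\cup\{2,4,6,\dots\}$, and at least one $j_i$ odd (namely $=1$) so the total degree is odd — e.g. take one factor $r_1$ and the rest chosen from $\{r_0, r_2, r_4,\dots\}$ to sum to $s-1$ (even), which is always possible, and pair it with $t_m$ for the appropriate $m\ge 1$ with $t_m>0$ (e.g. $m$ even or $m=1$). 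I would verify that such a representation always exists and that the corresponding multinomial coefficient is strictly positive, which pins down $\taylorCoeff_s>0$. Once Lemma~\ref{le.ck_positive} is in hand, the passage to Proposition~\ref{prop.coefficients} — namely $c_{\hf}(l,\bm 0)>0$ — should follow by substituting the expansion $2d\cdot\KThree(a)=\sum_k \taylorCoeff_k a^k$ into the definition $\hf(\xsmall)=\KThree(\xsmall^T\bm e_d)$, re-expressing powers of $\xsmall^T\bm e_d$ in the Gegenbauer/harmonic basis $\Xi_{\bm 0}^l$ via Eq.~\eqref{eq.temp_011305}, and checking that the change-of-basis matrix from $\{(\xsmall^T\bm e_d)^k\}$ to $\{\Xi_{\bm 0}^l\}$ has the sign pattern that keeps the projection $c_{\hf}(l,\bm 0)$ strictly positive when all $\taylorCoeff_k>0$ — but that is the content of the next lemma, not of Lemma~\ref{le.ck_positive} itself.
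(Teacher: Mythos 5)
Your proposal is correct and follows essentially the same route as the paper: expand $\KTwo$ and $2d\cdot\KRF$ via Lemma~\ref{le.taylor_of_kernel}, compose, invoke the multinomial expansion of Lemma~\ref{le.expansion} so that every coefficient $\taylorCoeff_s$ is a nonnegative sum, and then exhibit one strictly positive contribution for each $s$. Your self-correction about odd $s\ge 3$ is the right instinct and is in fact slightly more explicit than the paper's own proof, which simply asserts that every term in the expansion of $\taylorCoeff_{2i+1}$ is positive but leaves the reader to verify that the index set is nonempty; your routing argument (e.g.\ $m=2$, $j_1=1$, $j_2=s-1$ even, paired with $t_2>0$, giving the contribution $\propto t_2\, r_1 r_{s-1}>0$) makes that nonemptiness explicit. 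Two small slips that do not affect the conclusion: the normalization should be $2d\cdot\KThree(a)=\tfrac{1}{4}\psi(\phi(a))$ rather than $\tfrac{1}{2}$, and in the initial (discarded) attempt the multinomial coefficient attached to $r_1^s$ in $(\phi(a))^s$ is $s!$ rather than $1$ — but you correctly abandon that term anyway because $t_s=0$ for odd $s\ge 3$.
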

\begin{proof}
By Lemma~\ref{le.taylor_of_kernel}, for any $a,b\in [0, 1]$, we have
\begin{align}
    &2d\cdot \KRF(a)=\frac{1}{\pi}\left(1+\frac{\pi}{2}a+\sum_{k=0}^\infty\frac{2(2k)!}{(k+1)(2k+1)(k!)^2}\left(\frac{a}{2}\right)^{2k+2}\right),\label{eq.temp_050201}\\
    &\KTwo(b)=\frac{b}{4}+\frac{1}{2\pi}\sum_{k=0}^\infty \frac{(2k)!}{(k!)^2}\frac{4}{2k+1}\left(\frac{b}{2}\right)^{2k+2}.\label{eq.temp_111001}
\end{align}
By Lemma~\ref{le.RF_kernel_bound}, we know that $2d\cdot \KRF(a)\in [0, 1]$. Thus, we can let $b=2d\cdot \KRF(a)$ in Eq.~\eqref{eq.temp_111001} and then apply Eq.~\eqref{eq.temp_050201}, i.e.,
\begin{align}
    &\KTwo(2d\cdot \KRF(a))\nonumber\\
    =&\frac{1}{4\pi}\left(1+\frac{\pi}{2}a+\sum_{k=0}^\infty\frac{2(2k)!}{(k+1)(2k+1)(k!)^2}\left(\frac{a}{2}\right)^{2k+2}\right)\nonumber\\
    &+\frac{1}{2\pi}\sum_{l=0}^{\infty}\frac{(2l)!}{(l!)^2}\frac{4}{2l+1}\left(\frac{1}{2\pi}\left(1+\frac{\pi}{2}a+\sum_{k=0}^\infty\frac{2(2k)!}{(k+1)(2k+1)(k!)^2}\left(\frac{a}{2}\right)^{2k+2}\right)\right)^{2l+2}.\label{eq.temp_111002}
\end{align}
By Eq.~\eqref{eq.def_KThree} and Eq.~\eqref{eq.temp_051301}, we know that $\taylorCoeff_k$ is the coefficient of $a^k$ in Eq.~\eqref{eq.temp_111002}. In order to know the sign of $\taylorCoeff_k$, it remains to combine similar terms in Eq.~\eqref{eq.temp_111002}.
To that end, we apply Lemma~\ref{le.expansion} and have
\begin{align*}
    \taylorCoeff_0=&\frac{1}{4\pi}+\frac{1}{2\pi}\sum_{l=0}^{\infty}\frac{(2l)!}{(l!)^2}\frac{4}{2l+1}\left(\frac{1}{2\pi}\right)^{2l+2},\\
    \taylorCoeff_1=&\frac{1}{8}+\frac{1}{2\pi}\sum_{l=0}^{\infty}\frac{(2l)!}{(l!)^2}\frac{4}{2l+1}\left(\frac{1}{2\pi}\right)^{2l+2}(2l+2)\left(\frac{\pi}{2}\right)^{2l+1},\\
    \taylorCoeff_{2i+1}=&\frac{1}{2\pi}\sum_{l=0}^{\infty}\frac{(2l)!}{(l!)^2}\frac{4}{2l+1}\left(\frac{1}{2\pi}\right)^{2l+2}\cdot\sum_{\substack{k_0+k_1+k_2+k_4+\cdots+k_{2i}=2l+2\\k_1+2k_2+4k_4+\cdots+2ik_{2i}=2i+1}}\\
    &(k_0,k_1,k_2,k_4,\cdots,k_{2i})!\left(\frac{\pi}{2}\right)^{k_1}\prod_{j=0}^{i-1}\left(\frac{2(2j)!}{(j+1)(2j+1)(j!)^2}\left(\frac{1}{2}\right)^{2j+2}\right)^{k_{2j+2}},\\
    \taylorCoeff_{2i+2}=&\frac{1}{4\pi}\frac{2(2i)!}{(i+1)(2i+1)(i!)^2}\left(\frac{1}{2}\right)^{2i+2}\\
    &+\frac{1}{2\pi}\sum_{l=0}^{\infty}\frac{(2l)!}{(l!)^2}\frac{4}{2l+1}\left(\frac{1}{2\pi}\right)^{2l+2}\cdot\sum_{\substack{k_0+k_1+k_2+k_4+\cdots+k_{2i}=2l+2\\k_1+2k_2+4k_4+\cdots+2ik_{2i}=2i+2}}\\
    &(k_0,k_1,k_2,k_4,\cdots,k_{2i})!\left(\frac{\pi}{2}\right)^{k_1}\prod_{j=0}^{i-1}\left(\frac{2(2j)!}{(j+1)(2j+1)(j!)^2}\left(\frac{1}{2}\right)^{2j+2}\right)^{k_{2j+2}}.
\end{align*}
As we can see, every term in those expressions of $\taylorCoeff_0,\taylorCoeff_1,\cdots$ is positive, which implies that $\taylorCoeff_k>0$ for all $k=0,1,\cdots$.
\end{proof}

From Eq.~\eqref{eq.temp_051301}, we have $2d\cdot \KThree(\xsmall^T \bm{e}_d)=\sum_{k=0}^\infty u_k (\xsmall^T \bm{e}_d)^k$. We now consider the decomposition of each $(\xsmall^T \bm{e}_d)^k$ into harmonics.
\begin{lemma}\label{le.positive_component}\label{le.Q_positive}
Let $a$ and $b$ be two non-negative integers. Define the function
\begin{align}\label{eq.def_Qab}
    Q(a,b)\defeq\int_{\sd}(\bm{x}^T\bm{e}_d)^a\cdot \Xi_{\bm{0}}^{b}(\xsmall)d\xdensity(\xsmall).
\end{align}
We must have
\begin{align}\label{eq.temp_012401}
    Q(2k, 2m+1)=Q(2k+1,2m)=0,
\end{align}
\begin{align}\label{eq.temp_111103}
    Q(2k,2m)\begin{cases}
    >0,\text{ if }m\leq k,\\
    =0,\text{ if }m>k,
    \end{cases}
\end{align}
and
\begin{align}\label{eq.temp_111101}
    Q(2k+1,2m+1)\begin{cases}
    >0,\text{ if }m\leq k,\\
    =0,\text{ if }m>k.
    \end{cases}
\end{align}
\end{lemma}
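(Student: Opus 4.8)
\textbf{Proof proposal for Lemma~\ref{le.positive_component}.}

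The plan is to compute $Q(a,b)$ using the explicit polynomial representation of the zonal harmonic $\Xi_{\bm{0}}^b$ given in Eq.~\eqref{eq.temp_011305}, together with the standard one-dimensional integral of a power of the last coordinate over $\sd$. First I would recall that, by symmetry, $\int_{\sd}(\xsmall^T\bm{e}_d)^j d\xdensity(\xsmall) = 0$ whenever $j$ is odd, and equals a strictly positive constant (expressible via the Beta function, or equivalently the double factorial $(j-1)!!/(j+d-2)!!$ up to normalization) whenever $j$ is even. This immediately yields the vanishing claims driven by parity: in $Q(2k,2m+1)$ the harmonic $\Xi_{\bm{0}}^{2m+1}$ is an odd polynomial in $\xsmall^T\bm{e}_d$ (only odd powers $l-2j$ appear when $l$ is odd), so the integrand $(\xsmall^T\bm{e}_d)^{2k}\cdot\Xi_{\bm{0}}^{2m+1}$ is odd and integrates to zero; symmetrically for $Q(2k+1,2m)$. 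This establishes Eq.~\eqref{eq.temp_012401}.

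Next I would handle the ``degree too high'' vanishing, i.e. the $m>k$ cases in Eq.~\eqref{eq.temp_111103} and Eq.~\eqref{eq.temp_111101}. The cleanest argument is orthogonality of harmonics: $\Xi_{\bm{0}}^b$ is orthogonal (in $L^2(\sd)$) to every spherical harmonic of degree strictly less than $b$. Since $(\xsmall^T\bm{e}_d)^a$ is a polynomial of degree $a$, it lies in the span of harmonics of degree $\le a$, so if $b>a$ the inner product $Q(a,b)$ vanishes. Applying this with $a=2k$, $b=2m$ gives $Q(2k,2m)=0$ for $2m>2k$, i.e. $m>k$; and with $a=2k+1$, $b=2m+1$ gives $Q(2k+1,2m+1)=0$ for $m>k$.

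It remains to prove strict positivity of $Q(2k,2m)$ for $m\le k$ and of $Q(2k+1,2m+1)$ for $m\le k$. Here I would substitute Eq.~\eqref{eq.temp_011305} into Eq.~\eqref{eq.def_Qab}, obtaining $Q(a,b)=A_{\bm{0}}^b\sum_{j=0}^{\lfloor b/2\rfloor}(-1)^j\frac{\Gamma(b-j+\frac{d-2}{2})}{\Gamma(\frac{d-2}{2})\,j!\,(b-2j)!}\,2^{b-2j}\int_{\sd}(\xsmall^T\bm{e}_d)^{a+b-2j}d\xdensity(\xsmall)$, and then reduce the one-dimensional moment to a ratio of Gamma functions (via the Beta-function / double-factorial formula). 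The resulting finite alternating sum must be shown to be strictly positive. The main obstacle is precisely this: controlling the sign of the alternating sum. I would handle it by one of two routes — either (i) recognize the sum, after pulling out common factors, as (a positive multiple of) a Jacobi/Gegenbauer-type coefficient or a hypergeometric value ${}_2F_1$ evaluated at a point where a known closed form or a Gauss-type summation gives a manifestly positive answer; or (ii) argue more combinatorially that $Q(a,b)=\langle (\xsmall^T\bm{e}_d)^a,\Xi_{\bm{0}}^b\rangle$ equals (a positive normalization constant times) the coefficient of $\Xi_{\bm{0}}^b$ in the harmonic decomposition of the monomial $(\xsmall^T\bm{e}_d)^a$, and that this coefficient is strictly positive whenever $b\le a$ and $b\equiv a\pmod 2$ — a fact that can be proven by induction on $a$ using the recursion $t\cdot\Xi_{\bm{0}}^b = \alpha_b\Xi_{\bm{0}}^{b+1}+\beta_b\Xi_{\bm{0}}^{b-1}$ with $t=\xsmall^T\bm{e}_d$ and $\alpha_b,\beta_b>0$ (the standard three-term recurrence for Gegenbauer polynomials, with positive coefficients in the appropriate normalization). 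Route (ii) is the one I would pursue, since the positivity of the recurrence coefficients makes the induction transparent: every monomial $t^a$ expands into zonal harmonics with nonnegative coefficients, and a careful bookkeeping of the induction shows the coefficient of $\Xi_{\bm{0}}^b$ is in fact strictly positive in the stated range. Finally, combining the parity vanishing, the degree vanishing, and this strict positivity yields all three displayed claims Eq.~\eqref{eq.temp_012401}, Eq.~\eqref{eq.temp_111103}, and Eq.~\eqref{eq.temp_111101}.
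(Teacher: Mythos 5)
Your plan is correct, and it lines up with the paper's argument where it matters most: for strict positivity you invoke the three-term recurrence $t\cdot\Xi_{\bm{0}}^{l}=\alpha_{l}\Xi_{\bm{0}}^{l+1}+\beta_{l}\Xi_{\bm{0}}^{l-1}$ (with $\alpha_l,\beta_l>0$, since the ultraspherical parameter $\lambda=(d-2)/2>0$), and this is exactly the identity the paper pulls in as ``Eq.~(118) of \cite{ju2021generalization},'' namely $Q(a+1,l+1)=q_{l,1}Q(a,l+2)+q_{l,2}Q(a,l)$ with $q_{l,1},q_{l,2}>0$. The paper uses this recurrence only once, to bootstrap the odd case $Q(2k+1,2m+1)$ from the already-known even case $Q(2k,2m)$ (imported as Lemma~53 of the same reference, itself an induction), whereas you propose to run a single induction on $a$ that handles both parities at once by tracking the sign of every coefficient of $t^a$ in the zonal-harmonic basis; that is a cleaner, self-contained packaging of the same mechanism. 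The parity vanishing in Eq.~\eqref{eq.temp_012401} is handled identically in both (oddness of the integrand, $\Xi_{\bm{0}}^b(-\xsmall)=(-1)^b\Xi_{\bm{0}}^b(\xsmall)$). Where you genuinely diverge is the $m>k$ vanishing in Eq.~\eqref{eq.temp_111103}--\eqref{eq.temp_111101}: you get it for free from $L^2$-orthogonality of $\Xi_{\bm{0}}^b$ to all polynomials of degree $<b$ (since $(\xsmall^T\bm{e}_d)^a$ has degree $a$), while the paper buries it inside the cited induction lemma. Your orthogonality shortcut is the more transparent route and requires no extra machinery, so overall your proposal is a valid and somewhat tidier proof of the same statement.
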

\begin{proof}
By Eq.~\eqref{eq.temp_011305}, we have $\Xi_{\bm{0}}^b(-\xsmall)=(-1)^b \Xi_{\bm{0}}^b(\xsmall)$. Thus, when $a+b$ is odd, the function $(\bm{x}^T\bm{e}_d)^a\cdot \Xi_{\bm{0}}^{b}(\xsmall)$ is an odd function with respect to $\xsmall$. By symmetry of $\sd$, we then have $Q(a,b)=0$ when $a+b$ is odd, i.e., Eq.~\eqref{eq.temp_012401} holds.
Eq.~\eqref{eq.temp_111103} has been proved in Lemma~53 of \cite{ju2021generalization} by mathematical induction. Here we prove Eq.~\eqref{eq.temp_111101}.

By Eq.~(118) of \cite{ju2021generalization}, for any $a$, we have
\begin{align}\label{eq.temp_111102}
    Q(a+1, l+1)=q_{l,1}\cdot Q(a, l+2)+q_{l,2}\cdot Q(a, l),
\end{align}
where $q_{l,1}>0$ and $q_{l,2}>0$.
Applying Eq.~\eqref{eq.temp_111102} for $a=2k$ and $l=2m$, we have
\begin{align*}
    Q(2k+1, 2m+1)=q_{2m,1}\cdot Q(2k, 2m+2)+q_{2m,2}\cdot Q(2k, 2m).
\end{align*}
By Eq.~\eqref{eq.temp_111103}, the result of Eq.~\eqref{eq.temp_111101} thus follows.
\end{proof}

Notice that 
\begin{align*}
    c_{\hf}(l,\bm{0})=&\int_{\sd} \hf(\xsmall) \Xi_{\bm{0}}^l(\xsmall) d\xdensity(\xsmall)\\
    =&\frac{1}{2d}\int_{\sd}2d\cdot\KThree(\xsmall^T\bm{e}_d)\Xi_{\bm{0}}^l(\xsmall) d \xdensity(\xsmall)\text{ (by Eq.~\eqref{eq.h_in_convolution})}\\
    =&\frac{1}{2d}\sum_{k=0}^\infty \taylorCoeff_k \cdot Q(k, l)\text{ (by Eq.~\eqref{eq.temp_051301} and Eq.~\eqref{eq.def_Qab})}\\
    >&0\text{ (by Lemma~\ref{le.ck_positive} and Lemma~\ref{le.Q_positive})}.
\end{align*}
The result of Proposition~\ref{prop.coefficients} thus follows.
\section{Proof of Proposition~\ref{prop.compare_efficient_and_set}}\label{app.proof_compare_set}
\begin{proof}
Using similar decomposition in Eq.~\eqref{eq.convolution}, we define filter functions $\hft$ (for 2-layer NTK, no-bias) and $\hftb$ (for 2-layer NTK, with bias).
The corresponding harmonic coefficients are denoted by $c_{\hft}$ and $c_{\hftb}$. We have the following result about the magnitude of those harmonic coefficients.

\begin{lemma}\label{le.coefficient_compare}
For any $\bOne,\bTwo\in(0, 1)$, we must have  $c_{\hftbA}(2k,\bm{0})=\Theta\left(c_{\hft}(2k,\bm{0})\right)$, $c_{\hftbA}(k,\bm{0})=\Theta\left(c_{\hftbB}(k,\bm{0})\right)$, and  $c_{\hf}(k,\bm{0})=\Omega\left(\harmonicCoeff_{\hftbA}(k,\bm{0})\right)$. Here, $\Theta(\cdot)$ and $\Omega(\cdot)$ denote the orders as $k$ becomes large.
\end{lemma}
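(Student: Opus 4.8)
The plan is to reduce every harmonic coefficient $c_{h}(l,\bm 0)$ to a one-dimensional integral against a Gegenbauer (ultraspherical) polynomial and then to compare the three families through the analytic structure of the corresponding profile functions on $[-1,1]$. Concretely, write each filter as $h(\xsmall)=\phi(\xsmall^T\bm e_d)$, with $\phi=\KTwo$ for $\hft$, $\phi(a)=\KTwo\big((1-b^2)a+b^2\big)$ for $\hftb$, and $\phi=\KThree$ for $\hf$ (recall Eqs.~\eqref{eq.h_in_convolution}, \eqref{eq.def_KRF}--\eqref{eq.def_KThree}). By the Funk--Hecke formula --- equivalently, by expanding $\phi$ in a Taylor series and using the function $Q(\cdot,\cdot)$ of Eq.~\eqref{eq.def_Qab} exactly as in the proof of Proposition~\ref{prop.coefficients} --- there is a constant depending only on $d$ and $l$ (and \emph{not} on $\phi$) such that $c_h(l,\bm 0)$ equals that constant times the $l$-th Gegenbauer coefficient $\lambda_l(\phi)=\int_{-1}^1\phi(a)\,C_l^{(d-2)/2}(a)(1-a^2)^{(d-3)/2}\,da$, where $C_l^{(d-2)/2}$ is the degree-$l$ Gegenbauer polynomial (Legendre when $d=3$, Chebyshev-type when $d=2$). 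Hence $c_{h_1}(l,\bm 0)/c_{h_2}(l,\bm 0)=\lambda_l(\phi_1)/\lambda_l(\phi_2)$, and it suffices to compare the $\lambda_l$'s; the case $d=2$ is handled identically with the Chebyshev weight.

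Next I would record the analytic structure of the three profiles on $[-1,1]$. First, $\KTwo$ is real-analytic on $(-1,1)$ and has branch points of type $(1\mp a)^{1/2}$ at $a=\pm1$ with nonzero leading coefficients (from the $\arccos$ term), and by Lemma~\ref{le.taylor_of_kernel} its Taylor series at $0$ has all odd coefficients beyond the linear term equal to zero; consequently $\lambda_l(\KTwo)=0$ for all odd $l\ge 3$. Second, for $b\in(0,1)$ the argument $(1-b^2)a+b^2$ ranges over $[2b^2-1,\,1]\subset(-1,1]$ as $a$ ranges over $[-1,1]$, so $a\mapsto\KTwo\big((1-b^2)a+b^2\big)$ is analytic on a neighbourhood of $[-1,1)$ and carries a single $(1-a)^{1/2}$ branch point at $a=1$, whose leading coefficient is a nonzero multiple of $\sqrt{1-b^2}$. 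Third, for $\KThree=\frac1{2d}\KTwo\!\big(2d\,\KRF(\cdot)\big)$: using the local expansions of $\sqrt{1-a^2}$ and $\arccos a$ one finds that near $a=1$ the square-root terms in $2d\,\KRF(a)$ cancel, so that $1-2d\,\KRF(a)=(1-a)+O\!\big((1-a)^{3/2}\big)$; feeding this into the outer $\KTwo$ shows that $\KTwo\!\big(2d\,\KRF(a)\big)$ again has a genuine $(1-a)^{1/2}$ branch point at $a=1$ with nonzero coefficient, while near $a=-1$ one has $2d\,\KRF(a)=O\!\big((1+a)^{3/2}\big)\to0$, so $\KThree$ is only mildly singular there.

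To conclude, I would invoke the standard asymptotics for ultraspherical expansion coefficients of a function analytic on $[-1,1]$ except for algebraic endpoint singularities: a $(1-a)^{1/2}$ branch point at an endpoint with nonzero coefficient contributes a term of exact order $l^{-\rho}$, where $\rho=\rho(d)>0$ is a fixed exponent, higher-order endpoint singularities contribute strictly faster-decaying terms, and analytic pieces contribute super-polynomially small terms. Applied to the previous step this gives $\lambda_{2k}(\KTwo)=\Theta(k^{-\rho})$ (both endpoints contribute, with the same sign since $C_{2k}^{(d-2)/2}$ is even), $\lambda_l\big(\KTwo((1-b_i^2)a+b_i^2)\big)=\Theta(l^{-\rho})$ for each $b_i$ with positive constant proportional to $\sqrt{1-b_i^2}$, and $\lambda_l(\KThree)=\Theta(l^{-\rho})$, plus $\lambda_l(\KTwo)=0$ for odd $l\ge3$. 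Translating back through Step~1 yields $c_{\hftbA}(2k,\bm 0)=\Theta\big(c_{\hft}(2k,\bm 0)\big)$, $c_{\hftbA}(k,\bm 0)=\Theta\big(c_{\hftbB}(k,\bm 0)\big)$, and $c_{\hf}(k,\bm 0)=\Theta\big(c_{\hftbA}(k,\bm 0)\big)$ --- the last of which is in particular $\Omega\big(c_{\hftbA}(k,\bm 0)\big)$, as claimed in Lemma~\ref{le.coefficient_compare}; this in turn feeds Proposition~\ref{prop.compare_efficient_and_set} via the Parseval criterion $\|g\|_2^2=\sum_{l,\mathbf K}|c_f(l,\mathbf K)|^2/|c_h(l,\bm 0)|^2$.

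The delicate point, and the one place the computation is not routine, is showing that $\KThree$ retains a \emph{nonzero} $(1-a)^{1/2}$ singularity at $a=1$. The naive expectation is that composing with $\KRF$, whose own $(1-a)^{1/2}$ terms cancel near $a=1$, would smooth things out; the resolution is that $1-2d\,\KRF(a)$ vanishes only to first order in $1-a$, so the outer $\KTwo$ re-creates the square-root branch point. Establishing this requires carefully tracking the first two orders of the $\arccos$ and $\sqrt{1-a^2}$ expansions at both endpoints (to see the cancellation at $a=1$, the surviving $O(1-a)$ term, and the $(1+a)^{3/2}$ behaviour at $a=-1$). A secondary technical point --- if one prefers to avoid quoting the Gegenbauer-coefficient asymptotics as a black box --- is to re-derive the needed $\Theta(l^{-\rho})$ bounds directly from the Taylor-coefficient/$Q(\cdot,\cdot)$ representation used in Proposition~\ref{prop.coefficients}, verifying that in $\sum_k \taylorCoeff_k\,Q(k,l)$ the dominant contributions come from degrees $k$ comparable to $l$ and behave identically across the three kernels up to bounded factors; for the third (one-sided) claim the crude lower bound $c_{\hf}(l,\bm 0)\ge \frac1{2d}\,\taylorCoeff_l\,Q(l,l)>0$ together with matching upper bounds on $c_{\hftbA}(l,\bm0)$ already suffices, using Lemma~\ref{le.ck_positive} and Lemma~\ref{le.Q_positive}.
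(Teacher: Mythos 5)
Your proposal takes a genuinely different route from the paper. The paper works entirely on the Taylor-coefficient side: it represents the kernel as $T(x)=K(x)\tfrac{\pi-\arccos K(x)}{2\pi}$, relates the harmonic coefficients to Taylor coefficients through $Q(k,l)$ (Lemma~\ref{le.coeff_same_taylor} and Lemma~\ref{le.Q_positive}), and then performs a direct combinatorial comparison of the Taylor coefficients of $T$ for the various $K$. The comparison is done first for the linear $K(x)=(1-a)x+a$ (Lemma~\ref{le.\taylorCoeff_a_over_\taylorCoeff_0}), then for finite-degree $K$ (Lemma~\ref{le.finite_poly_conclusion}) by studying the auxiliary coefficients $e_j$ of $1/(1-K(x)^2)$ and showing they are bounded, and finally for the infinite-degree $K=2d\KRF$ by verifying a concrete lower bound $L_3>0$ (Appendix~\ref{app.inf_Taylor_expansion}). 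By contrast, you go through the Funk--Hecke reduction to a one-dimensional Gegenbauer integral, classify the algebraic endpoint singularities of the three profile functions, and invoke the classical Darboux/Szeg\H{o}-type asymptotics for ultraspherical coefficients of piecewise-analytic functions with branch points. The singularity analysis is correct; in particular you correctly spot the delicate cancellation $1-2d\KRF(a)=(1-a)+O((1-a)^{3/2})$ at $a=1$ and the $O((1+a)^{3/2})$ behaviour at $a=-1$, and you correctly note that the outer $\arccos$ in $\KTwo$ re-creates a square-root branch point so that $\KThree$ has a nondegenerate $(1-a)^{1/2}$ singularity at $a=1$ --- this is exactly what makes all three kernels decay at the same polynomial rate.

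What the two approaches buy is different. Your argument is conceptually cleaner, gives the exponent of decay in closed form, and in fact yields the stronger $c_{\hf}(k,\bm 0)=\Theta(c_{\hftbA}(k,\bm 0))$ (the paper only proves the one-sided $\Omega$, essentially because its inductive lower bound $L_k$ for the $e_j$'s is easy to obtain but the matching upper bound for the infinite-degree $K$ is not pursued). The price is that you invoke the endpoint-singularity asymptotics for Gegenbauer/Jacobi coefficients as a black box; to be fully self-contained one would have to state that theorem carefully, verify its analyticity hypotheses for $\KThree$ (analyticity of $\KRF$ on $(-1,1)$, that $2d\KRF$ maps $(-1,1)$ into $(-1,1)$, so the composition is analytic away from the endpoints), and handle the $d=2$ Chebyshev case separately. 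The paper's approach is more pedestrian and longer but needs no external approximation-theory input, which is probably why it was chosen. Your secondary suggestion (lower-bound $c_{\hf}(l,\bm 0)$ by $\tfrac{1}{2d}u_l Q(l,l)$ and upper-bound $c_{\hftbA}(l,\bm 0)$ directly) is not an adequate shortcut as stated: $u_l Q(l,l)$ is a single term of the sum $\sum_k u_k Q(k,l)$ and decays considerably faster than the full sum, so the resulting lower bound would not match the upper bound on $c_{\hftbA}(l,\bm 0)$ in order; this is exactly the reason the paper has to control the \emph{whole} tail $\sum_{k\ge l} u_k Q(k,l)$ via the auxiliary sequence $e_j$.
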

We prove Lemma~\ref{le.coefficient_compare} in Appendix~\ref{app.proof_coeff_set}.

Lemma~\ref{le.coefficient_compare} has the following implications for the magnitude of the harmonics coefficients when the leading index of harmonics is large (i.e., $k$ in Lemma~\ref{le.coefficient_compare} is large). The first statement states that, for 2-layer NTK, the setting with bias and the setting without bias have the same order of harmonics coefficients for even terms. (For odd terms, recall that for 2-layer NTK without bias, the coefficients of odd terms except linear term are zero. In contrast, for 2-layer NTK with bias, the coefficients of odd terms are not zero \citep{ju2021generalization}. Hence, the first statement does not hold for odd terms). The second statement states that, the coefficients of harmonics for 2-layer NTK have the same order with respect to $k$ for all non-zero bias. The third statement states that, the coefficients for 3-layer NTK even without bias is not smaller (in order) than 2-layer NTK with bias.

By comparing the magnitude of these filter coefficients, we can then compare whether polynomials with infinite degree belong to each of the learnable sets. Specifically, consider an infinite-degree polynomial with the form $\fg(\xsmall)=\sum_{l,\mathbf{K}}\alpha_{l,\mathbf{K}}\cdot \Xi_{\mathbf{K}}^l (\xsmall)$. By Eq.~\eqref{eq.freq_product}, we have $g(\bm{z})=\sum_{l,\mathbf{K}} \frac{\alpha_{l,\mathbf{K}}\cdot \Xi_{\mathbf{K}}^l(\bm{z})}{\Lambda\cdot c_{h}(l,\bm{0})}$, where $h$ can be $\hf$, $\hft$, or $\hftb$. Thus, the magnitude of $c_h(l,\bm{0})$ determines the norm of $g$. Specifically, we have $\|g\|_2=\sum_{l,\mathbf{K}} \frac{\alpha_{l,\mathbf{K}}}{\Lambda\cdot c_{h}(l,\bm{0})}$ due to the orthogonality of harmonics $\Xi_{\mathbf{K}}^l$. Note that $c_{\hftbA}(k,\bm{0})=\Theta\left(c_{\hftbB}(k,\bm{0})\right)$ by Lemma~\ref{le.coefficient_compare}. Thus, if $\|g\|_2$ is finite for 2-layer NTK with bias $b_1>0$, then it must also be finite for 2-layer NTK with a different bias $b_2>0$. This implies that The learnable sets of 2-layer NTK with different bias settings are the same i.e., $\learnableSetTwoBiasA=\learnableSetTwoBiasB$ for any $b_1,b_2\in (0,1)$. Similarly, for 3-layer NTK, by Lemma~\ref{le.coefficient_compare}, we can also show that $\learnableSetTwo\cup \learnableSetTwoBias \subseteq \learnableSet$ and $\learnableSetTwo\subset \learnableSetTwoBias$. Therefore, the result of Proposition~\ref{prop.compare_efficient_and_set} follows.
\end{proof}

\section{Proof of Lemma~\ref{le.coefficient_compare}}\label{app.proof_coeff_set}
The following lemma shows the relationship between harmonic coefficients and Taylor coefficients.
\begin{lemma}\label{le.coeff_same_taylor}
Consider two polynomial functions $h_{\alpha}(\xsmall)\defeq \sum_{k=0}^\infty \taylorCoeff_{\alpha,k} (\xsmall^T\bm{e}_d)^k$ and $h_{\beta}(\xsmall)\defeq \sum_{k=0}^\infty \taylorCoeff_{\beta,k} (\xsmall^T\bm{e}_d)^k$ where $\taylorCoeff_{\alpha,k}\geq 0$ and $\taylorCoeff_{\beta,k}\geq 0$ for all $k$. Let $\harmonicCoeff_{h_{\alpha}}$ and $\harmonicCoeff_{h_{\beta}}$ denote their harmonic coefficients. If $\taylorCoeff_{\alpha,k}=O(\taylorCoeff_{\beta,k})$ (where $O(\cdot)$ denotes the order when $k$ is large), then $\harmonicCoeff_{h_{\alpha}}(l,\bm{0})=O(\harmonicCoeff_{h_{\beta}}(l,\bm{0}))$ for large $l$. The same is true if we restrict to only even harmonics, i.e., if $\taylorCoeff_{\alpha,2k}=O(\taylorCoeff_{\beta,2k})$, then $\harmonicCoeff_{h_{\alpha}}(2l,\bm{0})=O(\harmonicCoeff_{h_{\beta}}(2l,\bm{0}))$ for large $l$. This lemma also holds if $O(\cdot)$ is replaced by $\Omega(\cdot)$ or $\Theta(\cdot)$.
\end{lemma}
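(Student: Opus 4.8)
The plan is to route everything through the coefficients $Q(a,b)$ of Eq.~\eqref{eq.def_Qab}. By the harmonic decomposition and the orthonormality of $\{\Xi_{\mathbf{K}}^l\}$, any filter of the form $h(\xsmall)=\sum_{k=0}^\infty \taylorCoeff_{k}(\xsmall^T\bm{e}_d)^k$ satisfies
\begin{align*}
    \harmonicCoeff_{h}(l,\bm{0})=\int_{\sd}h(\xsmall)\,\Xi_{\bm{0}}^l(\xsmall)\,d\xdensity(\xsmall)=\sum_{k=0}^{\infty}\taylorCoeff_k\, Q(k,l),
\end{align*}
once the termwise integration is justified (discussed below). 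I would then invoke Lemma~\ref{le.Q_positive}, which gives that $Q(k,l)\geq 0$ for all $k,l$, that $Q(k,l)=0$ unless $k\geq l$ and $k\equiv l\pmod{2}$, and that $Q(k,l)>0$ whenever $k\geq l$ and $k\equiv l\pmod{2}$. Thus $\harmonicCoeff_h(l,\bm{0})$ is a \emph{nonnegative} combination involving only the tail Taylor coefficients $\taylorCoeff_k$ with $k\geq l$ of the same parity as $l$ --- this is the structural observation that lets an asymptotic comparison of $\{\taylorCoeff_{\alpha,k}\}$ and $\{\taylorCoeff_{\beta,k}\}$ propagate to $\harmonicCoeff_{h_\alpha}$ and $\harmonicCoeff_{h_\beta}$.

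The comparison step is then short. Assuming $\taylorCoeff_{\alpha,k}=O(\taylorCoeff_{\beta,k})$, pick $C>0$ and $k_0$ with $\taylorCoeff_{\alpha,k}\leq C\,\taylorCoeff_{\beta,k}$ for all $k\geq k_0$. For every $l\geq k_0$, each index $k$ in the sum $\harmonicCoeff_{h_\alpha}(l,\bm{0})=\sum_{k\geq l}\taylorCoeff_{\alpha,k}\,Q(k,l)$ has $k\geq l\geq k_0$, so $\taylorCoeff_{\alpha,k}\leq C\,\taylorCoeff_{\beta,k}$; since $Q(k,l)\geq 0$ this yields $\harmonicCoeff_{h_\alpha}(l,\bm{0})\leq C\,\harmonicCoeff_{h_\beta}(l,\bm{0})$, i.e.\ $\harmonicCoeff_{h_\alpha}(l,\bm{0})=O(\harmonicCoeff_{h_\beta}(l,\bm{0}))$. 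Swapping $\alpha$ and $\beta$ gives the $\Omega(\cdot)$ version and the two together give $\Theta(\cdot)$. For the even-harmonic restriction I would simply note that $\harmonicCoeff_h(2l,\bm{0})=\sum_{k\geq l}\taylorCoeff_{2k}\,Q(2k,2l)$ involves only the even-indexed Taylor coefficients, so the identical argument runs under the weaker hypothesis $\taylorCoeff_{\alpha,2k}=O(\taylorCoeff_{\beta,2k})$.

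The one genuine technical point --- and the part most worth care --- is justifying the termwise identity $\harmonicCoeff_h(l,\bm{0})=\sum_k \taylorCoeff_k Q(k,l)$, i.e.\ interchanging the infinite Taylor sum with the integral over $\sd$. Writing $t=\xsmall^T\bm{e}_d\in[-1,1]$ and integrating out the remaining angular coordinates reduces this to interchanging $\sum_k \taylorCoeff_k t^k$ with a one-dimensional integral of $\Xi_{\bm{0}}^l(t)$ against a fixed (integrable) weight, which is legitimate as soon as $\sum_k \taylorCoeff_k t^k$ converges absolutely on $[-1,1]$. For the filters to which the lemma is applied this is automatic: their Taylor coefficients are nonnegative (Lemma~\ref{le.ck_positive}, together with the analogous expansions obtained from Lemma~\ref{le.taylor_of_kernel} for the $2$-layer filters) and the functions are bounded on $[-1,1]$, so monotone convergence forces convergence at $t=1$ and dominated convergence does the rest. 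A final bookkeeping remark: the $O/\Omega/\Theta$ statements presuppose that the relevant denominator coefficient is eventually nonzero, which holds whenever the reference filter has infinitely many strictly positive Taylor coefficients, since then $\harmonicCoeff_{h_\beta}(l,\bm{0})\geq \taylorCoeff_{\beta,l}\,Q(l,l)>0$ for every such $l$. With these pieces assembled, Lemma~\ref{le.coeff_same_taylor} follows.
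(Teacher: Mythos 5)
Your proof is correct and follows essentially the same route as the paper: express $\harmonicCoeff_h(l,\bm{0})=\sum_{k\geq l}\taylorCoeff_k\,Q(k,l)$ via Lemma~\ref{le.Q_positive}, then exploit the nonnegativity of the $Q(k,l)$ and the Taylor coefficients to compare the two series term by term. The paper phrases the comparison compactly as bounding the ratio $\harmonicCoeff_{h_\alpha}(l,\bm{0})/\harmonicCoeff_{h_\beta}(l,\bm{0})$ by the extreme termwise ratios over $k\geq l$, whereas you unpack the $O$-hypothesis into an explicit constant $C$ and threshold $k_0$; your additional remarks about justifying the interchange of sum and integral and about the denominator coefficients being eventually nonzero are reasonable technical care that the paper leaves implicit.
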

\begin{proof}
Notice that
\begin{align*}
    \harmonicCoeff_{h_{\alpha}}(l,\bm{0})=&\int_{\sd} h(\xsmall) \Xi_{\bm{0}}^{l}(\xsmall) d\xdensity(\xsmall)\\
    =&\sum_{k=0}^\infty \taylorCoeff_{\alpha,k} \cdot Q(k,l)\text{ (by Eq.~\eqref{eq.def_Qab})}\\
    =& \sum_{k=l}^\infty \taylorCoeff_{\alpha,k}\cdot Q(k, l)\text{ (by Lemma~\ref{le.Q_positive})}.
\end{align*}
and
\begin{align*}
    \harmonicCoeff_{h_{\alpha}}(2l,\bm{0})=&\int_{\sd} h(\xsmall) \Xi_{\bm{0}}^{2l}(\xsmall) d\xdensity(\xsmall)\\
    =&\sum_{k=0}^\infty \taylorCoeff_{\alpha,k} \cdot Q(k,2l)\text{ (by Eq.~\eqref{eq.def_Qab})}\\
    =& \sum_{k=l}^\infty \taylorCoeff_{\alpha,2k}\cdot Q(2k, 2l)\\
    &\text{ ($Q(k,2l)$ is non-zero only when $k$ is even and not smaller than $2l$ by Lemma~\ref{le.Q_positive})}.
\end{align*}
Similarly, we have
\begin{align*}
    &\harmonicCoeff_{h_{\beta}}(l,\bm{0})=\sum_{k=l}^\infty \taylorCoeff_{\beta,k}\cdot Q(k,l),\\
    &\harmonicCoeff_{h_{\beta}}(2l,\bm{0})=\sum_{k=l}^\infty \taylorCoeff_{\beta,2k}\cdot Q(2k,2l).
\end{align*}
Notice that all $Q(\cdot,\cdot)$ and $\taylorCoeff_{\cdot,\cdot}$ are all non-negative. Thus, we have
\begin{align*}
    \frac{\harmonicCoeff_{h_{\alpha}}(l,\bm{0})}{\harmonicCoeff_{h_{\beta}}(l,\bm{0})}\in\left[\min_{k\geq l} \frac{\taylorCoeff_{\alpha,k}}{\taylorCoeff_{\beta,k}},\ \max_{k\geq l} \frac{\taylorCoeff_{\alpha,k}}{\taylorCoeff_{\beta,k}}\right],\qquad \frac{\harmonicCoeff_{h_{\alpha}}(2l,\bm{0})}{\harmonicCoeff_{h_{\beta}}(2l,\bm{0})}\in\left[\min_{k\geq l} \frac{\taylorCoeff_{\alpha,2k}}{\taylorCoeff_{\beta,2k}},\ \max_{k\geq l} \frac{\taylorCoeff_{\alpha,2k}}{\taylorCoeff_{\beta,2k}}\right].
\end{align*}
The result of this lemma thus follows.
\end{proof}

With Lemma~\ref{le.coeff_same_taylor}, in order to show Lemma~\ref{le.coefficient_compare}, it is equivalent to compare Taylor coefficients of the expression of different kernels $\hf$, $\hft$, and $\hftb$. Specifically, we are looking at the Taylor coefficients of the following expression:
\begin{align}\label{eq.temp_012403}
    T(x)\defeq K(x)\frac{\pi-\arccos(K(x))}{2\pi}.
\end{align}
For 2-layer NTK, $K(x)=(1-a)x+a$ where $a\in [0, 1)$ corresponds to different choices of bias ($a=0$ corresponds to no bias). For 3-layer NTK (no bias), we have $K(x)=2d\cdot \KRF(x)$ (we neglect the constant $1/(2d)$ in $\KThree$, which does not change its order.) In other words, we have
\begin{align*}
    T(x)=\begin{cases}
    \KTwo(x),&\text{ if }K(x)=x\text{ (i.e., 2-layer NTK no bias)},\\
    \KTwo((1-a)+a),&\text{ if }K(x)=(1-a)x+a\text{ (i.e., 2-layer NTK with bias $a>0$)},\\
    2d\cdot\KThree(x),&\text{ if }K(x)=2d\cdot \KRF(x)\text{ (i.e., 3-layer NTK no bias)}.
    \end{cases}
\end{align*}
When $K(x)=x$, we already have the exact form of the Taylor expansion of $T(x)$ by Lemma~\ref{le.taylor_of_kernel}. However, when $K(x)$ is a polynomial, it is not easy to get the close form of Taylor coefficients. We will first estimate the Taylor coefficients when $K(x)=(1-a)x+a$ in Appendix~\ref{app.bias_Taylor_expansion}. Second, we will estimate the Taylor coefficients when $K(x)$ is a polynomial with finite degree in Appendix~\ref{app.finite_Taylor_expansion}. Last, we will estimate the case of $K(x)=2d\cdot \KRF(x)$ in Appendix~\ref{app.inf_Taylor_expansion}. (By Lemma~\ref{le.taylor_of_kernel}, we know that $\KRF(x)$ is a polynomial with infinite degree.) The result of Lemma~\ref{le.coefficient_compare} then follows from these estimates.

\subsection{Harmonic coefficients for 2-layer NTK with bias}\label{app.bias_Taylor_expansion}
After adding bias, the kernel of 2-layer NTK changes from $x\frac{\pi-\arccos x}{2\pi}$
to $\left((1-a)x+a\right)\frac{\pi-\arccos\left((1-a)x+a\right)}{2\pi}$.
Here $a>0$ denotes the bias setting ($a=0$ corresponds to the no-bias setting).
By Lemma~\ref{le.coeff_same_taylor}, we only need to investigate the relationship between the Taylor coefficients.
We define $\taylorCoeff_{a,m}$ as the Taylor coefficients under the bias setting, i.e.,
\begin{align*}
    \left((1-a)x+a\right)\frac{\pi-\arccos\left((1-a)x+a\right)}{2\pi}=\sum_{m=0}^\infty \taylorCoeff_{a,m}\cdot x^m.
\end{align*}
When $a=0$, $\taylorCoeff_{a,m}$ becomes $\taylorCoeff_{0, m}$ and corresponds to the no-bias setting.
\begin{lemma}\label{le.\taylorCoeff_a_over_\taylorCoeff_0}
For any $k\in\{2,3,\cdots\}$ and any $a\in[0,1)$, we must have $\frac{\taylorCoeff_{a,2k}}{\taylorCoeff_{0,2k}}\in$
\begin{align*}
     \left[\frac{1}{\left(2+2\left(\frac{2a}{1-a}+1\right)\right)^2} \frac{1}{1 + \frac{\left(\frac{1+a}{2}\right)^2}{1-\left(\frac{1+a}{2}\right)^2}}\frac{1}{1-a^2}\cdot \frac{1+\left(\frac{1-a}{1+a}\right)^{2k+1}}{1+\frac{1-a}{1+a}},\ \frac{1}{1-a^2}\cdot \frac{1+\left(\frac{1-a}{1+a}\right)^{2k+1}}{1+\frac{1-a}{1+a}}\right],
\end{align*}
and $\frac{\taylorCoeff_{a,2k-1}}{\taylorCoeff_{0,2k}}\in$
\begin{align*}
     \left[\frac{1}{\left(2+2\left(\frac{2a}{1-a}+1\right)\right)^2} \frac{1}{1 + \frac{\left(\frac{1+a}{2}\right)^2}{1-\left(\frac{1+a}{2}\right)^2}}\frac{1}{1-a^2}\cdot \frac{1-\left(\frac{1-a}{1+a}\right)^{2k}}{1+\frac{1-a}{1+a}},\ \frac{1}{1-a^2}\cdot \frac{1-\left(\frac{1-a}{1+a}\right)^{2k}}{1+\frac{1-a}{1+a}}\right].
\end{align*}
\end{lemma}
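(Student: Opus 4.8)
The plan is to reduce the claim to a statement about high-order derivatives of $\KTwo$ and then to evaluate/bound an explicit power series. First I would note that $T_a(x)\defeq\big((1-a)x+a\big)\frac{\pi-\arccos((1-a)x+a)}{2\pi}=\KTwo\big((1-a)x+a\big)$, so the Taylor expansion of $T_a$ at $x=0$ is the Taylor expansion of $\KTwo$ at $u=a$ rescaled by $(1-a)$; concretely $\taylorCoeff_{a,m}=\frac{(1-a)^m}{m!}\KTwo^{(m)}(a)$. Using the series for $\KTwo$ from Lemma~\ref{le.taylor_of_kernel} (the central-binomial series, equivalently $\KTwo(u)=\frac u4+\frac{u\arcsin u}{2\pi}$), whose even coefficients are exactly the $\taylorCoeff_{0,2j}$ and whose odd coefficients vanish in degree $\geq 3$, and differentiating term by term (the linear term $u/4$ is irrelevant since $k\geq 2$), I obtain the exact identities
\[
\frac{\taylorCoeff_{a,2k}}{\taylorCoeff_{0,2k}}=(1-a)^{2k}\sum_{m\geq k}\frac{\taylorCoeff_{0,2m}}{\taylorCoeff_{0,2k}}\binom{2m}{2k}a^{2m-2k},\qquad
\frac{\taylorCoeff_{a,2k-1}}{\taylorCoeff_{0,2k}}=(1-a)^{2k-1}\sum_{m\geq k}\frac{\taylorCoeff_{0,2m}}{\taylorCoeff_{0,2k}}\binom{2m}{2k-1}a^{2m-2k+1}.
\]

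For the \emph{upper} bounds I would observe that $\big(\taylorCoeff_{0,2j}\big)_{j\geq1}$ is strictly decreasing (a one-line computation gives consecutive ratio $\frac{(2j-1)^2}{2j(2j+1)}<1$), so every coefficient ratio $\taylorCoeff_{0,2m}/\taylorCoeff_{0,2k}$ with $m\geq k$ lies in $(0,1]$. Replacing each such ratio by $1$ and summing with the elementary identities $\sum_{m\geq k}\binom{2m}{2k}w^{2m}=\frac{w^{2k}}{2}\big[(1-w)^{-(2k+1)}+(1+w)^{-(2k+1)}\big]$ and $\sum_{m\geq k}\binom{2m}{2k-1}w^{2m}=\frac{w^{2k-1}}{2}\big[(1-w)^{-2k}-(1+w)^{-2k}\big]$ at $w=a$, together with $\frac{1}{1+t}=\frac{1+a}{2}$ for $t=\frac{1-a}{1+a}$, collapses the right-hand sides to exactly $\frac{1}{1-a^2}\cdot\frac{1+t^{2k+1}}{1+t}$ and $\frac{1}{1-a^2}\cdot\frac{1-t^{2k}}{1+t}$ respectively. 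This gives the stated upper bounds (and shows they are what one would get if $\KTwo$'s coefficients were exactly geometrically flat).

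The \emph{lower} bound is the main obstacle, and I expect the bookkeeping here to generate the somewhat unwieldy constant in the statement. The subtlety is that $\binom{2m}{2k}a^{2m-2k}$ peaks not at $m=k$ but at $m\approx\frac{k}{1-a}$, so a lower bound uniform in $k$ must control $\taylorCoeff_{0,2m}/\taylorCoeff_{0,2k}$ over that whole range rather than just near $m=k$. I would invoke the improved Wallis inequality (Lemma~\ref{le.wallis_inequality}) to sandwich the central binomial coefficients and conclude $\taylorCoeff_{0,2j}=\Theta(j^{-3/2})$ with explicit constants, hence $\taylorCoeff_{0,2m}/\taylorCoeff_{0,2k}$ is bounded below by an explicit quantity of order $(k/m)^{3/2}$; evaluating this at $m$ comparable to the peak $\frac{k}{1-a}$ — equivalently, controlling the relevant series at the midpoint $s=\frac{1+a}{2}$ of $[a,1]$, which is where the factors $\frac{(1-a)^2}{16}$ and $1-s^2=\frac{(1-a)(3+a)}{4}$ of the constant originate — and then re-running the same generating-function computation on the truncated/weighted sum yields the lower bound with the displayed $C(a)$. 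The odd case ($\taylorCoeff_{a,2k-1}$ against the neighbouring even coefficient $\taylorCoeff_{0,2k}$) follows by the identical argument using the second generating-function identity. The remaining steps — convergence, the monotonicity computation, and the elementary summation identities — are routine.
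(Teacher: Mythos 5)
Your upper bound is right and is essentially the paper's argument: rewrite $\taylorCoeff_{a,2k}=\sum_{i\geq0}\taylorCoeff_{0,2(k+i)}d_{a,2k,i}$ with $d_{a,2k,i}=\binom{2k+2i}{2k}(1-a)^{2k}a^{2i}$ (the paper's Lemma~\ref{le.\taylorCoeff_d}, identical to your derivative identity after the change of index $m=k+i$), use monotonicity of $\taylorCoeff_{0,2j}$ to replace each ratio by $1$, and sum via the generating function for $\frac{1}{1-((1-a)x+a)^2}$ (the paper's Lemma~\ref{le.d_inf_sum}). Your two closed-form sums and the cosmetic simplification with $t=\frac{1-a}{1+a}$ are all correct, and you've even correctly simplified both stated constants to $\frac{(1-a)^2}{16}$ and $\frac{(1-a)(3+a)}{4}$.

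The gap is in the lower bound, and it is substantive rather than cosmetic. First, the Wallis-inequality route is not the right tool to produce the \emph{stated} constant. Lemma~\ref{le.wallis_inequality} gives $\taylorCoeff_{0,2m}/\taylorCoeff_{0,2k}=\Theta((k/m)^{3/2})$, so evaluating at your peak $m\approx k/(1-a)$ gives a factor of order $(1-a)^{3/2}$ — not $(1-a)^2=16\cdot\frac{1}{(2+2(\frac{2a}{1-a}+1))^2}$. What actually produces that constant in the paper is the much cruder but fully explicit telescoping bound $\frac{\taylorCoeff_{0,2(k+m)}}{\taylorCoeff_{0,2k}}\geq\frac{1}{(2+2m/k)^2}$ (proved directly from the consecutive ratio $\frac{(2j-1)^2}{2j(2j+1)}$, not from Wallis), applied uniformly for $m\leq l=\max\{k,\lceil\frac{2a}{1-a}k\rceil\}$. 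You cannot recover the displayed prefactor by invoking Wallis and "evaluating at the peak"; you'd get a sharper but different constant. Second, your phrase "re-running the same generating-function computation on the truncated/weighted sum" does not account for the other factor $\frac{(1-a)(3+a)}{4}$. In the paper this factor is a separate lemma (Lemma~\ref{le.d_l_d_inf}): one shows the tail ratio $d_{a,2k,i+1}/d_{a,2k,i}\leq\left(\frac{1+a}{2}\right)^2$ for $i\geq l$, so the tail $\sum_{i>l}d_{a,2k,i}$ is dominated by a geometric series, giving $\sum_{i\leq l}d_{a,2k,i}\geq(1-s^2)\sum_{i\geq 0}d_{a,2k,i}$ with $s=\frac{1+a}{2}$. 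That is a genuine extra estimate on the truncated sum, not a re-run of the closed-form evaluation of the infinite series. To make your lower bound go through you would need to (i) replace the Wallis appeal by the explicit $\frac{1}{(2+2m/k)^2}$ bound (or accept a different constant), (ii) commit to an explicit truncation threshold $l$, and (iii) separately prove the geometric-tail estimate for $\sum_{i>l}d_{a,2k,i}$.
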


We prove Lemma~\ref{le.\taylorCoeff_a_over_\taylorCoeff_0} in Appendix~\ref{proof.\taylorCoeff_a_over_\taylorCoeff_0}.

Note that when $k\to\infty$, the terms that depend on $k$ (i.e., $\left(\frac{1-a}{1+a}\right)^{2k+1}$ and $\left(\frac{1-a}{1+a}\right)^{2k}$) all approach $0$. In other words, as $k$ becomes larger, $\taylorCoeff_{a,2k}$ (as well as $\taylorCoeff_{a,2k+1}$) approaches (approximately) a constant (that only depends on $a$) multiple of $\taylorCoeff_{0,2k}$. Therefore,
by Lemma~\ref{le.\taylorCoeff_a_over_\taylorCoeff_0}, we can then conclude that $\taylorCoeff_{b_1,2k}=\Theta(\taylorCoeff_{0,2k})$ and $\taylorCoeff_{b_1,k}=\Theta(\taylorCoeff_{b_2,k})$ when $k$ is large for any $b_1\in (0,1)$ and $b_2\in (0,1)$. By Lemma~\ref{le.coeff_same_taylor}, it immediately implies that $c_{\hftbA}(2k,\bm{0})=\Theta\left(c_{\hft}(2k,\bm{0})\right)$, $c_{\hftbA}(k,\bm{0})=\Theta\left(c_{\hftbB}(k,\bm{0})\right)$. This proves the first and second statements of Lemma~\ref{le.coefficient_compare}.

\subsubsection{Proof of Lemma~\ref{le.\taylorCoeff_a_over_\taylorCoeff_0}}\label{proof.\taylorCoeff_a_over_\taylorCoeff_0}

We first write the form of $\taylorCoeff_{0,l}$, i.e., the Taylor coefficients under no-bias setting.
By Lemma~\ref{le.taylor_of_kernel}, we have
\begin{align*}
    x\frac{\pi-\arccos x}{2\pi}=\frac{x}{4}+\frac{1}{2\pi}\sum_{k=0}^\infty \frac{(2k)!}{(k!)^2}\frac{4}{2k+1}\left(\frac{x}{2}\right)^{2k+2}.
\end{align*}
Thus, for $k\geq 1$, we have
\begin{align}
    &\taylorCoeff_{0,2k}=\frac{1}{2\pi}\frac{(2k-2)!}{((k-1)!)^2}\frac{4}{2k-1}\frac{1}{2^{2k}},\label{eq.temp_061806}\\
    &\taylorCoeff_{0,2k+1}=0.\label{eq.temp_061805}
\end{align}

Next, we write the expression of $\taylorCoeff_{a,l}$. To that end, we define
\begin{align}
    &d_{a,2k,i}\defeq \binom{2k+2i}{2k}(1-a)^{2k}a^{2i},\label{eq.def_d_a_2k_i}\\
    &d_{a,2k+1, i}\defeq \binom{2k+2i+2}{2k+1}(1-a)^{2k+1}a^{2i+1}.\label{eq.def_d_a_2k1_i}
\end{align}
The following lemma provides the expression of $\taylorCoeff_{a,l}$.
\begin{lemma}\label{le.\taylorCoeff_d}
For any $k\geq 1$, we must have
\begin{align*}
    &\taylorCoeff_{a,2k}=\sum_{i=0}^\infty \taylorCoeff_{0,2(k+i)}d_{a,2k,i},\\
    &\taylorCoeff_{a,2k+1}=\sum_{i=0}^\infty \taylorCoeff_{0,2(k+i+1)}d_{a,2k+1,i}.
\end{align*}
\end{lemma}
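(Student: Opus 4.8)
The plan is to recognize that the biased $2$-layer kernel is simply the no-bias kernel $g(y)\defeq y\frac{\pi-\arccos y}{2\pi}$ precomposed with the affine substitution $y=(1-a)x+a$, so that $\taylorCoeff_{a,m}$ is by definition the coefficient of $x^m$ in $g\big((1-a)x+a\big)$. First I would insert the power series $g(y)=\sum_{l\ge 0}\taylorCoeff_{0,l}\,y^{l}$, whose coefficients are recorded in \eqref{eq.temp_061806} and \eqref{eq.temp_061805} (namely $\taylorCoeff_{0,0}=0$, $\taylorCoeff_{0,1}=\tfrac14$, $\taylorCoeff_{0,2k}$ as given there, and $\taylorCoeff_{0,2k+1}=0$ for $k\ge 1$), and expand each power $\big((1-a)x+a\big)^{l}$ by the binomial theorem as $\sum_{j=0}^{l}\binom{l}{j}(1-a)^{j}a^{l-j}x^{j}$. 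Collecting the coefficient of $x^m$ then produces the master formula $\taylorCoeff_{a,m}=\sum_{l\ge m}\taylorCoeff_{0,l}\binom{l}{m}(1-a)^{m}a^{l-m}$.

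Next I would specialize $m$ and exploit the vanishing of the odd no-bias coefficients. For $m=2k$ with $k\ge 1$ the sum runs over $l\ge 2k$, and since $\taylorCoeff_{0,l}=0$ for odd $l\ge 3$, only the even indices $l=2(k+i)$ with $i\ge 0$ survive; reindexing by $i$ and recalling the definition \eqref{eq.def_d_a_2k_i} of $d_{a,2k,i}=\binom{2k+2i}{2k}(1-a)^{2k}a^{2i}$ yields $\taylorCoeff_{a,2k}=\sum_{i\ge 0}\taylorCoeff_{0,2(k+i)}\,d_{a,2k,i}$. For $m=2k+1$ with $k\ge 1$, the smallest even $l$ with $l\ge 2k+1$ is $2(k+1)$, so writing $l=2(k+1+i)$ with $i\ge 0$ the exponent of $a$ equals $l-m=2i+1$ and $\binom{l}{2k+1}=\binom{2k+2i+2}{2k+1}$; comparing with \eqref{eq.def_d_a_2k1_i} this is exactly $\taylorCoeff_{a,2k+1}=\sum_{i\ge 0}\taylorCoeff_{0,2(k+i+1)}\,d_{a,2k+1,i}$, which is the second identity.

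The only step that is not purely mechanical is justifying the rearrangement hidden in ``collecting the coefficient of $x^m$'' from the double sum over $l$ and $j$. I would handle this by absolute convergence: applying the Wallis estimate of Lemma~\ref{le.wallis_inequality} to \eqref{eq.temp_061806} gives $\taylorCoeff_{0,2k}=\Theta(k^{-3/2})$, hence $\sum_{l}|\taylorCoeff_{0,l}|\,t^{l}<\infty$ for every $t\in[0,1]$. Therefore, for $|x|<1$ we have $(1-a)|x|+a<1$ and $\sum_{l}|\taylorCoeff_{0,l}|\sum_{j}\binom{l}{j}(1-a)^{j}|x|^{j}a^{l-j}=\sum_{l}|\taylorCoeff_{0,l}|\big((1-a)|x|+a\big)^{l}<\infty$, so Fubini permits the interchange of summation on a neighborhood of $x=0$; matching these coefficients with the (unique) power-series coefficients $\taylorCoeff_{a,m}$ of $g\big((1-a)x+a\big)$ about $x=0$ finishes the proof. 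I expect this convergence bookkeeping to be the main obstacle, and essentially the only one, since the rest is just the binomial theorem together with reindexing.
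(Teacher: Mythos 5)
Your argument is the same as the paper's: expand the no-bias power series, apply the binomial theorem to $\big((1-a)x+a\big)^l$, reorganize to read off the coefficient of $x^m$, and then use $\taylorCoeff_{0,2k+1}=0$ for $k\ge 1$ to reduce the double sum to the stated single sums over even indices. The only difference is that you explicitly verify the absolute convergence needed to justify the rearrangement (the paper passes over this silently), which is a correct and modest strengthening but not a different route.
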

We prove Lemma~\ref{le.\taylorCoeff_d} in Appendix~\ref{proof.\taylorCoeff_d}.

Although we have the expression of $\taylorCoeff_{a,k}$ by Lemma~\ref{le.\taylorCoeff_d}, it is not easy to directly estimate its value because terms like $\taylorCoeff_{0,2(k+i)}d_{a,2k,i}$ have a very complicated form. Fortunately, some properties of $\taylorCoeff_{0,2k}$ is very helpful. Specifically, by Eq.~\eqref{eq.temp_061806}, we have $\frac{\taylorCoeff_{0,2(k+1)}}{\taylorCoeff_{0,2k}}=\frac{(2k-1)^2 2k}{k^2(2k+1)\cdot 4}=\frac{(2k-1)^2}{2k(2k+1)}$, whose value approaches $1$ when $k\to\infty$.
In other words, $\taylorCoeff_{0,2k}$ has a very slow changing speed when $k$ is large. Therefore, we can approximate the tail of $\sum_{i=0}^\infty \taylorCoeff_{0,2(k+i)}d_{a,2k,i}$ by treating $\taylorCoeff_{0,2(k+i)}$ as a constant. This allows us to focus our attention on estimating $\sum_{i=0}^\infty d_{a,2k,i}$ (and its tail $\sum_{i=l}^\infty d_{a,2k,i}$), whose value can be calculated by examining the coefficients of the Taylor expansion of $\frac{1}{1-\left((1-a)x+a\right)^2}$ (i.e., the sum of a geometric sequence $1, \left((1-a)x+a\right)^2, , \left((1-a)x+a\right)^4,\cdots$). The latter is much easier to study. We show these steps in detail as the following lemmas.

Define
\begin{align}\label{eq.def_l}
    l\defeq \max\left\{k,\ \left\lceil\frac{2a}{1-a}k\right\rceil\right\}.
\end{align}
The following lemma estimates the target ratio $\frac{\taylorCoeff_{a,2k}}{\taylorCoeff_{0,2k}}$ and $\frac{\taylorCoeff_{a,2k-1}}{\taylorCoeff_{0,2k}}$ in terms of $\sum_{i=0}^l d_{a,2k,i}$ and $\sum_{i=0}^\infty d_{a,2k,i}$.

\begin{lemma}\label{le.\taylorCoeff_a_\taylorCoeff_0}
For any $k\in\{2,3,\cdots\}$, we must have
\begin{align*}
    &\frac{\taylorCoeff_{a,2k}}{\taylorCoeff_{0,2k}}\in\left[\frac{1}{\left(2+2\left(\frac{2a}{1-a}+1\right)\right)^2} \sum_{i=0}^l d_{a,2k,i},\ \sum_{i=0}^\infty d_{a,2k,i}\right],\\
    &\frac{\taylorCoeff_{a,2k-1}}{\taylorCoeff_{0,2k}}\in\left[\frac{1}{\left(2+2\left(\frac{2a}{1-a}+1\right)\right)^2} \sum_{i=0}^l d_{a,2k-1,i},\ \sum_{i=0}^\infty d_{a,2k-1,i}\right].
\end{align*}
\end{lemma}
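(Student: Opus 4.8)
The plan is to treat both ratios as nonnegative‑weighted sums of the $d_{a,\cdot,i}$ and then replace the inconvenient weights $\taylorCoeff_{0,2(k+i)}$ by a constant multiple of $\taylorCoeff_{0,2k}$ on the range $0\le i\le l$. Concretely, I would start from the two series established in Lemma~\ref{le.\taylorCoeff_d}: $\taylorCoeff_{a,2k}=\sum_{i\ge0}\taylorCoeff_{0,2(k+i)}\,d_{a,2k,i}$, and, substituting $k\mapsto k-1$ in the odd identity (legitimate since $k\ge2$), $\taylorCoeff_{a,2k-1}=\sum_{i\ge0}\taylorCoeff_{0,2(k+i)}\,d_{a,2k-1,i}$. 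Since $\taylorCoeff_{0,2k}>0$ by Eq.~\eqref{eq.temp_061806}, dividing gives $\taylorCoeff_{a,2k}/\taylorCoeff_{0,2k}=\sum_{i\ge0}\rho_{k,i}\,d_{a,2k,i}$ and $\taylorCoeff_{a,2k-1}/\taylorCoeff_{0,2k}=\sum_{i\ge0}\rho_{k,i}\,d_{a,2k-1,i}$, where $\rho_{k,i}\defeq\taylorCoeff_{0,2(k+i)}/\taylorCoeff_{0,2k}$ and all the $d_{a,\cdot,i}$ are nonnegative by Eq.~\eqref{eq.def_d_a_2k_i} and Eq.~\eqref{eq.def_d_a_2k1_i}.

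For the upper bounds I would use that $(\taylorCoeff_{0,2j})_{j\ge1}$ is decreasing: from Eq.~\eqref{eq.temp_061806} the consecutive ratio equals $(2j-1)^2/(2j(2j+1))\in(0,1)$, so $\rho_{k,i}\le1$ for every $i$ and hence $\taylorCoeff_{a,2k}/\taylorCoeff_{0,2k}\le\sum_{i\ge0}d_{a,2k,i}$ (and likewise for the $2k-1$ case). For the lower bounds I would discard all terms with $i>l$ (allowed since every summand is nonnegative) and bound $\rho_{k,i}$ from below uniformly over $0\le i\le l$. Writing $\rho_{k,i}=\prod_{j=k}^{k+i-1}(2j-1)^2/(2j(2j+1))$ and using $(2j-1)^2/(2j(2j+1))\ge\big((2j-1)/(2j+1)\big)^2$, the product telescopes to $\rho_{k,i}\ge\big((2k-1)/(2k+2i-1)\big)^2$; since $i\le l$ and $2k-1\ge k$, this is $\ge\big(k/(2(k+l))\big)^2$. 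It then remains to check the arithmetic $2(k+l)\le\big(2+2(\frac{2a}{1-a}+1)\big)k$: from Eq.~\eqref{eq.def_l}, $l=\max\{k,\lceil\frac{2a}{1-a}k\rceil\}$, and for $k\ge2$ the ceiling costs at most $1\le k/2$, so $l\le(\frac{2a}{1-a}+\frac12)k\le(\frac{2a}{1-a}+1)k$. Hence $\rho_{k,i}\ge1/\big(2+2(\frac{2a}{1-a}+1)\big)^2$ for all $0\le i\le l$, and multiplying this back through the truncated sums yields exactly the two claimed lower bounds.

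I do not expect a genuine obstacle here; the lemma is elementary conditional on Lemma~\ref{le.\taylorCoeff_d}, which is where the real work lives (extracting the $d_{a,\cdot,i}$ from the Taylor series of $((1-a)x+a)\frac{\pi-\arccos((1-a)x+a)}{2\pi}$). The only point needing care is that the constant $\big(2+2(\frac{2a}{1-a}+1)\big)^2$ must come out exactly, which forces the specific chain of weak estimates ($2k-1\ge k$, $\lceil\frac{2a}{1-a}k\rceil\le(\frac{2a}{1-a}+\frac12)k$) and is precisely the reason $l$ is defined as in Eq.~\eqref{eq.def_l}. This bridging step is worthwhile because it reduces the subsequent evaluation of $\taylorCoeff_{a,2k}/\taylorCoeff_{0,2k}$ in Lemma~\ref{le.\taylorCoeff_a_over_\taylorCoeff_0} to computing $\sum_i d_{a,2k,i}$ and its tail $\sum_{i\ge l}d_{a,2k,i}$, which can be read off as coefficients of the geometric series $1,\,((1-a)x+a)^2,\,((1-a)x+a)^4,\dots$.
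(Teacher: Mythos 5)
Your proposal takes essentially the same route as the paper: divide the two series from Lemma~\ref{le.\taylorCoeff_d} by $\taylorCoeff_{0,2k}$, use the monotone decrease of $\taylorCoeff_{0,2j}$ (from Eq.~\eqref{eq.temp_061806}) for the upper bounds, truncate at $i\le l$ for the lower bounds, and bound the weight ratio via the same telescoping estimate $(2j-1)^2/\bigl(2j(2j+1)\bigr)\ge\bigl((2j-1)/(2j+1)\bigr)^2$ followed by $2k-1\ge k$.

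The one slip is your intermediate claim $l\le\bigl(\tfrac{2a}{1-a}+\tfrac12\bigr)k$. This fails whenever the max in Eq.~\eqref{eq.def_l} is attained by the first argument, i.e.\ when $l=k$ and $\tfrac{2a}{1-a}<\tfrac12$ (equivalently $a<1/5$): then $l=k>\bigl(\tfrac{2a}{1-a}+\tfrac12\bigr)k$. Your argument ``the ceiling costs at most $1\le k/2$'' only controls the second argument of the max, not $k$ itself. The repair is to bound the maximum directly, $l=\max\bigl\{k,\lceil\tfrac{2a}{1-a}k\rceil\bigr\}\le\max\bigl\{k,\tfrac{2a}{1-a}k+k\bigr\}=\bigl(\tfrac{2a}{1-a}+1\bigr)k$, which is exactly what the paper does and what your final inequality requires anyway. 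Since this leaves the target constant $\bigl(2+2(\tfrac{2a}{1-a}+1)\bigr)^2$ and the conclusion unchanged, it is a local bookkeeping error rather than a structural gap.
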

We prove Lemma~\ref{le.\taylorCoeff_a_\taylorCoeff_0} in Appendix~\ref{app.proof_le_coeff_a_0}.

In order to finish the proof of Lemma~\ref{le.\taylorCoeff_a_over_\taylorCoeff_0}, it only remains to estimate $\sum_{i=0}^l d_{a,2k,i}$, $\sum_{i=0}^l d_{a,2k-1,i}$, $\sum_{i=0}^\infty d_{a,2k,i}$, $\sum_{i=0}^\infty d_{a,2k-1,i}$, which are shown by the following two lemmas.

\begin{lemma}\label{le.d_inf_sum}
For any $k\in\{2,3,\cdots\}$, we must have
\begin{align*}
    \sum_{i=0}^\infty d_{a,2k,i}=\frac{1}{1-a^2}\cdot \frac{1+\left(\frac{1-a}{1+a}\right)^{2k+1}}{1+\frac{1-a}{1+a}},\quad \sum_{i=0}^\infty d_{a,2k-1,i}=\frac{1}{1-a^2}\cdot\frac{1-\left(\frac{1-a}{1+a}\right)^{2k}}{1+\frac{1-a}{1+a}}.
\end{align*}
\end{lemma}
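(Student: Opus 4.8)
The plan is to recognize $\sum_{i=0}^\infty d_{a,2k,i}$ and $\sum_{i=0}^\infty d_{a,2k-1,i}$ as partial sums (restricted to certain residue classes) of the coefficients appearing in the geometric series $\frac{1}{1-y^2}=\sum_{j=0}^\infty y^{2j}$ evaluated at $y=(1-a)x+a$. Concretely, writing $y=(1-a)x+a$ and expanding $y^{2j}=\big((1-a)x+a\big)^{2j}$ by the binomial theorem, the coefficient of $x^{2k}$ in $\frac{1}{1-y^2}=\sum_{j\ge 0}y^{2j}$ is exactly $\sum_{j\ge k}\binom{2j}{2k}(1-a)^{2k}a^{2j-2k}=\sum_{i\ge 0}\binom{2k+2i}{2k}(1-a)^{2k}a^{2i}=\sum_{i\ge 0}d_{a,2k,i}$, by the definition in Eq.~\eqref{eq.def_d_a_2k_i}; and similarly the coefficient of $x^{2k-1}$ is $\sum_{i\ge 0}\binom{2k-1+2i}{2k-1}(1-a)^{2k-1}a^{2i+1}$, which matches $\sum_{i\ge 0}d_{a,2k-1,i}$ after shifting the index (note $d_{a,2k+1,i}=\binom{2k+2i+2}{2k+1}(1-a)^{2k+1}a^{2i+1}$, so $d_{a,2k-1,i}=\binom{2k+2i}{2k-1}(1-a)^{2k-1}a^{2i+1}$). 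So the two target sums are precisely the even-degree and odd-degree Taylor coefficients, respectively, of the rational function $x\mapsto \frac{1}{1-\big((1-a)x+a\big)^2}$ at the corresponding powers of $x$.

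The next step is to compute those Taylor coefficients in closed form by a partial-fraction decomposition of $\frac{1}{1-y^2}=\frac{1}{(1-y)(1+y)}=\frac12\Big(\frac{1}{1-y}+\frac{1}{1+y}\Big)$. Substituting $y=(1-a)x+a$, one gets $1-y=(1-a)(1-x)$ and $1+y=(1+a)\big(1+\tfrac{1-a}{1+a}x\big)$, hence
\begin{align*}
    \frac{1}{1-\big((1-a)x+a\big)^2}=\frac{1}{2(1-a)}\cdot\frac{1}{1-x}+\frac{1}{2(1+a)}\cdot\frac{1}{1+\frac{1-a}{1+a}x}.
\end{align*}
Expanding each term as a geometric series, the coefficient of $x^m$ is $\frac{1}{2(1-a)}+\frac{1}{2(1+a)}\big(-\tfrac{1-a}{1+a}\big)^m$. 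Setting $m=2k$ gives $\frac{1}{2(1-a)}+\frac{1}{2(1+a)}\big(\tfrac{1-a}{1+a}\big)^{2k}$, and setting $m=2k-1$ gives $\frac{1}{2(1-a)}-\frac{1}{2(1+a)}\big(\tfrac{1-a}{1+a}\big)^{2k-1}$. It then remains a routine algebraic simplification (factoring out $\frac{1}{1-a^2}$ and collecting the $\frac{1-a}{1+a}$ factors) to bring these into the stated forms $\frac{1}{1-a^2}\cdot\frac{1+(\frac{1-a}{1+a})^{2k+1}}{1+\frac{1-a}{1+a}}$ and $\frac{1}{1-a^2}\cdot\frac{1-(\frac{1-a}{1+a})^{2k}}{1+\frac{1-a}{1+a}}$; e.g. $\frac{1}{2(1-a)}+\frac{1}{2(1+a)}(\tfrac{1-a}{1+a})^{2k}=\frac{1}{2(1-a^2)}\big((1+a)+(1-a)(\tfrac{1-a}{1+a})^{2k}\big)$, and one checks $(1+a)+(1-a)(\tfrac{1-a}{1+a})^{2k}=\frac{1-a^2}{1-a}\cdot\frac{1+(\frac{1-a}{1+a})^{2k+1}}{1+\frac{1-a}{1+a}}$ by multiplying through by $1+\frac{1-a}{1+a}=\frac{2}{1+a}$.

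I do not anticipate a genuine obstacle here: the only subtlety is making sure the index bookkeeping in the binomial expansion lines up with the definitions of $d_{a,2k,i}$ and $d_{a,2k-1,i}$ (the odd case has an index shift and an extra factor of $a$, since $d_{a,2k+1,i}$ rather than $d_{a,2k-1,i}$ is what is defined in Eq.~\eqref{eq.def_d_a_2k1_i}), and confirming that the rearrangements of the series are valid, which holds since $0<a<1$ implies $|y|<1$ on a neighborhood of $x=0$ and all terms are nonnegative so everything converges absolutely. The bulk of the work is the elementary partial-fraction computation and the final simplification, both of which are mechanical.
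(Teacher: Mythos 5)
Your proof is correct and follows the same overall strategy as the paper: identify the two target sums as the Taylor coefficients of $\frac{1}{1-\left((1-a)x+a\right)^2}$ (via the binomial expansion of the geometric series), then compute those coefficients in closed form. The only difference is how you extract the coefficient: the paper factors $1-y^2=(1-y)(1+y)$, substitutes $y=(1-a)x+a$ to get $\frac{1}{1-a^2}\cdot\frac{1}{1-x}\cdot\frac{1}{1+\frac{1-a}{1+a}x}$, expands as a \emph{product} of two geometric series, and then sums the resulting inner finite geometric series; you instead do the partial-fraction split $\frac{1}{1-y^2}=\frac12\bigl(\frac{1}{1-y}+\frac{1}{1+y}\bigr)$ first, so the two geometric series are \emph{added} and the coefficient of $x^m$ pops out directly as $\frac{1}{2(1-a)}+\frac{1}{2(1+a)}\bigl(-\tfrac{1-a}{1+a}\bigr)^m$, with only a short algebraic simplification (using $\frac{1}{2(1+a)}=\frac{1}{2(1-a)}\cdot\frac{1-a}{1+a}$) needed to match the stated form. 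Your route is marginally cleaner since it avoids the Cauchy product, but it is the same computation done in a different order, not a genuinely different proof. One small point worth double-checking, which you flagged yourself: the index bookkeeping in the odd case does line up, since $d_{a,2k-1,i}$ obtained by substituting $k\mapsto k-1$ in Eq.~\eqref{eq.def_d_a_2k1_i} equals $\binom{2k+2i}{2k-1}(1-a)^{2k-1}a^{2i+1}$, which is exactly the coefficient of $x^{2k-1}$ in $y^{2(k+i)}$.
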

We prove Lemma~\ref{le.d_inf_sum} in Appendix~\ref{proof.d_inf_sum}

\begin{lemma}\label{le.d_l_d_inf}
Recall that $l$ is defined in Eq.~\eqref{eq.def_l}. For any $k\in\{2,3,\cdots\}$, we must have
\begin{align*}
    \frac{\sum_{i=0}^l d_{a,2k,i}}{\sum_{i=0}^\infty d_{a,2k,i}},\ \frac{\sum_{i=0}^l d_{a,2k-1,i}}{\sum_{i=0}^\infty d_{a,2k-1,i}}\in\left[\frac{1}{1 + \frac{\left(\frac{1+a}{2}\right)^2}{1-\left(\frac{1+a}{2}\right)^2}},\ 1\right].
\end{align*}
\end{lemma}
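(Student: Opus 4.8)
The goal is to show that the partial sums $\sum_{i=0}^l d_{a,2k,i}$ and $\sum_{i=0}^l d_{a,2k-1,i}$ capture a constant fraction (bounded away from zero uniformly in $k$) of the full sums $\sum_{i=0}^\infty d_{a,2k,i}$ and $\sum_{i=0}^\infty d_{a,2k-1,i}$, with the explicit lower bound stated. The plan is to control the \emph{tail} $\sum_{i=l+1}^\infty d_{a,2k,i}$ by comparing consecutive terms $d_{a,2k,i+1}/d_{a,2k,i}$ and showing that once $i$ exceeds the threshold $l = \max\{k,\lceil \frac{2a}{1-a}k\rceil\}$, this ratio is bounded above by a fixed constant strictly less than $1$, so the tail is dominated by a geometric series.

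First I would compute the ratio $d_{a,2k,i+1}/d_{a,2k,i}$. By the definition in Eq.~\eqref{eq.def_d_a_2k_i}, $d_{a,2k,i}=\binom{2k+2i}{2k}(1-a)^{2k}a^{2i}$, so
\[
\frac{d_{a,2k,i+1}}{d_{a,2k,i}}=\frac{\binom{2k+2i+2}{2k}}{\binom{2k+2i}{2k}}a^2=\frac{(2k+2i+2)(2k+2i+1)}{(2i+2)(2i+1)}a^2.
\]
As $i$ grows, the combinatorial factor decreases monotonically toward $1$, and one checks that when $i\geq l\geq \frac{2a}{1-a}k$ (so that $2i\geq \frac{4a}{1-a}k$, equivalently $2k\leq \frac{1-a}{2a}\cdot 2i$) the factor $\frac{(2k+2i+2)(2k+2i+1)}{(2i+2)(2i+1)}$ is at most roughly $\left(1+\frac{2k}{2i}\right)^2\leq \left(1+\frac{1-a}{2a}\right)^2=\left(\frac{1+a}{2a}\right)^2$; multiplying by $a^2$ gives a bound of $\left(\frac{1+a}{2}\right)^2<1$. (A few elementary inequalities are needed to handle the $+1$'s and $+2$'s cleanly, but they are routine.) The identical computation works for $d_{a,2k-1,i}$ from Eq.~\eqref{eq.def_d_a_2k1_i}, where the only change is a shift in the binomial indices that does not affect the asymptotic ratio, so the same threshold $l$ and the same bound $\left(\frac{1+a}{2}\right)^2$ apply; note $l\geq k$ is used to ensure $i\geq l$ also beats the odd-case indices.

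Given this geometric decay past index $l$, I would write
\[
\sum_{i=l+1}^\infty d_{a,2k,i}\leq d_{a,2k,l}\sum_{j=1}^\infty \left(\left(\tfrac{1+a}{2}\right)^2\right)^j = d_{a,2k,l}\cdot\frac{\left(\frac{1+a}{2}\right)^2}{1-\left(\frac{1+a}{2}\right)^2}\leq \left(\sum_{i=0}^l d_{a,2k,i}\right)\cdot\frac{\left(\frac{1+a}{2}\right)^2}{1-\left(\frac{1+a}{2}\right)^2},
\]
using $d_{a,2k,l}\leq \sum_{i=0}^l d_{a,2k,i}$. Hence $\sum_{i=0}^\infty d_{a,2k,i}\leq \left(1+\frac{\left(\frac{1+a}{2}\right)^2}{1-\left(\frac{1+a}{2}\right)^2}\right)\sum_{i=0}^l d_{a,2k,i}$, which rearranges to the claimed lower bound $\frac{\sum_{i=0}^l d_{a,2k,i}}{\sum_{i=0}^\infty d_{a,2k,i}}\geq \frac{1}{1+\frac{\left(\frac{1+a}{2}\right)^2}{1-\left(\frac{1+a}{2}\right)^2}}$; the upper bound $\leq 1$ is immediate since all $d_{a,2k,i}\geq 0$. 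The odd case is verbatim the same.

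The main obstacle is verifying that the consecutive-ratio bound actually holds for \emph{all} $i\geq l$ (not just asymptotically), i.e., pinning down the elementary inequality $\frac{(2k+2i+2)(2k+2i+1)}{(2i+2)(2i+1)}a^2\leq \left(\frac{1+a}{2}\right)^2$ with the precise definition $l=\max\{k,\lceil\frac{2a}{1-a}k\rceil\}$ — one needs to be a little careful that the ceiling and the extra $+1,+2$ in the numerator/denominator do not break the bound, and to separately handle small $k$ via the $l\geq k$ branch. I expect this to require a short case analysis but nothing deep; the rest is bookkeeping with geometric series.
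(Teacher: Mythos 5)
Your proposal is correct and follows essentially the same route as the paper: bound the consecutive ratio $d_{a,2k,i+1}/d_{a,2k,i}$ by $\left(\frac{1+a}{2}\right)^2$ for $i\geq l$ using $l\geq \frac{2a}{1-a}k$, dominate the tail by a geometric series, absorb $d_{a,2k,l}$ into the partial sum, and rearrange. The "routine" elementary inequality you defer is exactly the paper's Lemma~\ref{le.temp_061401} (that $\frac{A}{B}<\frac{A-c}{B-c}$ for $A>B>c>0$, i.e.\ $\frac{2k+2i+1}{2i+1}\leq\frac{2k+2i}{2i}=1+\frac{k}{i}$ and likewise for the $+2$ terms), and the odd case requires only one extra step to handle the shifted denominator $(2i+2)(2i+3)$, which is in fact larger than the even-case denominator and so only helps.
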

We prove Lemma~\ref{le.d_l_d_inf} in Appendix~\ref{proof.d_l_d_inf}

The result of Lemma~\ref{le.\taylorCoeff_a_over_\taylorCoeff_0} follows by combining Lemma~\ref{le.d_inf_sum}, Lemma~\ref{le.d_l_d_inf}, and Lemma~\ref{le.\taylorCoeff_a_\taylorCoeff_0}.

\subsubsection{Proof of Lemma~\ref{le.\taylorCoeff_d}}\label{proof.\taylorCoeff_d}
\begin{proof}
We have
\begin{align*}
    &\left((1-a)x+a\right)\frac{\pi-\arccos\left((1-a)x+a\right)}{2\pi}\\
    =&\sum_{m=0}^\infty \taylorCoeff_{0,m}\left((1-a)x+a\right)^m\\
    =&\sum_{m=0}^\infty \taylorCoeff_{0,m}\sum_{i=0}^m \binom{m}{i} (1-a)^i a^{m-i} x^i\\
    =&\sum_{i=0}^\infty \sum_{j=0}^\infty \taylorCoeff_{0,i+j} \binom{i+j}{i}(1-a)^i a^j x^i\text{ (replace $m$ by $i+j$ and reorganize terms)}.
\end{align*}
Thus, we have
\begin{align*}
    \taylorCoeff_{a, i}=\sum_{j=0}^\infty \taylorCoeff_{0,i+j}\binom{i+j}{i}(1-a)^i a^j.
\end{align*}
Letting $i=2k$, we have
\begin{align*}
    \taylorCoeff_{a,2k}=&\sum_{j=0}^\infty \taylorCoeff_{0,2k+j}\binom{2k+j}{2k}(1-a)^{2k} a^j\\
    =&\sum_{i=0}^\infty \taylorCoeff_{0,2(k+i)}\binom{2k+2i}{2k}(1-a)^{2k} a^{2i}\text{ (by Eq.~\eqref{eq.temp_061805} and letting $j=2i$)}\\
    =&\sum_{i=0}^\infty \taylorCoeff_{0,2(k+i)}d_{a,2k,i}\text{ (by Eq.~\eqref{eq.def_d_a_2k_i})}.
\end{align*}
Similarly, we have
\begin{align*}
    \taylorCoeff_{a,2k+1}=&\sum_{j=0}^\infty \taylorCoeff_{0,2k+1+j}\binom{2k+1+j}{2k+1}(1-a)^{2k+1} a^j\\
    =&\sum_{i=0}^\infty \taylorCoeff_{0,2(k+i+1)}\binom{2k+2i+2}{2k+1}(1-a)^{2k+1} a^{2i+1}\text{ (by Eq.~\eqref{eq.temp_061805} and letting $j=2i+1$)}\\
    =&\sum_{i=0}^\infty \taylorCoeff_{0,2(k+i+1)}d_{a,2k+1,i}\text{ (by Eq.~\eqref{eq.def_d_a_2k1_i})}.
\end{align*}
The result of this lemma thus follows.
\end{proof}

\subsubsection{Proof of Lemma~\ref{le.\taylorCoeff_a_\taylorCoeff_0}}\label{app.proof_le_coeff_a_0}
\begin{proof}
By Eq.~\eqref{eq.temp_061806}, we have
\begin{align}
    \frac{\taylorCoeff_{0,2(k+1)}}{\taylorCoeff_{0,2k}}=\frac{(2k-1)^2 2k}{k^2(2k+1)\cdot 4}&=\frac{(2k-1)^2}{2k(2k+1)}\label{eq.temp_070501}\\
    &\leq 1.\nonumber
\end{align}
By iterating the above inequality, we have $\taylorCoeff_{0,2(k+i)}\leq \taylorCoeff_{0,2k}$ for all $i\geq 0$.
By Lemma~\ref{le.\taylorCoeff_d}, we thus have
\begin{align*}
    \frac{\taylorCoeff_{a,2k}}{\taylorCoeff_{0,2k}}=\frac{\sum_{i=0}^\infty \taylorCoeff_{0,2(k+i)}d_{a,2k,i}}{\taylorCoeff_{0,2k}}\leq\sum_{i=0}^\infty d_{a,2k,i}.
\end{align*}
Similarly, we have
\begin{align*}
    \frac{\taylorCoeff_{a,2k-1}}{\taylorCoeff_{0,2k}}=\frac{\sum_{i=0}^\infty \taylorCoeff_{0,2(k+i)}d_{a,2k-1,i}}{\taylorCoeff_{0,2k}}\leq\sum_{i=0}^\infty d_{a,2k-1,i}.
\end{align*}
These prove the upper bounds in Lemma~\ref{le.\taylorCoeff_a_\taylorCoeff_0}. To prove the lower bounds, note that
for any $m\in\{0,1,\cdots,l\}$ (recall that $l$ is defined in Eq.~\eqref{eq.def_l}), we must have
\begin{align*}
    \frac{\taylorCoeff_{0,2(k+m)}}{\taylorCoeff_{0, 2k}}=&\prod_{i=0}^{m-1} \frac{(2k+2i-1)^2}{(2k+2i)(2k+2i+1)}\text{ (by Eq.~\eqref{eq.temp_070501})}\\
    \geq& \prod_{i=0}^{m-1} \frac{(2k+2i-1)^2}{(2k+2i+1)^2}\\
    =&\frac{(2k-1)^2}{(2k+2m-1)^2}\\
    \geq &\frac{k^2}{(2k+2m)^2}\text{ (using $2k-1\geq k$, which is true because $k\geq 1$)}\\
    =&\frac{1}{\left(2+\frac{2m}{k}\right)^2}\\
    \geq & \frac{1}{\left(2+2\left(\frac{2a}{1-a}+1\right)\right)^2}\text{ (because $m\leq l\leq \frac{2a}{1-a}k+k$ for $k\geq 2$)}.
\end{align*}
Thus, we have
\begin{align*}
    \frac{\taylorCoeff_{a,2k}}{\taylorCoeff_{0,2k}}=& \frac{\sum_{i=0}^\infty \taylorCoeff_{0,2(k+i)}d_{a,2k,i}}{\taylorCoeff_{0,2k}}\text{ (by Lemma~\ref{le.\taylorCoeff_d})}\\
    \geq &\frac{\sum_{i=0}^l \taylorCoeff_{0,2(k+i)}d_{a,2k,i}}{\taylorCoeff_{0,2k}}\\
    \geq & \frac{1}{\left(2+2\left(\frac{2a}{1-a}+1\right)\right)^2} \sum_{i=0}^l d_{a,2k,i}.
\end{align*}
Similarly, we have
\begin{align*}
    \frac{\taylorCoeff_{a,2k-1}}{\taylorCoeff_{0,2k}}\geq \frac{1}{\left(2+2\left(\frac{2a}{1-a}+1\right)\right)^2} \sum_{i=0}^l d_{a,2k-1,i}.
\end{align*}
The result of this lemma thus follows.
\end{proof}

\subsubsection{Proof of Lemma~\ref{le.d_inf_sum}}\label{proof.d_inf_sum}

We first state a useful fact.

\begin{lemma}\label{le.geo_sum}
For any $|r|<1$, we have
\begin{align*}
    \frac{1}{1-r}=\sum_{i=0}^\infty r^i.
\end{align*}
\end{lemma}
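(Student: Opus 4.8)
The plan is to establish the identity by computing a closed form for the partial sums and then passing to the limit. First I would fix $r$ with $|r|<1$ and define the $N$-th partial sum $S_N\defeq\sum_{i=0}^N r^i$. The key algebraic step is the telescoping identity $(1-r)S_N=\sum_{i=0}^N r^i-\sum_{i=0}^N r^{i+1}=1-r^{N+1}$, valid for every nonnegative integer $N$. Since $|r|<1$ forces $1-r\neq 0$, I may divide to get $S_N=\frac{1-r^{N+1}}{1-r}$.

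Next I would let $N\to\infty$. Because $|r|<1$, we have $|r^{N+1}|=|r|^{N+1}\to 0$, hence $r^{N+1}\to 0$, and therefore $S_N\to\frac{1}{1-r}$. By the definition of the infinite series as the limit of its partial sums, $\sum_{i=0}^\infty r^i=\lim_{N\to\infty}S_N=\frac{1}{1-r}$, which is exactly the asserted equality.

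I do not expect any genuine obstacle here; the only point needing a one-line justification is that $|r|^{N+1}\to 0$ when $|r|<1$, which is standard. This lemma will then be invoked in the sequel to evaluate the geometric-type sums (such as $\sum_{i=0}^\infty d_{a,2k,i}$) that arise from Taylor-expanding $\frac{1}{1-\left((1-a)x+a\right)^2}$ as a sum of a geometric sequence.
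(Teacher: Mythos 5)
Your proof is correct and matches the paper's intent: the paper simply cites the standard geometric-series fact, noting $\lim_{i\to\infty} r^i = 0$ for $|r|<1$, while you spell out the same standard argument in full via the telescoping partial-sum identity $S_N = \frac{1-r^{N+1}}{1-r}$ and then pass to the limit.
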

\begin{proof}
The result of this lemma directly follows the sum of a geometric series (noticing that $\lim_{i\to\infty}r^i = 0$ when $|r|<1$).
\end{proof}

\begin{proof}[Proof of Lemma~\ref{le.d_inf_sum}]
The proof idea is to express the coefficients of the Taylor expansion of $\frac{1}{1-\left((1-a)x+a\right)^2}$ (where $|x|<1$) in two different ways.
On the one hand, we have
\begin{align}
    &\frac{1}{1-\left((1-a)x+a\right)^2}\nonumber\\
    =&\sum_{m=0}^\infty \left((1-a)x+a\right)^{2m}\text{ (by letting $r=(1-a)x+a$ in Lemma~\ref{le.geo_sum})}\nonumber\\
    =&\sum_{m=0}^\infty \sum_{j=0}^{2m} \binom{2m}{j}(1-a)^j a^{2m-j} x^j\nonumber\\
    =& \sum_{k=0}^\infty\left(\left(\sum_{i=0}^\infty \binom{2k+2i}{2k}(1-a)^{2k} a^{2i} \right) x^{2k}+\left(\sum_{i=0}^\infty\binom{2k+2i+2}{2k+1}(1-a)^{2k+1}a^{2i+1}\right)x^{2k+1}\right)\nonumber\\
    &\text{ (by letting $j=2k$, $2m=2k+2i$ for $x^{2k}$ and letting $j=2k+1$, $2m=2k+2i+2$ for $x^{2k+1}$)}\nonumber\\
    =&\sum_{k=0}^\infty\left(\left(\sum_{i=0}^\infty d_{a,2k,i}\right)x^{2k}+\left(\sum_{i=0}^\infty d_{a,2k+1,i}\right) x^{2k+1}\right)\text{ (by Eq.~\eqref{eq.def_d_a_2k_i} and Eq.~\eqref{eq.def_d_a_2k1_i})}.\label{eq.temp_061801}
\end{align}
One the other hand, we have
\begin{align}
    &\frac{1}{1-\left((1-a)x+a\right)^2}\nonumber\\
    =&\frac{1}{1-\left((1-a)x+a\right)}\cdot\frac{1}{1+\left((1-a)x+a\right)}\nonumber\\
    =&\frac{1}{1-a}\cdot\frac{1}{1-x}\cdot \frac{1}{1+a}\cdot \frac{1}{1+\frac{1-a}{1+a}x}\nonumber\\
    =&\frac{1}{1-a^2}\left(\sum_{i=0}^\infty x^i\right)\left(\sum_{j=0}^{\infty}\left(-\frac{1-a}{1+a}x\right)^j\right)\text{ (by Lemma~\ref{le.geo_sum})}\nonumber\\
    =&\frac{1}{1-a^2}\sum_{i=0}^\infty\sum_{j=0}^\infty \left(-\frac{1-a}{1+a}\right)^j x^{i+j}\nonumber\\
    =&\frac{1}{1-a^2}\sum_{m=0}^{\infty}\left(\sum_{j=0}^m \left(-\frac{1-a}{1+a}\right)^j\right)x^m\text{ (combine terms of $x^{i+j}$ with $i+j=m$)}\nonumber\\
    =&\frac{1}{1-a^2}\sum_{m=0}^\infty\frac{1-\left(-\frac{1-a}{1+a}\right)^{m+1}}{1+\frac{1-a}{1+a}}x^m.\label{eq.temp_061802}
\end{align}
By comparing the coefficients in Eq.~\eqref{eq.temp_061801} and Eq.~\eqref{eq.temp_061802}, the result of this lemma thus follows.
\end{proof}

\subsubsection{Proof of Lemma~\ref{le.d_l_d_inf}}\label{proof.d_l_d_inf}
We first prove a useful lemma.
\begin{lemma}\label{le.temp_061401}
For any $a>b>c>0$, we have
\begin{align*}
    \frac{a}{b}< \frac{a-c}{b-c}.
\end{align*}
\end{lemma}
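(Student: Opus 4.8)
The plan is to reduce the claimed inequality to a manifestly true one by clearing denominators. First I would note that the hypothesis $a>b>c>0$ guarantees $b>0$ and $b-c>0$, so the product $b(b-c)$ is strictly positive; multiplying both sides of $\frac{a}{b}<\frac{a-c}{b-c}$ by this positive quantity therefore preserves the direction of the inequality. Hence the lemma is equivalent to $a(b-c)<b(a-c)$.

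Next I would expand both sides: $a(b-c)=ab-ac$ and $b(a-c)=ab-bc$, so the inequality to be shown becomes $ab-ac<ab-bc$, which simplifies to $bc<ac$, i.e.\ $c(a-b)>0$. Since $c>0$ by hypothesis and $a-b>0$ because $a>b$, the product $c(a-b)$ is strictly positive, so this last inequality holds. Unwinding the chain of equivalences then yields $\frac{a}{b}<\frac{a-c}{b-c}$, as desired.

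There is no genuine obstacle here; the argument is elementary. The only point deserving a moment of care is verifying that the factor by which we cross-multiply is truly positive — this is exactly where we use $b>c$ rather than merely $b>0$, and forgetting it would risk silently flipping the inequality. After this lemma is in place, the intended use (in the proof of Lemma~\ref{le.d_l_d_inf}) will be to bound a ratio of partial sums $\frac{\sum_{i=0}^{\infty} d_{a,2k,i}}{\sum_{i=0}^{l} d_{a,2k,i}}$ from above by replacing the full tail sums with suitable truncations, so the monotonicity statement $\frac{a}{b}<\frac{a-c}{b-c}$ is exactly the shape needed there.
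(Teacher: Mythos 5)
Your proposal is correct and follows essentially the same chain of algebraic equivalences as the paper's proof; the only difference is direction (you reduce the target inequality to $c(a-b)>0$, while the paper starts from $b<a$ and derives the target), which is immaterial.
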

\begin{proof}
Because $a>b>c>0$, we have
\begin{align*}
    b<a\implies bc<ac \implies ab-ac<ab-bc\implies a(b-c)<b(a-c)\implies \frac{a}{b}<\frac{a-c}{b-c}.
\end{align*}
\end{proof}

Now we are ready to prove Lemma~\ref{le.d_l_d_inf}.
\begin{proof}[Proof of Lemma~\ref{le.d_l_d_inf}]
Recall that $l$ is defined in Eq.~\eqref{eq.def_l}.
For any $i\geq l$, we have
\begin{align}\label{eq.temp_061803}
    \frac{k+i}{i}=\frac{k}{i}+1\leq \frac{k}{l}+1\leq \frac{1+a}{2a}\text{ (because $l\geq \frac{2a}{1-a}k$ by Eq.~\eqref{eq.def_l})}.
\end{align}
Thus, by Eq.~\eqref{eq.def_d_a_2k_i}, we have
\begin{align*}
    \frac{d_{a,2k,i+1}}{d_{a,2k,i}}=\frac{\binom{2k+2i+2}{2k}}{\binom{2k+2i}{2k}}a^2=&\frac{(2k+2i+1)(2k+2i+2)}{(2i+1)(2i+2)}a^2\\
    \leq& \frac{(2k+2i)(2k+2i)}{(2i)\cdot (2i)}a^2\text{ (by Lemma~\ref{le.temp_061401})}\\
    \leq& \left(\frac{1+a}{2}\right)^2\text{ (by Eq.~\eqref{eq.temp_061803})}.
\end{align*}
Similarly, by Eq.~\eqref{eq.def_d_a_2k1_i}, we have
\begin{align*}
    \frac{d_{a,2k-1,i+1}}{d_{a,2k-1,i}}=\frac{\binom{2k+2i+2}{2k-1}}{\binom{2k+2i}{2k-1}}a^2=&\frac{(2k+2i+2)(2k+2i+1)}{(2i+2)(2i+3)}a^2\\
    \leq & \frac{(2k+2i+2)(2k+2i+3)}{(2i+2)(2i+3)}a^2\\
    \leq & \frac{(2k+2i)(2k+2i)}{(2i)\cdot (2i)}a^2\text{ (by Lemma~\ref{le.temp_061401})}\\
    \leq & \left(\frac{1+a}{2}\right)^2\text{ (by Eq.~\eqref{eq.temp_061803})}.
\end{align*}
Iterating the above inequalities, we have
\begin{align*}
    \frac{d_{a,2k, l+j}}{d_{a, 2k, l}}\leq \left(\frac{1+a}{2}\right)^{2j},\quad \text{and }\frac{d_{a,2k-1, l+j}}{d_{a, 2k-1, l}}\leq \left(\frac{1+a}{2}\right)^{2j}.
\end{align*}
Thus, we have
\begin{align*}
    \frac{\sum_{i=l+1}^\infty d_{a,2k,i}}{\sum_{i=0}^l  d_{a,2k,i}}\leq\frac{\sum_{i=l+1}^\infty d_{a,2k,i}}{d_{a,2k,l}}=\sum_{j=1}^\infty \frac{d_{a,2k,l+j}}{d_{a,2k,l}}\leq \sum_{j=1}^\infty \left(\frac{1+a}{2}\right)^{2j}=\frac{\left(\frac{1+a}{2}\right)^2}{1-\left(\frac{1+a}{2}\right)^2}.
\end{align*}
We then have
\begin{align*}
    \frac{\sum_{i=0}^\infty d_{a,2k,i}}{\sum_{i=0}^l d_{a,2k,i}}= \frac{\sum_{i=0}^l d_{a,2k,i}+\sum_{i=l+1}^\infty d_{a,2k,i}}{\sum_{i=0}^l d_{a,2k,i}}\leq 1 + \frac{\left(\frac{1+a}{2}\right)^2}{1-\left(\frac{1+a}{2}\right)^2}.
\end{align*}
Therefore, we conclude that
\begin{align*}
    \frac{\sum_{i=0}^l d_{a,2k,i}}{\sum_{i=0}^\infty d_{a,2k,i}}\in\left[\frac{1}{1 + \frac{\left(\frac{1+a}{2}\right)^2}{1-\left(\frac{1+a}{2}\right)^2}},\ 1\right].
\end{align*}
Similarly, we have
\begin{align*}
    \frac{\sum_{i=0}^l d_{a,2k-1,i}}{\sum_{i=0}^\infty d_{a,2k-1,i}}\in\left[\frac{1}{1 + \frac{\left(\frac{1+a}{2}\right)^2}{1-\left(\frac{1+a}{2}\right)^2}},\ 1\right].
\end{align*}
\end{proof}

\subsection{Expansion for a finite-degree polynomial}\label{app.finite_Taylor_expansion}

We plan to show the third statement of Lemma~\ref{le.coefficient_compare}, which was presented in Appendix~\ref{app.proof_compare_set}. This is more difficult because $\KRF(x)$ is an infinite-degree polynomial (in contrast, Appendix~\ref{app.bias_Taylor_expansion} deals with $K(x)=(1-a)x+a$, which is much simpler). To make progress, we first consider a finite-degree polynomial, and study the expansion. Then, in Appendix~\ref{app.inf_Taylor_expansion}, we will extend to $\KRF(x)$ which has infinite degree. Note that Appendix~\ref{app.bias_Taylor_expansion} is a special case of Appendix~\ref{app.finite_Taylor_expansion}. However, since Appendix~\ref{app.bias_Taylor_expansion} is much simpler and easy to understand, we retain the proof there, and use the result in Appendix~\ref{app.finite_Taylor_expansion} only as a preparation for Appendix~\ref{app.inf_Taylor_expansion}.

Recall the definition of $T(x)$ in Eq.~\eqref{eq.temp_012403}. We denote $K(x)$ as a polynomial, i.e.,
\begin{align}\label{def.coeff_a}
    K(x)= \sum_{i=0}^\infty a_i x^i,
\end{align}
where $a_i$ denote the coefficient of $x^i$ in $K(x)$.

Define $\taylorCoeff_m(\cdot)$ as a function that projects a polynomial in $x$ to a real value such that
\begin{align}\label{def.coeff_c}
    \tilde{K}(x)\frac{\pi- \arccos\left(\tilde{K}(x)\right)}{2\pi}=\sum_{m=0}^\infty \taylorCoeff_m\left(\tilde{K}(x)\right)\cdot  x^m,
\end{align}
where $\tilde{K}(x)$ is any polynomial of $x$. In other words, $\taylorCoeff_m(\tilde{K}(x))$ is the Taylor coefficient of $x^m$ in $T(x)$ when $K(x)=\tilde{K}(x)$.

In this subsection, we let the number of terms of $K(x)$ be finite, i.e., there exists $s$ such that $a_i=0$ for all $i>s$. Further, we impose the following conditions.

\begin{condition}\label{condition.finite_poly}
(i) All coefficients of $K(x)$ are non-negative, i.e., $a_i\geq 0$ for all $i\in\mathds{Z}_{\geq 0}$. (ii) The sum of all coefficients equals to $1$, i.e., $\sum_{i=0}^\infty a_i= K(1)=1$. (iii) $a_0>0$ and $a_1>0$.
\end{condition}

The following lemma shows that when $K(x)$ is a polynomial with finite terms, the Taylor coefficients are on the same order as that of the even-power Taylor coefficients when $K(x)=x$.
Note that, according to Eq.~\eqref{def.coeff_c}, when $K(x)=x$, $\taylorCoeff_m(x)$ recovers the Taylor coefficients of the polynomial expansion of the function $x\frac{\pi-\arccos(x)}{2\pi}$.
\begin{lemma}\label{le.finite_poly_conclusion}
Under Condition~\ref{condition.finite_poly} and when $a_i=0$ for all $i>s$, we must have
\begin{align*}
    \frac{\taylorCoeff_j\left(K(x)\right)}{\taylorCoeff_{2\left\lceil j/2\right\rceil}(x)}\in \left[\underline{C},\ \overline{C}\right],\text{ for all }j=1,2,\cdots,
\end{align*}
where $\overline{C}>\underline{C}>0$ are constants that only depends on $K(x)$ and are independent of $j$.
\end{lemma}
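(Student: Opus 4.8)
\textbf{Proof proposal for Lemma~\ref{le.finite_poly_conclusion}.} The plan is to reduce the claim to the coefficient asymptotics of one generating function and then read those asymptotics off a square-root singularity at $x=1$. First I would substitute $a=K(x)$ into the power series for $\KTwo$ given by Lemma~\ref{le.taylor_of_kernel}. Condition~\ref{condition.finite_poly}(iii) together with $a_1>0$ forces $a_0=1-\sum_{i\ge1}a_i<1$, so $K(0)\in(0,1)$ and the function $T(x)=K(x)\frac{\pi-\arccos(K(x))}{2\pi}$ of Eq.~\eqref{eq.temp_012403} is analytic near $x=0$ with
\begin{align*}
 T(x)=\KTwo\bigl(K(x)\bigr)=\frac{K(x)}{4}+\sum_{k=0}^{\infty}\taylorCoeff_{0,2k+2}\,K(x)^{2k+2}.
\end{align*}
Extracting the coefficient of $x^j$ and comparing with Eq.~\eqref{def.coeff_c} gives
\begin{align*}
 \taylorCoeff_j\bigl(K(x)\bigr)=\frac{a_j}{4}\,\indicator{j\le s}+\sum_{k=0}^{\infty}\taylorCoeff_{0,2k+2}\,\bigl[x^j\bigr]K(x)^{2k+2}.
\end{align*}
By Condition~\ref{condition.finite_poly}(i) every $\bigl[x^j\bigr]K(x)^m\ge0$, and by Lemma~\ref{le.taylor_of_kernel} (cf.\ Eq.~\eqref{eq.temp_061806}) every $\taylorCoeff_{0,2k+2}>0$; since $a_0,a_1>0$ one also has $\bigl[x^j\bigr]K(x)^m>0$ whenever $m\ge j$, so $\taylorCoeff_j(K(x))>0$ for every $j\ge1$. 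The series converges because $\taylorCoeff_{0,2k+2}=\Theta(k^{-3/2})$ — this follows from Lemma~\ref{le.taylor_of_kernel} with the Wallis bounds of Lemma~\ref{le.wallis_inequality} — while $\bigl[x^j\bigr]K(x)^m\le K(1)^m=1$ by Condition~\ref{condition.finite_poly}(ii). Finally, Eq.~\eqref{def.coeff_c} with $\tilde K(x)=x$ gives $\taylorCoeff_{2\lceil j/2\rceil}(x)=\taylorCoeff_{0,2\lceil j/2\rceil}=\Theta(j^{-3/2})$, so it suffices to prove $\taylorCoeff_j(K(x))=\Theta(j^{-3/2})$ with constants depending only on $K(x)$, and then to dispose of the finitely many indices $j\le s$ using the strict positivity just noted.

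The cleanest route to the order is singularity analysis of $T(x)=\KTwo\bigl(K(x)\bigr)$. Near $y=1$ one has $\arccos y=\sqrt{2(1-y)}\bigl(1+O(1-y)\bigr)$, hence $\KTwo(y)=\frac12-\frac{1}{\sqrt2\,\pi}(1-y)^{1/2}+(\text{analytic at }y=1)$. Write $\mu\defeq K'(1)=\sum_i i\,a_i>0$ (positive since $a_1>0$); then $1-K(x)=\mu(1-x)\bigl(1+O(1-x)\bigr)$ near $x=1$, so
\begin{align*}
 T(x)=\frac12-\frac{\sqrt{\mu}}{\sqrt2\,\pi}\,(1-x)^{1/2}+(\text{analytic at }x=1)+O\bigl((1-x)^{3/2}\bigr).
\end{align*}
The point I would treat with care is that $x=1$ is the \emph{only} singularity of $T$ on the unit circle: for $|x|=1$ with $x\ne1$ one has $|K(x)|\le\sum_i a_i|x|^i=1$ with equality only if every $x^i$ with $a_i>0$ shares the phase of $a_0>0$, i.e.\ only if $x^i=1$ for all such $i$, which is impossible because $a_1>0$; thus $K$ maps the closed unit disk minus $\{1\}$ into the domain of analyticity of $\KTwo$, and $T$ is analytic in a suitable $\Delta$-domain. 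The transfer theorem then yields $\bigl[x^j\bigr]T(x)\sim c_K\,j^{-3/2}$ with $c_K=\frac{\sqrt{\mu}}{2\sqrt2\,\pi^{3/2}}>0$, while $\taylorCoeff_{2\lceil j/2\rceil}(x)=\taylorCoeff_{0,2\lceil j/2\rceil}\sim\frac{1}{2\sqrt2\,\pi^{3/2}}\,j^{-3/2}$; hence $\taylorCoeff_j(K(x))/\taylorCoeff_{2\lceil j/2\rceil}(x)\to\sqrt{\mu}$ as $j\to\infty$. Combined with the strict positivity of numerator and denominator for every $j\ge1$, the ratio is bounded between two positive constants $\underline C<\overline C$ depending only on $K(x)$, which is exactly Lemma~\ref{le.finite_poly_conclusion}.

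\textbf{Main obstacle, and an elementary fallback.} The delicate part is the complex-analytic bookkeeping above: confirming that $\KTwo\circ K$ is $\Delta$-analytic, that the $(1-x)^{1/2}$ term survives with the nonzero coefficient $-\frac{\sqrt{\mu}}{\sqrt2\,\pi}$, and that the remainder contributes $o(j^{-3/2})$ uniformly — routine with the Flajolet--Odlyzko transfer machinery but requiring a careful statement. If a self-contained argument is preferred (closer in spirit to the biased $2$-layer case in Appendix~\ref{app.bias_Taylor_expansion}), the same bounds follow probabilistically: by Condition~\ref{condition.finite_poly}(i)--(ii), $K$ is the probability generating function of a $\{0,\dots,s\}$-valued random variable $X$ with $\Pr(X{=}0)=a_0>0$, $\Pr(X{=}1)=a_1>0$, mean $\mu$ and variance $\sigma^2=K''(1)+\mu-\mu^2$, so $\bigl[x^j\bigr]K(x)^m=\Pr(S_m{=}j)$ for $S_m$ a sum of $m$ i.i.d.\ copies of $X$. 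A Chernoff bound discards all $m$ with $|m\mu-j|\gg\sqrt{j}$ (their total contribution is exponentially small), while a local central limit theorem gives $\Pr(S_m{=}j)=\Theta(m^{-1/2})$ for the $\Theta(\sqrt{j})$ integers $m$ with $|m\mu-j|=O(\sqrt{j})$; summing $\taylorCoeff_{0,2k+2}\,\Pr(S_{2k+2}{=}j)=\Theta(j^{-3/2})\cdot\Theta(j^{-1/2})$ over those $\Theta(\sqrt{j})$ terms again returns $\Theta(j^{-3/2})$. Either route produces constants depending only on $\mu$, $\sigma^2$ and $a_0$, hence only on $K(x)$, as required; this is also the estimate that Appendix~\ref{app.inf_Taylor_expansion} will extend to the infinite-degree case $K(x)=2d\cdot\KRF(x)$.
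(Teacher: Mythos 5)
Your proposal is correct in outline but takes a genuinely different route from the paper's. The paper's proof is elementary and combinatorial: it sets $e_j=[x^j]\frac{1}{1-K(x)^2}$, shows via the convex-combination recurrence $e_{j+2s}=\sum_{i=0}^{2s-1}\tilde b_{2s-i}e_{j+i}$ that $e_j$ is trapped between two positive constants depending only on $K$ (Lemma~\ref{le.bounded_e}), and then sandwiches $\taylorCoeff_j(K(x))/\taylorCoeff_{2\lceil j/2\rceil}(x)$ between constant multiples of $e_j$ by exploiting the slow, monotone decay of $\taylorCoeff_{0,2k}$ (Lemma~\ref{le.\taylorCoeff_k_estimate}) together with the geometric tail of the $d_{2i,j}$ partial sums (Lemmas~\ref{le.part_sum_d_not_too_small}, \ref{le.not_very_small}, \ref{le.not_very_large}). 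You instead derive the precise $j^{-3/2}$ asymptotics of both sequences — via a Flajolet--Odlyzko square-root singularity for $T=\KTwo\circ K$ at $x=1$ (made unique on $|x|=1$ by $a_0,a_1>0$, which is exactly what kills the $x=-1$ singularity that forces the paper to normalize by $\taylorCoeff_{2\lceil j/2\rceil}$ rather than $\taylorCoeff_j$), or alternatively via Chernoff plus a uniform local CLT by reading $K$ as a PGF. Your approach buys a limiting ratio (not just $\Theta$-bounds) and a transparent interpretation of the constants in terms of $\mu=K'(1)$, at the cost of importing the transfer-theorem or local-CLT machinery and the $\Delta$-analyticity/lattice checks you rightly flag. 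The paper's approach buys self-containedness and, crucially, a recurrence-based framework (the $e_j$ and $L_k$ bookkeeping) that carries over verbatim to Appendix~\ref{app.inf_Taylor_expansion} for the infinite-degree case $K=2d\cdot\KRF$, where $K$ itself has a square-root singularity at $x=1$ and a naive composition argument would change the exponent. Two small corrections to your write-up: from Eq.~\eqref{eq.temp_061806} and Wallis one gets $\taylorCoeff_{0,2k}\sim\frac{1}{4\pi^{3/2}}k^{-3/2}$, hence $\taylorCoeff_{2\lceil j/2\rceil}(x)\sim\frac{1}{\sqrt{2}\pi^{3/2}}j^{-3/2}$ (not $\frac{1}{2\sqrt{2}\pi^{3/2}}j^{-3/2}$), so your limiting ratio is $\sqrt{\mu}/2$ rather than $\sqrt{\mu}$; and in the probabilistic route the sum only runs over even $m=2k+2$, which halves the count of $m$'s in the central window but of course does not change the $\Theta(\sqrt{j})$ count nor the conclusion.
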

We prove Lemma~\ref{le.finite_poly_conclusion} in Appendix~\ref{app.finite_poly_conclusion}.
Note that Lemma~\ref{le.finite_poly_conclusion} can be seen as a generalization of Lemma~\ref{le.\taylorCoeff_a_over_\taylorCoeff_0}, since $K(x)=(1-a)+a$ satisfies Condition~\ref{condition.finite_poly} when $a\in(0, 1)$.

\subsubsection{Proof of Lemma~\ref{le.finite_poly_conclusion}}\label{app.finite_poly_conclusion}

We introduce some extra notations.
Let $b_i$ be the coefficients of $x^i$ of $\left(K(x)\right)^2$, i.e.,
\begin{align}\label{def.coeff_b}
    \left(K(x)\right)^2=\sum_{i=0}^{2s} b_i x^i,\text{ which implies that }b_i=\sum_{j+k=i}a_ja_k\text{ for all }i\in \mathds{Z}_{\geq 0}.
\end{align}
As in Lemma~\ref{le.expansion}, for all $j\in\mathds{Z}_{\geq 0}$, we define
\begin{align}
    &t(m_0,m_1,\cdots,m_j)\defeq (m_0,m_1,\cdots,m_j)!\cdot a_0^{m_0}a_1^{m_1}\cdots a_j^{m_j}\text{ (we let $a_i=0$ if $i>s$)},\label{eq.def_tm}\\
    &\mathcal{T}_{i,j}\defeq \left\{(m_0,m_1,\cdots,m_j)\ \bigg|\ \substack{m_0+m_1+\cdots+m_j=i\nonumber\\
    m_1+2m_2+\cdots+j\cdot m_j=j\\m_0,m_1,\cdots,m_j\in\mathds{Z}_{\geq 0}}\right\},\nonumber\\
    &d_{i,j}\defeq \sum_{(m_0,m_1,\cdots,m_j)\in \mathcal{T}_{i,j}}t(m_0,m_1,\cdots,m_j).\label{def.coeff_d}
\end{align}

By Lemma~\ref{le.expansion}, we have
\begin{align*}
    \frac{1}{1-\left(K(x)\right)^2}=\sum_{i=0}^\infty\left(K(x)\right)^{2i}=&\sum_{i=0}^\infty\left( \sum_{j=0}^\infty d_{2i,j} x^j\right)\\
    =&\sum_{j=0}^\infty\left( \sum_{i=\left\lceil\frac{j}{2s}\right\rceil}^\infty d_{2i,j}\right) x^j \text{ (since $d_{2i,j}=0$ when $j>2i\cdot s$)}.
\end{align*}

Define 
\begin{align}\label{def.coeff_e}
    e_j \text{ for all }j\in \mathds{Z}_{\geq 0}\text{ such that }\frac{1}{1-\left(K(x)\right)^2}=\sum_{j=0}^\infty e_j x^j,
\end{align}
i.e.,
\begin{align}\label{def.coeff_e_shown_by_d}
    e_j=\sum_{i=\left\lceil\frac{j}{2s}\right\rceil}^\infty d_{2i,j}.
\end{align}

We summarize those definitions in Table~\ref{table.def_coeff}.

\begin{table}[t!]
\centering
\renewcommand{\arraystretch}{2}
\begin{tabular}{ |c|c|c| }
\hline
\textbf{Notation} & \textbf{Description} & \textbf{Definition/Expression} \\
\hline
$a_i$ & coefficients of $x^i$ in $K(x)$ & Eq.~\eqref{def.coeff_a} \\
\hline
$b_i$ & coefficients of $x^i$ in $\left(K(x)\right)^2$ & Eq.~\eqref{def.coeff_b} \\ 
\hline
$\taylorCoeff_i(\tilde{K}(x))$ & coefficients of $x^i$ in $\tilde{K}(x)(\pi- \arccos(\tilde{K}(x)))/(2\pi)$ & Eq.~\eqref{def.coeff_c} and Eq.~\eqref{eq.represent_\taylorCoeff_by_cd} \\
\hline
$d_{i,j}$ & the coefficient of $x^j$ in $\left(K(x)\right)^i$ & Eq.~\eqref{def.coeff_d}\\
\hline
$e_i$ & the coefficient of $x^i$ in $1/(1-(K(x))^2)$ & Eq.~\eqref{def.coeff_e} and Eq.~\eqref{def.coeff_e_shown_by_d}\\
\hline
\end{tabular}
\caption{Summary of the notations of various coefficients.}\label{table.def_coeff}
\renewcommand{\arraystretch}{1}
\end{table}

\begin{lemma}\label{le.requirement_b}
Under Condition~\ref{condition.finite_poly}, we must have 
\begin{align}
    &\sum_{i=0}^{2s}b_i = 1,\label{eq.temp_062502}\\
    &b_i\in [0, 1]\text{ for all }i\in\{0,1,\cdots,2s\},\ b_0<1,\text{ and } b_1>0.\nonumber
\end{align}
\end{lemma}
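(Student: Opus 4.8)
The plan is to read off all four claims directly from the Cauchy-product formula $b_i=\sum_{j+k=i}a_ja_k$ recorded in Eq.~\eqref{def.coeff_b}, combined with the three parts of Condition~\ref{condition.finite_poly}; no real machinery is needed, so this is essentially a bookkeeping step feeding into Lemma~\ref{le.finite_poly_conclusion}.

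First I would establish $\sum_{i=0}^{2s}b_i=1$ by evaluating $\left(K(x)\right)^2$ at $x=1$: since $\left(K(x)\right)^2=\sum_{i=0}^{2s}b_ix^i$ by Eq.~\eqref{def.coeff_b}, setting $x=1$ yields $\sum_{i=0}^{2s}b_i=\left(K(1)\right)^2$, and part~(ii) of Condition~\ref{condition.finite_poly} gives $K(1)=\sum_i a_i=1$, so the square equals $1$. Next, for $b_i\in[0,1]$: nonnegativity $b_i\ge 0$ is immediate because $b_i=\sum_{j+k=i}a_ja_k$ is a sum of products of the $a_j$, each nonnegative by part~(i); the upper bound $b_i\le 1$ then follows since $b_i$ is a single nonnegative summand of $\sum_{i'=0}^{2s}b_{i'}=1$.

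For $b_0<1$ I would note $b_0=a_0^2$ and show $0<a_0<1$. The lower bound $a_0>0$ is part~(iii). For the strict upper bound, write $a_0=1-\sum_{i\ge 1}a_i\le 1-a_1<1$, using $a_i\ge 0$ for all $i$ and $a_1>0$; squaring gives $b_0=a_0^2<1$. Finally $b_1=2a_0a_1>0$ follows from $a_0>0$ and $a_1>0$ in part~(iii).

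There is essentially no obstacle. The one point worth flagging is the \emph{strict} inequality $a_0<1$ (equivalently $b_0<1$): this is precisely where the hypothesis $a_1>0$ in part~(iii) is used, since a constant polynomial $K\equiv 1$ would otherwise satisfy parts~(i)--(ii) while having $b_0=1$, breaking the claim. All other inequalities are soft and use only nonnegativity together with the normalization $K(1)=1$.
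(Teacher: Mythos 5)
Your proof is correct and follows essentially the same route as the paper: evaluate $\left(K(x)\right)^2$ at $x=1$ to get $\sum_i b_i=1$, then use $a_0>0$, $a_1>0$, and $\sum_i a_i=1$ to deduce $0<a_0<1$, hence $b_0=a_0^2<1$ and $b_1=2a_0a_1>0$. In fact you are slightly more thorough than the paper's own proof, which silently omits the verification that each $b_i\in[0,1]$; your argument (nonnegativity from the Cauchy product of nonnegative $a_j$'s, upper bound because each $b_i$ is a single term of a nonnegative sum totaling $1$) closes that small gap cleanly.
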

\begin{proof}
By Eq.~\eqref{def.coeff_b} and Condition~\ref{condition.finite_poly}, we have $\sum_{i=0}^{2s}b_i=\left(K(1)\right)^2=1$. Because $a_1>0$ and $\sum_{i=0}^s a_i=1$, we have $a_0 < 1$. Thus, we have $b_0=a_0^2<1$ and $b_1=2 a_0 a_1>0$ (as $a_0$ is also positive).
\end{proof}

\begin{lemma}\label{le.bounded_e}
There exist $\overline{c}\geq \underline{c}>0$ such that for all $k\in\mathds{Z}_{\geq 0}$, we must have $e_k\in[\underline{c},\ \overline{c}]$.
\end{lemma}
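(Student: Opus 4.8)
\textbf{Proof proposal for Lemma~\ref{le.bounded_e}.}

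The plan is to treat $\frac{1}{1-(K(x))^2}$ as the generating function of a renewal-type sequence and to sandwich its coefficients between two positive constants. Recall from Lemma~\ref{le.requirement_b} that the coefficients $b_i$ of $(K(x))^2=\sum_{i=0}^{2s}b_i x^i$ (Eq.~\eqref{def.coeff_b}) satisfy $b_i\ge 0$, $\sum_{i=0}^{2s}b_i=1$, $b_0<1$, and $b_1>0$. \emph{Step 1 (reduction to $b_0=0$).} Put $P(x)\defeq (K(x))^2$ and $\tilde P(x)\defeq \frac{P(x)-b_0}{1-b_0}$. Then $\tilde P$ has non-negative coefficients, $\tilde P(0)=0$, $\tilde P(1)=1$, and the coefficient of $x$ in $\tilde P$ equals $\frac{b_1}{1-b_0}>0$. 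Since $\frac{1}{1-P(x)}=\frac{1}{1-b_0}\cdot\frac{1}{1-\tilde P(x)}$, if $\tilde e_k$ denotes the coefficient of $x^k$ in $\frac{1}{1-\tilde P(x)}$ then $e_k=\frac{1}{1-b_0}\tilde e_k$ (compare with Eq.~\eqref{def.coeff_e}), so it suffices to bound $\tilde e_k$ above and below by positive constants.

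\emph{Step 2 (renewal recursion and the upper bound).} Write $\tilde P(x)=\sum_{i\ge 1}\beta_i x^i$ with $\beta_i\ge 0$, $\sum_i\beta_i=1$, $\beta_1>0$. Comparing coefficients in $(1-\tilde P(x))\sum_k\tilde e_k x^k=1$ gives $\tilde e_0=1$ and $\tilde e_k=\sum_{i=1}^{k}\beta_i\tilde e_{k-i}$ for $k\ge 1$. A straightforward induction then yields $0<\tilde e_k\le 1$ for every $k$: positivity follows from $\tilde e_k\ge \beta_1\tilde e_{k-1}$ iterated from $\tilde e_0=1$, and $\tilde e_k\le\sum_{i\ge 1}\beta_i=1$ follows from the inductive hypothesis. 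Hence $\overline{c}\defeq\frac{1}{1-b_0}$ serves as the upper bound on $e_k$.

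\emph{Step 3 (the lower bound --- the crux).} It remains to show $\inf_k\tilde e_k>0$, i.e.\ that $\tilde e_k$ does not decay to $0$. Consider the rational function $\tilde E(x)=\frac{1}{1-\tilde P(x)}$. Because $\tilde P$ has non-negative coefficients and is non-constant (as $\beta_1>0$), it is strictly increasing on $[0,1]$, so $\tilde P(x)<\tilde P(1)=1$ for $x\in[0,1)$ and $x=1$ is a zero of $1-\tilde P$; moreover $(1-\tilde P)'(1)=-\sum_i i\beta_i=-\mu$ with $\mu\in(0,\infty)$ (finite since $\tilde P$ is a polynomial, positive since $\beta_1>0$), so this zero is simple. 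Next, $1-\tilde P$ has no other zero in the closed unit disk: for $0<|x|<1$ we have $|\tilde P(x)|\le\sum_i\beta_i|x|^i<\sum_i\beta_i=1$, while a zero $\rho$ with $|\rho|=1$, $\rho\neq 1$, would force equality in $\bigl|\sum_i\beta_i\rho^i\bigr|\le\sum_i\beta_i$, hence $\rho^i$ is independent of $i$ over the support of $(\beta_i)$; combined with $\sum_i\beta_i\rho^i=1$ and the fact that $1$ lies in that support (as $\beta_1>0$), this forces $\rho=1$, a contradiction. Therefore a partial-fraction decomposition gives $\tilde E(x)=\frac{1/\mu}{1-x}+R(x)$, where $R$ is a proper rational function all of whose poles lie strictly outside the unit disk, so its power-series coefficients $r_k$ satisfy $|r_k|\le Ck^m\theta^k$ for some $\theta\in(0,1)$, $m\ge 0$, $C>0$; thus $\tilde e_k=\frac{1}{\mu}+r_k$ with $r_k\to 0$. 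Consequently there is $N$ with $\tilde e_k\ge\frac{1}{2\mu}$ for all $k\ge N$, and $\tilde e_k\ge\beta_1^{N}>0$ for $k<N$ (iterating $\tilde e_k\ge\beta_1\tilde e_{k-1}$ from $\tilde e_0=1$), so $c_1\defeq\min\{\beta_1^{N},\tfrac{1}{2\mu}\}>0$ lower-bounds all $\tilde e_k$. (Equivalently, one may invoke the classical discrete renewal theorem: $(\beta_i)$ is an aperiodic probability distribution --- aperiodic because $\beta_1>0$ --- with finite mean $\mu$, whence $\tilde e_k\to 1/\mu>0$.) Setting $\underline{c}\defeq\frac{c_1}{1-b_0}$ and $\overline{c}\defeq\frac{1}{1-b_0}$ finishes the proof.

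The main obstacle is Step~3, and inside it the verification that $1-\tilde P$ has no zero on the unit circle other than the simple one at $x=1$: this is precisely where the hypothesis $\beta_1>0$ (equivalently, $a_0>0$ and $a_1>0$ for $K$, i.e.\ part (iii) of Condition~\ref{condition.finite_poly}) is indispensable, through the equality-in-the-triangle-inequality (aperiodicity) argument. Steps~1 and~2 are routine generating-function bookkeeping.
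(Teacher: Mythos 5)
Your proof is correct, but it takes a genuinely different route from the paper's. Your Step~3 invokes the (discrete) renewal theorem or equivalently a partial-fraction decomposition of the rational function $1/(1-\tilde P(x))$, requiring you to verify that $1-\tilde P$ has a simple zero at $1$ and no other zero in the closed unit disk (the ``aperiodicity'' argument via equality in the triangle inequality, which is where $\beta_1>0$, i.e.\ $a_0,a_1>0$ from Condition~\ref{condition.finite_poly}(iii), enters). The paper instead uses an elementary sandwiching argument that exploits the finite degree $2s$ of $(K(x))^2$ directly: after isolating the $b_0 e_{j+2s}$ term, the recursion
\begin{align*}
e_{j+2s}=\sum_{i=0}^{2s-1}\frac{b_{2s-i}}{1-b_0}e_{j+i}
\end{align*}
exhibits each $e_{j+2s}$ as a \emph{convex combination} of the preceding $2s$ terms (the weights sum to $1$ because $\sum_{k\ge 1}b_k=1-b_0$). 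Iterating, every $e_k$ for $k\ge 2s$ is sandwiched between $\min_{i<2s}e_i$ and $\max_{i<2s}e_i$, and it then suffices to check directly (again using $b_1>0$) that the first $2s$ terms are strictly positive. The paper's argument is purely arithmetic, with no appeal to complex zeros or asymptotics, and it gives an explicit finite set of candidates for $\underline c$ and $\overline c$; yours is heavier machinery but yields the sharper conclusion $\tilde e_k\to 1/\mu$ and would adapt to infinite-degree $K$ with finite mean (a case the paper treats separately in Appendix~\ref{app.inf_Taylor_expansion} via Lemma~\ref{le.lower_bound_of_e_k}). Both correctly pin down where Condition~\ref{condition.finite_poly}(iii) is used.
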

\begin{proof}
Because
\begin{align*}
    \frac{1}{1-\left(K(x)\right)^2}=1+\left(K(x)\right)^2\cdot \frac{1}{1-\left(K(x)\right)^2},
\end{align*}
we have
\begin{align}\label{eq.temp_062501}
    \sum_{j=0}^\infty e_j x^j=&1+\left(\sum_{i=0}^s a_i x^i\right)^2\cdot \left(\sum_{j=0}^\infty e_j x^j\right)=1+\left(\sum_{i=0}^{2s}b_i x^i\right)\cdot \left(\sum_{j=0}^\infty e_j x^j\right).
\end{align}
Comparing the coefficient of $x^{j+2s}$ on both sides, we have
\begin{align*}
    e_{j+2s} = \sum_{i=0}^{2s} e_{j+i}  b_{2s-i}\text{ for all }j=0,1,\cdots.
\end{align*}
This is equivalent to
\begin{align*}
    e_{j+2s} = b_0 e_{j+2s} + \sum_{i=0}^{2s-1} e_{j+i}  b_{2s-i}\text{ for all }j=0,1,\cdots.
\end{align*}
Thus, we have
\begin{align*}
    e_{j+2s}=\sum_{i=0}^{2s-1}\frac{b_{2s-i}}{1-b_0}e_{j+i}\text{ for all }j=0,1,\cdots.
\end{align*}
It implies that
\begin{align*}
    &e_{j+2s}\in \left[\left(\min_{i\in\{0,1,\cdots,2s-1\}}e_{j+i}\right)\cdot \sum_{k=1}^{2s}\frac{b_k}{1-b_0},\ \left(\max_{i\in\{0,1,\cdots,2s-1\}}e_{j+i}\right)\cdot\sum_{k=1}^{2s}\frac{b_k}{1-b_0}\right]\\
    &\text{ for all }j=0,1,\cdots.
\end{align*}
By Eq.~\eqref{eq.temp_062502}, we have $\sum_{k=1}^{2s}b_k=1-b_0$, which implies that $\sum_{k=1}^{2s}\frac{b_k}{1-b_0}=1$. Thus, we have
\begin{align*}
    e_{j+2s}\in \left[\min_{i\in\{0,1,\cdots,2s-1\}}e_{j+i},\ \max_{i\in\{0,1,\cdots,2s-1\}}e_{j+i}\right],\text{ for all }j=0,1,\cdots.
\end{align*}
Iteratively applying the above bounds, we then have
\begin{align*}
    &e_{2s}, e_{2s+1},\cdots, e_{4s-1}\in \left[\min_{i\in\{0,1,\cdots,2s-1\}}e_{i},\ \max_{i\in\{0,1,\cdots,2s-1\}}e_{i}\right],\\
    &e_{4s}, e_{4s+1},\cdots, e_{6s-1}\in \left[\min_{i\in\{2s,2s+1,\cdots,4s-1\}}e_{i},\ \max_{i\in\{2s,2s+1,\cdots,4s-1\}}e_{i}\right]\in \left[\min_{i\in\{0,1,\cdots,2s-1\}}e_{i},\right.\\
    &\hspace{4cm}\left. \max_{i\in\{0,1,\cdots,2s-1\}}e_{i}\right],\\
    &\vdots\\
    &e_{2ks}, e_{2ks+1},\cdots, e_{2ks+2s-1}\in\cdots\in \left[\min_{i\in\{0,1,\cdots,2s-1\}}e_{i},\ \max_{i\in\{0,1,\cdots,2s-1\}}e_{i}\right].
\end{align*}
In other words,
\begin{align}
    e_{k}\in\left[\min_{i\in\{0,1,\cdots,2s-1\}}e_{i},\ \max_{i\in\{0,1,\cdots,2s-1\}}e_{i}\right],\text{ for all }k=2s, 2s+1,\cdots.\label{eq.temp_062504}
\end{align}
By Eq.~\eqref{eq.temp_062501}, we have
\begin{align*}
    &e_0=1+b_0 e_0,\\
    &e_1=b_1 e_0+b_0 e_1,\\
    &e_2=b_2 e_0+b_1e_1+b_0e_2,\\
    &\quad \vdots\\
    &e_{2s}=b_{2s} e_0+b_{2s-1}e_1+\cdots+b_0 e_{2s}.
\end{align*}
Thus, we have
\begin{align*}
    &e_0=\frac{1}{1-b_0},\\
    &e_1=\frac{b_1 e_0}{1-b_0},\\
    &e_2=\frac{b_2 e_0+b_1 e_1}{1-b_0},\\
    &\quad \vdots\\
    &e_{2s}=\frac{b_{2s} e_0+b_{2s-1}e_1+\cdots + b_1 e_{2s-1}}{1-b_0}.
\end{align*}
By Lemma~\ref{le.requirement_b} and using induction, we thus have
\begin{align*}
    e_i>0,\text{ for all }i=0,1,\cdots, 2s-1.
\end{align*}
Thus, by Eq.~\eqref{eq.temp_062504}, we have
\begin{align*}
    e_{k}\in \left[\min_{i\in\{0,1,\cdots,2s-1\}}e_{i},\ \max_{i\in\{0,1,\cdots,2s-1\}}e_{i}\right],\text{ for all }k=0, 1,\cdots.
\end{align*}
\end{proof}

\begin{lemma}\label{le.bijection}
When $i\geq j$, there exists a bijection between $\mathcal{T}_{i,j}$ and $\mathcal{T}_{i+1,j}$. Specifically, this bijection is $\mathcal{T}_{i,j}\longleftrightarrow \mathcal{T}_{i+1,j}$: $(m_0,m_1,\cdots,m_j)\longleftrightarrow (m_0+1,m_1,\cdots,m_j)$.
\end{lemma}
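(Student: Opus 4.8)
The plan is to exhibit the claimed bijection explicitly and verify both that the proposed map is well defined (lands in $\mathcal{T}_{i+1,j}$), and that it is invertible. Recall the definition
\begin{align*}
    \mathcal{T}_{i,j} = \left\{(m_0,m_1,\cdots,m_j)\ \bigg|\ \substack{m_0+m_1+\cdots+m_j=i\\ m_1+2m_2+\cdots+j\cdot m_j=j\\ m_0,m_1,\cdots,m_j\in\mathds{Z}_{\geq 0}}\right\}.
\end{align*}
First I would take an arbitrary tuple $(m_0,m_1,\cdots,m_j)\in\mathcal{T}_{i,j}$ and map it to $(m_0+1,m_1,\cdots,m_j)$. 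Checking the two linear constraints is immediate: the ``weighted'' constraint $m_1+2m_2+\cdots+j m_j = j$ is untouched since $m_0$ has weight $0$, and the ``total'' constraint becomes $(m_0+1)+m_1+\cdots+m_j = i+1$, so the image lies in $\mathcal{T}_{i+1,j}$. Non-negativity is preserved trivially. Thus the map is well defined.

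Next I would show the map is a bijection by constructing its inverse on $\mathcal{T}_{i+1,j}$: given $(n_0,n_1,\cdots,n_j)\in\mathcal{T}_{i+1,j}$, send it to $(n_0-1,n_1,\cdots,n_j)$. The only thing that requires the hypothesis $i\geq j$ is the claim that $n_0\geq 1$ for every element of $\mathcal{T}_{i+1,j}$, so that $n_0-1\geq 0$ and the inverse actually produces a valid nonnegative tuple. This is the one genuine step: from $n_1+2n_2+\cdots+jn_j = j$ we get $n_1+n_2+\cdots+n_j \leq n_1+2n_2+\cdots+jn_j = j \leq i$ (here using $i\ge j$), so $n_0 = (i+1) - (n_1+\cdots+n_j) \geq (i+1) - i = 1$. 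Hence $n_0\geq 1$ as needed. Once the inverse is shown to land in $\mathcal{T}_{i,j}$ (again by the same two-constraint check, in reverse), it is clear that the two maps compose to the identity in both directions, so we have a bijection.

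The argument has essentially no obstacle; the only subtlety worth flagging is exactly the use of $i\ge j$ to guarantee $n_0\ge 1$ — without it the subtraction on the first coordinate could produce a negative entry, so the inverse would fail to be well defined. I would therefore write the proof in three short movements: (1) forward map is well defined, (2) $n_0\ge1$ on $\mathcal{T}_{i+1,j}$ under the hypothesis $i\ge j$, hence the backward map is well defined, (3) the two are mutually inverse, completing the proof. Since this lemma is presumably used to compare $d_{2i,j}$ with $d_{2i+2,j}$ (via $d_{i,j}=\sum_{(m_0,\ldots,m_j)\in\mathcal{T}_{i,j}} t(m_0,\ldots,m_j)$ and $t(m_0+1,m_1,\ldots,m_j)=a_0\cdot t(m_0,m_1,\ldots,m_j)$), I would also remark in passing that under the bijection the summand $t$ simply picks up a factor $a_0$, which is the form in which the lemma will actually be applied downstream.
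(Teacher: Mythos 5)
Your proof is correct and matches the paper's argument: both reduce the lemma to the one nontrivial fact that every tuple in $\mathcal{T}_{i+1,j}$ has first coordinate at least $1$ when $i\ge j$, so that subtracting $1$ from $m_0$ stays nonnegative. The paper derives $m_0 = (i+1-j)+\sum_{k=1}^j(k-1)m_k \ge 1$ while you bound $\sum_{k\ge1} m_k \le \sum_{k\ge1} k\,m_k = j \le i$; these are the same estimate written two ways, and the rest (forward map well defined, mutual inverse) is the same routine check.
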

\begin{proof}
It suffices to show that for any $(m_0,m_1,\cdots,m_j)\in \mathcal{T}_{i+1,j}$, we must have $m_0\geq 1$.
To that end, note that when $i\geq j$, for any $(m_0,m_1,\cdots,m_s)\in \mathcal{T}_{i+1,j}$, we have
\begin{align*}
    \sum_{k=0}^j m_k=i+1,\quad  \sum_{k=1}^j k m_k=j.
\end{align*}
Thus, we have
\begin{align*}
    m_0=\sum_{k=0}^j m_k-\sum_{k=1}^j k m_k+\sum_{k=1}^j (k-1) m_k=(i+1-j)+\sum_{k=1}^j (k-1) m_k\geq 1\text{ (because $i\geq j$)}.
\end{align*}
The result of this lemma thus follows.
\end{proof}

\begin{lemma}\label{le.d_to_d_ratio}
If $2i\geq j$, then
\begin{align*}
    \frac{d_{2i+2,j}}{d_{2i,j}}\leq \left(\frac{2i }{2i-j}a_0\right)^2.
\end{align*}
(Notice that when $2i=j$, the right hand side is infinite. Nonetheless, this lemma still holds.)
\end{lemma}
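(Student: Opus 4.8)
The plan is to turn the ratio $d_{2i+2,j}/d_{2i,j}$ into a uniform bound on a term-by-term comparison between the multinomial expansions of $(K(x))^{2i}$ and $(K(x))^{2i+2}$, using the index-set bijection of Lemma~\ref{le.bijection}. First I would apply Lemma~\ref{le.bijection} twice, once with leading index $2i$ and once with $2i+1$ (both are $\geq j$ because $2i\geq j$), to obtain a bijection $\mathcal{T}_{2i,j}\to\mathcal{T}_{2i+2,j}$ that sends $(m_0,m_1,\dots,m_j)$ to $(m_0+2,m_1,\dots,m_j)$. Since all $a_k\geq0$ by Condition~\ref{condition.finite_poly}(i), both $d_{2i,j}$ and $d_{2i+2,j}$ are sums of non-negative terms $t(\cdot)$ over the respective index sets (see Eq.~\eqref{eq.def_tm} and Eq.~\eqref{def.coeff_d}), so it suffices to bound the per-term ratio $t(m_0+2,m_1,\dots,m_j)/t(m_0,m_1,\dots,m_j)$ uniformly over $(m_0,\dots,m_j)\in\mathcal{T}_{2i,j}$ and then sum through the bijection.

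Next I would compute this per-term ratio. The factors $a_1^{m_1}\cdots a_j^{m_j}$ cancel, and only the constant-term power and the multinomial coefficient change, giving
\[
  \frac{t(m_0+2,m_1,\dots,m_j)}{t(m_0,m_1,\dots,m_j)}
  =\frac{(2i+2)!\,m_0!}{(2i)!\,(m_0+2)!}\,a_0^2
  =\frac{(2i+2)(2i+1)}{(m_0+2)(m_0+1)}\,a_0^2 .
\]
I would then use the lower bound $m_0\geq 2i-j$, which holds on $\mathcal{T}_{2i,j}$ because $m_0=(2i-j)+\sum_{k=1}^{j}(k-1)m_k$ (this identity is exactly the one used in the proof of Lemma~\ref{le.bijection}). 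This makes the ratio at most $\dfrac{(2i+2)(2i+1)}{(2i-j+2)(2i-j+1)}\,a_0^2$.

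Finally I would close the remaining gap with an elementary inequality: for integers $a\geq b\geq 1$ one has $\dfrac{(a+2)(a+1)}{(b+2)(b+1)}\leq \dfrac{a^2}{b^2}$, since cross-multiplying and simplifying reduces it to $-3ab(a-b)\leq 2(a-b)(a+b)$, which is immediate because the left side is $\leq 0$ and the right side $\geq 0$. Taking $a=2i$ and $b=2i-j$ (the case $2i=j$ being vacuous, since then the claimed bound is infinite, as the statement already notes) upgrades the per-term ratio bound to $\bigl(\tfrac{2i}{2i-j}a_0\bigr)^2$; summing over the bijection yields $d_{2i+2,j}\leq\bigl(\tfrac{2i}{2i-j}a_0\bigr)^2 d_{2i,j}$, and $d_{2i,j}>0$ by Condition~\ref{condition.finite_poly}(iii), so the stated ratio bound follows. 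I do not expect a genuine obstacle: the only care needed is in composing the two instances of Lemma~\ref{le.bijection} correctly and in extracting the $m_0\geq 2i-j$ bound, both of which are bookkeeping rather than substantive difficulties.
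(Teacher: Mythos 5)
Your proof is correct and follows essentially the same route as the paper's: both argue term-by-term through the bijection of Lemma~\ref{le.bijection} (applied twice to pass from $\mathcal{T}_{2i,j}$ to $\mathcal{T}_{2i+2,j}$), compute the per-term ratio $\frac{(2i+2)(2i+1)}{(m_0+2)(m_0+1)}a_0^2$, invoke $m_0\geq 2i-j$, and finish with elementary algebra. The only cosmetic difference is the final step --- the paper splits it into the two fraction inequalities $\frac{2i+1}{2i-j+1}\leq\frac{2i}{2i-j}$ and $\frac{2i+2}{2i-j+2}\leq\frac{2i}{2i-j}$ while you prove the combined bound $\frac{(a+2)(a+1)}{(b+2)(b+1)}\leq\frac{a^2}{b^2}$ directly; your explicit check that $d_{2i,j}>0$ is a small but welcome addition the paper leaves implicit.
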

\begin{proof}
We have
\begin{align*}
    \frac{d_{2i+2,j}}{d_{2i,j}}=\frac{\sum_{\bm{m}\in \mathcal{T}_{2i+2,j}}t(\bm{m})}{\sum_{\bm{m}\in \mathcal{T}_{2i,j}}t(\bm{m})}.
\end{align*}
Let $(m_0^{(k)},m_1^{(k)},\cdots,m_j^{(k)})$ denote the $k$-th element in $\mathcal{T}_{2i,j}$.
Because $m_0+m_1+\cdots+m_j=2i$ and $m_1+2m_2+\cdots+j\cdot m_j=j$, we have $m_0\geq 2i-j$.
Thus, using the definition of $t(\cdots)$ in Eq.~\eqref{eq.def_tm}, we have
\begin{align*}
    &\frac{t\left(m_0^{(k)}+2, m_1^{(k)},\cdots,m_j^{(k)}\right)}{t\left(m_0^{(k)}, m_1^{(k)},\cdots,m_j^{(k)}\right)}\\
    =&\frac{(2i+1)(2i+2)}{(m_0+1)(m_0+2)}a_0^2\\
    \leq &\frac{(2i+1)(2i+2)}{(2i-j+1)(2i-j+2)}a_0^2\text{ (because $m_0\geq 2i-j$)}\\
    \leq & \left(\frac{2i}{2i-j}a_0\right)^2\text{ (because $\frac{2i+1}{2i-j+1}\leq \frac{2i}{2i-j}$ and $\frac{2i+2}{2i-j+2}\leq \frac{2i}{2i-j}$)}.
\end{align*}
By Lemma~\ref{le.bijection} and $2i\geq j$, we thus have
\begin{align*}
    \frac{d_{2i+2,j}}{d_{2i,j}}=&\frac{t\left(m_0^{(1)}+2, m_1^{(1)},\cdots,m_j^{(1)}\right)+\cdots+t\left(m_0^{(|\mathcal{T}_{2i,j}|)}+2,m_1^{(|\mathcal{T}_{2i,j}|)},\cdots,m_j^{(|\mathcal{T}_{2i,j}|)}\right)}{t\left(m_0^{(1)}, m_1^{(1)},\cdots,m_j^{(1)}\right)+\cdots+t\left(m_0^{(|\mathcal{T}_{2i,j}|)},m_1^{(|\mathcal{T}_{2i,j}|)},\cdots,m_j^{(|\mathcal{T}_{2i,j}|)}\right)}\\
    \leq & \left(\frac{2i}{2i-j}a_0\right)^2.
\end{align*}
\end{proof}

\begin{lemma}\label{le.part_sum_d_not_too_small}
For any $j=1,2,\cdots$, we must have
\begin{align*}
    \sum_{i=\left\lceil\frac{j}{2s}\right\rceil}^{i^*}d_{2i,j}\geq \frac{1-\left(\frac{1+a_0}{2}\right)^2}{2-\left(\frac{1+a_0}{2}\right)^2}\sum_{i=\left\lceil\frac{j}{2s}\right\rceil}^\infty d_{2i,j},
\end{align*}
where
\begin{align*}
    i^*\defeq \left\lceil\frac{1+a_0}{2(1-a_0)}j\right\rceil.
\end{align*}
\end{lemma}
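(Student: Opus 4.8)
The plan is to choose the cutoff $i^*$ so that Lemma~\ref{le.d_to_d_ratio} forces the sequence $\{d_{2i,j}\}_{i\ge i^*}$ to decay at least geometrically with ratio $\rho\defeq\left(\frac{1+a_0}{2}\right)^2<1$, and then to bound the tail $\sum_{i> i^*}d_{2i,j}$ by this geometric series and compare it with the partial sum $\sum_{i=\left\lceil j/(2s)\right\rceil}^{i^*}d_{2i,j}$.

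First I would record two elementary facts. (a) For every integer $i\ge i^*=\left\lceil\frac{1+a_0}{2(1-a_0)}j\right\rceil$ one has $\frac{2i}{2i-j}\,a_0\le\frac{1+a_0}{2}$. Indeed, since $a_0\in(0,1)$ by Condition~\ref{condition.finite_poly} (so $1-a_0>0$), the inequality $i\ge\frac{(1+a_0)j}{2(1-a_0)}$ is equivalent to $2i(1-a_0)\ge(1+a_0)j$, i.e. to $4ia_0\le(1+a_0)(2i-j)$, i.e. to $\frac{2i}{2i-j}a_0\le\frac{1+a_0}{2}$; note that $\frac{1+a_0}{2(1-a_0)}>\frac12$ forces $2i^*>j$, so the denominator $2i-j$ is positive for all $i\ge i^*$. (b) Because $s\ge1$ and $\frac{1+a_0}{2(1-a_0)}>\frac12\ge\frac1{2s}$, we have $i^*\ge\left\lceil j/(2s)\right\rceil$, so the left-hand sum in the claim really is a nonempty sub-sum of the full sum; moreover each term $d_{2i,j}$ with $i\ge i^*$ (so $2i>j$) is strictly positive — take $m_1=j$, $m_0=2i-j$, all other $m_k=0$ in the multinomial description of $d_{2i,j}$ and use $a_0,a_1>0$ — which is what makes the ratio bound of Lemma~\ref{le.d_to_d_ratio} meaningful.

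Next, combining (a) with Lemma~\ref{le.d_to_d_ratio} gives $\frac{d_{2i+2,j}}{d_{2i,j}}\le\rho$ for all $i\ge i^*$; iterating, $d_{2(i^*+m),j}\le\rho^m d_{2i^*,j}$ for every $m\ge0$, hence $\sum_{i\ge i^*}d_{2i,j}\le\frac{1}{1-\rho}d_{2i^*,j}$. Since $\left\lceil j/(2s)\right\rceil\le i^*$, we have $d_{2i^*,j}\le\sum_{i=\left\lceil j/(2s)\right\rceil}^{i^*}d_{2i,j}$ and also $\sum_{i=\left\lceil j/(2s)\right\rceil}^{i^*-1}d_{2i,j}\le\sum_{i=\left\lceil j/(2s)\right\rceil}^{i^*}d_{2i,j}$; therefore
\begin{align*}
\sum_{i=\left\lceil\frac{j}{2s}\right\rceil}^{\infty}d_{2i,j}
&=\sum_{i=\left\lceil\frac{j}{2s}\right\rceil}^{i^*-1}d_{2i,j}+\sum_{i\ge i^*}d_{2i,j}\\
&\le\left(1+\frac{1}{1-\rho}\right)\sum_{i=\left\lceil\frac{j}{2s}\right\rceil}^{i^*}d_{2i,j}
=\frac{2-\rho}{1-\rho}\sum_{i=\left\lceil\frac{j}{2s}\right\rceil}^{i^*}d_{2i,j}.
\end{align*}
Rearranging yields $\sum_{i=\left\lceil j/(2s)\right\rceil}^{i^*}d_{2i,j}\ge\frac{1-\rho}{2-\rho}\sum_{i=\left\lceil j/(2s)\right\rceil}^{\infty}d_{2i,j}$, which is the claim after substituting $\rho=\left(\frac{1+a_0}{2}\right)^2$.

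The argument is essentially routine once Lemma~\ref{le.d_to_d_ratio} is available; the only delicate points are the exact choice of the cutoff $i^*$ — it is dictated precisely by the requirement that $\frac{2i}{2i-j}a_0$ be pushed below $\frac{1+a_0}{2}$ — and the bookkeeping for the degenerate cases where $i^*$ is very close to $\left\lceil j/(2s)\right\rceil$, so that the first sum in the split above may be empty; these are absorbed by the crude monotonicity $\sum_{i=\left\lceil j/(2s)\right\rceil}^{i^*-1}d_{2i,j}\le\sum_{i=\left\lceil j/(2s)\right\rceil}^{i^*}d_{2i,j}$.
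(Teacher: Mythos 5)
Your argument is correct and follows essentially the same route as the paper's proof: choose $i^*$ so that Lemma~\ref{le.d_to_d_ratio} gives a geometric ratio $\rho=\left(\frac{1+a_0}{2}\right)^2$ from index $i^*$ on, bound the tail $\sum_{i\ge i^*}d_{2i,j}$ by $\frac{1}{1-\rho}d_{2i^*,j}$, absorb $d_{2i^*,j}$ into the partial sum $\sum_{i=\lceil j/(2s)\rceil}^{i^*}d_{2i,j}$, and rearrange. The only cosmetic differences are that you verify $\frac{2i}{2i-j}a_0\le\frac{1+a_0}{2}$ for $i\ge i^*$ by a direct chain of equivalences, whereas the paper first notes the monotonicity of $i\mapsto\frac{2i}{2i-j}$ (via Lemma~\ref{le.temp_061401}) and then evaluates at $2i^*\ge\frac{1+a_0}{1-a_0}j$, and that you explicitly record $i^*\ge\lceil j/(2s)\rceil$ and the positivity of $d_{2i,j}$ for $i\ge i^*$, which the paper leaves implicit.
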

\begin{proof}
For all $i\geq i^*$, we have $2i\geq j$ and
\begin{align*}
    \frac{2i}{2i-j}a_0\leq \frac{2i^*}{2i^*-j}a_0\leq \frac{\frac{1+a_0}{1-a_0}j}{\frac{1+a_0}{1-a_0}j-j}a_0=\frac{1+a_0}{2} \text{ (by Lemma~\ref{le.temp_061401} and $2i\geq 2i^*\geq \frac{1+a_0}{1-a_0}j$)}.
\end{align*}
By Lemma~\ref{le.d_to_d_ratio}, we thus have
\begin{align*}
    \frac{d_{2i+2,j}}{d_{2i,j}}\leq \left(\frac{1+a_0}{2}\right)^2\text{ for all }i\geq i^*.
\end{align*}
Because $d_{2i,j}\geq 0$ for all $i$ and $j$, we have
\begin{align*}
    \sum_{i=i^*+1}^{\infty}d_{2i,j}\leq \sum_{i=i^*}^{\infty}d_{2i,j}\leq d_{2i^*,j}\sum_{k=0}^\infty \left(\frac{1+a_0}{2}\right)^{2k}=\frac{d_{2i^*,j}}{1-\left(\frac{1+a_0}{2}\right)^2}\leq \frac{1}{1-\left(\frac{1+a_0}{2}\right)^2}\sum_{i=\left\lceil\frac{j}{2s}\right\rceil}^{i^*} d_{2i,j}.
\end{align*}
Therefore, we have
\begin{align*}
    \left(1+\frac{1}{1-\left(\frac{1+a_0}{2}\right)^2}\right)\sum_{i=\left\lceil\frac{j}{2s}\right\rceil}^{i^*}d_{2i,j}\geq \sum_{i=0}^{i^*}d_{2i,j}+\sum_{i=i^*+1}^\infty d_{2i,j}=\sum_{i=\left\lceil\frac{j}{2s}\right\rceil}^\infty d_{2i,j},
\end{align*}
i.e.,
\begin{align*}
    \sum_{i=\left\lceil\frac{j}{2s}\right\rceil}^{i^*}d_{2i,j}\geq \frac{1-\left(\frac{1+a_0}{2}\right)^2}{2-\left(\frac{1+a_0}{2}\right)^2}\sum_{i=\left\lceil\frac{j}{2s}\right\rceil}^\infty d_{2i,j}.
\end{align*}
\end{proof}

Recall that $\taylorCoeff_m(x)$ denotes the Taylor coefficients of $x\frac{\pi-\arccos(x)}{2\pi}$.
The following lemma states that $\taylorCoeff_{2k}(x)$ is monotone decreasing with respect to $k$. Further, it estimates the decreasing speed. We draw the curve of $\taylorCoeff_k(x)$ with respect to $k$ in Fig.~\ref{fig.\taylorCoeff_k}.
\begin{lemma}\label{le.\taylorCoeff_k_estimate}
When $k\geq 1$, we have
\begin{align*}
    \taylorCoeff_{2i}(x) \geq \taylorCoeff_{2k}(x) \text{ for all }i\in \{1,2,\cdots,k\},
\end{align*}
and
\begin{align*}
    \frac{\taylorCoeff_{2(k+m)}(x)}{\taylorCoeff_{2k}(x)}\geq \frac{1}{\left(2+\frac{2m}{k}\right)^2}\text{ for all }m\in \mathds{Z}_{\geq 0}.
\end{align*}
\end{lemma}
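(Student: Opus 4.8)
The plan is to reduce both parts of the lemma to the explicit closed form for the even Taylor coefficients of $x\frac{\pi-\arccos x}{2\pi}$. By Lemma~\ref{le.taylor_of_kernel} these coefficients are precisely the quantities $\taylorCoeff_{0,2k}$ studied in Appendix~\ref{app.bias_Taylor_expansion}; that is, $\taylorCoeff_{2k}(x)=\taylorCoeff_{0,2k}$ by the very definition of $\taylorCoeff_m(\cdot)$ in the case $K(x)=x$, and by Eq.~\eqref{eq.temp_061806} we have $\taylorCoeff_{0,2k}=\frac{1}{2\pi}\frac{(2k-2)!}{((k-1)!)^2}\frac{4}{2k-1}\frac{1}{2^{2k}}>0$ for all $k\ge 1$. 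The single structural fact I would use is the consecutive-ratio identity already recorded as Eq.~\eqref{eq.temp_070501}, namely $\taylorCoeff_{0,2(k+1)}/\taylorCoeff_{0,2k}=(2k-1)^2/\bigl(2k(2k+1)\bigr)$; everything else is an elementary manipulation of this ratio, and all coefficients are strictly positive so there is never a sign ambiguity.

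For the monotonicity statement, I would observe that $(2k-1)^2=4k^2-4k+1\le 4k^2+2k=2k(2k+1)$ for every integer $k\ge 1$ (equivalently $1\le 6k$), so the consecutive ratio is at most $1$; hence $\taylorCoeff_{0,2(k+1)}\le \taylorCoeff_{0,2k}$, and iterating this inequality downward from index $k$ yields $\taylorCoeff_{2i}(x)\ge \taylorCoeff_{2k}(x)$ for all $1\le i\le k$.

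For the quantitative decay bound, I would telescope the ratio as $\taylorCoeff_{0,2(k+m)}/\taylorCoeff_{0,2k}=\prod_{i=0}^{m-1}\frac{(2(k+i)-1)^2}{2(k+i)(2(k+i)+1)}$, then lower-bound each factor using $2(k+i)(2(k+i)+1)\le (2(k+i)+1)^2$ to get $\prod_{i=0}^{m-1}\frac{(2(k+i)-1)^2}{(2(k+i)+1)^2}=\frac{(2k-1)^2}{(2k+2m-1)^2}$, and finally apply $2k-1\ge k$ (valid for $k\ge 1$) together with $2k+2m-1\le 2k+2m$ to conclude $\frac{(2k-1)^2}{(2k+2m-1)^2}\ge \frac{k^2}{(2k+2m)^2}=\frac{1}{(2+2m/k)^2}$; the case $m=0$ is the trivial bound $1\ge \frac14$.

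This is essentially the same lower-bound computation already carried out inside the proof of Lemma~\ref{le.\taylorCoeff_a_\taylorCoeff_0} in Appendix~\ref{app.proof_le_coeff_a_0}, so I do not expect any genuine obstacle; the only points requiring a moment of care are the identification $\taylorCoeff_{2k}(x)=\taylorCoeff_{0,2k}$ and the restriction to $k\ge 1$, which is exactly the range assumed in the lemma and the range in which the closed form Eq.~\eqref{eq.temp_061806} is valid. If one prefers a self-contained write-up, the whole argument fits in a few lines directly from Eq.~\eqref{eq.temp_070501} without invoking the earlier appendix.
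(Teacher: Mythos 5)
Your proposal is correct and follows essentially the same route as the paper's own proof: both derive the consecutive-ratio identity $\taylorCoeff_{2(k+1)}(x)/\taylorCoeff_{2k}(x)=(2k-1)^2/(2k(2k+1))$ from the closed form of the even coefficients, note it is $\le 1$ to get monotonicity, then telescope and bound the product via $\frac{(2k-1)^2}{(2k+2m-1)^2}\ge\frac{k^2}{(2k+2m)^2}$. The only cosmetic difference is that you cite Eq.~\eqref{eq.temp_061806}/Eq.~\eqref{eq.temp_070501} from the earlier appendix while the paper re-derives the same formula in-place as Eq.~\eqref{eq.temp_070101}--Eq.~\eqref{eq.temp_072301}; the content is identical.
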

\begin{proof}
By Lemma~\ref{le.taylor_of_kernel}, we have $\taylorCoeff_0(x)=0$, $\taylorCoeff_1(x)=\frac{1}{4}$, and
for all $k\geq 1$, we have
\begin{align}
    &\taylorCoeff_{2k}(x)=\frac{1}{2\pi}\frac{(2k-2)!}{((k-1)!)^2}\frac{4}{2k-1}\frac{1}{2^{2k}},\label{eq.temp_070101}\\
    &\taylorCoeff_{2k+1}(x)=0.\label{eq.temp_070102}
\end{align}
(We also plot the curve of $\taylorCoeff_k(x)$ with respect to $k$ in Fig.~\ref{fig.\taylorCoeff_k}, so that we can observe the general trend that is consistent with the statement of this lemma. We continue with the precise proof of this lemma below.)

By Eq.~\eqref{eq.temp_070101}, we have
\begin{align}\label{eq.temp_072301}
    \frac{\taylorCoeff_{2k+2}(x)}{\taylorCoeff_{2k}(x)}=\frac{2k(2k-1)(2k-1)}{k^2(2k+1)}\cdot\frac{1}{4}=\frac{(2k-1)^2}{2k\cdot (2k+1)}.
\end{align}
Because $\frac{(2k-1)^2}{2k\cdot (2k+1)}\leq 1$, we know that $\taylorCoeff_{2k}(x)$ is monotone decreasing with respect to $k$. Therefore, we have 
\begin{align*}
    \taylorCoeff_{2i}(x) \geq \taylorCoeff_{2k}(x) \text{ for all }i\in \{1,2,\cdots,k\}.
\end{align*}
Iterating Eq.~\eqref{eq.temp_072301}, we have
\begin{align*}
    \frac{\taylorCoeff_{2(k+m)}(x)}{\taylorCoeff_{2k}(x)}=&\prod_{i=0}^{m-1} \frac{(2k+2i-1)^2}{(2k+2i)(2k+2i+1)}\\
    \geq& \prod_{i=0}^{m-1} \frac{(2k+2i-1)^2}{(2k+2i+1)^2}\\
    =&\frac{(2k-1)^2}{(2k+2m-1)^2}\\
    \geq &\frac{k^2}{(2k+2m)^2}\text{ (because $2k-1\geq k$ due to $k\geq 1$)}\\
    =&\frac{1}{\left(2+\frac{2m}{k}\right)^2}.
\end{align*}
\end{proof}

\begin{lemma}\label{le.not_very_small}
Under Condition~\ref{condition.finite_poly}, for any $j=1,2,\cdots$, we must have
\begin{align*}
    \frac{\taylorCoeff_j\left(K(x)\right)}{\taylorCoeff_{2\left\lceil j/2\right\rceil}(x)}\geq \left(\frac{1}{2}\cdot\frac{1-a_0}{3-a_0}\right)^2 \frac{1-\left(\frac{1+a_0}{2}\right)^2}{2-\left(\frac{1+a_0}{2}\right)^2}e_j.
\end{align*}
\end{lemma}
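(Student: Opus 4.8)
The plan is to expand $T(x)\defeq K(x)\frac{\pi-\arccos(K(x))}{2\pi}$ in powers of $x$ by substituting $K(x)$ into the expansion of $u\mapsto u\frac{\pi-\arccos u}{2\pi}$ supplied by Lemma~\ref{le.taylor_of_kernel}. Writing $\taylorCoeff_m(x)$ for the Taylor coefficients of $x\frac{\pi-\arccos x}{2\pi}$ (so $\taylorCoeff_1(x)=\tfrac14$, $\taylorCoeff_{2m+1}(x)=0$ for $m\ge1$, and $\taylorCoeff_{2m}(x)>0$ for $m\ge1$, all recorded in Lemma~\ref{le.\taylorCoeff_k_estimate}) and using $\left(K(x)\right)^{2m}=\sum_{j}d_{2m,j}x^j$ from Lemma~\ref{le.expansion} and Eq.~\eqref{def.coeff_d}, one obtains $T(x)=\tfrac14 K(x)+\sum_{m\ge1}\taylorCoeff_{2m}(x)\left(K(x)\right)^{2m}$. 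Under Condition~\ref{condition.finite_poly} all of $a_j,\taylorCoeff_{2m}(x),d_{2m,j}$ are non-negative, so the double series may be rearranged termwise; since $d_{2m,j}=0$ whenever $2ms<j$, reading off the coefficient of $x^j$ for $j\ge1$ gives
\begin{align*}
    \taylorCoeff_j\left(K(x)\right)=\frac{a_j}{4}+\sum_{m\ge\myCeil{j/(2s)}}\taylorCoeff_{2m}(x)\,d_{2m,j}\ \ge\ \sum_{m=\myCeil{j/(2s)}}^{i^*}\taylorCoeff_{2m}(x)\,d_{2m,j},
\end{align*}
where $i^*\defeq\myCeil{\tfrac{1+a_0}{2(1-a_0)}j}$ as in Lemma~\ref{le.part_sum_d_not_too_small}; the inequality discards the non-negative term $\tfrac{a_j}{4}$ and truncates the non-negative tail of the sum at $i^*$.

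The next step is to bound $\taylorCoeff_{2m}(x)$ from below, uniformly over the truncated range, in terms of $\taylorCoeff_{2\myCeil{j/2}}(x)$. Put $k\defeq\myCeil{j/2}$. For $\myCeil{j/(2s)}\le m\le k$, the monotonicity statement of Lemma~\ref{le.\taylorCoeff_k_estimate} gives $\taylorCoeff_{2m}(x)\ge\taylorCoeff_{2k}(x)$ directly. For $k<m\le i^*$, the second bound of Lemma~\ref{le.\taylorCoeff_k_estimate}, applied with offset $m-k$ so that $2+\tfrac{2(m-k)}{k}=\tfrac{2m}{k}$, gives $\taylorCoeff_{2m}(x)\ge\bigl(\tfrac{k}{2m}\bigr)^{2}\taylorCoeff_{2k}(x)$. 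It then remains to control $\tfrac{m}{k}$, and this is exactly where the hypothesis $j\ge1$ enters: $\myCeil{y}\le y+1$ together with $1\le j$ yields $i^*\le\tfrac{1+a_0}{2(1-a_0)}j+j=\tfrac{3-a_0}{2(1-a_0)}j$, whence $\tfrac{m}{k}\le\tfrac{2i^*}{j}\le\tfrac{3-a_0}{1-a_0}$ and so $\tfrac{k}{2m}\ge\tfrac12\cdot\tfrac{1-a_0}{3-a_0}$. Since $\bigl(\tfrac12\cdot\tfrac{1-a_0}{3-a_0}\bigr)^2\le1$, both cases give $\taylorCoeff_{2m}(x)\ge\bigl(\tfrac12\cdot\tfrac{1-a_0}{3-a_0}\bigr)^{2}\taylorCoeff_{2k}(x)$ for every $m$ with $\myCeil{j/(2s)}\le m\le i^*$.

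Combining this with the displayed inequality, pulling the constant out of the sum, and then invoking Lemma~\ref{le.part_sum_d_not_too_small} together with $\sum_{i\ge\myCeil{j/(2s)}}d_{2i,j}=e_j$ (Eq.~\eqref{def.coeff_e_shown_by_d}) yields
\begin{align*}
    \taylorCoeff_j\left(K(x)\right)\ \ge\ \Bigl(\tfrac12\cdot\tfrac{1-a_0}{3-a_0}\Bigr)^{2}\taylorCoeff_{2\myCeil{j/2}}(x)\sum_{m=\myCeil{j/(2s)}}^{i^*}d_{2m,j}\ \ge\ \Bigl(\tfrac12\cdot\tfrac{1-a_0}{3-a_0}\Bigr)^{2}\frac{1-\bigl(\tfrac{1+a_0}{2}\bigr)^{2}}{2-\bigl(\tfrac{1+a_0}{2}\bigr)^{2}}\,e_j\,\taylorCoeff_{2\myCeil{j/2}}(x),
\end{align*}
and dividing by $\taylorCoeff_{2\myCeil{j/2}}(x)>0$ gives the assertion. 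I expect the only genuinely delicate parts to be: (i) the rearrangement and coefficient extraction in the first step, which is routine once one observes that every coefficient involved is non-negative under Condition~\ref{condition.finite_poly}; and (ii) the case analysis in the second step together with the numerical inequality $i^*\le\tfrac{3-a_0}{2(1-a_0)}j$, which is the one place the restriction $j\ge1$ is essential and is therefore the main obstacle to a clean uniform constant.
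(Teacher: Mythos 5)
Your proof is correct and follows the paper's argument: both expand $\taylorCoeff_j(K(x))$ via Eq.~\eqref{eq.represent_\taylorCoeff_by_cd}, drop the nonnegative $a_j/4$ term, truncate the sum at $i^*=\myCeil{\tfrac{1+a_0}{2(1-a_0)}j}$, bound each ratio $\taylorCoeff_{2i}(x)/\taylorCoeff_{2\myCeil{j/2}}(x)$ over the truncated range uniformly by $\left(\tfrac12\cdot\tfrac{1-a_0}{3-a_0}\right)^2$ using Lemma~\ref{le.\taylorCoeff_k_estimate}, and finish with Lemma~\ref{le.part_sum_d_not_too_small} and Eq.~\eqref{def.coeff_e_shown_by_d}. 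Your two-case split at $m=\myCeil{j/2}$ is merely a cosmetic rearrangement of the paper's one-shot bound of $\taylorCoeff_{2i^*}(x)/\taylorCoeff_{2\myCeil{j/2}}(x)$ followed by monotonicity, and yields the same constant.
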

\begin{proof}
Consider $i^*$ defined in Lemma~\ref{le.part_sum_d_not_too_small}, i.e., $i^*=\myCeil{\frac{1+a_0}{2(1-a_0)}j}$. Let $k=\left\lceil\frac{j}{2}\right\rceil$ and $m=i^*-k$.
We have
\begin{align*}
    \frac{2m}{k}=\frac{2i^*}{k}-2=&\frac{2\myCeil{\frac{1+a_0}{2(1-a_0)}j}}{\myCeil{\frac{j}{2}}}-2\\
    \leq &\frac{2\left(\frac{1+a_0}{2(1-a_0)}j+1\right)}{\frac{j}{2}}-2\quad \text{ (because $\myCeil{\alpha}\in [\alpha, \alpha+1]$)}\\
    =&\frac{2(1+a_0)}{1-a_0}+\frac{4}{j}-2\\
    \leq &\frac{2(1+a_0)}{1-a_0}+4-2\quad\text{ (because $j\geq 1$)}\\
    =&\frac{4}{1-a_0}.
\end{align*}
By Lemma~\ref{le.\taylorCoeff_k_estimate}, we then have
\begin{align*}
    \frac{\taylorCoeff_{2i^*}(x)}{\taylorCoeff_{2\left\lceil j/2\right\rceil}(x)}=\frac{\taylorCoeff_{2(k+m)}(x)}{\taylorCoeff_{2k}(x)}\geq & \frac{1}{\left(2+\frac{2m}{k}\right)^2}\geq \frac{1}{\left(2+\frac{4}{1-a_0}\right)^2}=\left(\frac{1}{2}\cdot\frac{1-a_0}{3-a_0}\right)^2.
\end{align*}
By the first part of Lemma~\ref{le.\taylorCoeff_k_estimate}, $\taylorCoeff_{2i}\geq \taylorCoeff_{2i^*}$ for all $i=1,2,\cdots, i^*$. We thus have
\begin{align}\label{eq.temp_012405}
    \frac{\taylorCoeff_{2i}(x)}{\taylorCoeff_{2\left\lceil j/2\right\rceil}(x)}\geq \left(\frac{1}{2}\cdot \frac{1-a_0}{3-a_0}\right)^2\text{ for all }i=1,2,\cdots,i^*.
\end{align}

Notice that
\begin{align*}
    &K(x)\frac{\pi- \arccos\left(K(x)\right)}{2\pi}\\
    =&\taylorCoeff_1(x)K(x)+\sum_{i=1}^\infty \taylorCoeff_{2i}(x) \left(K(x)\right)^{2i}\\
    =&\taylorCoeff_1(x)K(x)+\sum_{i=1}^\infty \taylorCoeff_{2i}(x) \sum_{j=0}^\infty d_{2i,j}x^j\text{ (by Lemma~\ref{le.expansion})}\\
    =&\frac{1}{4}K(x)+\sum_{j=0}^\infty\left( \sum_{i=\left\lceil\frac{j}{2s}\right\rceil}^\infty \taylorCoeff_{2i}(x)\cdot d_{2i,j}\right) x^j \text{ (since $d_{2i,j}=0$ when $j>2i\cdot s$)}.
\end{align*}
Therefore,
\begin{align}\label{eq.represent_\taylorCoeff_by_cd}
    \taylorCoeff_j\left(K(x)\right)=\frac{a_j}{4}+\sum_{i=\left\lceil\frac{j}{2s}\right\rceil}^\infty \taylorCoeff_{2i}(x)\cdot d_{2i,j}\text{ for all }j\in \mathds{Z}_{\geq 0}.
\end{align}

By Eq.~\eqref{eq.represent_\taylorCoeff_by_cd}, we thus have
\begin{align*}
    \frac{\taylorCoeff_j\left(K(x)\right)}{\taylorCoeff_{2\left\lceil j/2\right\rceil}(x)}\geq & \frac{1}{\taylorCoeff_{2\left\lceil j/2\right\rceil}(x)}\sum_{i=\left\lceil\frac{j}{2s}\right\rceil}^\infty \taylorCoeff_{2i}(x) d_{2i,j}\text{ (notice that $a_i\geq 0$ for all $i\in \mathds{Z}_{\geq 0}$ by Condition~\ref{condition.finite_poly})}\\
    \geq&\frac{1}{\taylorCoeff_{2\left\lceil j/2\right\rceil}(x)}\sum_{i=\left\lceil\frac{j}{2s}\right\rceil}^{i^*} \taylorCoeff_{2i}(x) d_{2i,j}\\
    \geq & \left(\frac{1}{2}\cdot\frac{1-a_0}{3-a_0}\right)^2 \sum_{i=\left\lceil\frac{j}{2s}\right\rceil}^{i^*}  d_{2i,j}\text{ (by Eq.~\eqref{eq.temp_012405})}\\
    \geq & \left(\frac{1}{2}\cdot\frac{1-a_0}{3-a_0}\right)^2 \frac{1-\left(\frac{1+a_0}{2}\right)^2}{2-\left(\frac{1+a_0}{2}\right)^2}\sum_{i=\left\lceil\frac{j}{2s}\right\rceil}^\infty  d_{2i,j}\text{ (by Lemma~\ref{le.part_sum_d_not_too_small})}\\
    =&\left(\frac{1}{2}\cdot\frac{1-a_0}{3-a_0}\right)^2 \frac{1-\left(\frac{1+a_0}{2}\right)^2}{2-\left(\frac{1+a_0}{2}\right)^2}e_j\text{ (by Eq.~\eqref{def.coeff_e_shown_by_d})}.
\end{align*}
\end{proof}

\begin{figure}[t!]
\centering
\includegraphics[width=3.5in]{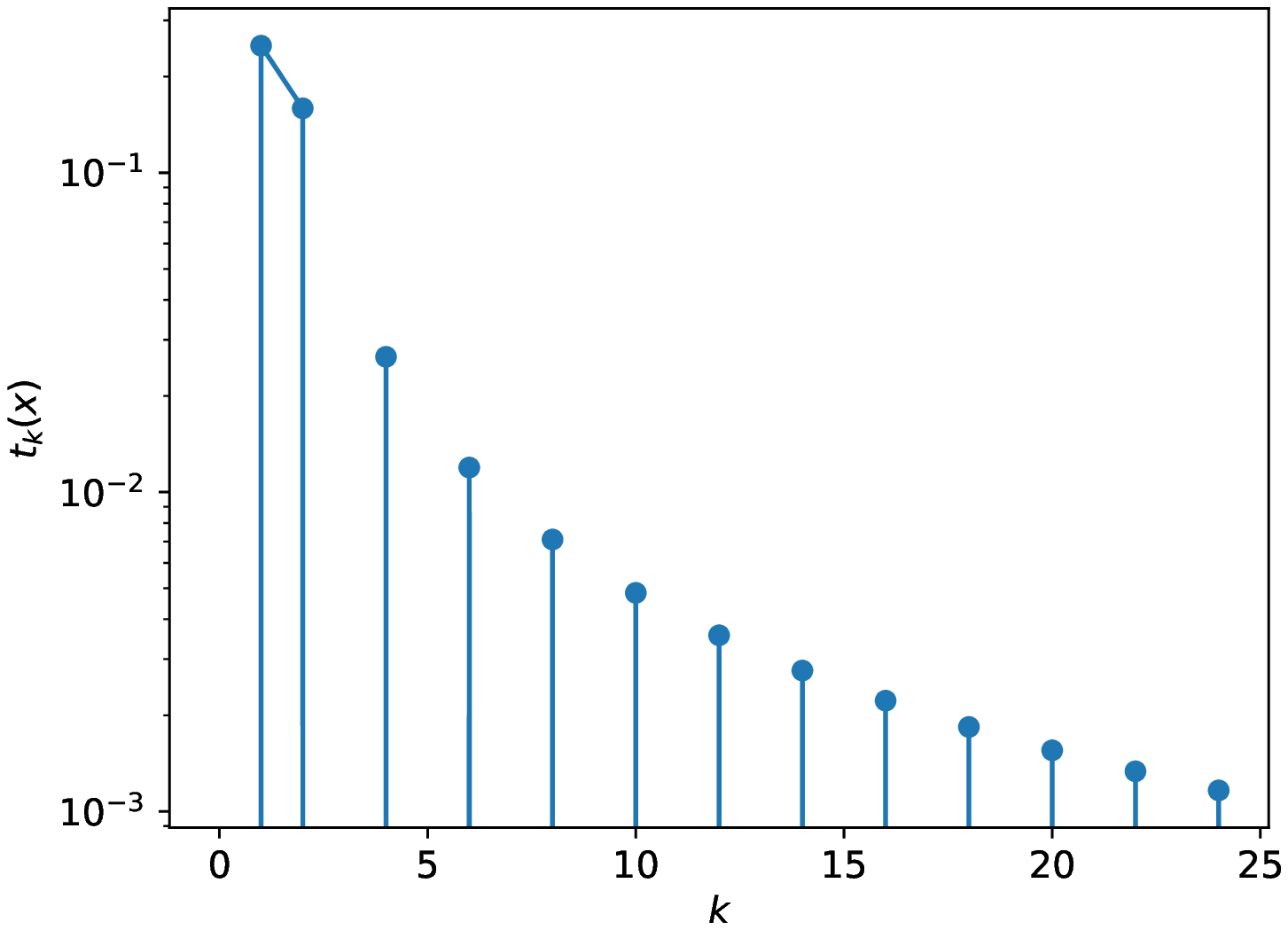}
\caption{The curve of $\taylorCoeff_k(x)$ with respect to $k$. Notice that $\taylorCoeff_k(x)=0$ when $k=0,3,5,7,\cdots.$}\label{fig.\taylorCoeff_k}
\end{figure}

By Lemma~\ref{le.not_very_small} and Lemma~\ref{le.bounded_e}, we can conclude the lower bound in Lemma~\ref{le.finite_poly_conclusion}. Next we will prove the upper bound in Lemma~\ref{le.finite_poly_conclusion}.

\begin{lemma}\label{le.not_very_large}
Under Condition~\ref{condition.finite_poly}, for any $j=1,2,\cdots$, we must have
\begin{align*}
    \frac{\taylorCoeff_j\left(K(x)\right)-\frac{a_j}{4}}{\taylorCoeff_{2\left\lceil j/2\right\rceil}(x)}\leq (2+2s)^2 e_j.
\end{align*}
\end{lemma}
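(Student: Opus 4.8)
The plan is to mirror the proof of Lemma~\ref{le.not_very_small}, replacing every lower bound by the matching upper bound. The starting point is again the representation Eq.~\eqref{eq.represent_\taylorCoeff_by_cd}, which gives
\begin{align*}
  \taylorCoeff_j\left(K(x)\right)-\frac{a_j}{4}=\sum_{i=\left\lceil\frac{j}{2s}\right\rceil}^\infty \taylorCoeff_{2i}(x)\cdot d_{2i,j}.
\end{align*}
Write $i_{\min}\defeq\left\lceil\frac{j}{2s}\right\rceil$, and recall that $d_{2i,j}\geq 0$ for all $i,j$ while $\taylorCoeff_{2i}(x)$ is nonincreasing in $i$ for $i\geq 1$ by the first part of Lemma~\ref{le.\taylorCoeff_k_estimate}. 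Hence every summand is at most $\taylorCoeff_{2i_{\min}}(x)\,d_{2i,j}$, and, summing and invoking Eq.~\eqref{def.coeff_e_shown_by_d},
\begin{align*}
  \taylorCoeff_j\left(K(x)\right)-\frac{a_j}{4}\leq \taylorCoeff_{2i_{\min}}(x)\sum_{i=i_{\min}}^\infty d_{2i,j}=\taylorCoeff_{2i_{\min}}(x)\,e_j .
\end{align*}

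It then remains to show $\taylorCoeff_{2i_{\min}}(x)\leq (2+2s)^2\,\taylorCoeff_{2\left\lceil j/2\right\rceil}(x)$. Since $a_1>0$ forces $s\geq 1$, we have $i_{\min}=\left\lceil\frac{j}{2s}\right\rceil\leq \left\lceil\frac{j}{2}\right\rceil$, so $m\defeq\left\lceil j/2\right\rceil-i_{\min}$ is a nonnegative integer; also $i_{\min}\geq 1$ because $j\geq 1$. Applying the second part of Lemma~\ref{le.\taylorCoeff_k_estimate} with $k=i_{\min}$ and this $m$, then inverting (all quantities are positive for $k\geq 1$), gives
\begin{align*}
  \frac{\taylorCoeff_{2i_{\min}}(x)}{\taylorCoeff_{2\left\lceil j/2\right\rceil}(x)}\leq \left(2+\frac{2m}{i_{\min}}\right)^2=\left(\frac{2\left\lceil j/2\right\rceil}{i_{\min}}\right)^2 .
\end{align*}
Finally I would verify the elementary inequality $2\left\lceil j/2\right\rceil\leq (2+2s)\,i_{\min}$ for every integer $j\geq 1$ by a short case split: if $j\leq 2s$ then $i_{\min}=1$ and $\left\lceil j/2\right\rceil\leq s$, so the left side is at most $2s\leq 2+2s$; if $j>2s$ then $i_{\min}\geq j/(2s)$ and $2\left\lceil j/2\right\rceil\leq j+1\leq j\left(1+\frac{1}{s}\right)=(2+2s)\cdot\frac{j}{2s}\leq (2+2s)\,i_{\min}$, where $j+1\leq j(1+1/s)$ uses $j>2s\geq s$. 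Combining the three displays yields the claim of Lemma~\ref{le.not_very_large}.

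The only delicate point is producing exactly the constant $(2+2s)^2$: this is what dictates the precise form of the case analysis on the ceilings and the repeated use of $s\geq 1$. Everything else is an immediate consequence of facts already in hand — the monotonicity and the explicit decay-rate estimate for $\taylorCoeff_{2k}(x)$ from Lemma~\ref{le.\taylorCoeff_k_estimate}, the nonnegativity of the $d_{2i,j}$'s, and the identification Eq.~\eqref{def.coeff_e_shown_by_d}. Together with the lower bound of Lemma~\ref{le.not_very_small} and the two-sided bound on $e_j$ from Lemma~\ref{le.bounded_e}, this lemma supplies the upper constant $\overline{C}$ needed in Lemma~\ref{le.finite_poly_conclusion}; the extra $\frac{a_j}{4}$ term causes no trouble there, since it is supported on $j\leq s$, where $\taylorCoeff_{2\left\lceil j/2\right\rceil}(x)$ is bounded below by a positive constant depending only on $K$.
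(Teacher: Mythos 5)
Your proof is correct and follows the same strategy as the paper's: start from the representation in Eq.~\eqref{eq.represent_\taylorCoeff_by_cd}, bound $\taylorCoeff_{2i}(x)\leq \taylorCoeff_{2\lceil j/(2s)\rceil}(x)$ by monotonicity, invoke the decay-rate estimate of Lemma~\ref{le.\taylorCoeff_k_estimate} with $k=\lceil j/(2s)\rceil$, $m=\lceil j/2\rceil-\lceil j/(2s)\rceil$, and sum against $e_j$ via Eq.~\eqref{def.coeff_e_shown_by_d}. The only cosmetic difference is the final arithmetic check: you verify $2\lceil j/2\rceil\leq(2+2s)\,i_{\min}$ by a case split on $j\lessgtr 2s$, whereas the paper shows the equivalent $m/k\leq s$ directly via $\lceil j/2\rceil\leq j/2+1$ and $\lceil j/(2s)\rceil\geq\max\{1,j/(2s)\}$.
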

\begin{proof}
Let $k=\left\lceil\frac{j}{2s}\right\rceil$ and $m=\left\lceil\frac{j}{2}\right\rceil-\left\lceil\frac{j}{2s}\right\rceil$. Thus, we have
\begin{align*}
    &\frac{m}{k}=\frac{\left\lceil\frac{j}{2}\right\rceil}{\left\lceil\frac{j}{2s}\right\rceil}-1\leq \frac{\frac{j}{2}+1}{\left\lceil\frac{j}{2s}\right\rceil}-1= \frac{\frac{j}{2}}{\left\lceil\frac{j}{2s}\right\rceil}+\left(\frac{1}{\left\lceil\frac{j}{2s}\right\rceil}-1\right)\leq s \\
    &\quad \text{ (noting that $\myCeil{\frac{j}{2s}}\geq 1$ since $j\geq 1$)}.
\end{align*}
By Lemma~\ref{le.\taylorCoeff_k_estimate}, we then have
\begin{align*}
    \frac{\taylorCoeff_{2\left\lceil j/2\right\rceil}(x)}{\taylorCoeff_{2\left\lceil j/(2s)\right\rceil}(x)}=\frac{\taylorCoeff_{2(k+m)}(x)}{\taylorCoeff_{2k}(x)}\geq & \frac{1}{\left(2+\frac{2m}{k}\right)^2}\text{ (by Lemma~\ref{le.\taylorCoeff_k_estimate})}\\
    \geq & \frac{1}{(2+2s)^2}.
\end{align*}
By the first part of Lemma~\ref{le.\taylorCoeff_k_estimate}, we further have
\begin{align*}
    \taylorCoeff_{2i}(x)\leq (2+2s)^2 \taylorCoeff_{2\left\lceil j/2\right\rceil}(x) \text{ for all } 2i\geq 2\left\lceil\frac{j}{2s}\right\rceil.
\end{align*}
Thus, we have
\begin{align*}
    \taylorCoeff_j\left(K(x)\right)-\frac{a_j}{4}=&\sum_{i=\left\lceil\frac{j}{2s}\right\rceil}^\infty \taylorCoeff_{2i}(x)\cdot d_{2i,j}\text{ (by Eq.~\eqref{eq.represent_\taylorCoeff_by_cd})}\\
    \leq & (2+2s)^2 \taylorCoeff_{2\left\lceil j/2\right\rceil}(x) \sum_{i=\left\lceil\frac{j}{2s}\right\rceil}^\infty  d_{2i,j}\\
    =&(2+2s)^2 \taylorCoeff_{2\left\lceil j/2\right\rceil}(x) e_j\text{ (by Eq.~\eqref{def.coeff_e_shown_by_d})}.
\end{align*}
The result of the lemma thus follows.
\end{proof}

Combining Lemma~\ref{le.not_very_small}, Lemma~\ref{le.not_very_large}, and Lemma~\ref{le.bounded_e}, the result of Lemma~\ref{le.finite_poly_conclusion} thus follows.

\subsection{Expansion for an infinite-degree polynomial}\label{app.inf_Taylor_expansion}
We now return to the proof of the third statement of Lemma~\ref{le.coefficient_compare}, where the polynomial $K(x)$ has infinite terms. We inherit notations $a_i$, $b_i$, $\taylorCoeff_i$, $d_i$, $e_i$ for the finite polynomial case in Appendix~\ref{app.finite_Taylor_expansion}. Further, we introduce some additional notations as follows.

Define 
\begin{align}\label{def.coeff_tilde_b}
    \tilde{b}_i\defeq \frac{b_i}{1-b_0},\text{ for all } i\in \mathds{Z}_{\geq 0},
\end{align}
and
\begin{align}\label{def.T_k}
    &T_k\defeq \sum_{i=k+1}^\infty \tilde{b}_i,\quad k\in \mathds{Z}_{\geq 0}.
\end{align}
By Condition~\ref{condition.finite_poly}, Eq.~\eqref{def.coeff_b}, and Eq.~\eqref{def.coeff_tilde_b}, we have
\begin{align}\label{eq.temp_080301}
    T_k\geq 0\text{ for all }k\in \mathds{Z}_{\geq 0}.
\end{align}
Define for any $k\in \{1,2,\cdots\}$,
\begin{align}\label{def.L_k}
    L_k\defeq \min\ \{e_j\ |\ j=0,1,\cdots,k-1\}\cup \{1\} - \sum_{i=k}^\infty T_i.
\end{align}

\begin{lemma}\label{le.lower_bound_of_e_k}
For any $i\in \mathds{Z}_{\geq 0}$ and any $k\in \{1,2,\cdots\}$, we must have $e_i\geq L_k$. Notice that the indices $i$ and $k$ are not required to be equal.
\end{lemma}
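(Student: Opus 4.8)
The plan is to fix $k\in\{1,2,\dots\}$ once and for all, abbreviate $m_k\defeq\min\big(\{e_j:0\le j\le k-1\}\cup\{1\}\big)$ so that $L_k=m_k-\sum_{i=k}^{\infty}T_i$ by Eq.~\eqref{def.L_k} and $m_k\le 1$, and prove by strong induction on $i$ a statement slightly stronger than $e_i\ge L_k$:
\begin{align*}
    e_i\ge m_k\ \ (0\le i<k),\qquad e_i\ge m_k-\sum_{m=k}^{i}T_m\ \ (i\ge k).
\end{align*}
Since every $T_m\ge 0$ (Eq.~\eqref{eq.temp_080301}), the truncated tail $\sum_{m=k}^{i}T_m$ is at most $\sum_{m=k}^{\infty}T_m$, so this stronger statement yields $e_i\ge m_k-\sum_{m=k}^{\infty}T_m=L_k$ for every $i$, which is the lemma (and $i,k$ need not be related, as remarked). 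The base cases $0\le i<k$ are immediate from the definition of $m_k$.

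For the setup I would record the recursion satisfied by the $e_i$'s: comparing the coefficient of $x^i$ (for $i\ge 1$) in $\sum_{j\ge0}e_j x^j=1+(K(x))^2\sum_{j\ge0}e_j x^j$ (which follows from Eq.~\eqref{def.coeff_e} together with $(K(x))^2=\sum_{j}b_j x^j$) gives $e_i(1-b_0)=\sum_{l=1}^{i}b_l e_{i-l}$, i.e.
\begin{align*}
    e_i=\sum_{l=1}^{i}\tilde b_l\, e_{i-l}\qquad(i\ge 1),
\end{align*}
with $\tilde b_l$ as in Eq.~\eqref{def.coeff_tilde_b}. Moreover $\sum_{l\ge1}\tilde b_l=\frac{1}{1-b_0}\sum_{l\ge1}b_l=\frac{1}{1-b_0}\big((K(1))^2-b_0\big)=1$ by Condition~\ref{condition.finite_poly}, hence $\tilde b_l\ge 0$ for all $l$ and $\sum_{l=1}^{i}\tilde b_l=1-T_i$ by Eq.~\eqref{def.T_k}.

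For the inductive step, given $i\ge k$ I would split the recursion according to whether $i-l\ge k$ or $i-l<k$: for $1\le l\le i-k$ one has $k\le i-l\le i-1$, so the (stronger) inductive hypothesis gives $e_{i-l}\ge m_k-\sum_{m=k}^{i-l}T_m$; for $i-k+1\le l\le i$ one has $0\le i-l\le k-1$, so $e_{i-l}\ge m_k$. Combining with $\tilde b_l\ge 0$,
\begin{align*}
    e_i\ \ge\ m_k\sum_{l=1}^{i}\tilde b_l-\sum_{l=1}^{i-k}\tilde b_l\sum_{m=k}^{i-l}T_m\ =\ m_k(1-T_i)-\sum_{l=1}^{i-k}\tilde b_l\sum_{m=k}^{i-l}T_m .
\end{align*}
Since $i-l\le i-1$ and each $T_m\ge0$, the inner sum is at most $\sum_{m=k}^{i-1}T_m$, so the double sum is at most $\big(\sum_{l=1}^{i-k}\tilde b_l\big)\sum_{m=k}^{i-1}T_m\le\sum_{m=k}^{i-1}T_m$; using also $m_k\le1$ and $T_i\ge0$,
\begin{align*}
    e_i\ \ge\ m_k(1-T_i)-\sum_{m=k}^{i-1}T_m\ \ge\ m_k-\sum_{m=k}^{i}T_m ,
\end{align*}
which is exactly the asserted bound for index $i$, closing the induction. (When $i=k$ both $l$-sums on the right are empty and this reads $e_k\ge m_k(1-T_k)\ge m_k-T_k$.)

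The only non-mechanical point — and the main obstacle — is choosing the right strengthening: carrying the bare bound $e_{i-l}\ge L_k$ through the recursion does not close, because the factor $\sum_{l=1}^{i}\tilde b_l=1-T_i<1$ cannot be absorbed; tracking the \emph{truncated} tail $\sum_{m=k}^{i}T_m$ is precisely what makes the estimate telescope. One preliminary thing I would check is that $L_k$ is well defined, i.e.\ $\sum_{i\ge k}T_i<\infty$: this sum equals $\sum_{l\ge k+1}(l-k)\tilde b_l$, which is finite as soon as $K'(1)<\infty$ — true for the polynomials of interest ($K(x)=(1-a)x+a$ trivially, and $K=2d\,\KRF$ since $\frac{d}{da}\big(2d\,\KRF\big)\big|_{a=1}=1$), and in any case already in force in this subsection.
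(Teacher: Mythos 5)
Your proof is correct, and it reaches the lemma by a genuinely different route than the paper. The paper introduces the auxiliary ``perturbed'' sequence $e^{\bm{\Delta}}_i$ with $\Delta_i=T_i\cdot\indicator{i\geq k}$, proves by induction that $e^{\bm{\Delta}}_i\geq m_k$ (easy because the modified recursion has coefficients summing to $1$), and then converts back via the exact difference formula of Lemma~\ref{le.diff_e_e}, $e_i=e^{\bm{\Delta}}_i-\sum_{j=1}^i\Delta_j e_{i-j}/e_0$, together with $e_{i-j}\leq e_0$. You instead run a single strong induction directly on $e_i$ with the strengthened hypothesis $e_i\geq m_k-\sum_{m=k}^{i}T_m$, splitting the recursion $e_i=\sum_{l=1}^i\tilde b_l e_{i-l}$ at $l=i-k$ and using $\sum_{l=1}^i\tilde b_l=1-T_i$, $T_m\geq 0$, $m_k\leq 1$. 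I checked the inductive step and the $i=k$ edge case; both close. Your version is shorter and self-contained in that it does not need the auxiliary sequence $e^{\bm{\Delta}}$ or Lemma~\ref{le.diff_e_e} at all, which removes one layer of bookkeeping. What the paper's route buys in exchange is a clean modular separation: a one-line lower bound on the perturbed sequence, and a reusable exact identity for perturbations of this linear recursion. Both proofs hinge on the same structural fact, namely that $T_i$ quantifies the ``leak'' in the truncated recursion and those leaks accumulate to $\sum_{m\geq k}T_m$; you just track the accumulation inside the induction rather than after it. Your closing remark about well-definedness of $L_k$ (finiteness of $\sum_{i\geq k}T_i$ via $K'(1)<\infty$) is correct and matches the content of Lemma~\ref{le.estimate_T_k}.
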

We prove Lemma~\ref{le.lower_bound_of_e_k} in Appendix~\ref{app.proof_lower_bound_of_e_k}.

In order to prove the third statement of Lemma~\ref{le.coefficient_compare}, by Lemma~\ref{le.coeff_same_taylor}, we only need to lower bound $\frac{\taylorCoeff_j\left(K(x)\right)}{\taylorCoeff_{2\left\lceil j/2\right\rceil}(x)}$. Notice that Lemma~\ref{le.not_very_small} still holds when $s\to\infty$ (the proof of it will be exactly the same after replacing $\myCeil{j/2s}$ by zero). Therefore, we only need to prove that $e_i$ is lower bounded by a positive constant. By Lemma~\ref{le.lower_bound_of_e_k}, if we can find $L_k>0$ for some $k$, then we are done. By Eq.~\eqref{def.L_k}, In order to calculate the exact value of $L_k$, we need to find a way to calculate the exact value of $\sum_{k=0}^\infty T_k$, which is provided by the following lemma.

\begin{lemma}\label{le.estimate_T_k}
$\sum\limits_{k=0}^\infty T_k=\frac{2}{1-a_0^2}\frac{\partial K(x)}{\partial x}\big\vert_{x=1}$.
\end{lemma}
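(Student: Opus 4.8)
The plan is to swap the order of the double summation defining $\sum_{k=0}^\infty T_k$ and then recognize the resulting single sum as a derivative of $(K(x))^2$ evaluated at $x=1$. Recalling Eq.~\eqref{def.T_k}, write
\begin{align*}
\sum_{k=0}^\infty T_k=\sum_{k=0}^\infty\sum_{i=k+1}^\infty \tilde b_i .
\end{align*}
Since $\tilde b_i\geq 0$ for all $i$ (by Eq.~\eqref{def.coeff_b}, Eq.~\eqref{def.coeff_tilde_b}, and Condition~\ref{condition.finite_poly}, which give $b_i\geq 0$ and $b_0=a_0^2<1$), the interchange of the two non-negative sums is justified by Tonelli's theorem, the identity holding in $[0,\infty]$. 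For a fixed $i\geq 1$, the term $\tilde b_i$ is counted in $T_k$ exactly for $k\in\{0,1,\dots,i-1\}$, i.e.\ $i$ times, and $\tilde b_0$ is never counted; hence $\sum_{k=0}^\infty T_k=\sum_{i=1}^\infty i\,\tilde b_i$.

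Next I would substitute $\tilde b_i=b_i/(1-a_0^2)$ (Eq.~\eqref{def.coeff_tilde_b}, with $b_0=a_0^2$) and use that $b_i$ is the coefficient of $x^i$ in $(K(x))^2$ (Eq.~\eqref{def.coeff_b}), so that
\begin{align*}
\sum_{i=1}^\infty i\,\tilde b_i=\frac{1}{1-a_0^2}\sum_{i=0}^\infty i\,b_i=\frac{1}{1-a_0^2}\left.\frac{\partial}{\partial x}\big(K(x)\big)^2\right|_{x=1}.
\end{align*}
Then $\frac{\partial}{\partial x}(K(x))^2=2K(x)K'(x)$, and since $K(1)=1$ by Condition~\ref{condition.finite_poly}(ii), evaluating at $x=1$ gives $2K'(1)=2\frac{\partial K(x)}{\partial x}\big|_{x=1}$. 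Combining the two displays yields $\sum_{k=0}^\infty T_k=\frac{2}{1-a_0^2}\frac{\partial K(x)}{\partial x}\big|_{x=1}$, as claimed.

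The one point that needs care is the term-by-term differentiation of the power series at the boundary point $x=1$, i.e.\ identifying $\sum_{i\geq 0} i\,b_i$ with $\frac{d}{dx}(K(x))^2|_{x=1}$. When $K'(1)<\infty$ this is standard (Abel-type) reasoning; in the case of interest, $K(x)=2d\,\KRF(x)$, the finiteness of $K'(1)$ follows from the explicit coefficients in Lemma~\ref{le.taylor_of_kernel} together with the Wallis estimate of Lemma~\ref{le.wallis_inequality}, which show those coefficients decay like $k^{-5/2}$ so that the differentiated series still converges at $x=1$. If instead $K'(1)=\infty$, both sides are $+\infty$ and the identity still holds. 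Apart from this, the argument is pure bookkeeping, so I expect no genuine obstacle.
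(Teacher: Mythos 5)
Your proposal is correct and follows essentially the same route as the paper's proof: interchange the double sum to obtain $\sum_{i\geq 1} i\,\tilde b_i$, rewrite in terms of $b_i$ using $1-b_0=1-a_0^2$, identify the resulting series as $\frac{\partial}{\partial x}(K(x))^2\big|_{x=1}$, and conclude with $K(1)=1$. The only difference is that you explicitly flag the justification for the interchange (Tonelli) and for term-by-term differentiation at the boundary, which the paper leaves implicit.
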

\begin{proof}
We have
\begin{align*}
    \sum_{k=0}^\infty T_k = & \sum_{k=0}^\infty \sum_{i=k+1}^\infty \tilde{b}_i\text{ (by Eq.~\eqref{def.T_k})}\\
    =&\sum_{i=1}^\infty i\cdot \tilde{b}_i\\
    =&\frac{1}{1-b_0}\sum_{i=1}^\infty i\cdot b_i\text{ (by Eq.~\eqref{def.coeff_tilde_b})}\\
    =&\frac{1}{1-a_0^2}\sum_{i=1}^\infty i\cdot b_i\text{ (notice that $b_0=a_0^2$ by Eq.~\eqref{def.coeff_b})}\\
    =&\frac{1}{1-a_0^2}\frac{\partial \left(\sum_{i=0}^\infty b_i x^i\right)}{\partial x}\bigg|_{x=1}\\
    =&\frac{1}{1-a_0^2}\frac{\partial\left(K(x)\right)^2}{\partial x}\bigg|_{x=1}\text{ (by Eq.~\eqref{def.coeff_b})}\\
    =& \frac{2K(1)}{1-a_0^2}\frac{\partial K(x)}{\partial x}\bigg|_{x=1}\\
    =&\frac{2}{1-a_0^2}\frac{\partial K(x)}{\partial x}\bigg|_{x=1}\text{ (by Condition~\ref{condition.finite_poly})}.
\end{align*}
\end{proof}

Now we consider the case  of 3-layer without bias, i.e., the case when the polynomial $K(x)$ is $2d\cdot \KRF(x)= \frac{\sqrt{1-x^2}+(\pi-\arccos(x))x}{\pi}$. After calculation (details in Appendix~\ref{app.calculate_L}), we have $L_3\approx 0.069>0$, which completes the proof of Proposition~\ref{prop.compare_efficient_and_set}.


\subsubsection{Proof of Lemma~\ref{le.lower_bound_of_e_k}}\label{app.proof_lower_bound_of_e_k}

We first prove some useful lemmas.

\begin{lemma}\label{le.e_is_bounded}
(i) $\sum_{i=1}^\infty \tilde{b}_i=1$. (ii) $e_k=\sum_{i=0}^{k-1} \tilde{b}_{k-i}e_i$ for all $k\in \{1, 2, \cdots\}$. (iii) $e_k\leq e_0= \frac{1}{1-b_0}$ for all $k\in \mathds{Z}_{\geq 0}$.
\end{lemma}
\begin{proof}
Note that
\begin{align*}
    \sum_{i=1}^\infty \tilde{b}_i=&\frac{\sum_{i=1}^\infty b_i}{1-b_0}\text{ (by Eq.~\eqref{def.coeff_tilde_b})}\\
    =&1\quad \text{ (since $\sum_{i=1}^\infty b_i=1-b_0$ because $b_0+\sum_{i=1}^\infty b_i = (K(1))^2=1$)}.
\end{align*}
Because
\begin{align*}
    \frac{1}{1-\left(K(x)\right)^2}=1+\left(K(x)\right)^2\cdot \frac{1}{1-\left(K(x)\right)^2},
\end{align*}
we have
\begin{align*}
    \sum_{j=0}^\infty e_j x^j=&1+\left(\sum_{i=0}^\infty a_i x^i\right)^2\cdot \left(\sum_{j=0}^\infty e_j x^j\right)=1+\left(\sum_{i=0}^{\infty}b_i x^i\right)\cdot \left(\sum_{j=0}^\infty e_j x^j\right).
\end{align*}
Comparing the coefficient of $x^{j}$ on both sides, we have
\begin{align*}
    &e_0=1+b_0 e_0,\\
    &e_1=b_1 e_0+b_0 e_1,\\
    &e_2=b_2 e_0+b_1e_1+b_0e_2,\\
    &\quad \vdots
\end{align*}
We thus have
\begin{align}\label{eq.temp_012406}
    \begin{cases}
    &e_0=\frac{1}{1-b_0},\\
    &e_1=\tilde{b}_1 e_0,\\
    &e_2=\tilde{b}_2 e_0+\tilde{b}_1 e_1=\left(\tilde{b}_2+\tilde{b}_1^2\right)e_0,\\
    &e_3=\tilde{b}_3 e_0+\tilde{b}_2 e_1+\tilde{b}_1 e_2=\left(\tilde{b}_3+2\tilde{b}_2\tilde{b}_1+\tilde{b}_1^3\right)e_0,\\
    &e_4=\tilde{b}_4 e_0+\tilde{b}_3 e_1+\tilde{b}_2 e_2+\tilde{b}_1 e_3=\left(\tilde{b}_4+2\tilde{b}_1\tilde{b}_3+\tilde{b}_2^2+3\tilde{b}_1^2\tilde{b}_2+\tilde{b}_1^4\right)e_0,\\
    &\quad \vdots\\
    &e_k=\sum_{i=0}^{k-1} \tilde{b}_{k-i}e_i,\\
    &\quad \vdots
    \end{cases}
\end{align}
We now prove that $e_k\leq e_0$ for all $k\in\mathds{Z}_{\geq 0}$ by mathematical induction. Suppose that for all $i\leq k$, we already have $e_i\leq e_0$ (which is obviously true when $k=0$). Thus, we have
\begin{align*}
    e_{k+1}=\sum_{i=0}^k \tilde{b}_{k+1-i}e_i \leq e_0\sum_{i=0}^{k} \tilde{b}_{k+1-i}\leq e_0\sum_{i=0}^\infty \tilde{b}_i=e_0.
\end{align*}
By mathematical induction, we thus have $e_i\leq e_0$ for all $i\in \mathds{Z}_{\geq 0}$.
\end{proof}

For any given real number sequence $\bm{\Delta}\defeq(\Delta_1,\Delta_2,\cdots)$, we define a sequence $(e^{\bm{\Delta}}_0, e^{\bm{\Delta}}_1, e^{\bm{\Delta}}_2,\cdots)$ by
\begin{align}\label{def.tilde_e}
    e^{\bm{\Delta}}_0\defeq e_0,\quad e^{\bm{\Delta}}_k\defeq \Delta_k+\sum_{i=0}^{k-1}\tilde{b}_{k-i}e^{\bm{\Delta}}_i,\ k\in\{1,2,\cdots\}.
\end{align}

\begin{lemma}\label{le.diff_e_e}
For any $k\in \{1,2,\cdots\}$ and any $\bm{\Delta}$, we must have
\begin{align}\label{eq.temp_073001}
    e^{\bm{\Delta}}_k-e_k=\sum_{i=1}^k \Delta_i\cdot \frac{e_{k-i}}{e_0}.
\end{align}
\end{lemma}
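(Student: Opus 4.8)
The plan is to prove Lemma~\ref{le.diff_e_e} by induction on $k$, working with the difference sequence $\xi_k \defeq e^{\bm{\Delta}}_k - e_k$. Subtracting the recursion in Lemma~\ref{le.e_is_bounded}(ii) from the defining recursion Eq.~\eqref{def.tilde_e}, and using $e^{\bm{\Delta}}_0 = e_0$, one immediately gets $\xi_0 = 0$ and $\xi_k = \Delta_k + \sum_{i=0}^{k-1} \tilde{b}_{k-i}\xi_i$ for all $k \geq 1$. The identity to be established is then $\xi_k = \sum_{i=1}^k \Delta_i\, e_{k-i}/e_0$, which is exactly Eq.~\eqref{eq.temp_073001}.

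For the base case $k=1$: since $\xi_0 = 0$, we have $\xi_1 = \Delta_1 = \Delta_1 e_0/e_0$, which matches the claimed right-hand side. For the inductive step, I would assume the formula holds for all indices $1,\dots,k-1$ and substitute it into the recursion for $\xi_k$; the $i=0$ term drops because $\xi_0 = 0$, giving $\xi_k = \Delta_k + \sum_{j=1}^{k-1}\tilde{b}_{k-j}\sum_{i=1}^{j}\Delta_i e_{j-i}/e_0$. The next step is to interchange the two sums (so that $\sum_{j=1}^{k-1}\sum_{i=1}^{j} = \sum_{i=1}^{k-1}\sum_{j=i}^{k-1}$), collect the coefficient of each $\Delta_i$, and re-index the inner sum via $m = j-i$, which turns it into $\sum_{m=0}^{(k-i)-1}\tilde{b}_{(k-i)-m}e_m = e_{k-i}$ by Lemma~\ref{le.e_is_bounded}(ii) — valid since $k-i \geq 1$ throughout that range. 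This yields $\xi_k = \Delta_k + \sum_{i=1}^{k-1}\Delta_i e_{k-i}/e_0 = \sum_{i=1}^{k}\Delta_i e_{k-i}/e_0$, completing the induction.

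There is no genuine obstacle here; the argument is pure bookkeeping, and the only points requiring care are the summation swap, the re-indexing, and verifying that the $e$-recursion is invoked only at admissible indices. As a cross-check (and an alternative phrasing) I note that the same identity falls out of a formal power series computation: with $E(x) = \sum_k e_k x^k$, $E^{\bm{\Delta}}(x) = \sum_k e^{\bm{\Delta}}_k x^k$, $D(x) = \sum_{k\geq 1}\Delta_k x^k$, and $\hat{B}(x) = \sum_{k\geq 1}\tilde{b}_k x^k$ (no constant term, since the $e$-recursion never touches $\tilde{b}_0$), the recursions read $E = e_0 + \hat{B}E$ and $E^{\bm{\Delta}} = e_0 + D + \hat{B}E^{\bm{\Delta}}$, hence $E^{\bm{\Delta}} - E = D/(1-\hat{B}) = D\cdot E/e_0$, and matching coefficients of $x^k$ recovers Eq.~\eqref{eq.temp_073001}. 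I would keep the inductive version in the paper to stay consistent with the elementary style of the surrounding lemmas.
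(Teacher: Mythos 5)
Your argument is correct and matches the paper's proof essentially line for line: both induct on $k$, drop the $i=0$ term using $e^{\bm{\Delta}}_0=e_0$, substitute the inductive hypothesis, swap the double sum, re-index, and close with the recursion $e_{k-i}=\sum_{m=0}^{k-i-1}\tilde{b}_{k-i-m}e_m$ from Lemma~\ref{le.e_is_bounded}(ii). The generating-function identity you add as a cross-check is a nice alternative view but is not what the paper uses, and you are right that the inductive version fits better with the surrounding elementary lemmas.
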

\begin{proof}
We prove Eq.~\eqref{eq.temp_073001} by mathematical induction. When $k=1$, we have $e^{\bm{\Delta}}_1=\Delta_1+\tilde{b}_1 e_0$ (by Eq.~\eqref{def.tilde_e}) and $e_1=\tilde{b}_1 e_0$ (by Lemma~\ref{le.e_is_bounded}). Thus, Eq.~\eqref{eq.temp_072301} holds when $k= 1$. Suppose that for all $k\in \{1,2,\cdots,l\}$, Eq.~\eqref{eq.temp_073001} holds. We thus have
\begin{align}
    e^{\bm{\Delta}}_{l+1}-e_{l+1}=&\Delta_{l+1}+\sum_{i=0}^{l} \tilde{b}_{l+1-i} (e^{\bm{\Delta}}_{i}-e_i)\text{ (by Eq.~\eqref{def.tilde_e} and Lemma~\ref{le.e_is_bounded})}\nonumber\\
    =&\Delta_{l+1}+\sum_{i=1}^{l} \tilde{b}_{l+1-i} (e^{\bm{\Delta}}_{i}-e_i)\text{ (notice that $e^{\bm{\Delta}}_0=e_0$)}\nonumber\\
    =& \Delta_{l+1}+\sum_{i=1}^{l} \tilde{b}_{l+1-i} \sum_{j=1}^i \Delta_j \frac{e_{i-j}}{e_0}\text{ (applying Eq.~\eqref{eq.temp_073001} by induction hypothesis)}.\label{eq.temp_012407}
\end{align}
Notice that
\begin{align*}
    \sum_{i=1}^{l}  \sum_{j=1}^i \tilde{b}_{l+1-i} \cdot \Delta_j \cdot e_{i-j}=&\sum_{j=1}^l \Delta_j \sum_{i=j}^l \tilde{b}_{l+1-i} e_{i-j} \text{ (by re-organizing terms)}\\
    =&\sum_{j=1}^l \Delta_j \sum_{i=0}^{l-j} \tilde{b}_{l+1-i-j} e_{i}\text{ (replacing $i-j$ by $i$)}.
\end{align*}
Plugging the above equation into Eq.~\eqref{eq.temp_012407}, we then have
\begin{align*}
    e^{\bm{\Delta}}_{l+1}-e_{l+1}=& \Delta_{l+1}+\frac{1}{e_0}\sum_{j=1}^l \Delta_j \sum_{i=0}^{l-j} \tilde{b}_{l-j+1-i}e_i\\
    =&\Delta_{l+1}+\frac{1}{e_0}\sum_{j=1}^l \Delta_j\cdot e_{l+1-j} \text{ (by Lemma~\ref{le.e_is_bounded})}\\
    = & \frac{1}{e_0} \sum_{j=1}^{l+1}\Delta_j\cdot e_{l+1-j},
\end{align*}
i.e., Eq.~\eqref{eq.temp_073001} also holds for $k=l+1$. Thus, the mathematical induction is completed and the result of this lemma thus follows.
\end{proof}

Now we are ready to prove Lemma~\ref{le.lower_bound_of_e_k}.
\begin{proof}[Proof of Lemma~\ref{le.lower_bound_of_e_k}]
Let
\begin{align*}
    \Delta_i=\begin{cases}
    0, &\text{ if }i<k,\\
    T_i, &\text{ if }i \geq k.
    \end{cases}
\end{align*}
We first prove by mathematical induction that
\begin{align}\label{eq.temp_080101}
    e^{\bm{\Delta}}_i\geq \min \{e_j\ |\ j=0,1,\cdots,k-1\}\cup \{1\}\text{ for all }i\in \mathds{Z}_{\geq 0}.
\end{align}
Towards this end, note that because $\Delta_i=0$ for all $i< k$, we know from Lemma~\ref{le.diff_e_e} that $e^{\bm{\Delta}}_i=e_i$. Hence, Eq.~\eqref{eq.temp_080101} trivially holds for all $i\in \{0,1,\cdots,k-1\}$. Suppose that Eq.~\eqref{eq.temp_080101} holds for all $i\leq l\in \mathds{Z}_{\geq 0}$, where $l\geq k-1$ denotes the index of the induction hypothesis. In order to finish the mathematical induction, we only need to prove that Eq.~\eqref{eq.temp_080101} holds for $i=l+1$. To this end, we have
\begin{align*}
    e^{\bm{\Delta}}_{l+1}=&T_{l+1} +\sum_{i=0}^{l}\tilde{b}_{l+1-i}e^{\bm{\Delta}}_i\text{ (by Eq.~\eqref{def.tilde_e})}\\
    =&\sum_{i=l+2}^\infty \tilde{b}_i + \sum_{i=1}^{l+1}\tilde{b}_i e^{\bm{\Delta}}_{l+1-i}\text{ (by Eq.~\eqref{def.T_k})}\\
    \geq &  \left(\min \{e^{\bm{\Delta}}_j\ |\ j=0,1,\cdots,l\}\cup \{1\}\right)\cdot \sum_{i=1}^\infty \tilde{b}_i\\
    =&\min \{e^{\bm{\Delta}}_j\ |\ j=0,1,\cdots,l\}\cup \{1\}\text{ (by Lemma~\ref{le.e_is_bounded})}\\
    \geq  &\min \{e_j\ |\ j=0,1,\cdots,k-1\}\cup \{1\}\text{ (by induction hypothesis)}.
\end{align*}
Thus, Eq.~\eqref{eq.temp_080101} holds by mathematical induction.
We thus have
\begin{align*}
    e_i=& e^{\bm{\Delta}}_i-\sum_{j=1}^i \Delta_j \cdot \frac{e_{i-j}}{e_0}\text{ (by Lemma~\ref{le.diff_e_e})}\\
    \geq & e^{\bm{\Delta}}_i-\sum_{j=1}^i\Delta_j\text{ (since $e_{i-j}\leq e_0$ by Lemma~\ref{le.e_is_bounded})}\\
    =& e^{\bm{\Delta}}_i-\sum_{j=k}^i T_i\\
    \geq & \min \{e_j\ |\ j=0,1,\cdots,k-1\}\cup \{1\} -\sum_{j=k}^\infty T_j\text{ (by Eq.~\eqref{eq.temp_080101} and Eq.~\eqref{eq.temp_080301})}\\
    =&L_k.
\end{align*}
The result of this lemma thus follows.
\end{proof}

\subsubsection{Calculate \texorpdfstring{$L_3$}{L3} for 3-layer without bias}\label{app.calculate_L}

Coefficients of Taylor expansion of $K(x)=2d\cdot \KRF(x)= \frac{\sqrt{1-x^2}+(\pi-\arccos(x))x}{\pi}$ can be derived from Lemma~\ref{le.taylor_of_kernel}, i.e.,
\begin{align}\label{eq.temp_080103}
    K(x)=\frac{1}{\pi}\left(1+\frac{\pi}{2}x+\sum_{k=0}^\infty\frac{2(2k)!}{(k+1)(2k+1)(k!)^2}\left(\frac{x}{2}\right)^{2k+2}\right).
\end{align}
By Eq.~\eqref{eq.temp_080103}, We can calculate values of $a_i$, $b_i$, and $\tilde{b}_i$ for $i=0,1,2$ by their definitions.
\begin{align*}
    & a_0=\frac{1}{\pi},\ a_1=\frac{1}{2},\ a_2=\frac{1}{2\pi},\\
    & b_0=a_0^2=\frac{1}{\pi^2},\ b_1=2 a_0 a_1= \frac{1}{\pi},\ b_2=2a_2 a_0+a_1^2=\frac{1}{\pi^2}+\frac{1}{4},\\
    & \tilde{b}_0=\frac{1}{\pi^2-1},\ \tilde{b}_1=\frac{\pi}{\pi^2-1},\ \tilde{b}_2=\frac{1}{\pi^2-1}+\frac{\pi^2}{4(\pi^2-1)}.\\
\end{align*}
Then, we calculate the values of $e_i$ by Eq.~\eqref{eq.temp_012406}.
\begin{align*}
    & e_0=\frac{\pi^2}{\pi^2-1}\approx 1.11,\ e_1=\frac{\pi^3}{(\pi^2-1)^2}\approx 0.39,\ e_2=\frac{(2\pi^2-1)\pi^2}{(\pi^2-1)^3}+\frac{\pi^4}{4(\pi^2-1)^2}\approx 0.57.
\end{align*}
Next, we calculate $\sum_{i=0}^\infty T_i$ by Lemma~\ref{le.estimate_T_k}.
\begin{align*}
    \sum_{i=0}^\infty T_i =&\frac{2}{1-a_0^2} \frac{\partial \frac{\sqrt{1-x^2}+(\pi-\arccos(x))x}{\pi}}{\partial x}\bigg\vert_{x=1}\\
    =&\frac{2}{1-a_0^2} \frac{1}{\pi} \left(-\frac{x}{\sqrt{1-x^2}}+(\pi-\arccos(x))+\frac{x}{\sqrt{1-x^2}}\right)\\
    &\text{ (notice that $\frac{\partial \sqrt{1-x^2}}{\partial x}=-\frac{x}{\sqrt{1-x^2}}$ and  $\frac{\partial}{\partial x}\arccos(x)=-\frac{1}{\sqrt{1-x^2}}$)}\\
    =&\frac{2}{1-a_0^2}\frac{\pi-\arccos(x)}{\pi}\big\vert_{x=1}=\frac{2\pi^2}{\pi^2-1}.
\end{align*}
By Eq.~\eqref{def.T_k} and Lemma~\ref{le.e_is_bounded}(i), we thus have $T_j=\sum_{i=1}^\infty \tilde{b}_i-\sum_{i=1}^{j}\tilde{b}_i=1-\sum_{i=1}^{j}\tilde{b}_i$. Therefore, we have
\begin{align*}
    & T_0 = 1,\ T_1 = 1-\frac{\pi}{\pi^2-1},\ T_2 = 1-\frac{\pi}{\pi^2-1} - \left(\frac{1}{\pi^2-1}+\frac{\pi^2}{4(\pi^2-1)}\right).
\end{align*}
Now we are ready to calculate $L_3$ by Eq.~\eqref{def.L_k}.
\begin{align*}
    L_3=&e_1 + T_0+T_1+T_2 - \sum_{i=0}^\infty T_i\\
    =&\frac{\pi^3}{(\pi^2-1)^2}+3 - \frac{2\pi}{\pi^2-1}-\left(\frac{1}{\pi^2-1}+\frac{\pi^2}{4(\pi^2-1)}\right) - \frac{2\pi^2}{\pi^2-1}\\
    \approx & 0.069.
\end{align*}

\end{document}